\documentclass[10pt]{article} 
\usepackage[preprint]{tmlr}


\usepackage{amsmath,amsfonts,bm}



\def\figref#1{figure~\ref{#1}}
\def\Figref#1{Figure~\ref{#1}}





\def\eqref#1{equation~\ref{#1}}
\def\Eqref#1{Equation~\ref{#1}}





\def\Algref#1{Algorithm~\ref{#1}}



\def\1{\bm{1}}










\DeclareMathAlphabet{\mathsfit}{\encodingdefault}{\sfdefault}{m}{sl}
\SetMathAlphabet{\mathsfit}{bold}{\encodingdefault}{\sfdefault}{bx}{n}











\newcommand{\E}{\mathbb{E}}

\newcommand{\R}{\mathbb{R}}



\usepackage{hyperref}
\usepackage{url}
\usepackage[T1]{fontenc}
\usepackage{algorithm, algpseudocode}
\usepackage{graphicx}
\usepackage{amsthm}
\usepackage{amsmath}
\usepackage{enumitem}
\usepackage{wrapfig}
\usepackage{xfrac}
\usepackage{subcaption}
\usepackage{longtable}
\usepackage{bm}
\usepackage{array}

\definecolor{myblue}{HTML}{6495ED}
\definecolor{myblue2}{HTML}{4682B4}
\definecolor{light-gray}{gray}{0.95}

\newtheorem{hypo}{Hypothesis}
\newtheorem{assump}{Assumption}
\newtheorem*{theorem0}{Main statement}
\title{On the Convergence of Shallow Neural Network Training with Randomly Masked Neurons}


\author{\name Fangshuo Liao \email Fangshuo.Liao@rice.edu \\
      \addr Department of Computer Science\\
      Rice University
      \AND
      \name Anastasios Kyrillidis \email anastasios@rice.edu \\
      \addr Department of Computer Science\\
      Rice University}


\newcommand{\U}{\mathbf{u}}
\newcommand{\y}{\mathbf{y}}
\newcommand{\h}{\mathbf{H}}
\newcommand{\I}{\mathbb{I}}

\newcommand{\grad}[1]{\frac{\partial L(#1)}{\partial \mathbf{w}_r}}

\newcommand{\lgrad}[2]{\frac{\partial L_{#1}\left(#2\right)}{\partial\mathbf{w}_r}}
\newcommand{\paren}[1]{\left(#1\right)}
\newcommand{\norm}[1]{\left\|#1\right\|}
\newcommand{\inner}[2]{\left\langle #1, #2\right\rangle}

\newcolumntype{P}[1]{>{\centering\arraybackslash}p{#1}}
\newcolumntype{M}[1]{>{\centering\arraybackslash}m{#1}}

\newtheorem{defin}{Definition}
\newtheorem{theorem}{Theorem}
\newtheorem{corollary}{Corollary}
\newtheorem{lemma}{Lemma}

\newtheorem{property}{Property}


\begin{document}

\maketitle

\begin{abstract}
With the motive of training all the parameters of a neural network, we study why and when one can achieve this by iteratively creating, training, and combining randomly selected subnetworks.
Such scenarios have either implicitly or explicitly emerged in the recent literature: see e.g., the Dropout family of regularization techniques, or some distributed ML training protocols that reduce communication/computation complexities, such as the Independent Subnet Training protocol. 
While these methods are studied empirically and utilized in practice, they often enjoy partial or no theoretical support, especially when applied on neural network-based objectives.

In this manuscript, 
our focus is on overparameterized single hidden layer neural networks with ReLU activations in the lazy training regime.
By carefully analyzing $i)$ the subnetworks' neural tangent kernel, $ii)$ the surrogate functions' gradient, and $iii)$ how we sample and combine the surrogate functions, we prove linear convergence rate of the training error --up to a neighborhood around the optimal point-- for an overparameterized single-hidden layer perceptron with a regression loss. 
Our analysis reveals a dependency of the size of the neighborhood around the optimal point on the number of surrogate models and the number of local training steps for each selected subnetwork.
Moreover, the considered framework generalizes and provides new insights on dropout training, multi-sample dropout training, as well as Independent Subnet Training; for each case, we provide convergence results as corollaries of our main theorem.
\end{abstract}

\section{Introduction}


Overparameterized neural networks have led to both unexpected empirical success in deep learning \textcolor{myblue2}{\textit{\citep{zhang2021understanding, goodfellow2016deep, arpit2017closer, recht2019imagenet, toneva2018empirical}}}, and new techniques in analyzing neural network training \textcolor{myblue2}{\textit{\citep{kawaguchi2017generalization, bartlett2017spectrally, neyshabur2017exploring, golowich2018size, liang2019fisher, arora2018stronger, dziugaite2017computing, neyshabur2018pac, zhou2018non, soudry2018implicit, shah2020generalization, belkin2019reconciling, belkin2018overfitting, feldman2020does, ma2018power, spigler2019jamming, belkin2021fit, bartlett2021deep, jacot2018neural}}}.
While theoretical work in this field has led to a diverse set of new overparameterized neural network architectures \textcolor{myblue2}{\textit{\citep{frei2020algorithm, fang2021modeling, lu2020mean, huang2020deep, allen2019learning, GuZFL020, cao2020conv}}} and training algorithms \textcolor{myblue2}{\textit{\citep{du2018gradient, zou2020gradient, soltanolkotabi2018theoretical, oymak2019overparameterized, li2020gradient, oymak2020toward}}}, most efforts fall under the following scenario: in each iteration, we perform a  gradient-based update that involves all parameters of the neural network in both the forward and backward propagation.
Yet, advances in regularization techniques \textcolor{myblue2}{\textit{\citep{srivastava2014dropout, wan2013regularization, gal2016dropout, courbariaux2015binaryconnect, labach2019survey}}}, computationally-efficient \textcolor{myblue2}{\textit{\citep{shazeer2017outrageously, fedus2021switch, lepikhin2020gshard, lejeune2020implicit, yao2021minipatch, yu2018slimmable, mohtashami2021simultaneous, yuan2020distributed, dun2021resist, wolfe2021gist}}} and communication-efficient distributed training methods \textcolor{myblue2}{\textit{\citep{vogels2019powersgd, wang2021pufferfish, yuan2020distributed}}} favor a different narrative: 
one would --explicitly or implicitly-- train smaller and randomly-selected models within a large model, iteratively. 
This brings up the following question: \vspace{-0.2cm}

\begin{center}\noindent
``\textit{Can one meaningfully train an overparameterized ML model by iteratively training \\and combining together smaller versions of it?}''
\end{center}

\vspace{-0cm}
\begin{figure}[ht!]
\vspace{-0cm}
\begin{center}
\includegraphics[width=\linewidth]{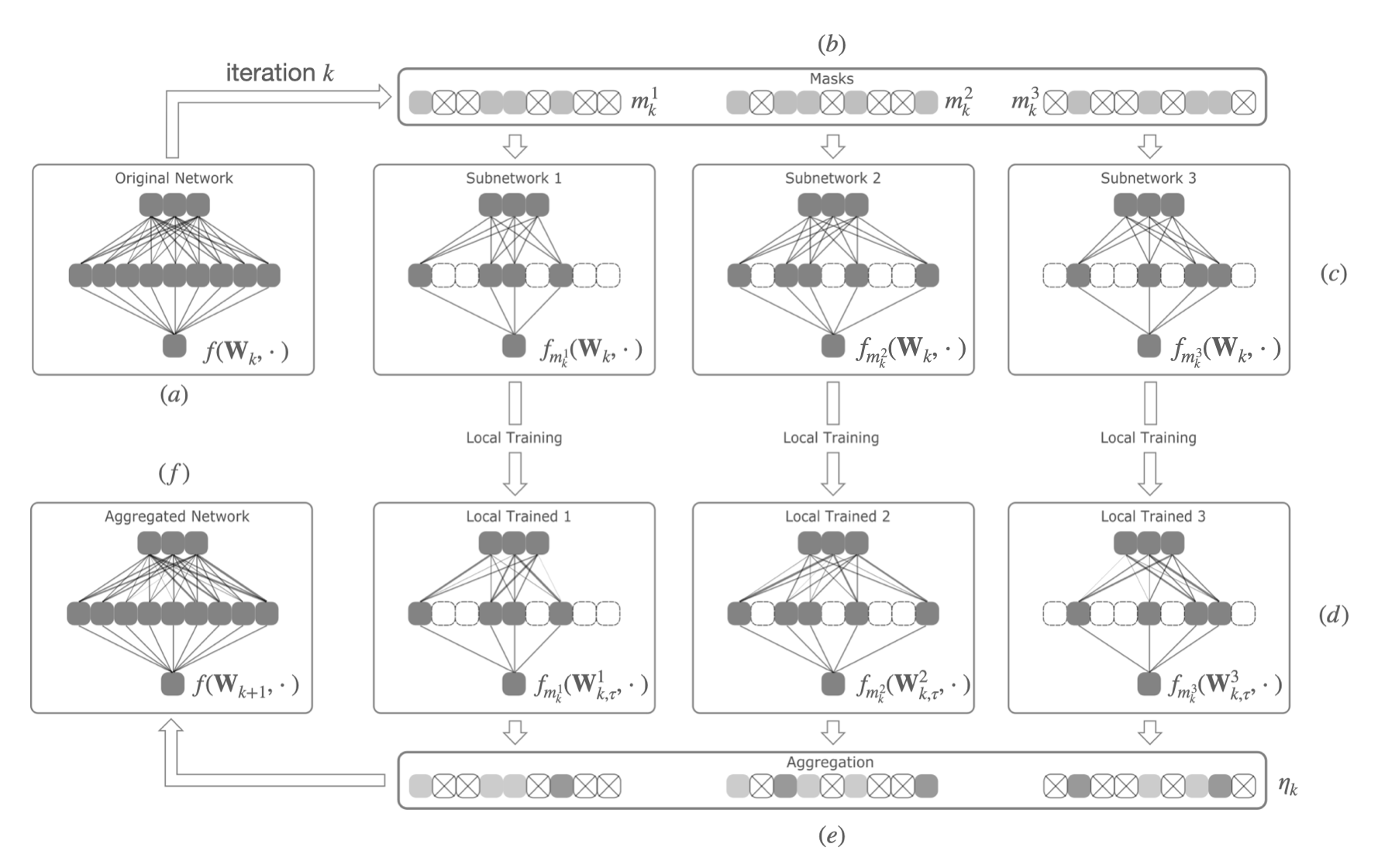}
\end{center}
\vspace{-0.4cm}
\caption{Training a single hidden-layer perceptron using multiple randomly masked subnetworks. Here, $f(\mathbf{W}, \cdot)$ denotes the full model with $\mathbf{W}$ parameters, and $f_{\mathbf{m}_k^l}(\mathbf{W}, \cdot)$ denotes the surrogate model (subnetwork) with only active neurons as dictated by the mask $\mathbf{m}_k^l$ at the $k$-th iteration for subnetwork $l$. Moreover, $\mathbf{W}_k$ denotes the parameter at the start of the iteration, while $\mathbf{W}_{k,\tau}^{l}$ is the trained parameter of subnetwork $l$.}
\label{IST_Scenario}
\vspace{-0.4cm}
\end{figure}
This question closely relates to multiple existing training algorithms, as we discuss below; our goal is to work towards a unified training scheme, and seek for rigorous theoretical analysis of such a framework. 
Focusing on this objective based on shallow feedforward neural networks, we provide a positive answer, accompanied with theoretical guarantees that are supported by observations in practical scenarios. 

To be more specific, the training scheme we consider is depicted in \Figref{IST_Scenario}. Given a dense neural network, as in Fig.1$(a)$, we sample masks within one training step; see Fig.1$(b)$. 
Each of these masks deactivates a subset of the neurons in the original network's hidden layer.
In a way, each mask defines a \emph{surrogate model}, as shown in Fig.1$(c)$, based on the original network, leading to a collection of \emph{subnetworks}.
These surrogate subnetworks independently update their own parameters (possibly on different data shards), by performing (stochastic) gradient descent ((S)GD) steps. 
Lastly, we aggregate the parameters of the independently trained subnetworks (Fig.1($d$)) to update the weights of the original network, before the next iteration starts; see Fig.1($e$).
Note that multiple masks could share active neurons. 
When aggregating the updates, we take the weighted sum of the updated parameters across all subnetworks, with the aggregation weights computed on the masks of the current iteration.

We mathematically illustrate the difference between traditional training (first expression below) and the considered methodology (second expression below):
\begin{align*}
    \mathbf{W}_{k+1} &= \texttt{(S)GD}\left(f, \mathbf{W}_k, \tau\right) \quad \quad \text{vs.} \\ \mathbf{W}_{k+1} &= \texttt{Reassemble}\left(\texttt{(S)GD}\left(f_{\mathbf{m}_k^1}, \mathbf{W}_{k}^{1}, \tau\right), \texttt{(S)GD}\left(f_{\mathbf{m}_k^2}, \mathbf{W}_{k}^{2}, \tau\right), \cdots, \texttt{(S)GD}\left(f_{\mathbf{m}_k^p}, \mathbf{W}_{k}^{p}, \tau\right)\right)
\end{align*}

Here, the acronym $\texttt{(S)GD}\left(f, \mathbf{W}, \tau\right)$ indicates the application of (S)GD on function $f$ for $\tau$ iterations, starting from initial parameters $\mathbf{W}$.
Consequently, $\texttt{(S)GD}\left(f_{\mathbf{m}_k^l}, \mathbf{W}_{k}^{l}, \tau\right)$ indicates the application of (S)GD for $\tau$ iterations on the surrogate function $f_{\mathbf{m}_k^l}$, based on the mask $\mathbf{m}_k^l$ and using only the subset of parameters $\mathbf{W}_{k}^{l}$.
The function \texttt{Reassemble} involves both aggregation and reassembly of the whole model $\mathbf{W}_{k+1}$.

In this work, we perform a theoretical analysis of this framework, based on a single-hidden layer perceptron with ReLU activations.
This is a non-trivial, non-convex setting, that has been used extensively in studying the behavior of training algorithms on neural networks \textcolor{myblue2}{\textit{\citep{du2018gradient, zou2020gradient, soltanolkotabi2018theoretical, oymak2019overparameterized, li2020gradient, oymak2020toward, song2020quadratic, ji2020polylogarithmic,mianjy2020convergence}}}. 





\textbf{Challenges.} 
Much work has been devoted to analyzing the convergence of neural networks based on the Neural Tangent Kernel (NTK) perspective \textcolor{myblue2}{\textit{\citep{jacot2018neural}}}; see the Related Works section below.
The literature in this direction notice that the NTK remains roughly stable throughout training. 
Therefore, the neural network output can be approximated well by the linearization defined by the NTK. 
Yet, training with randomly masked neurons poses additional challenges: 
$i)$ With a randomly generated mask, the NTK changes even with the same set of weights, leading to more instability of the kernel;
$ii)$ 
the gradient of the subnetworks introduces both randomness and bias towards optimizing the loss of the full network;
and, $iii)$
the non-linear activation makes the aggregated network function no longer a linear combination of the subnetwork functions.
The three challenges complicate the analysis, driving us to treat the NTK, gradient, and combined network function with special care. 
We will tackle these difficulties in the proof of the theorems.

\textbf{Motivation and connection to existing methods.} 
The study of partial models/subnetworks that reside in a large dense network have drawn increasing attention.

\textit{Dropout regularization}. 
Dropout \textcolor{myblue2}{\textit{\citep{srivastava2014dropout, wan2013regularization, gal2016dropout, courbariaux2015binaryconnect}}} is a widely-accepted technique against overfitting in deep learning. 
In each training step, a random mask is generated from some pre-defined distribution, and used to mask-out part of the neurons in the neural network. 
Later variants of dropout include the drop-connect \textcolor{myblue2}{\textit{\citep{wan2013regularization}}}, multi-sample dropout \textcolor{myblue2}{\textit{\citep{inoue2019multi}}}, Gaussian dropout \textcolor{myblue2}{\textit{\citep{wang2013fast}}}, and the variational dropout \textcolor{myblue2}{\textit{\citep{kingma2015variational}}}. 
Here, we restrict our attention to the vanilla dropout, and the multi-sample dropout. 
The vanilla dropout corresponds to our framework, if in the latter we sample only one mask per iteration, and let the subnetwork perform only one gradient descent update. 
The multi-sample dropout extends the vanilla dropout in that it samples multiple masks per iteration.
\textit{For regression tasks, our theoretical result implies convergence guarantees for these two scenarios on a single hidden-layer perceptron.}

\textit{Distributed ML training.} 
Recent advances in distributed model/parallel training have led to variants of distributed gradient descent protocols \textcolor{myblue2}{\textit{\citep{mcdonald2009efficient, zinkevich2010parallelized, zhang2014dimmwitted, zhang2016parallel}}}. 
Yet, all training parameters are updated per outer step, which could be computationally and communication inefficient, especially in cases of high communication costs per round.
The \textit{Independent Subnetwork Training (IST)} protocol \textcolor{myblue2}{\textit{\citep{yuan2020distributed}}} goes one step further: 
IST splits the model vertically, where each machine contains all layers of the neural network, but only with a (non-overlapping) subset of neurons being active in each layer. 
Multiple local SGD steps can be performed without the workers having to communicate. Methods in this line of work achieves higher communication efficiency and accuracy that is comparable to centralized training.\textcolor{myblue2}{\textit{\citep{wolfe2021gist, dun2021resist, yuan2020distributed}}}
\textit{Yet, the theoretical understanding of IST is currently missing. Our theoretical result implies convergence guarantees for IST for a single hidden-layer perceptron under the simplified assumption that every worker has full data access, and provides insights on how the number of compute nodes affects the performance of the overall protocol.}



\textbf{Contributions.}
The present training framework naturally generalizes the approaches above.
Yet, current literature --more often than not-- omits any theoretical understanding for these scenarios, even for the case of shallow MLPs. 
While handling multiple layers is a more desirable scenario (and is, indeed, considered as future work), our presented theory illustrates how training and combining multiple randomly masked surrogate models behaves. Our findings can be summarized as follows: \vspace{-0.2cm}
\begin{itemize}[leftmargin=*]
    \item We provide convergence rate guarantees for $i)$ dropout regularization \textcolor{myblue2}{\textit{\citep{srivastava2014dropout}}}, $ii)$ multi-sample dropout \textcolor{myblue2}{\textit{\citep{inoue2019multi}}}, $iii)$ and multi-worker IST \textcolor{myblue2}{\textit{\citep{yuan2020distributed}}}, given a regression task on a single-hidden layer perceptron.
    \item We show that the NTK of surrogate models stays close to the infinite width NTK, thus being positive definite. Consequently, our work shows that training over surrogate models still enjoys linear convergence.
    \item For subnetworks defined by Bernoulli masks with a fixed distribution parameter, we show that aggregated gradient in the first local step is a biased estimator of the desirable gradient of the whole network, with the bias term decreasing as the number of subnetworks grows. Moreover, all aggregated gradients during local training stays close to the aggregated gradient of the first local step. This finding leads to linear convergence of the above training framework with an error term under Bernoulli masks.
    \item For masks sampled from categorical distribution, we provide tight bounds $i)$ on the average loss increase, when sampling a subnetwork from the whole network; $ii)$ on the loss decrease, when the independently trained subnetworks are combined into the whole model. This finding leads to linear convergence with a slightly different error term than the Bernoulli mask scenario.
\end{itemize}
Summarizing the contributions above, the main objective of our work is to provide theoretical support for the following statement:
\begin{theorem0}[\textit{Informal}]
Consider the training scheme shown in \Figref{IST_Scenario} and described precisely in \Algref{algo:1}. If the masks are generated from a Bernoulli distribution or categorical distribution, under sufficiently large over-parameterization coefficient, and sufficiently small learning rate, training the large model via surrogate subnetworks still converges linearly, up to an neighborhood around the optimal point.
\end{theorem0}


\section{Related Works}
\textbf{Convergence of Neural Network Training.} Recent study on the properties of over-parameterized neural networks enabled their training error analysis. The NTK-based analysis studies the dynamics of the parameters in the so-called kernel regime under a particular scaling option \textcolor{myblue2}{\textit{\citep{jacot2018neural, du2018gradient,oymak2020toward, song2020quadratic, ji2020polylogarithmic, su2019learning, arora2019finegrained, mianjy2020convergence, huang2021flntk}}}. 
NTKs can be viewed as the reproducing kernels of the function space defined by the neural network structure, and are constructed using the inner product between gradients of pairs of data points. 
With the observation of the NTK's stability under sufficient over-parameterization, recent work has shown that (S)GD achieves zero training loss on shallow neural networks for regression task, even if when the data-points are randomly labeled \textcolor{myblue2}{\textit{\citep{du2018gradient, oymak2020toward, song2020quadratic}}}. 
To study how the labeling of the data affects the convergence, \textcolor{myblue2}{\textit{\citep{arora2019finegrained}}}  characterizes the loss update in terms of the NTK-induced inner-product of the label vector, and notices that, when the label vector aligns with the top eigenvectors of the NTK, training achieves a faster convergence rate.  \textcolor{myblue2}{\textit{\citep{su2019learning}}} analyze the convergence of training from a functional approximation perspective, and obtains a meaningful result under infinite sample size limit, where the minimum eigenvalue of the NTK matrix goes to zero.

Later works start to deviate from the NTK-based analysis and aim at reducing the over-parametereization requirement. By using a more refined analysis on the evolution of the Jacobian matrix, \textcolor{myblue2}{\textit{\citep{oymak2019overparameterized}}} reduce the required hidden-layer width to $n^2$, with $n$ being the sample size. \textcolor{myblue2}{\textit{\citep{nguyen2021ontheproof}}} leverages a property of the gradient that resembles the PL-condition, and provides convergence guarantees for deep neural network under proper initialization. When applied to neural networks with one hidden layer, their over-parameterization requirement also reduces to $n^2$. \textcolor{myblue2}{\textit{\citep{song2021subquadratic}}} further study the PL-condition along the path of the optimization and show that a subquadratic over-parameterization is sufficient to guarantee convergence. \textit{While reducing the over-parameterization is ideal, the focus of our work is to extend the analysis on regular neural network training to a more general training scheme.}

A different line of work explores the structure of the data-distribution in classification tasks, by assuming separability when mapped to the Hilbert space induced by the partial application of the NTK \textcolor{myblue2}{\textit{\citep{ji2020polylogarithmic, mianjy2020convergence}}}. 
Rather than depending on the stability of NTK, the crux of these works relies on the small change in the linearization of the network function. 
This line of work requires milder overparameterization, and can be easily extended to training stochastic gradient descent without changing the over-parameterization requirement.
\textit{The above literature assumes all parameters are updated per iteration.}

\textbf{Analysis of Dropout.} There is literature devoted to the analysis of dropout training. 
For shallow linear neural networks, \textcolor{myblue2}{\textit{\citep{senencerda2020asymptotic}}} give asymptotic convergence rate by carefully characterizing the local minima. 
For deep neural networks with ReLU activations, \textcolor{myblue2}{\textit{\citep{senencerda2020sure}}} shows that the training dynamics of dropout converge to a unique stationary set of a projected system of differential equations. 
Under NTK assumptions, \textcolor{myblue2}{\textit{\citep{mianjy2020convergence}}} shows sublinear convergence rate for an online version for dropout in classification tasks. Recently, \textcolor{myblue2}{\textit{\citep{lejeune2021Flip}}} study the duality of Dropout in a linear regression problem and transform the dropout into a penalty term in the loss.
\textit{Our main theorem implies linear convergence rate of the training loss dynamic for the regression task on a shallow neural network with ReLU activations.}

\textbf{Federated Learning and Distributed Training.} Traditional analysis on distributed training methods assumes that the objective can be written as a sum or average of a sequence sub-objectives \textcolor{myblue2}{\textit{\citep{stitch2018localSGD,Li2019FedAvg, Farzin2019LocalDescent, Khaled2019FirstAnalysis, Khaled2019Tighter}}}. Each worker/client performs some variant of local (stochastic) gradient descent on a subset of the sub-objectives. This line of work deviates greatly from the scenario we are considering. First, the assumption that the objective can be broken down into linear combination of sub-objectives corresponds to the data-parallel training, in stark contrast to our proposed scheme: the loss computed on the whole network not necessarily equals to the mean of the losses computed on the subnetworks. Second, this line of work assumes that the objective is smooth and usually convex or strongly convex. Moreover, there is recent work on training partially masked neural networks that deviates from the assumption that the objective can be written as a linear combination of sub-objectives \textcolor{myblue2}{\textit{\citep{mohtashami2021simultaneous}}}. 
In particular, in this work \textcolor{myblue2}{\textit{\citep{mohtashami2021simultaneous}}}, the authors consider optimizing a more general class of differentiable objective functions. However, assumptions including Lipschitzness and bounded perturbation made in their work cannot be easily checked for a concrete neural network, especially for the problem of minimizing mean squared error.
Lastly, recent advances in the NTK theory also facilitated the theoretical work on Federated Learning (FL) on neural network training. 
FL-NTK \textcolor{myblue2}{\textit{\citep{huang2021flntk}}} characterize the asymmetry of the NTK matrix due to the partial data knowledge. 
For non-i.i.d. data distribution, \textcolor{myblue2}{\textit{\citep{deng2021local}}} proves convergence for a shallow neural network by analyzing the semi-Lipschitzness of the hidden layer. 
\textit{Our work differs since we consider training a partial model with the whole dataset.} 
 \emph{We consider the more frequently used setting of a one-hidden layer perceptron with non-differentiable activation.}

\section{Training with Randomly Masked Neurons}

We use bold lower-case letters (e.g., $\mathbf{a}$) to denote vectors, bold upper-case letters (e.g., $\mathbf{A}$) to denote matrices, and standard letters (e.g., $a$) for scalars. 
$\|\mathbf{a}\|_2$ stands for the $\ell_2$ (Euclidean) vector norm, $\|\mathbf{A}\|_2$ stands for the spectral matrix norm, and $\|\mathbf{A}\|_F$ stands for the Frobenius norm.
For an integer $a$, we use $[a]$ to denote the enumeration set $\{1, 2,\cdots, a\}$.
Unless otherwise stated, $p$ denotes the number of subnetworks, and $l\in[p]$ its index; $K$ denotes the number of global iterations and $k\in[K]$ its index; $\tau$ is used for the number of local iterations and $t\in[\tau]$ its index.
We use $\mathbf{M}_k$ to denote the mask at global iteration $k$, and $\E_{[\mathbf{M}_k]}[\cdot] = \E_{\mathbf{M}_0,\dots,\mathbf{M}_k}[\cdot]$ to denote the total expectation over masks $\mathbf{M}_0,\dots,\mathbf{M}_k$.
We use $\mathbb{P}(\cdot)$ to denote the probability of an event, and $\I\{\cdot\}$ to denote the indicator function of an event.
For distributions, we use $\mathcal{N}(\boldsymbol{\mu}, \boldsymbol{\Sigma})$ to denote the Gaussian distribution with mean $\boldsymbol{\mu}$ and variance $\boldsymbol{\Sigma}$. 
We use $\texttt{Bern}(\xi)$ to denote the Bernoulli distribution with mean $\xi$, and we use $\texttt{Unif}(S)$ to denote the uniform distribution over the set $S$.
For a complete list of notation, see Table \ref{notation-table} in the Appendix.

\begin{wrapfigure}{R}{0.45\textwidth}
    \begin{minipage}{0.45\textwidth}
        \vspace{-0.8cm}
        \begin{algorithm}[H]
            \caption{Randomly Masked Training}\label{algo:1}
            \textbf{Input:} Mask Distribution $\mathcal{D}$, local step-size $\eta$, global aggregation weight $\eta_{k,r}$
            \begin{algorithmic}[1]
                \State Initialize $\mathbf{W}_0,\mathbf{a}$
                \For{$k=0,\dots,K-1$}
                    \State Sample mask $\mathbf{M}_k\sim\mathcal{D}$
                    \For{$l=1,\dots,p$}
                        \State $\mathbf{W}_{k,0}^l \leftarrow \mathbf{W}_k$
                        \For{$t=0,\dots,\tau-1$}
                            \State $\mathbf{W}_{k,t+1}^l \leftarrow \mathbf{W}_{k,t}^l - \eta\frac{\partial L_{\mathbf{m}_k^l}\left(\mathbf{W}_{k,t}^l\right)}{\partial\mathbf{W}}$
                        \EndFor
                        \State $\Delta\mathbf{W}_{k}^l \leftarrow \mathbf{W}_{k,\tau}^l - \mathbf{W}_k$
                    \EndFor
                    \For{$r=1,\dots,m$}
                        \State $\mathbf{w}_{k+1,r} \leftarrow\mathbf{w}_{k,r} + \eta_{k,r}\sum_{l=1}^p\Delta\mathbf{w}_{k,r}^l$ 
                    \EndFor
                \EndFor
            \end{algorithmic}
        \end{algorithm}
        \vspace{-1cm}
    \end{minipage}
\end{wrapfigure}
\subsection{Single Hidden-Layer Neural Network with ReLU activations}
We consider the single hidden-layer neural network with ReLU activations, as in: \vspace{-0.15cm}
\begin{align*}
    f(\mathbf{W},\mathbf{a},\mathbf{x}) &= \frac{1}{\sqrt{m}}\sum_{r=1}^ma_r\sigma(\left\langle\mathbf{w}_r,\mathbf{x}\right\rangle) := f(\mathbf{W},\mathbf{x}). \\[-20pt]
\end{align*}
Here, $\mathbf{W} = \begin{bmatrix}\mathbf{w}_1,\dots,\mathbf{w}_m\end{bmatrix}^\top\in\mathbb{R}^{m\times d}$ is the weight matrix of the first layer, and $\mathbf{a} = \begin{bmatrix}a_1 ,\dots,a_m\end{bmatrix}^\top\in\mathbb{R}^m$ is the weight vector of the second layer. 
We assume that each $\mathbf{w}_{r}$ is initialized based on $\mathcal{N}(0, \kappa^2\mathbf{I})$. 
Each weight entry $a_r$ in the second layer is initialized uniformly at random from $\{-1, 1\}$. 
As in \textcolor{myblue2}{\textit{\citep{du2018gradient, zou2020gradient, soltanolkotabi2018theoretical, oymak2019overparameterized, li2020gradient, oymak2020toward}}}, $\mathbf{a}$ is fixed. 

Consider a subnetwork computing scheme with $p$ workers.
In the $k$-th global iteration, we consider each binary mask $\mathbf{M}_k \in \{0, 1\}^{m \times p}$ to be composed of subnetwork masks $\mathbf{m}_k^l \in \{0, 1\}^{m}$ for $l\in[p]$. 
 The $r$-th entry of $\mathbf{m}_k^l$ is denoted as $m_{k,r}^l$, with $m_{k,r}^l = 1$ indicating that neuron $r$ is active in subnetwork $l$ in the $k$th global iteration, and $m_{k,r}^l = 0$ otherwise. We assume that the sampling of the masks for each neuron is independent of other neurons, and further impose the condition that the event $m_{k,r}^l = 1$ happens with a fixed probability for all $k, r$ and $l$. We denote this probability with $\xi = \mathbb{P}\left(m_{k,r}^l=1\right)$.
The surrogate function defined by a subnetwork mask $\mathbf{m}_k^l$ is given by: \vspace{-0.15cm}
\begin{align*}
    f_{\mathbf{m}_k^l}(\mathbf{W}, \mathbf{x}) = \frac{1}{\sqrt{m}}\sum_{r=1}^m a_r\textcolor{red}{m_{k,r}^l}\sigma(\left\langle\mathbf{w}_r,\mathbf{x}\right\rangle). \\[-20pt]
\end{align*}
With colored text, we highlight the differences between the full model and the surrogate functions.
Consider the dataset given by $(\mathbf{X}, \mathbf{y})=\{(\mathbf{x}_i,y_i)\}_{i=1}^n$. We make the following assumption on the dataset:
\begin{assump}
\label{data_assump}
For any $i\in[n]$, it holds that $\left\|\mathbf{x}_i\right\|_2 = 1$ and $\left|y_i\right|\leq C-1$ for some constant $C \geq 1$. Moreover, for any $j\neq i$ it holds that the points $\mathbf{x}_i, \mathbf{x}_j$ are not co-aligned, i.e.,  $\mathbf{x}_i\neq\zeta\mathbf{x}_j$ for any $\zeta\in\R$.
\end{assump}
This assumption is quite standard as in previous literature \textcolor{myblue}{\textit{\citep{du2018gradient, arora2019finegrained,song2020quadratic}}}.We consider training the neural network using the regression loss. 
Given a dataset $(\mathbf{X}, \mathbf{y})$, the function output on the whole dataset is denoted as $f(\mathbf{W},\mathbf{X}) = \begin{bmatrix}f(\mathbf{W},\mathbf{x}_1),\dots,f(\mathbf{W},\mathbf{x}_n)\end{bmatrix}$. 
Then, the (scaled) mean squared error (MSE) of a surrogate model is given by: \vspace{-0.15cm}
\begin{align*}
    L_{\mathbf{m}_k^l}(\mathbf{W}) = \left\|\mathbf{y} - f_{\mathbf{m}_k^l}(\mathbf{W},\mathbf{X})\right\|_2^2. \\[-20pt]
\end{align*}
The surrogate gradient is computed as: \vspace{-0.1cm}
\begin{align*}
    \lgrad{\mathbf{m}_{k}^l}{\mathbf{W}} & = \frac{1}{\sqrt{m}}\sum_{i=1}^na_rm_{k,r}^l\left(f_{\mathbf{m}_k^l}(\mathbf{W},\mathbf{x}_i) - y_i\right)\mathbf{x}_i\mathbb{I}\{\langle\mathbf{w}_r,\mathbf{x}_i\rangle\geq 0\}. \\[-20pt]
\end{align*}
Let $\eta$ be a constant subnetwork training learning rate, and let the aggregation weight $\eta_{k,r}$ be zero if neuron $r$ is active in no subnetwork in the $k$th iteration; otherwise $\eta_{k,r}$ is set to the inverse of the number of subnets in which it is active. Within this setting, the general training algorithm is given by Algorithm \ref{algo:1}.

\section{Convergence on Two-Layer ReLU Neural Network}
We assume that $m_{k,r}^l=1$ happens with a fixed probability for all $k,r$ and $l$, and such probability is denoted by $\xi$.
Consequently, the forward pass of the surrogate function is a linear combination of $\xi$-proportion of the neurons' output.
To keep the pre-activation of the hidden layer at the same scale for both the whole network and the subnetwork, we multiply the weight of the whole network with a factor of $\xi$.
Due to the homogeneity of the ReLU activation, this is equivalent to scaling the output of each neuron, as in \textcolor{myblue2}{\textit{\citep{mianjy2020convergence}}}.
For notation clarity, we define:
\begin{align*}
    u_k^{(i)} = \frac{1}{\sqrt{m}}\sum_{r=1}^ma_r \textcolor{red}{\xi} \sigma(\inner{\mathbf{w}_{k,r}}{\mathbf{x}_i}) = \frac{\textcolor{red}{\xi}}{\sqrt{m}}\sum_{r=1}^ma_r\sigma(\inner{\mathbf{w}_{k,r}}{\mathbf{x}_i}).
\end{align*}
Adding this scaling factor gives the property that $\E_{\mathbf{M}_k}\left[f_{\mathbf{m}_k^l}\left(\mathbf{W}_{k},\mathbf{x}_i\right)\right] = u_k^{(i)}$, meaning that the sampled subnetworks in the global iteration $k$ are unbiased estimators of the aggregated network in the global iteration $k-1$. Here $u_k^{(i)}$ is both the initial whole network output in global iteration $k$ and the aggregated network output in global iteration $k-1$. We focus on the behavior of the following loss, computed on the scaled whole network over iterations $k$:
\begin{align*}
    \hspace{-0cm}
    L_k = \|\mathbf{y} - \mathbf{u}_k\|_2^2, ~~\text{where}~~\mathbf{u}_k = \begin{bmatrix}u_k^{(1)},\dots,u_k^{(n)}\end{bmatrix}.
\end{align*}
This is the regression loss over iterations $k$ between observations $\mathbf{y}$ and the learned model $\mathbf{u}_k$.

\textbf{Properties of subnetwork NTK.}
Recent works on analyzing the convergence of gradient descent for neural networks consider approximating the function output $\mathbf{u}_k$ with the first order Taylor expansion \textcolor{myblue2}{\textit{\citep{du2018gradient, arora2019finegrained, song2020quadratic}}}.
For constant step size $\eta$, taking the gradient descent's (i.e., $\mathbf{W}_{k+1} = \mathbf{W}_k - \eta \nabla_\mathbf{W} L(\mathbf{W}_k)$) first-order Taylor expansion, we get: 
\begin{equation}
\label{net_func_expan}
\begin{aligned}
    u_{k+1}^{(i)} & \approx u_k^{(i)} + \left\langle\nabla_{\mathbf{W}}u_k^{(i)},\mathbf{W}_{k+1} - \mathbf{W}_k\right\rangle
    \approx u_k^{(i)} - \xi\eta\sum_{j=1}^n\mathbf{H}(k)_{ij}(u_k^{(j)} - y_j),
\end{aligned}
\end{equation}
where $\mathbf{H}(k) \in \mathbb{R}^{n \times n}$ is the finite-width NTK matrix of iteration $k$, given by
\begin{align}
    \mathbf{H}(k)_{ij} = \frac{\textcolor{red}{\xi}}{m}\langle\mathbf{x}_i,\mathbf{x}_j\rangle\sum_{r=1}^m\mathbb{I}\{\langle\mathbf{w}_{k,r},\mathbf{x}_i\rangle\geq 0,\langle\mathbf{w}_{k,r},\mathbf{x}_j\rangle\geq 0\}.
    \label{NTK}
\end{align}
Compared with the previous definition of finite-width NTK, we have an additional scaling factor $\xi$.
This is because, based on our later definition of masked-NTK, we would like the masked-NTK to be an unbiased estimator of the finite-width NTK. 
In the overparameterized regime, the change of the network's weights is controlled in a small region around initialization. 
Therefore, the change of $\mathbf{H}(k)$ is small, staying close to the NTK at initialization.
Moreover, the latter can be well approximated by the infinite-width NTK:
\begin{align*}
    \mathbf{H}^\infty_{ij} = \textcolor{red}{\xi} \cdot \mathbb{E}_{\mathbf{w}\sim\mathcal{N}(0,\mathbf{I})}\left[\langle\mathbf{x}_i,\mathbf{x}_j\rangle\mathbb{I}\{\langle\mathbf{w},\mathbf{x}_i\rangle\geq 0,\langle\mathbf{w},\mathbf{x}_j\rangle\geq 0\}\right].
\end{align*}
\textcolor{myblue2}{\textit{\citep{du2018gradient}}} shows that $\mathbf{H}^\infty$ is positive definite. 
\begin{theorem} \textcolor{myblue2}{\textit{\citep{du2018gradient}}}
\label{min_eig_pos}
\textit{Denote $\lambda_0 :=\lambda_{\min}(\mathbf{H}^\infty)$, the minimum eigenvalue of $\mathbf{H}^\infty$. Then we have $\lambda_0 > 0$ as long as assumption (\ref{data_assump}) holds.}
\end{theorem}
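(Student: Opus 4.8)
The plan is to exhibit $\mathbf{H}^\infty$ as a positive multiple of a Gram matrix in an infinite-dimensional Hilbert space, so that positive semidefiniteness is automatic, and then upgrade this to strict positive definiteness by showing the associated feature maps are linearly independent. Concretely, I would work in $\mathcal{H} = L^2(\R^d, \mathcal{N}(0,\mathbf{I}); \R^d)$, the space of square-integrable vector fields under the standard Gaussian measure, and for each $i\in[n]$ define the feature map
\begin{align*}
    \phi_i(\mathbf{w}) = \mathbf{x}_i \, \I\{\inner{\mathbf{w}}{\mathbf{x}_i}\geq 0\}.
\end{align*}
Since $\inner{\phi_i(\mathbf{w})}{\phi_j(\mathbf{w})} = \inner{\mathbf{x}_i}{\mathbf{x}_j}\I\{\inner{\mathbf{w}}{\mathbf{x}_i}\geq 0,\inner{\mathbf{w}}{\mathbf{x}_j}\geq 0\}$, we get $\mathbf{H}^\infty_{ij} = \xi\inner{\phi_i}{\phi_j}_{\mathcal{H}}$, and therefore for any $\mathbf{c}\in\R^n$,
\begin{align*}
    \mathbf{c}^\top\mathbf{H}^\infty\mathbf{c} = \xi\,\norm{\textstyle\sum_{i=1}^n c_i\phi_i}_{\mathcal{H}}^2 \geq 0.
\end{align*}
Because $\xi>0$, it suffices to show this quadratic form vanishes only at $\mathbf{c}=\mathbf{0}$, i.e. that $\phi_1,\dots,\phi_n$ are linearly independent in $\mathcal{H}$; then $\lambda_0 = \min_{\norm{\mathbf{c}}_2=1}\mathbf{c}^\top\mathbf{H}^\infty\mathbf{c}$ is the minimum of a continuous strictly positive function over the compact unit sphere, hence strictly positive.

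For the linear independence, suppose $\sum_i c_i\phi_i = 0$ in $\mathcal{H}$, meaning $\sum_i c_i\mathbf{x}_i\I\{\inner{\mathbf{w}}{\mathbf{x}_i}\geq 0\}=\mathbf{0}$ for $\mathcal{N}$-a.e. $\mathbf{w}$. Fix an index $i$ and consider the hyperplane $H_i=\{\mathbf{w}:\inner{\mathbf{w}}{\mathbf{x}_i}=0\}$. By Assumption \ref{data_assump} the directions $\mathbf{x}_j$ are pairwise non-collinear, so $H_i\neq H_j$ for $j\neq i$; hence I can choose a point $\mathbf{z}\in H_i$ lying on none of the other hyperplanes, and a radius $r>0$ small enough that on the open ball $B(\mathbf{z},r)$ every indicator $\I\{\inner{\cdot}{\mathbf{x}_j}\geq 0\}$ with $j\neq i$ is constant, while $\I\{\inner{\cdot}{\mathbf{x}_i}\geq 0\}$ still attains both values since $\mathbf{z}\in H_i$. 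Splitting $B(\mathbf{z},r)$ into its two halves $B^+$ and $B^-$ across $H_i$ and comparing the a.e. identity on the two sides, every term except the $i$-th cancels (its indicator does not change), isolating $c_i\mathbf{x}_i=\mathbf{0}$; since $\norm{\mathbf{x}_i}_2=1$ this forces $c_i=0$. Repeating over all $i$ gives $\mathbf{c}=\mathbf{0}$, which is the desired independence.

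The one delicate step, and the place I expect the real work, is that the hypothesis supplies $\sum_i c_i\phi_i=0$ only \emph{almost everywhere}, so I cannot evaluate at two individually chosen points straddling $H_i$. The remedy is to run the jump comparison at the level of positive-measure sets: choose positive-measure subsets $A^+\subset B^+$ and $A^-\subset B^-$ on which the identity holds and on which the non-$i$ indicators are literally constant, and note that the average of $\sum_i c_i\phi_i$ over $A^+$ minus its average over $A^-$ equals $c_i\mathbf{x}_i$ because all other terms agree across the two sides. The needed facts—that $B^\pm$ (and suitable subsets) have positive Gaussian measure, which holds since the Gaussian density is strictly positive on all of $\R^d$ and the balls are open—are routine but must be checked. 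Everything else is bookkeeping, and positive definiteness, hence $\lambda_0>0$, follows.
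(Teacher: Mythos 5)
Your proof is correct and is essentially the same argument as the one in the cited source (Du et al., 2018, Theorem 3.1), which is where this paper's own ``proof'' lives --- the paper states the result by citation and does not reprove it. The Gram-matrix reduction to linear independence of the feature maps $\phi_i(\mathbf{w}) = \mathbf{x}_i\,\I\{\inner{\mathbf{w}}{\mathbf{x}_i}\geq 0\}$, the choice of a point of $H_i$ avoiding the other hyperplanes (possible precisely because Assumption \ref{data_assump} forbids co-aligned inputs), and the positive-measure jump comparison across $H_i$ to handle the almost-everywhere identity all match the standard argument, and the extra factor $\xi>0$ in this paper's definition of $\mathbf{H}^\infty$ is handled correctly.
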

With $\mathbf{H}(k)$ staying sufficiently close to $\mathbf{H}^\infty$, \textcolor{myblue2}{\textit{\citep{du2018gradient, arora2019finegrained, song2020quadratic}}} show that $\lambda_{\min}(\mathbf{H}(k)) \geq \frac{\lambda_0}{2} > 0$. Moreover, \Eqref{net_func_expan} implies that
\begin{align*}
    \U_{k+1} - \U_k \approx -\textcolor{red}{\xi}\eta\h(k)(\U_k - \mathbf{y}),
\end{align*}
that further leads to linear convergence rate:
\begin{align*}
    L_{k+1} & \approx L_k + \langle \nabla_{\mathbf{u}_k}L_k, \mathbf{u}_{k+1} -\mathbf{u}_k\rangle
    \approx L_k - \textcolor{red}{\xi}\eta\langle\mathbf{u}_k - \y, \mathbf{H}(k)(\mathbf{u}_k - \y)\rangle
    \approx \left(1 - \textcolor{red}{\xi}\eta\lambda_0\right)L_k.
\end{align*}
In NTK analysis, the Taylor expansion for both $\U_k$ and $L_k$ produces an error term that improves the convergence rate from $\eta\lambda_0$ to $\gamma\eta\lambda_0$ with $\gamma\in(0, 1)$ being a constant. 

For our scenario, the randomly sampled subnetworks bring a trickier situation onto the table: in each iteration, due to the different masks, the NTK changes even when the weights stay the same. To tackle this difficulty, we provide a generalization of the definition of the finite-width NTK that takes both the mask and the weight into consideration:
\begin{defin}
\label{masked_ntk}
Let $\mathbf{m}_{k'}^l$ be the mask of subnetwork $l$ in iteration $k'$. We define the masked-NTK in global iteration $k$ and local iteration $t$ induced by $\mathbf{m}_{k'}^l$ as:
\begin{align*}
    \left(\textcolor{red}{\mathbf{m}_{k'}^l\circ~}\mathbf{H}(k, t)\right)_{ij} = \tfrac{1}{m}\langle\mathbf{x}_i,\mathbf{x}_j\rangle\sum_{r=1}^m \textcolor{red}{m_{k',r}^l}\mathbb{I}\{\langle\mathbf{w}_{k, t,r},\mathbf{x}_i\rangle\geq0,\langle\mathbf{w}_{k, t,r},\mathbf{x}_i\rangle\geq 0\}.
\end{align*} \vspace{-0.5cm}
\end{defin}
Here, with colored text we highlight the main differences to the common NTK definition. 
Although we are only interested in the masked-NTK with $k = k'$, to facilitate our analysis on the minimum eigenvalue of masked-NTK, we also allow $k\neq k'$.
We point out two connections between our masked-NTK and the vanilla NTK: $i)$ the masked-NTK is an unbiased estimator of the whole network's NTK; $ii)$ when $\xi = 1$, the masked-NTK reduce to the vanilla NTK as in equation (\ref{NTK}).
Throughout iterations of the algorithm, the following theorem shows that all masked-NTKs stay sufficiently close to the infinite-width NTK.

\begin{theorem}
\label{min_eig_masked_ntk}
Suppose the number of hidden nodes satisfies $m = \Omega\left(\sfrac{n^2\log(Kpn/\delta)}{\xi\lambda_0^2}\right)$. If for all $k, t$ it holds that $\|\mathbf{w}_{k,t,r} - \mathbf{w}_{0,r}\|_2\leq R:= \frac{\kappa\lambda_0}{8n}$, then with probability at least $1-\delta$, for all $k,k'\in[K]$ we have: \vspace{-0.15cm}
\begin{align*}
    \lambda_{\min}(\mathbf{m}_{k'}\circ\mathbf{H}(k,t))\geq \tfrac{\lambda_0}{2}. \\[-15pt]
\end{align*}
\end{theorem}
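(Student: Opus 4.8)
The plan is to prove the stronger operator-norm statement
\[
\norm{\mathbf{m}_{k'}\circ\mathbf{H}(k,t) - \mathbf{H}^\infty}_2 \;\le\; \tfrac{\lambda_0}{2}
\]
simultaneously for every relevant triple $(k,k',t)$, from which the claim follows by Weyl's inequality together with Theorem~\ref{min_eig_pos}: $\lambda_{\min}(\mathbf{m}_{k'}\circ\mathbf{H}(k,t)) \ge \lambda_{\min}(\mathbf{H}^\infty) - \tfrac{\lambda_0}{2} = \tfrac{\lambda_0}{2}$. First I would insert the masked-NTK evaluated at the \emph{initial} weights $\mathbf{w}_{0,r}$ as a reference point and split by the triangle inequality,
\[
\norm{\mathbf{m}_{k'}\circ\mathbf{H}(k,t) - \mathbf{H}^\infty}_2 \le \norm{\mathbf{m}_{k'}\circ\mathbf{H}(0,0) - \mathbf{H}^\infty}_2 + \norm{\mathbf{m}_{k'}\circ\mathbf{H}(k,t) - \mathbf{m}_{k'}\circ\mathbf{H}(0,0)}_2,
\]
and bound each term by $\tfrac{\lambda_0}{4}$ on a high-probability event.

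For the first (initialization) term, the masks are independent of the Gaussian initialization and $\mathbb{P}(m_{k',r}^l=1)=\xi$, so each entry of $\mathbf{m}_{k'}\circ\mathbf{H}(0,0)$ is an average of $m$ i.i.d.\ bounded summands whose mean is exactly $\mathbf{H}^\infty_{ij}$ — that is, the masked-NTK at initialization is an unbiased estimator of the infinite-width kernel, as already noted after Definition~\ref{masked_ntk}. I would then concentrate each of the $n^2$ entries, convert the entrywise bound into a Frobenius (hence spectral) bound of $\tfrac{\lambda_0}{4}$, and union bound over the entries together with the mask index set ($k'\in[K]$, $l\in[p]$), which produces the $\log(Kpn/\delta)$ factor. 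The quantitatively important point is that only about $\xi m$ neurons are active, so the effective sample size driving concentration is $\xi m$; using a Bernstein/Chernoff bound that exploits the $O(\xi)$ per-neuron variance — rather than a crude Hoeffding estimate — is exactly what yields the $1/\xi$ factor in the width requirement $m = \Omega\!\left(n^2\log(Kpn/\delta)/(\xi\lambda_0^2)\right)$.

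For the second (perturbation) term I would use the hypothesis $\norm{\mathbf{w}_{k,t,r}-\mathbf{w}_{0,r}}_2 \le R$ to control the change in activation patterns. A summand of $\mathbf{m}_{k'}\circ\mathbf{H}$ differs between $\mathbf{w}_{k,t,r}$ and $\mathbf{w}_{0,r}$ only when neuron $r$ flips its sign on $\mathbf{x}_i$ or $\mathbf{x}_j$, and under the perturbation bound such a flip forces $|\inner{\mathbf{w}_{0,r}}{\mathbf{x}_i}|\le R$ — a boundary-crossing event depending only on the initialization. Hence the entrywise difference is at most $\tfrac{1}{m}\sum_r m_{k',r}^l\,\I\{\mathbf{w}_{0,r}\text{ is within }R\text{ of the boundary of }\mathbf{x}_i\text{ or }\mathbf{x}_j\}$, whose expectation is of order $\xi R/\kappa$ by Gaussian anti-concentration of the pre-activation $\inner{\mathbf{w}_{0,r}}{\mathbf{x}_i}\sim\mathcal{N}(0,\kappa^2)$. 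The choice $R=\kappa\lambda_0/(8n)$ is calibrated precisely so that, after concentrating this count and summing over the $n$ columns, the term is $\le\tfrac{\lambda_0}{4}$; I would again union bound the counts over $(i,j)$, $k'$, and $l$. Importantly, allowing $k\neq k'$ introduces no difficulty: even though $\mathbf{m}_{k'}$ and the trained weights $\mathbf{w}_{k,t,r}$ may be statistically dependent through the dynamics, both terms are reduced to statements about the base randomness $(\mathbf{w}_{0,r},\mathbf{m}_{k'})$ at initialization (which is clean) plus the deterministic consequence of the assumed perturbation bound.

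I expect the main obstacle to be the perturbation term combined with the mask bookkeeping. One must (a) decouple the training-dependent weights from the masks by routing the entire argument through the initialization-only boundary-crossing event, so that the problematic $k\neq k'$ dependence never has to be analyzed directly, and (b) track the $\xi$ factors carefully so that the \emph{active}-neuron count $\xi m$, not the full width $m$, governs both the initialization concentration and the flip count — this is exactly what delivers the advertised $1/\xi$ over-parameterization dependence rather than a looser bound.
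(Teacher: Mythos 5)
Your proposal matches the paper's proof essentially step for step: the same triangle-inequality split around $\mathbf{m}_{k'}^l\circ\mathbf{H}(0,0)$, entrywise concentration of the unbiased initialization term followed by a Frobenius-to-spectral bound, a Bernstein bound on the activation-flip count via the boundary-crossing events $A_{ir}$ for the perturbation term, union bounds over $(i,j,k',l)$, and Weyl's inequality against Theorem~\ref{min_eig_pos}. The only inaccuracy is your attribution of the $1/\xi$ in the width requirement to a variance-aware bound on the initialization term (the paper simply uses Hoeffding there, with no $\xi$ dependence; the $1/\xi$ arises from the Bernstein tail $e^{-m\xi R/(10\kappa)}$ on the flip-count term), but this affects only the accounting of constants, not the validity of the argument.
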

The above theorem relies on the small weight change in iteration $(k,t)$. Such assumption is also made in previous work \textcolor{myblue2}{\textit{\citep{du2018gradient, song2020quadratic}}} to show the positive definiteness of the NTK matrix.
In order to guarantee each subnetwork's loss decrease, we need to ensure that the $i)$ the weight change is bounded up to global iteration $k$ (this implies that when a subnetwork is sampled from the whole network, its weights do not deviate much from the initialization); and, $ii)$ the weight change during the local training of the subnetwork is also bounded. 
The following hypothesis establishes these two conditions, and sets up the ``skeleton'' to construct different theorems, based on problems considered.
\emph{The aim of this work is to prove this hypothesis for several cases.}
\begin{hypo}
\label{subnetwork_convergence}
Fix the number of global iterations $K$. 
Suppose the number of hidden nodes satisfies $m = \Omega\left(\sfrac{n^2\log(Kpn/\delta)}{\xi\lambda_0^2}\right)$, and suppose we use a constant step size $\eta = O\left(\sfrac{\lambda_0}{n^2}\right)$. 
If the weight perturbation before iteration $k$ is bounded by
\begin{align}
    \|\mathbf{w}_{k,r} - \mathbf{w}_{0,r}\|_2 +2\eta\tau\sqrt{\tfrac{nK}{m\delta}}\E_{[\mathbf{M}_{k-1}], \mathbf{W}_0, \mathbf{a}}\left[\|\y - \U_k\|_2\right] + (K - k)\kappa\sqrt{\xi(1-\xi)pn} \leq R, \label{eq:01}
\end{align}
then, for all $t\in[\tau]$, with probability at least $1 - 4\delta$, we have:
\begin{align}
    \left\|\y - f_{\mathbf{m}_k^l}\left(\mathbf{W}_{k,t+1}^l,\mathbf{X}\right)\right\|_2^2\leq \left(1 - \frac{\eta\lambda_0}{2}\right)
    \left\|\y - f_{\mathbf{m}_k^l}\left(\mathbf{W}_{k,t}^l,\mathbf{X}\right)\right\|_2^2, \label{eq:02}
\end{align}
and the local weight perturbation satisfies:
\begin{align}
    \|\mathbf{w}_{k,t,r} - \mathbf{w}_{k,r}\|\leq \tfrac{\eta\tau\sqrt{2nK}}{\sqrt{m\delta}}\E_{[\mathbf{M}_{k-1}], \mathbf{W}_0, \mathbf{a}}\left[\|\y - \U_k\|_2\right] + 2\eta\kappa n\sqrt{\tfrac{2\xi(1-\xi)pK}{m\delta}}. \label{eq:03}
\end{align}
\end{hypo}
The hypothesis above states that, in a given global step $k$, given a small weight perturbation guarantee (\Eqref{eq:01}) up to the current global iterations, each subnetwork's local loss also decreases linearly (\Eqref{eq:02}), as well as the weight perturbation remains bounded (\Eqref{eq:03}). 
\emph{Yet, the above hypothesis does not connect the subnetwork's loss with the whole network's loss through the sampling and aggregation process.}
Our aim is to turn Hypothesis \ref{subnetwork_convergence} into a series of specific theorems that cover different cases.
In particular, we prove using induction the condition for which Hypothesis (\ref{subnetwork_convergence}) holds under: $i)$ masks with i.i.d. Bernoulli; and $ii)$ masks with i.i.d categorical rows. Utilizing these results, we provide convergence results for the two scenarios. This is the goal in the following section.


\subsection{Generic Convergence Result under Bernoulli Mask}
While the local gradient descent for each subnetwork is guaranteed to make progress with high probability, when a large network is split into small subnetworks, the expected loss on the dataset increases.
Since $\E_{\mathbf{M}_k}\left[f_{\mathbf{m}_k^l}\left(\mathbf{W}_{k},\mathbf{x}_i\right)\right] = u_k^{(i)}$, simply expanding the MSE reveals that:
$$
\mathbb{E}_{\mathbf{M}_k}\left[\left\|\mathbf{y} - f_{\mathbf{m}_k^l}\left(\mathbf{W}_{k},\mathbf{x}_i\right)\right\|_2^2\right] = \left\|\mathbf{y} - \mathbf{u}_k\right\|_2^2 + \E_{\mathbf{M}_k}\left[\left\|f_{\mathbf{m}_k^l}\left(\mathbf{W}_{k},\mathbf{x}_i\right) - \mathbf{u}_k\right\|_2^2\right].
$$
When analyzing the convergence, the second term on the right-hand side needs to be carefully dealt with. 
It is non-trivial to show that, when combining the updated network of the local steps, the loss computed on the whole network is smaller than or equal to the error of each sub-network.
We will solve these technical difficulties for the training procedure with subnetworks created using masks sampled from two types of distribution.
In this section, we focus on masks satisfies the following Bernoulli assumption:
\begin{assump}(Bernoulli Mask)
\label{bern_mask_assump}
Each mask entry $m_{k,r}^l$ is independently from a Bernoulli  distribution with mean $\xi$, i.e., $m_{k,r}^l\sim\texttt{Bern}(\xi)$.
\end{assump}
Masks sampled in this fashion allow a neuron to be active in more than one subnetworks, or none of the subnetworks. For convenience, we denote the probability that a neuron is active in at least one subnetwork with $\theta = 1 - (1 - \xi)^p$. In the meantime, subnetworks created using Bernoulli masks enjoy full independence, and thus have nice concentration properties. By carefully analyzing the aggregated gradient of each local step, we arrive at the following generic convergence theorem, under the Bernoulli mask assumption.
\begin{theorem}
\label{bern_general_convergence}
Let assumptions (\ref{data_assump}) and (\ref{bern_mask_assump}) hold. Then $\lambda_0 > 0$. Fix the number of global iterations to $K$ and the number of local iterations to $\tau$. Let the number of hidden neurons satisfy:
\begin{align}
    m = \Omega\left(\frac{K}{\delta}\max\left\{\frac{n^4}{\kappa^2\xi\theta\lambda_0^4}, \frac{nK^2B_1}{\kappa^2\theta\lambda_0^2}, K^2p\right\}\right). \label{eq:04}
\end{align}
Then Algorithm (\ref{algo:1}) with a constant step-size $\eta = O\left(\frac{\lambda_0}{\max\{n, p\}n\tau}\right)$ converges with probability at least $1 - \delta$, according to:
\begin{align}
    \E_{[\mathbf{M}_{k-1}]}\left[\|\y - \U_{k}\|_2^2\right] \leq \left(1 - \tfrac{1}{4}\eta\theta\tau\lambda_0\right)^{k}\|\y - \U_0\|_2^2 + B_1, \label{eq:05}
\end{align}
for some error region level $B_1 > 0$, defined as:
\begin{equation}
    \label{generic_error_region}
    \begin{aligned}
        B_1 & = O\left(\frac{(1-\xi)^2n^3d}{m\lambda_0^2} + \frac{(\theta - \xi^2)n\kappa^2}{p} + \left(1 -\frac{1}{\tau}\right)^2\theta^2(1-\xi)n\kappa^2\right).
    \end{aligned}
\end{equation}
\end{theorem}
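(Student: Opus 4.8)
The plan is to prove Theorem~\ref{bern_general_convergence} by induction on the global iteration $k$, carrying two coupled invariants at once: the cumulative weight-perturbation bound \eqref{eq:01}, and the loss recursion that unrolls into \eqref{eq:05}. The first invariant is the enabling one — as long as it holds at step $k$, the weight change of every sampled subnetwork stays below $R$, so Theorem~\ref{min_eig_masked_ntk} keeps the minimum eigenvalue of every masked-NTK at least $\frac{\lambda_0}{2}$, and the activation patterns $\I\{\inner{\mathbf{w}_{k,t,r}}{\mathbf{x}_i}\geq 0\}$ remain stable across the $\tau$ local steps. Before entering the induction I would discharge Hypothesis~\ref{subnetwork_convergence} in the Bernoulli setting: with the masked-NTK positive definite and the activations frozen, a standard fixed-subnetwork NTK argument yields the per-step local contraction \eqref{eq:02} and the local perturbation bound \eqref{eq:03}.

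With Hypothesis~\ref{subnetwork_convergence} in hand, I would decompose one global step into three pieces analyzed separately. \textbf{Sampling:} using the unbiasedness $\E_{\mathbf{M}_k}[f_{\mathbf{m}_k^l}(\mathbf{W}_k,\mathbf{x}_i)] = u_k^{(i)}$, I expand $\E_{\mathbf{M}_k}[\norm{\y - f_{\mathbf{m}_k^l}(\mathbf{W}_k,\mathbf{X})}_2^2] = \norm{\y-\U_k}_2^2 + \E_{\mathbf{M}_k}[\norm{f_{\mathbf{m}_k^l}(\mathbf{W}_k,\mathbf{X}) - \U_k}_2^2]$ and bound the variance term through the per-neuron Bernoulli variance $\xi(1-\xi)$ together with a concentration estimate on $\sigma(\inner{\mathbf{w}_{k,r}}{\mathbf{x}_i})$. \textbf{Local descent:} I apply \eqref{eq:02} exactly $\tau$ times to contract each subnetwork's loss by $(1-\frac{\eta\lambda_0}{2})^\tau$. \textbf{Aggregation:} I linearize the $\xi$-scaled output $\U_{k+1}$ around $\U_k$ and write $\U_{k+1}-\U_k$ in terms of the aggregated update $\Delta\mathbf{w}_{k,r} = \eta_{k,r}\sum_l \Delta\mathbf{w}_{k,r}^l$; the leading term is the aggregated masked-NTK applied to the residual, whose minimum eigenvalue I lower bound via Theorem~\ref{min_eig_masked_ntk} and the fact that a neuron is updated with probability $\theta = 1-(1-\xi)^p$, producing the contraction rate $\frac{1}{4}\eta\theta\tau\lambda_0$.

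Chaining these gives a one-step recursion $\E[\norm{\y-\U_{k+1}}_2^2] \leq (1-\frac{1}{4}\eta\theta\tau\lambda_0)\E[\norm{\y-\U_k}_2^2] + E$, where the additive error $E$ collects three contributions that I expect to match \eqref{generic_error_region} term by term: (i) the finite-width linearization remainder, scaled by $(1-\xi)^2$ and vanishing as $m\to\infty$, giving $\frac{(1-\xi)^2 n^3 d}{m\lambda_0^2}$; (ii) the bias of the aggregated first-step gradient relative to the true full-network gradient, decaying like $\frac{(\theta-\xi^2)n\kappa^2}{p}$ because independence across the $p$ masks concentrates the coupling between the normalization and the residuals; and (iii) the drift of the aggregated gradient over the $\tau$ local steps, controlled by \eqref{eq:03} and producing $(1-\frac{1}{\tau})^2\theta^2(1-\xi)n\kappa^2$. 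Unrolling the geometric recursion yields \eqref{eq:05}. I would then close the induction by checking \eqref{eq:01} at step $k+1$: the per-step weight growth is bounded by the aggregated local perturbation \eqref{eq:03} plus the sampling-noise scale $\kappa\sqrt{\xi(1-\xi)pn}$, and the shrinking slack term $(K-k)\kappa\sqrt{\xi(1-\xi)pn}$ is exactly what absorbs this growth, with the overparameterization \eqref{eq:04} keeping the total below $R$.

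The main obstacle will be the aggregation step. Because of the ReLU nonlinearity, $\U_{k+1}$ is \emph{not} the average of the trained subnetwork outputs, so I cannot simply average the subnetwork loss bounds; I must instead route everything through the masked-NTK linearization and control three entangled error sources simultaneously — the bias introduced by the coupling between the mask-dependent normalization $\eta_{k,r}=1/\sum_l m_{k,r}^l$ and the mask-dependent subnetwork residuals, the gradient drift over the $\tau$ local steps, and the finite-width fluctuation of the activation pattern. Arranging the estimates so that these appear \emph{additively} (rather than multiplying one another) and so that each matches a distinct term of $B_1$ is the delicate part, and it is precisely where the full independence of the Bernoulli masks is exploited.
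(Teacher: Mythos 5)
Your plan follows essentially the same route as the paper's proof: induction over global iterations carrying the cumulative weight-perturbation invariant, Hypothesis~\ref{subnetwork_convergence} for the local steps, and a masked-NTK linearization of the aggregated update whose expectation is $\tfrac{\theta}{\xi}$ times the full gradient plus a bias, which yields the $1-\tfrac{1}{4}\eta\theta\tau\lambda_0$ contraction and the three additive error terms of $B_1$. The only inaccuracy is in the attribution of those terms: in the paper the $\tfrac{(1-\xi)^2n^3d}{m\lambda_0^2}$ contribution is the self-interaction bias of the aggregated gradient (a selected neuron enters its own subnetwork's output with weight $1$ rather than $\xi$), while the $\tfrac{(\theta-\xi^2)n\kappa^2}{p}$ contribution is the variance of the mixing function across the $p$ independent masks --- you have these two mechanisms swapped, but both appear in your plan and the resulting bounds are the same.
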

Overall, given the overparameterization requirement in \Eqref{eq:04}, the neural network training error, as expressed in \Eqref{eq:05}, drops linearly up to a neighborhood around the optimal point, defined by $B_1$ in \Eqref{generic_error_region}.
We notice that $B_1$ has three terms that all reduce to zero when $\xi = 1$.
In the first term of $B_1$, $m$ appears in the denominator, implying that this term can be arbitrarily decreased as the cost of increasing the number of hidden neurons.
The second term is kept at a constant scale, as long as the initialization scale $\kappa$ is small enough.
The third term disappears when the number of local steps is one. However, the loss decreases more in each global iteration, when $\tau$ is larger, since the convergence rate is $1 - O(\eta\theta\tau\lambda_0)$. In the case of $\xi = 1, p = 1$ and $\tau = 1$, the proposed framework reduces to the whole network training. Choosing $\kappa = 1$, Theorem \ref{bern_general_convergence} reduces to a form similar to \textcolor{myblue2}{\textit{\citep{song2020quadratic}}}, with the same convergence rate and over-parameterization requirement.

\textit{\textbf{Remark.}} Compared to \textcolor{myblue2}{\textit{\citep{du2018gradient, song2020quadratic}}}, the scenario considered in our work involves an additional randomness introduced by the mask. 
Thus, our convergence result is based on the expectation of the loss:
we derive the bound of the loss from the bound of its expectation, using concentration inequalities, and apply a union bound over all iterations $k\in[K]$.
Therefore, the required over-parameterization on $m$ grows as we increase the number of global iterations, meaning that, under a fixed $m$, the convergence is only guaranteed for a bounded number of iterations. 
This is not a concern in general since to guarantee $\epsilon$ small training error we only need $K$ to be $\frac{\log \epsilon^{-1} + \log n}{\log (1 - O(\eta\theta\tau\lambda_0)^{-1})}$. This is termed as early-stopping, and is used in previous literature \textcolor{myblue2}{\textit{\citep{su2019learning, AllenZhu2018Learning}}}.

The complete proof of this theorem is defered to Appendix \ref{bern_general_convergence_proof}, and we sketch the proof below:
\begin{enumerate}
    \item Let $X_{k,r} = \sum_{l=1}^pm_{k,r}^l$ denote the number of subnetworks that update neuron $r$ in global iteration $k$. Let $N_{k,r} = \max\{X_{k,r}, 1\}$ to be the normalizer of the aggregated gradient, $N_{k,r}^\perp=\min\{X_{k,r}, 1\}$ to be the indicator of whether a neuron is selected by at least one subnetwork. Then, the update of each weight vector can be written as:
    \begin{equation}
        \vspace{-0.2cm}
        \label{aggregated_update}
        \begin{aligned}
            \mathbf{w}_{k+1,r} = \mathbf{w}_{k,r} - \eta\cdot\frac{ N_{k,r}^\perp}{N_{k,r}}\sum_{t=0}^{\tau-1}\sum_{l=1}^p\frac{\partial L\left(\mathbf{W}_{k,t,r}^l\right)}{\partial\mathbf{w}_r}.
        \end{aligned}
        \vspace{-0.2cm}
    \end{equation}
    \item We first focus on the aggregated gradient of the first local step $\frac{N_{k,r}^\perp}{N_{k,r}}\sum_{l=1}^p\frac{\partial L_{\mathbf{m}_k^l}\left(\mathbf{W}_{k,t,r}^l\right)}{\partial\mathbf{w}_r}$, and show that this aggregated gradient satisfies a concentration property around a point near the ideal gradient $\tfrac{\partial L\left(\mathbf{W}_{k}\right)}{\partial\mathbf{w}_r}$, and such concentration is closer if $p$ is larger.
    \item We notice that the difference between the aggregated gradient in the later local steps and the aggregated gradient in the first local step depends on how much the local weight of each subnetwork in the later local step deviates from the weight of the first local step. We then show that the local weight change is bounded, implying that the aggregated gradient in all local steps lie near to the aggregated gradient in the first local step.
    \item Lastly, we use the standard NTK technique to show that the aggregated gradient update in equation (\ref{aggregated_update}) leads to linear convergence per each global step, with an additional error term.
\end{enumerate}
To interpret the theorem, we choose $\kappa = n^{-\frac{1}{2}}$, make mild assumptions and simplify the key messages of the form in Theorem (\ref{bern_general_convergence}). Note that this choice of $\kappa$ is the same as in \textcolor{myblue2}{\citep{arora2019finegrained}}.
\begin{assump}
\label{simp_assump}
For the simplicity of our theorem, we assume that $\max\{K, d, p\}\leq n$ and $\lambda_0 \leq 1$.
\end{assump}
Notice that for all $m$ that satisfies equation (\ref{eq:04}), the first term in $B_1$ is upper bounded by $O(1)$. Moreover, since $p\geq 1$ and $\tau\geq 1$, by choosing $\kappa = n^{-\frac{1}{2}}$, the second and third term are also upper bounded by $O(1)$. Therefore, $B_1$ is upper bounded by $O(1)$. Moreover, since $\lambda_0\geq 1$ and $\max\{K, p\}\leq n$, we have that both $\frac{nK^2B_1}{\kappa^2\theta\lambda_0^2}$ and  $K^2p$ are smaller than $\frac{n^4}{\kappa^2\xi\theta\lambda_0^4}$, so the over-parameterization requirement in equation (\ref{eq:04}) reduces to $m = \frac{n^5K}{\delta\xi\theta\lambda_0^4}$. For different choice of $\tau$ and $p$, our considered scenario reduces to different existing algorithms. In the following, we provide convergence results of these algorithms, as corollaries of Theorem (\ref{bern_general_convergence}), by considering different $\tau$ and $p$ values.

\textbf{Dropout.} The dropout algorithm \textcolor{myblue2}{\textit{\citep{srivastava2014dropout}}} corresponds to the case $\tau = 1, p = 1$. For this assignment, we arrive at the following corollary.
\begin{corollary}
\label{dropout_convergence}
Let assumptions (\ref{data_assump}), (\ref{bern_mask_assump}), and (\ref{simp_assump}) holds. Fix the number of dropout iterations to $K$, the step size to $\eta = O\left(\sfrac{\lambda_0}{n^2}\right)$, and let the number of hidden neurons satisfies $m = \Theta\left(\sfrac{n^5K}{\xi^2\lambda_0^4\delta}\right)$. Then,  the dropout algorithm on a two-layer ReLU neural network converges with probability at least $1 - \delta$, according to:
\begin{align*}
    \E_{[\mathbf{M}_{k-1}]}\left[\|\y - \U_{k}\|_2^2\right] \leq \left(1 - \frac{1}{4}\eta\xi\lambda_0\right)^{k}\|\y - \U_0\|_2^2 + O\left(1 - \xi\right).
\end{align*}
\end{corollary}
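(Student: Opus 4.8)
The plan is to obtain Corollary~\ref{dropout_convergence} as a direct specialization of Theorem~\ref{bern_general_convergence} to $p=1$ and $\tau=1$, working under Assumption~\ref{simp_assump} with the concrete choice $\kappa = n^{-1/2}$. First I would substitute $p=1$ into $\theta = 1-(1-\xi)^p$, which collapses to $\theta = \xi$. This single identity does most of the work: it turns the generic contraction factor $1-\tfrac{1}{4}\eta\theta\tau\lambda_0$ appearing in \Eqref{eq:05} into the claimed $1-\tfrac{1}{4}\eta\xi\lambda_0$, and it turns the generic step-size prescription $\eta = O\paren{\lambda_0/(\max\{n,p\}n\tau)}$ into $\eta = O(\lambda_0/n^2)$, since $p=1\le n$ forces $\max\{n,p\}=n$ and $\tau=1$.

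Next I would simplify the error region $B_1$ from \Eqref{generic_error_region}. Setting $\tau=1$ annihilates the third summand through the factor $\paren{1-\tfrac{1}{\tau}}^2 = 0$. With $\theta=\xi$ and $p=1$, the second summand reads $(\xi-\xi^2)n\kappa^2 = \xi(1-\xi)n\kappa^2$, which becomes $\xi(1-\xi) = O(1-\xi)$ once $\kappa = n^{-1/2}$. For the first summand $\tfrac{(1-\xi)^2 n^3 d}{m\lambda_0^2}$ I would insert the target width $m = \Theta\paren{n^5 K/(\xi^2\lambda_0^4\delta)}$; using $d\le n$, $K\ge 1$, and $\delta,\lambda_0,\xi\le 1$ from Assumption~\ref{simp_assump}, this summand is $O\paren{(1-\xi)^2/(nK)} = O(1-\xi)$. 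Hence $B_1 = O(1-\xi)$, which is exactly the additive term in the statement.

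Finally I would confirm that the generic width requirement \Eqref{eq:04} really does reduce to $m=\Theta\paren{n^5 K/(\xi^2\lambda_0^4\delta)}$ at this parameter setting. With $\kappa=n^{-1/2}$, $\theta=\xi$, and $p=1$, the three arguments of the maximum become $\tfrac{n^5}{\xi^2\lambda_0^4}$, $\tfrac{n^2 K^2 B_1}{\xi\lambda_0^2}$, and $K^2$. Invoking the crude bound $B_1=O(1)$ together with $K\le n$ and $\xi,\lambda_0\le 1$, the last two arguments are dominated by the first, so \Eqref{eq:04} collapses to $m=\Omega\paren{n^5K/(\xi^2\lambda_0^4\delta)}$, and the width chosen in the corollary satisfies it.

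The one genuinely delicate point is the apparent circularity between $m$ and $B_1$: the width bound \Eqref{eq:04} contains $B_1$, while $B_1$ in \Eqref{generic_error_region} contains $m$ in its denominator. I would resolve this in the same two-pass fashion used in the discussion preceding the corollary: first bound $B_1=O(1)$ coarsely (valid for every $m$ meeting \Eqref{eq:04}), use that to show the first argument of the maximum dominates and thereby fix $m$, then feed this $m$ back to sharpen the first summand and conclude $B_1=O(1-\xi)$. Beyond untangling this dependency the argument is pure bookkeeping of substitutions; I do not expect any new estimate to be needed, since every nontrivial inequality is inherited directly from Theorem~\ref{bern_general_convergence}.
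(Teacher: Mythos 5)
Your proposal is correct and mirrors the paper's own derivation: the corollary is obtained by specializing Theorem~\ref{bern_general_convergence} to $p=1$, $\tau=1$ (so $\theta=\xi$) with $\kappa=n^{-1/2}$, simplifying $B_1$ to $O(1-\xi)$ and the width requirement to $m=\Theta\paren{n^5K/(\xi^2\lambda_0^4\delta)}$ exactly as in the discussion preceding the corollaries. Your handling of the $m$--$B_1$ interdependence via a coarse $B_1=O(1)$ bound first is also the same two-pass argument the paper uses.
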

Typically, $1 - \xi$ is usually referred to as the ``dropout rate''.
In our result, as $\xi$ approaches $0$, which corresponds to the scenario that no neurons are selected, the convergence rate approaches $1$, meaning that the loss hardly decreases. In the mean time, the error term remains constant. On the contrary, as $\xi$ approaches $1$, which corresponds to the scenario that all neurons are selected, we get the same convergence rate of $1 - O\left(\eta\lambda_0\right)$ as in previous literature \textcolor{myblue2}{\textit{\citep{du2018gradient,song2020quadratic}}}, and the error term decreases to $0$. Moreover, we should note that the over-parameterization requirement also depends on $\xi$. In particular, as $\xi$ becomes smaller, we need a larger number of hidden neurons to guarantee convergence.

\textbf{Multi-Sample Dropout.}
The multi-sample dropout \textcolor{myblue2}{\textit{\citep{inoue2019multi}}} corresponds to the scenario where $\tau = 1, p\geq 1$. Our corollary below indicates how increasing $p$ helps the convergence.
\begin{corollary}
\label{multi_sample_dropout_convergence}
Let assumptions (\ref{data_assump}), (\ref{bern_mask_assump}), and (\ref{simp_assump}) hold. Fix the number of dropout iterations to $K$, the step size to $\eta = O\left(\sfrac{\lambda_0}{n^2}\right)$, and let the number of hidden neurons satisfy $m = \Theta\left(\sfrac{n^5K}{\xi\theta\lambda_0^4\delta}\right)$. Then the $p$-sample dropout algorithm on a two-layer ReLU neural network converges with probability at least $1 - \delta$, according to:
\begin{align*}
    \E_{[\mathbf{M}_{k'-1}]}\left[\|\y - \U_{k'}\|_2^2\right] \leq \left(1 - \frac{1}{4}\eta\theta\lambda_0\right)^{k'}\|\y - \U_0\|_2^2 + O\left(\frac{(1-\xi)^2}{nK} + \frac{\theta - \xi^2}{p}\right).
\end{align*}
\end{corollary}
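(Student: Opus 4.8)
The plan is to obtain Corollary \ref{multi_sample_dropout_convergence} as a direct specialization of Theorem \ref{bern_general_convergence}: the multi-sample dropout setting is precisely the regime $\tau = 1$ with $p \geq 1$, so no new training dynamics need to be analyzed. All the heavy lifting---the concentration of the aggregated first-step gradient around the ideal full-network gradient, the control of the later local gradients, and the per-global-step NTK contraction---is already delivered by Theorem \ref{bern_general_convergence}. What remains is to substitute $\tau = 1$ and $\kappa = n^{-1/2}$, invoke Assumption \ref{simp_assump}, and verify that the rate, the error floor, the step size, and the overparameterization requirement all collapse to the stated forms.

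First I would set $\tau = 1$ in the rate $\left(1 - \tfrac14\eta\theta\tau\lambda_0\right)^{k}$, which immediately yields the claimed factor $\left(1 - \tfrac14\eta\theta\lambda_0\right)^{k'}$. Next I would simplify the error region $B_1$ of \Eqref{generic_error_region} term by term. Its third term carries the factor $\left(1 - \tfrac1\tau\right)^2$, which vanishes at $\tau = 1$ and drops out. For the second term, substituting $\kappa^2 = 1/n$ turns $\tfrac{(\theta - \xi^2)n\kappa^2}{p}$ into $\tfrac{\theta - \xi^2}{p}$, matching the second term of the target floor. For the first term $\tfrac{(1-\xi)^2 n^3 d}{m\lambda_0^2}$, I would insert $m = \Theta\!\left(\tfrac{n^5 K}{\xi\theta\lambda_0^4\delta}\right)$ to obtain $O\!\left(\tfrac{(1-\xi)^2 d\,\xi\theta\lambda_0^2\delta}{n^2 K}\right)$, and then use $d \leq n$, $\lambda_0 \leq 1$, and $\xi,\theta,\delta \leq 1$ from Assumption \ref{simp_assump} to bound it by $O\!\left(\tfrac{(1-\xi)^2}{nK}\right)$. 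Summing the two surviving terms reproduces the error floor in the corollary.

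It then remains to confirm that the hypotheses of Theorem \ref{bern_general_convergence} reduce to the stated parameter choices. The step size $\eta = O\!\left(\tfrac{\lambda_0}{\max\{n,p\}\,n\,\tau}\right)$ becomes $O\!\left(\tfrac{\lambda_0}{n^2}\right)$ once I apply $\tau = 1$ and $p \leq n$. For the width, I would substitute $\kappa^2 = 1/n$ into the three candidates inside the $\max$ of \Eqref{eq:04}, producing $\tfrac{n^5}{\xi\theta\lambda_0^4}$, $\tfrac{n^2 K^2 B_1}{\theta\lambda_0^2}$, and $K^2 p$, and then argue the first dominates: with $B_1 = O(1)$, $K \leq n$, $p \leq n$, and $\lambda_0 \leq 1$, both of the latter two are $O\!\left(\tfrac{n^5}{\xi\theta\lambda_0^4}\right)$, so the requirement collapses to $m = \Theta\!\left(\tfrac{n^5 K}{\xi\theta\lambda_0^4\delta}\right)$.

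The one point needing care---more a subtlety than a genuine obstacle---is the self-referential appearance of $B_1$ inside the width bound \Eqref{eq:04}, since the first term of $B_1$ itself scales with $m$. I would resolve this by observing the dependence is benign: for \emph{any} $m$ meeting \Eqref{eq:04}, the first term of $B_1$ is already $O(1)$, so $B_1 = O(1)$ holds uniformly and may be treated as a constant when checking that $\tfrac{n^2 K^2 B_1}{\theta\lambda_0^2}$ is dominated by $\tfrac{n^5}{\xi\theta\lambda_0^4}$. With this consistency in place, the substitutions above fit together and the corollary follows directly.
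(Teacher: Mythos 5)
Your proposal is correct and follows essentially the same route as the paper: the paper also obtains this corollary by specializing Theorem \ref{bern_general_convergence} to $\tau = 1$, $\kappa = n^{-1/2}$, observing that the $\left(1 - \tfrac{1}{\tau}\right)^2$ term kills the third component of $B_1$, bounding the first component by $O\!\left(\tfrac{(1-\xi)^2}{nK}\right)$ via the width choice and Assumption \ref{simp_assump}, and arguing that the $\tfrac{n^4}{\kappa^2\xi\theta\lambda_0^4}$ candidate dominates the $\max$ in \Eqref{eq:04} so the overparameterization collapses to $\Theta\!\left(\tfrac{n^5K}{\xi\theta\lambda_0^4\delta}\right)$. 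Your handling of the apparent circularity of $B_1$ in the width requirement also matches the paper's remark that $B_1 = O(1)$ uniformly over all admissible $m$.
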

Recall that $\theta = 1 - (1 - \xi)^p$ denotes the probability that a neuron is selected by at least one subnetwork.
Based on this corollary, increasing the number of subnetworks $p$ improve the convergence rate and the over-parameterization requirement since $\theta$ increases as $p$ increases.
Moreover, increasing the number of subnetworks help decreasing the error term even when the dropout rate $\xi$ is fixed. 
After $p$ is as large as $nK$, the error term stops decreasing, dominated by the term $O\left(\tfrac{(1-\xi)^2}{nK}\right)$.
Lastly, compared with the result of dropout, the over-parameterization depends not only on $\xi$, but also on $\theta$.

\textbf{Multi-Worker IST.}
The multi-worker IST algorithm \textcolor{myblue2}{\textit{\citep{yuan2020distributed}}} is very similar to the general scheme with $p\geq 1$ and $\tau \geq 1$, but with the additional assumption that $\max\{K, d, p\}\leq n$, and a special choice of initialization $\kappa = n^{-\frac{1}{2}}$.
\begin{corollary}
\label{IST_convergence}
Let assumptions (\ref{data_assump}), (\ref{bern_mask_assump}), and (\ref{simp_assump}) hold. Fix the number of dropout iterations to $K$, the step size to $\eta = O\left(\sfrac{\lambda_0}{n\tau\max\{n, p\}}\right)$, and let the number of hidden neurons satisfy $m = \Theta\left(\sfrac{n^5K}{\xi\theta\lambda_0^4\delta}\right)$. Then the IST algorithm on a two-layer ReLU neural network converges with probability at least $1 - \delta$, according to:
\begin{align*}
    \E_{[\mathbf{M}_{k-1}]}\left[\|\y - \U_{k}\|_2^2\right] \leq \left(1 - \frac{1}{4}\eta\theta\tau\lambda_0\right)^{k}\|\y - \U_0\|_2^2 + O\left(\frac{(1-\xi)^2}{nK} + \frac{\theta - \xi^2}{p} +\left(1 - \frac{1}{\tau}\right)\theta^2(1-\xi)\right)
\end{align*}
\end{corollary}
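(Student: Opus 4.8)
The plan is to derive this corollary purely as a specialization of Theorem~\ref{bern_general_convergence}, so that no new convergence machinery is required. First I would verify its hypotheses: Assumptions~\ref{data_assump} and~\ref{bern_mask_assump} are imposed directly (guaranteeing $\lambda_0>0$ and the independence of the Bernoulli entries), and the chosen step size $\eta = O\left(\frac{\lambda_0}{n\tau\max\{n,p\}}\right)$ is exactly the admissible form $\eta = O\left(\frac{\lambda_0}{\max\{n,p\}n\tau}\right)$ demanded by the theorem. It then remains to simplify the error region $B_1$ of \Eqref{generic_error_region} and to confirm the over-parameterization requirement, both under the specialization $\kappa^2 = n^{-1}$ and Assumption~\ref{simp_assump}.

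To simplify $B_1$ I would substitute $\kappa^2=n^{-1}$ and $m=\Theta\left(\frac{n^5K}{\xi\theta\lambda_0^4\delta}\right)$ term by term. The first summand $\frac{(1-\xi)^2n^3d}{m\lambda_0^2}$ becomes $O\left(\frac{(1-\xi)^2 d\,\xi\theta\lambda_0^2\delta}{n^2K}\right)$, which collapses to $O\left(\frac{(1-\xi)^2}{nK}\right)$ after using $d\leq n$ and absorbing the bounded factors $\xi,\theta,\lambda_0,\delta\leq 1$. The second summand $\frac{(\theta-\xi^2)n\kappa^2}{p}$ equals $\frac{\theta-\xi^2}{p}$ exactly, since $n\kappa^2=1$. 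The third summand $\left(1-\frac{1}{\tau}\right)^2\theta^2(1-\xi)n\kappa^2$ equals $\left(1-\frac{1}{\tau}\right)^2\theta^2(1-\xi)$, which I would loosen to $\left(1-\frac{1}{\tau}\right)\theta^2(1-\xi)$ via $1-\frac{1}{\tau}\in[0,1)$. Summing the three recovers exactly the stated error region.

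The one genuine subtlety---rather than a real obstacle, since all the hard analysis already lives inside Theorem~\ref{bern_general_convergence}---is the apparent circularity in \Eqref{eq:04}, where $B_1$ sits inside the over-parameterization bound while $B_1$ itself depends on $m$. I would resolve this in two steps. The first entry of the maximum, $\frac{n^4}{\kappa^2\xi\theta\lambda_0^4}=\frac{n^5}{\xi\theta\lambda_0^4}$, does not involve $B_1$, so imposing it alone already forces the first summand of $B_1$ to be $O(1)$; the remaining two summands are $m$-independent and are $O(1)$ under Assumption~\ref{simp_assump} with $n\kappa^2=1$, which establishes $B_1=O(1)$ unconditionally. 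Substituting this back, the middle entry $\frac{nK^2B_1}{\kappa^2\theta\lambda_0^2}=\frac{n^2K^2B_1}{\theta\lambda_0^2}$ and the third entry $K^2p\leq n^3$ are both dominated by the first, using $K,p\leq n$ together with $\xi,\theta,\lambda_0\leq 1$; hence \Eqref{eq:04} collapses to $m=\Theta\left(\frac{n^5K}{\xi\theta\lambda_0^4\delta}\right)$, matching the corollary. Finally, plugging the simplified $B_1$ and this $m$ into \Eqref{eq:05} yields the claimed bound.
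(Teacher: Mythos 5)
Your proposal is correct and matches the paper's own route: the corollary is obtained exactly as you describe, by specializing Theorem~\ref{bern_general_convergence} with $\kappa = n^{-1/2}$ under Assumption~\ref{simp_assump}, simplifying the three summands of $B_1$ in \Eqref{generic_error_region}, and observing that the first entry of the maximum in \Eqref{eq:04} dominates once $B_1 = O(1)$ is established. Your handling of the apparent circularity in \Eqref{eq:04} is the same argument the paper makes implicitly in the paragraph preceding the corollaries, so there is nothing to add.
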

While IST with subnetworks constructed using Bernoulli masks , it allows a neuron to be active in more than one or none of the subnetworks. In the next section, we consider another mask sampling approach that fits better into the scenario of the original IST, where each hidden neuron is distributed to one and only one subnetwork with uniform probability.

\subsection{Multi-Subnetwork Convergence Result for Categorical Mask}
We consider masks sampled from categorical distribution, as explained by the assumption below:
\begin{assump}
\label{categorical_assump}
We assume that $\mathbf{M}_k\sim\texttt{Categorical}(p)$. To be specific, for each $r\in[m]$, let $l'_r\sim\texttt{Unif}([p])$, and we define $m_{k,r}^l = 1$ if $l = l'_r$ and $m_{k,r}^l = 0$ otherwise. 
\end{assump}
In this way, the masks endorsed by each worker are non-overlapping (as stated in \textcolor{myblue2}{\textit{\citep{yuan2020distributed}}}), and the union of the masks covers the whole set of hidden neurons. However, we note that the subnetworks created by the masks sampled according to this fashion are no longer independent. The following theorem presents the convergence result under this setting.
\begin{theorem}
\label{cat_mask_convergence}
Let assumptions (\ref{data_assump}) and (\ref{categorical_assump}) hold. Then $\lambda_0 > 0$. Moreover, let $\lambda_{\max}$ denote the maximum eigenvalue of $\mathbf{H}^\infty$. Fix the number of global iterations to $K$ and the number of local iterations to $\tau$.
Let the number of hidden neurons be $m = \Omega\left(\tfrac{n^5\tau^2K\lambda_{\max}}{\lambda_0^6\delta}\right)$, and choose the initialization scale $\kappa = \sqrt{n\lambda_{\max}}\lambda_0^{-1}$. Let $\gamma = \paren{1 - p^{-1}}^{\frac{1}{3}}$. Then, Algorithm (\ref{algo:1}) with a constant step-size $\eta = O\left(\tfrac{\lambda_0}{n^2}\min\left\{\frac{p}{\gamma^2\tau}, 1\right\}\right)$ converges with probability at least $1 - \delta$, according to:
\begin{align*}
    \E_{[\mathbf{M}_{k-1}]}\left[\|\y - \U_k\|_2^2\right] \leq \left(\gamma + (1 - \gamma)\left(1 - \frac{\eta\lambda_0}{2}\right)^\tau\right)^k \norm{\y - \U_0}_2^2 + O\left(\frac{\gamma\tau n\kappa^2\lambda_{\max}}{\lambda_0^2}\right).
\end{align*}
\end{theorem}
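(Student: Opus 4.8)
The plan is to reduce Theorem~\ref{cat_mask_convergence} to Hypothesis~\ref{subnetwork_convergence} together with two categorical-specific lemmas — a \emph{sampling} bound and an \emph{aggregation} bound — and then close a per-global-iteration recursion by induction on $k$. First I would verify that the hypothesis applies in the categorical setting. For a fixed worker $l$, the marginal law of $\mathbf{m}_k^l$ is i.i.d.\ $\texttt{Bern}(1/p)$ (each neuron independently selects its owner), so Theorem~\ref{min_eig_masked_ntk} gives $\lambda_{\min}(\mathbf{m}_k^l\circ\h(k,t))\geq\lambda_0/2$ and the per-subnetwork linear decrease \Eqref{eq:02} carries over verbatim; the local perturbation bound \Eqref{eq:03} must be re-derived since the coupling $\sum_{l}m_{k,r}^l=1$ changes the aggregation term, but this is a routine concentration computation.

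Next I would prove the sampling bound. Writing $h_k^l=\norm{\y-f_{\mathbf{m}_k^l}(\mathbf{W}_k,\mathbf{X})}_2^2$, the identity $\E_{\mathbf{M}_k}[f_{\mathbf{m}_k^l}(\mathbf{W}_k,\mathbf{x}_i)]=u_k^{(i)}$ gives a bias--variance split $\E_{\mathbf{M}_k}[h_k^l]=\norm{\y-\U_k}_2^2+\E_{\mathbf{M}_k}[\norm{f_{\mathbf{m}_k^l}(\mathbf{W}_k,\mathbf{X})-\U_k}_2^2]$, and since the summands $a_r m_{k,r}^l\sigma(\inner{\mathbf{w}_{k,r}}{\mathbf{x}_i})$ are independent and bounded (using $\norm{\mathbf{w}_{k,r}}_2$ close to initialization), the variance term is $O(n\kappa^2(1-1/p))$. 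This quantifies the unavoidable ``loss increase from splitting.''

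The heart of the argument is the aggregation bound. In the categorical case every neuron is owned by exactly one worker, so $N_{k,r}^\perp/N_{k,r}=1$ and $\sum_{l}f_{\mathbf{m}_k^l}(\mathbf{W},\cdot)$ equals the (unscaled) full network; the aggregated update of \Eqref{aggregated_update} simply stitches the per-owner gradients together. I would Taylor-expand $\U_{k+1}$ around $\U_k$ along this stitched update, as in \Eqref{net_func_expan}, and bound $\norm{\y-\U_{k+1}}_2^2$ by a weighted combination of the trained subnetwork losses $g_k^l\leq(1-\eta\lambda_0/2)^\tau h_k^l$ plus cross-terms. Because the workers are \emph{not} independent, the cross-correlations between subnetwork gradients do not vanish and must instead be controlled by the operator norm of the NTK, which is where $\lambda_{\max}(\mathbf{H}^\infty)$ enters; balancing the three competing error contributions (splitting variance, cross-correlation, and local-drift) via a Young/AM--GM split is what produces the exponent $\tfrac{1}{3}$ in $\gamma=(1-p^{-1})^{1/3}$ and the convex-combination contraction factor $\gamma+(1-\gamma)(1-\eta\lambda_0/2)^\tau$.

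Finally I would assemble the one-step recursion $\E[\norm{\y-\U_{k+1}}_2^2]\leq(\gamma+(1-\gamma)(1-\eta\lambda_0/2)^\tau)\,\E[\norm{\y-\U_k}_2^2]+O(\gamma\tau n\kappa^2\lambda_{\max}/\lambda_0^2)$ and unroll it over $k$, summing the geometric series of error terms to obtain the stated additive neighborhood. The induction must simultaneously re-establish the weight-perturbation precondition \Eqref{eq:01}: using $\kappa=\sqrt{n\lambda_{\max}}/\lambda_0$, the prescribed $\eta$, and the overparameterization $m=\Omega(n^5\tau^2K\lambda_{\max}/(\lambda_0^6\delta))$, the total movement over $K$ steps stays within $R=\kappa\lambda_0/(8n)$, and a Markov/union-bound step upgrades the expectation statement to the high-probability guarantee. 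The main obstacle is the aggregation bound: unlike the Bernoulli case, the subnetwork gradients are statistically coupled, so the clean concentration of the aggregated first-step gradient is unavailable and the correlated cross-terms — rather than averaging out in $1/p$ — have to be absorbed through $\lambda_{\max}$ and the delicate $\gamma$-split that makes the bound tight.
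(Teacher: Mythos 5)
Your overall skeleton is the right one --- invoke Hypothesis~\ref{subnetwork_convergence} for the per-subnetwork linear decrease (and you are correct that the marginal law of a single categorical mask is i.i.d.\ $\texttt{Bern}(1/p)$ across neurons, so Theorem~\ref{min_eig_masked_ntk} applies), prove a sampling bound of order $n\kappa^2(1-p^{-1})$ via the bias--variance split (this is exactly Lemma~\ref{surrogate_error_bound} with $\xi=p^{-1}$), and close a one-step recursion by induction that simultaneously maintains the weight-perturbation precondition. You also correctly anticipate that $\lambda_{\max}$ and the exponent $\tfrac{1}{3}$ in $\gamma$ arise from a Young/AM--GM balancing step.

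The gap is in the aggregation step, which is the heart of the proof. The paper does \emph{not} Taylor-expand $\U_{k+1}$ around $\U_k$ along the stitched update, and it does not control gradient cross-correlations through the NTK operator norm. Because each neuron has exactly one owner, the identity $\U_{k+1}=\tfrac{1}{p}\sum_{l=1}^p\hat\U_{k,\tau}^l$ is \emph{exact} in function space (the ReLU is evaluated at precisely the owner's trained weight, so the nonlinearity is bypassed entirely), and Lemma~\ref{global_error_decompose} gives the exact parallelogram decomposition $\|\y-\U_{k+1}\|_2^2=\tfrac{1}{p}\sum_l\|\y-\hat\U_{k,\tau}^l\|_2^2-\tfrac{1}{p^2}\sum_{l}\sum_{l'<l}\|\hat\U_{k,\tau}^l-\hat\U_{k,\tau}^{l'}\|_2^2$. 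The decisive observation, via Lemma~\ref{surrogate_func_error_decompose}, is that the sampling-induced loss increase $\tfrac{1}{p}\sum_l\|\U_k-\hat\U_k^l\|_2^2$ equals the \emph{same} pairwise-spread quantity evaluated at local time $0$, so the increase from splitting and the decrease from reassembly cancel up to the residual $\iota_k$, which measures only how much the spread drifts during $\tau$ local steps; Lemma~\ref{iotak_bound} bounds this drift by $O(\eta\tau\lambda_{\max})$ times the spread, which is where $\lambda_{\max}$ actually enters. Your route forfeits this cancellation: bounding the sampling cost by its magnitude $O(n\kappa^2(1-p^{-1}))$ and handling the aggregation gain separately (or discarding it) inflates the error neighborhood by a factor of order $1/(\eta\tau\lambda_0)$ relative to the stated $O(\gamma\tau n\kappa^2\lambda_{\max}/\lambda_0^2)$, and a first-order Taylor expansion of the aggregated update would additionally have to fight activation-pattern errors that the exact identity avoids. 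To repair the argument you would need to rediscover, in gradient space, the fact that the splitting and reassembly terms are two evaluations of one quantity --- at which point you are reproducing the paper's function-space decomposition.
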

We defer the proof of this theorem to Appendix \ref{cat_mask_convergence_proof}. This theorem has a couple noticeable properties. 
First, when the number of workers $p = 1$, i.e., the scenario of multi-worker IST reduces to the full-network training, we achieve $\gamma = 0$, which implies that the error term disappears, driving further connections between regular and IST training. 
Second, when the number of subnetworks $p$ increases, $\gamma$ also increases, leading to a slower decreasing of the training MSE and convergence to a bigger error neighborhood. In particular, as we increase the number of subnetworks, the number of active neurons in each subnetwork becomes smaller, which makes the subnetworks both harder to train and harder to synchronize. We defer the complete proof of this theorem to Appendix \ref{cat_mask_convergence_proof}, and sketch the proof below:
\begin{enumerate}
    \item Let $\hat{\U}_{k,t}^l = f_{\mathbf{m}_k^l}\left(\mathbf{W}^l_{k,\tau},\mathbf{X}\right)$. We notice that  $f = \frac{1}{p}\sum_{l=1}^pf_{\mathbf{m}_k^l}$.
    Using this property, we show that $L_{k+1} = \frac{1}{p}\sum_{l=1}^p\|\y - \hat{\U}_{k,\tau}^l\|_2^2 - \frac{1}{p}\sum_{l=1}^p\|\U_{k+1} - \hat{\U}_{k,\tau}^l\|_2^2$. The first term here enjoys linear convergence starting from an initial value of $\|\y - \hat{\U}_{k,0}^l\|_2^2$.
    \item It then follows that $\E_{\mathbf{M}_k}[\frac{1}{p}\sum_{l=1}^p\|\y - \hat{\U}_{k}^l\|_2^2] = L_k + \E_{\mathbf{M}_k}[\frac{1}{p}\sum_{l=1}^p\|\U_k - \hat{\U}_{k}^l\|_2^2]$. Putting things together, we have that $\E_{\mathbf{M}_k}\left[L_{k+1}\right]\leq (1 - \alpha)^\tau L_k + \iota_k$, where $\iota_k = \E_{\mathbf{M}_k}[\frac{1}{p}\sum_{l=1}^p\|\U_k - \hat{\U}_{k}^l\|_2^2 - \|\U_{k+1} - \hat{\U}_{k,\tau}^l\|_2^2]$, where $\alpha\in(0,1)$ is some convergence rate achieved by invoking Hypothesis (\ref{subnetwork_convergence}).
    \item We then use the small weight perturbation induced by over-parameterization, which means that $\|\hat{\U}_{k}^{l'} - \hat{\U}_{k,\tau}^{l'}\|_2$ is small. This allows us to bound the term $\iota_k$, and arrive at the final convergence.
\end{enumerate}


\section{Experiments}
\begin{figure}[ht!]
    \vspace{-0.5cm}
    \centering
    \begin{subfigure}[b]{0.45\textwidth}
        \includegraphics[width=\textwidth]{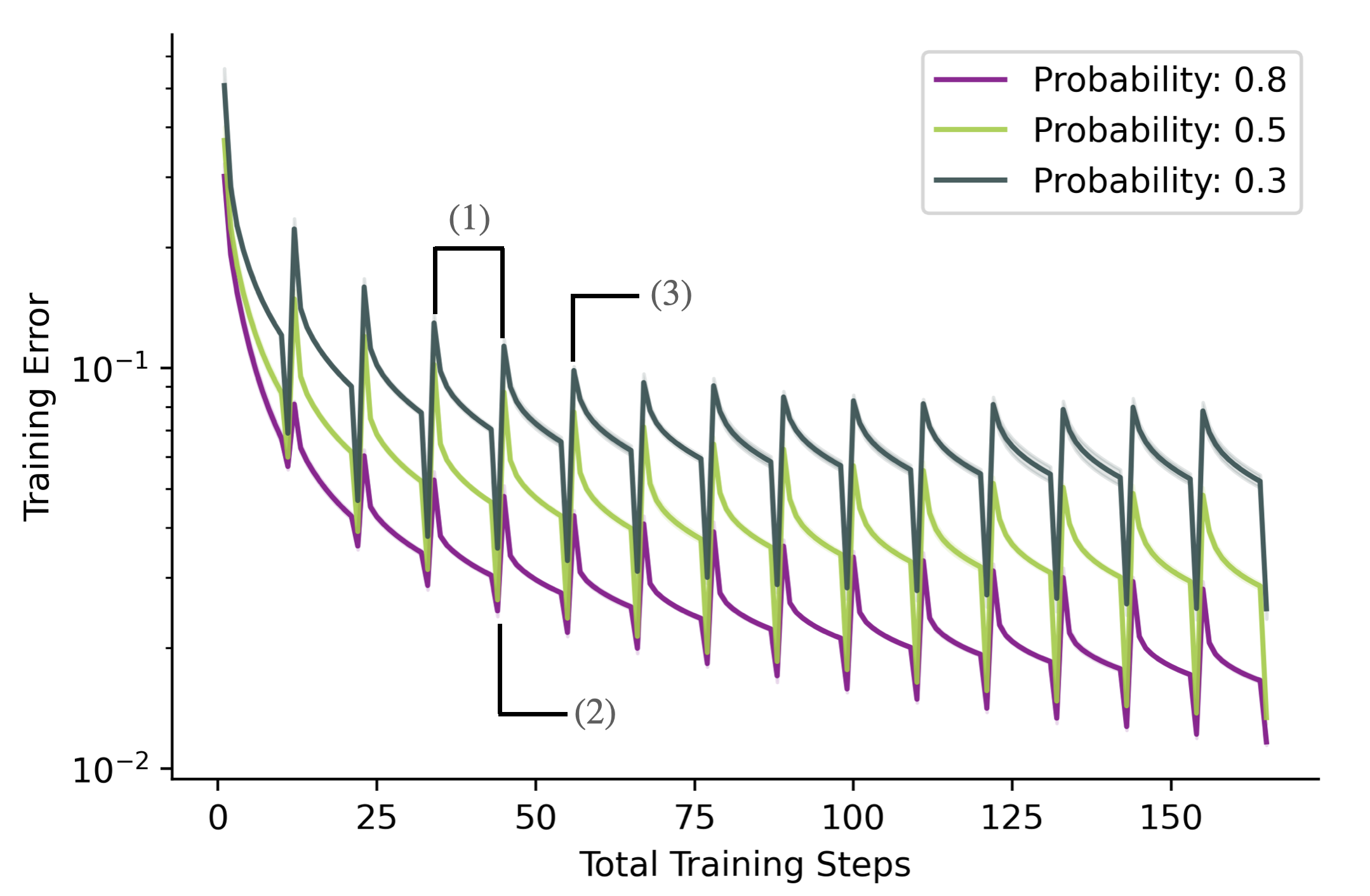}
        \caption{}
        \label{local_avg}
    \end{subfigure}
    \hspace{-0.4cm}
    \begin{subfigure}[b]{0.45\textwidth}
        \includegraphics[width=\textwidth]{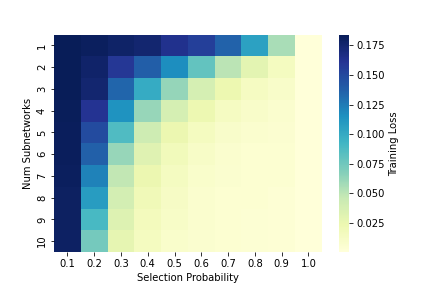}
        \caption{}
        \label{bern_mask}
    \end{subfigure}
    \begin{subfigure}[b]{0.45\textwidth}
        \includegraphics[width=\textwidth]{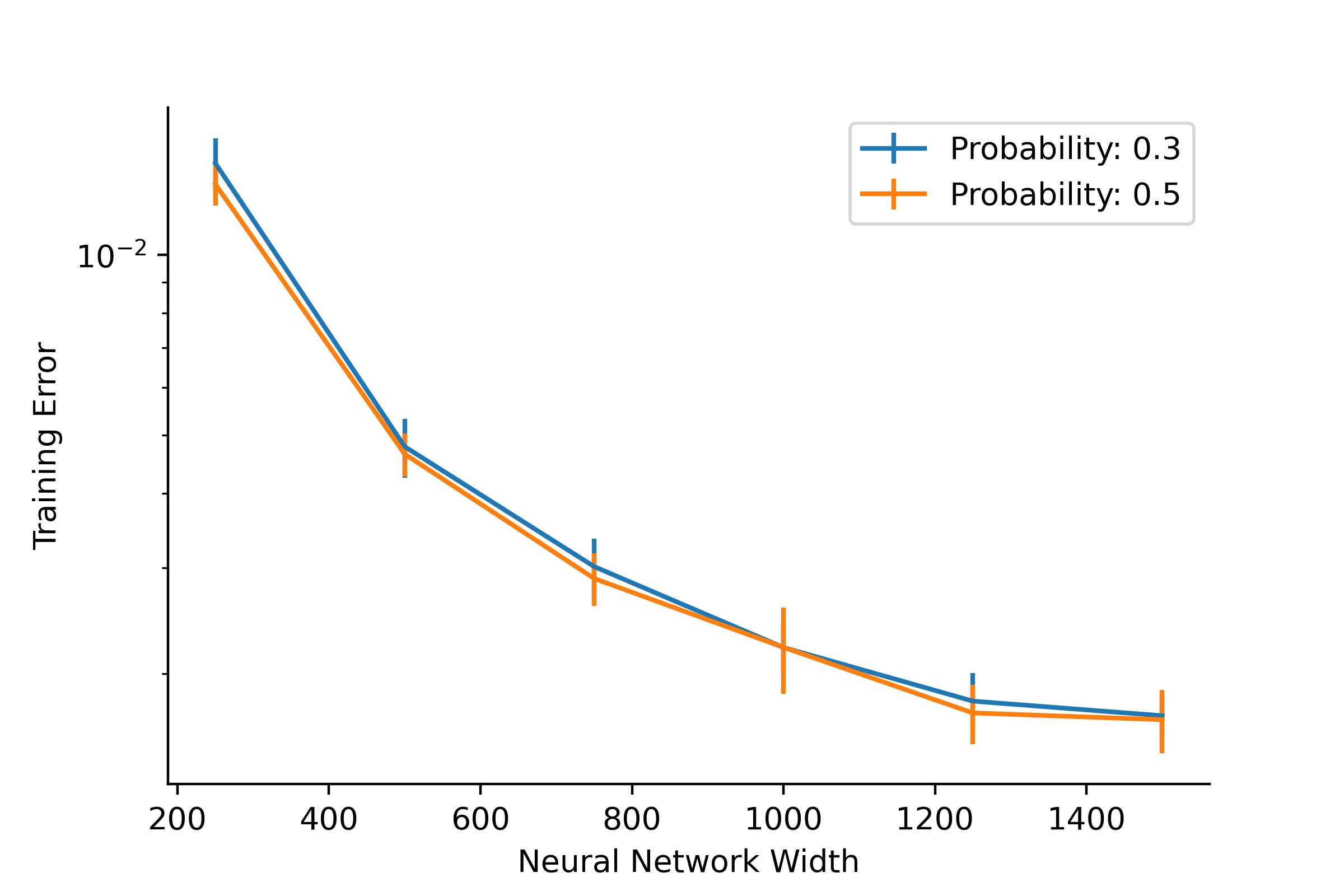}
        \caption{}
        \label{width_effect}
    \end{subfigure}
    \begin{subfigure}[b]{0.45\textwidth}
        \includegraphics[width=\textwidth]{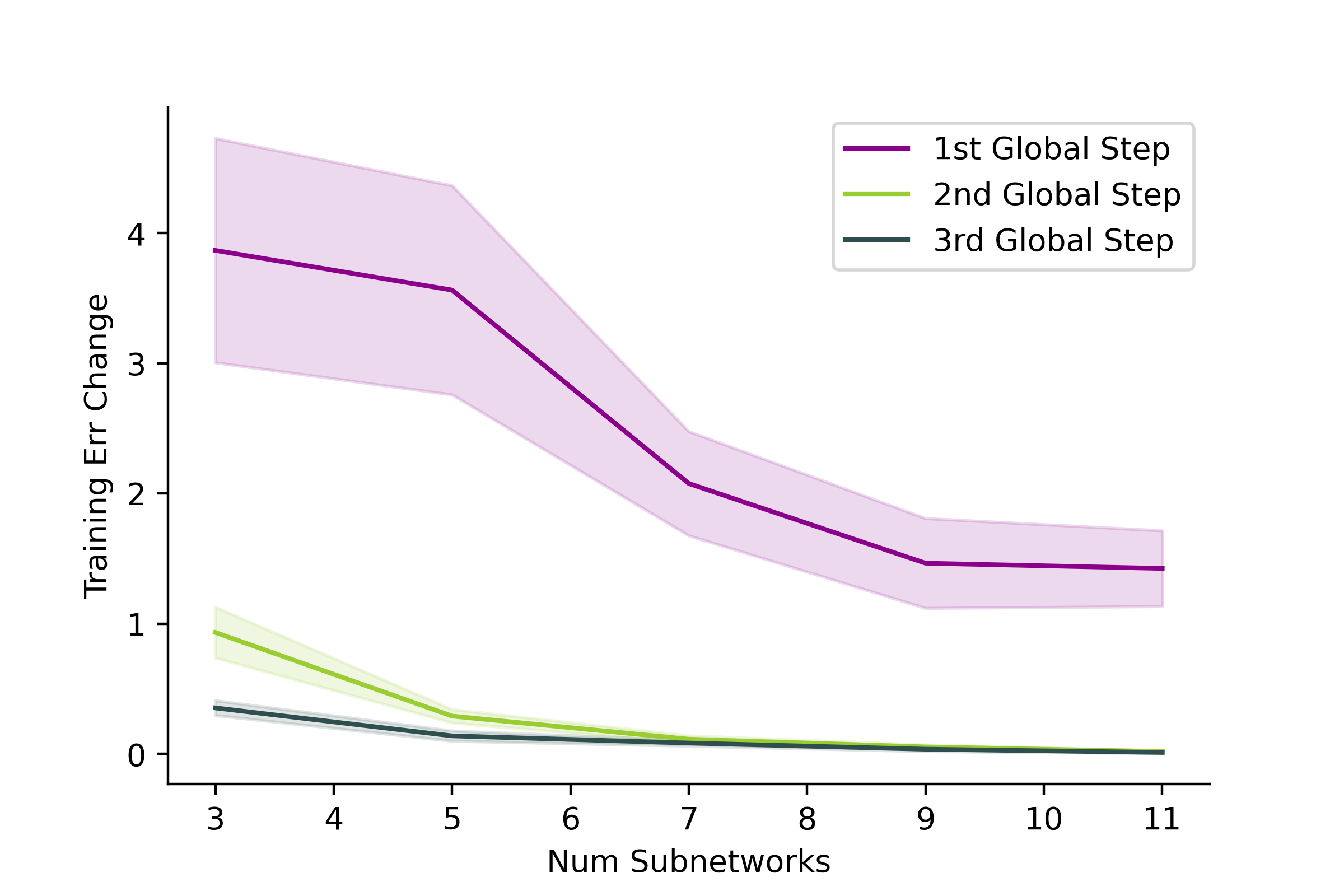}
        \caption{}
        \label{cat_result}
    \end{subfigure}
    \caption{Validation experiments on a single hidden layer perceptron.}
    \label{Exp}
    \vspace{-0.3cm}
\end{figure}

We empirically validate our main theorems on a one-hidden-layer, ReLU- activated neural network. With the purpose of performing the experiments on a task that is both widely used and representative, and possess some simplicity to be solved by the one-hidden-layer MLP, we use the features/embeddings extracted from a convolutional-based neural network \textcolor{myblue2}{\textit{\citep{Krizhevsky09learningmultiple}}}. 
This is a common practice: E.g., the recent work in \cite{chowdhury2021few} proposes the use of a library of pre-trained networks to extract useful features, which later on are processed by added topmost layers used for classification. This results in a procedure  that takes an image, creates an embedding, and then uses that embedding to build a classifier, by feeding the embedding into a multi-layer perceptron with a single/multiple hidden layers. 
In this work, we take a ResNet-50 model \textcolor{myblue2}{\textit{\citep{He2015Deep}}} pretrained on ImageNet as our feature extractor and concatenate it with two fully-connected layers. We then train this combined models on the CIFAR-10 dataset, and take the outputs of the re-trained ResNet-50 model as the input features, and use the logits output of the combined model as the labels. The obtained input feature has a dimension of 2048, and we choose a constant learning rate and a sample size of 1000.

In \Figref{local_avg}, we plot the logarithm of the mean and variance of the training error dynamic with respect to the $K$ (for clarity, we only plot the first 160 iterations), which includes the sampling step, local training steps, as well as the gradient aggregation step. Since the algorithm converges with a stable/similar manner across all trials, the variance is too small to be observed from the figure. Notice that there are three types of dynamics, as annotated in the figure: (1) \textit{A smooth decrease of training error:} This corresponds to subnetworks' local training, which is supported by our theory that each subnetwork makes local progress. (2) \textit{A sudden decrease of training error:} This corresponds to the aggregation of locally-trained subnetworks, and is consistent with our proof in Theorem \ref{cat_mask_convergence}. (3) \textit{A sudden increase of training error:} This corresponds to re-sampling subnetworks; according to our theory, the expected average training error increases after sampling.

\Figref{bern_mask} provides heatmap results that demonstrate the change of the error term as we vary the number of subnetworks, and the selection probability. 
In \Figref{bern_mask}, the subnetworks are generated using Bernoulli masks, and the training process assumes a fixed number of local steps. 
Note that, as we fix the number of subnetworks and increase the selection probability, the error decreases (lighter colors in heatmap). 
Moreover, if we fix the number of selection probability and increase the number of subnetworks, the training error also decreases. This is consistent with Theorem \ref{bern_general_convergence}.

\Figref{width_effect} studies how the error term changes as we increase the width of the neural network. In Theorem (\ref{bern_general_convergence}), we notice that the error term decreases as we choose a smaller initialization scale $\kappa$, while a smaller $\kappa$ would require a larger over-parameterization. This is consistent with our experiments in \figref{width_effect}: as we increase the number of hidden neurons and adjust the initialization scale, we observe that the training converges to a smaller error.

\Figref{cat_result} shows how the convergence rate changes as we increase the number of subnetworks under the categorical mask assumption. In particular, the y-axis denotes the training error improvement in the first, second, and third global step, respectively. The training error improvement is defined to be the $\frac{\text{training error in last step} - \text{training error in current step}}{\text{training error in last step}}$. We observe that the training error improvement decreases consistently across the first three global steps, as we increase the number of subnetworks. This corresponds to what we have shown in Theorem (\ref{cat_mask_convergence}), where $1 - \gamma$ decreases as we increase $p$.

\section{Conclusion}
We prove linear convergence up to a neighborhood around the optimal point when training and combining subnetworks in a single hidden-layer perceptron scenario. 
Our work extends results on dropout, multi-sample dropout, and the Independent Subnet Training, and has broad implications on how the sampling method, the number of subnetworks, and the number of local steps affect the convergence rate and the size of the neighborhood around the optimal point.
While our work focus on the single hidden-layer perceptron, we consider multi-layer perceptrons as an interesting direction: we conjecture that a more refined analysis of each layer's output is required \textcolor{myblue2}{\textit{\citep{pmlr-v97-du19c, pmlr-v97-allen-zhu19a}}}. Moreover, focusing on the convergence of a stochastic algorithm for our framework, as well as considering different losses (e.g., classification tasks or even generic generalization losses) are interesting future research directions. Lastly, training with randomly sampled subnetwork may result into a regularization benefit. Theoretically studying how the proposed training scheme affects the generalization ability of the neural network is an interesting next step.

\bibliography{tmlr}
\bibliographystyle{tmlr}
\clearpage
\appendix
\section{Notation}
\begin{longtable}{|M{2cm}|M{5cm}|M{8cm}|}
\caption{Notations}
\label{notation-table}\\

\multicolumn{1}{c}{\bf SYMBOL}  &\multicolumn{1}{c}{\bf DESCRIPTION} &\multicolumn{1}{c}{\bf MATHEMATICAL DEFINITION}
\\ \hline\vspace{0.2cm}
$K$ & Number of global iterations & $K\in\mathbb{N}_+$\\
\hline\vspace{0.2cm}
$k$ & Index of global iterations & $k\in[K]$\\
\hline\vspace{0.2cm}
$\tau$ & Number of local iterations & $\tau\in\mathbb{N}_+$\\
\hline\vspace{0.2cm}
$t$ & Index of local iterations & $t\in[\tau]$\\
\hline\vspace{0.2cm}
$p$ & Number of subnetworks & $p\in\mathbb{N}_+$\\
\hline\vspace{0.2cm}
$l$ & Index of subnetworks & $l\in[p]$\\
\hline\vspace{0.2cm}
$\xi$ & Probability of selecting a neuron & $\xi\in(0, 1]$\\
\hline\vspace{0.2cm}
$\boldsymbol{\xi}$ & Vector probability of selection a neuron by each worker & $\boldsymbol{\xi}\in(0, 1]^p$\\
\hline\vspace{0.2cm}
$\eta$ & Constant step size for local gradient update & $\eta\in\R$\\
\hline\vspace{0.2cm}
$\mathbf{M}_k$ & Binary mask in iteration $k$ & $\mathbf{M}_k\in\{0, 1\}^{p\times m}$\\
\hline\vspace{0.2cm}
$\mathbf{m}_{k,r}$ & Binary mask for neuron r in iteration $k$ &
$\mathbf{m}_{k,r}\in\{0,1\}^p$, the vector of $r$th column of $\mathbf{M}_k$\\
\hline\vspace{0.2cm}
$\mathbf{m}_{k}^l$ & Binary mask for subnetwork $l$ in iteration $k$ & $\mathbf{m}_k^l\in\{0,1\}^m$, the vector of $l$th row of $\mathbf{M}_k$\\
\hline\vspace{0.2cm}
$m_{k,r}^l$ & Binary mask for neuron $r$ in subnetwork $l$ in iteration $k$ & $m_{k,r}^l\in\{0,1\}$ the $(l, r)$th entry of $\mathbf{M}_k$\\
\hline\vspace{0.2cm}
$X_{k,r}$ & Number of subnetworks selecting neuron $r$ in iteration $k$ & $X_{k,r} = \sum_{l=1}^pm_{k,r}^l$\\
\hline\vspace{0.2cm}
$N_{k,r}$ & Aggregated gradient normalizer for neuron $r$ in iteration $k$ & $N_{k,r} = \max\{X_{k,r}, 1\}$\\
\hline\vspace{0.2cm}
$N_{k,r}^\perp$ & Indicator of gradient existing for neuron $r$ in iteration $k$ & $N_{k,r}^\perp = \min\{X_{k,r}, 1\}$\\
\hline\vspace{0.2cm}
$\eta_{k,r}$ & Global gradient aggregation step size for neuron $r$ in iteration $k$ & $\eta_{k,r} = \sfrac{N_{k,r}^\perp}{N_{k,r}}$\\
\hline\vspace{0.2cm}
$u_k^{(i)}$ & Output of the whole network at global iteration $k$ for sample $i$ & $u_k^{(i)} = \frac{\xi}{\sqrt{m}}\sum_{r=1}^ma_r\sigma(\inner{\mathbf{w}_{k,r}}{\mathbf{x}_i})$\\
\hline
$\U_k$ & Output of the whole network at global iteration $k$ for all $\mathbf{X}$ & $\U_k = \left[u_k^{(1)},\dots,u_k^{(n)}\right]$\\
\hline
$\hat{u}_{k,t}^{l (i)}$ & Output of subnetwork $l$ at iteration $(k,t)$ for sample $i$ & $\hat{u}_{k,t}^{l (i)} = \frac{1}{\sqrt{m}}\sum_{r=1}^ma_rm_{k,r}^l\sigma\left(\inner{\mathbf{w}_{k,t,r}^l}{\mathbf{x}_i}\right)$\\
\hline
$\hat{\U}_{k,t}^{l}$ & Output of subnetwork $l$ at iteration $(k,t)$ for all $\mathbf{X}$ & $\hat{\U}_{k,t}^l = \left[\hat{u}_{k,t}^{l (1)},\dots,\hat{u}_{k,t}^{l (n)}\right]$\\
\hline
$\hat{u}_{k}^{l (i)}$ & Output of subnetwork $l$ at iteration $(k,0)$ for sample $i$ & $\hat{u}_{k}^{l (i)} = \hat{u}_{k,0}^{l(i)}$\\
\hline
$\hat{\U}_{k}^{l}$ & Output of subnetwork $l$ at iteration $(k,0)$ for all $\mathbf{X}$ & $\hat{\U}_{k}^l = \hat{\U}_{k,0}^l$\\
\hline\vspace{0.2cm}
$L_k$ & Global loss at iteration $k$ & $L_k = \|\y - \U_k\|_2^2$\\
\hline\vspace{0.2cm}
$L_{\mathbf{m}_k^l}(\mathbf{W}_{k,t}^l)$ & Local loss for subnetwork $l$ at iteration $(k,t)$ & $L_{\mathbf{m}_k^l}(\mathbf{W}_{k,t}^l) = \|\y - f_{\mathbf{m}_{k}^l}(\mathbf{W}_{k,t}^l)\|_2^2$\\
\hline
\end{longtable}
\section{Preliminary and Definition}
In the proofs of our theorems, we use extensively the following tools.
\begin{defin} (Sub-Gaussian Random Variable)
A random variable $X$ is $\kappa^2$-sub-Gaussian if
\begin{align*}
    \E[e^{tx}]\leq e^{\frac{\kappa^2t^2}{2}}
\end{align*}
\end{defin}

\begin{defin} (Sub-Exponential Random Variable)
A random variable with mean $\E[X] = \mu$ is $(\kappa', \alpha)$-sub-exponential if there exists non-negative $(\kappa', \alpha)$ such that for all $t \leq \alpha^{-1}$
\begin{align*}
    \E[e^{t(X -\mu)}] \leq e^{-\frac{t^2\kappa'^2}{2}}
\end{align*}
\end{defin}

\begin{property} (Sub-Exponential Tail Bound)
For a $(\kappa', \alpha)$-sub-exponential random variable $X$ with $\E[X] = \mu$ we have
\begin{align*}
    P(X > \mu + t) \leq \begin{cases}e^{-\frac{t^2}{2\kappa'^2}} & \text{if } 0\leq t\leq \frac{\kappa'^2}{\alpha}\\
    e^{-\frac{t^2}{2\alpha}} & \text{if } t > \frac{\kappa'^2}{\alpha}
    \end{cases}
\end{align*}
\end{property}

\begin{property} (Markov's Inequality)
For a non-negative random variable $X$, we have
\begin{align*}
    P(X\geq a)\leq \frac{1}{a}\mathbb{E}[X]
\end{align*}
\end{property}

\begin{property} (Hoeffding's Inequality for Bounded Random Variables)
Let $X_1,\dots, X_n$ be independent random variables bounded by $|X_i|\leq 1$ for all $i\in[n]$. Then we have
\begin{align*}
    P\left(\left|\frac{1}{n}\sum_{i=1}^nX_i\right| \geq t\right)\leq e^{-2nt^2}
\end{align*}
\end{property}

\begin{property} (Berstein's Inequality)
Let $X_1,\dots,X_n$ be random variables with $\E[X_i] = 0$ for all $i\in[n]$. If $|X_i|\leq M$ almost surely, then
\begin{align*}
    P\left(\sum_{i=1}^n X_i > t\right) \leq e^{-\frac{t^2/2}{\sum_{j=1}^n\E[X_j^2] + Mt/3}}
\end{align*}
\end{property}

\begin{property} (Jensen's Inequality for Expectation)
For a non-negative random variable $X$, we have
\begin{align*}
    \E\left[X^{\frac{1}{2}}\right]\leq \left(\E[X]\right)^{\frac{1}{2}}
\end{align*}
\end{property}

Apart from the properties above, we also need the following definitions to facilitate our analysis. First, we note that, in the following proofs, we let $R = \frac{\kappa\lambda_0}{192n}$. Define
\begin{align*}
    A_{ir} = \{\exists\mathbf{w}\in\mathcal{B}(\mathbf{w}_{0,r}, R): \I\{\inner{\mathbf{w}}{\mathbf{x}_i}\geq 0\}\neq \I\{\inner{\mathbf{w}_{0,r}}{\mathbf{x}_i}\geq 0\}\}
\end{align*}to denote the event that for sample $\mathbf{x}_i$, the activation pattern of neuron $r$ may change through training if the weight vector change is bounded in the $R$-ball centered at initialization. Moreover, let
\begin{align*}
    S_i & = \{r\in[m]: \neg A_{ir}\}\\
    S_i^\perp & = [m]\setminus S_i
\end{align*}to be the set of neurons whose activation pattern does not change for sample $\mathbf{x}_i$ if the weight vector change is bounded by the $R$-ball centered at initialization. Moreover, since we are interested in the loss dynamic computed on the following function
\begin{align*}
    u_k^{(i)} = \frac{\xi}{\sqrt{m}}\sum_{r=1}^ma_r\sigma(\inner{\mathbf{w}_{k,r}}{\mathbf{x}_i}).
\end{align*}
we denote the full gradient of loss with respect to each weight vector $\mathbf{w}_r$ as
\begin{align*}
    \grad{\mathbf{W}_k} = \frac{\xi}{\sqrt{m}}\sum_{i=1}^na_r\mathbf{x}_i(u_k^{(i)} - y_i)\I\{\inner{\mathbf{w}_{k,r}}{\mathbf{x}_i}\geq 0\}
\end{align*}
\clearpage

\section{Proof of Theorem \ref{min_eig_masked_ntk}}
\label{min_eig_masked_ntk_proof}
Recall the definition of masked-NTK
\begin{align*}
    (\mathbf{m}_{k'}^l\circ\h(k,t))_{ij} = \frac{1}{m}\inner{\mathbf{x}_i}{\mathbf{x}_j}\sum_{r=1}^mm_{k',r}^l\I\{\inner{\mathbf{w}_{k,t,r}}{\mathbf{x}_i}\geq 0,\inner{\mathbf{w}_{k,t,r}}{\mathbf{x}_i}\geq 0\}
\end{align*}
To start with, we fix $k'\in [K], l\in [p]$, and $i,j\in[n]$. In this case, let
\begin{align*}
    h_r = m_{k',r}^l\inner{\mathbf{x}_i}{\mathbf{x}_j}\I\{\inner{\mathbf{w}_{0,r}}{\mathbf{x}_i}\geq 0,\inner{\mathbf{w}_{0,r}}{\mathbf{x}_i}\geq 0\}
\end{align*}
Then we have
\begin{align*}
    (\mathbf{m}_{k'}^l\circ\h(0,0))_{ij} = \frac{1}{m}\sum_{r=1}^mh_r
\end{align*}
Also we have
\begin{align*}
    \E_{\mathbf{M}_k, \mathbf{W}}\left[h_r\right] = \E_{\mathbf{w}\sim\mathcal{N}(0,\mathbf{I})}\left[\E_{\mathbf{M}_k}\left[h_r\right]\right] = \h^\infty_{ij}
\end{align*}
Note that for all $r$ we have $|h_r|\leq 1$. Thus we apply Hoeffding's inequality for bounded random variables and get
\begin{align*}
    P\left(\left|(\mathbf{m}_{k'}^l\circ\h(0,0))_{ij} - \h^\infty_{ij}\right|\geq t\right) = P\left(\left|\frac{1}{m}\sum_{r=1}^mh_r - \h^\infty_{ij}\right| \geq t\right)\leq 2e^{-2mt^2}
\end{align*}Apply a union bound over $i,j$ gives that with probability at least $1 - 2n^2e^{-2mt^2}$ it holds that
\begin{align*}
    \left|(\mathbf{m}_{k'}^l\circ\h(0,0))_{ij} - \h^\infty_{ij}\right|\leq t
\end{align*}for all $i,j\in[n]$. Therefore,
\begin{align*}
    \left\|\mathbf{m}_{k'}^l\circ\h(0,0) - \h^\infty\right\|_2^2 & \leq \left\|\mathbf{m}_{k'}^l\circ\h(0,0) - \h^\infty\right\|_F^2\\
    & \leq \sum_{i,j=1}^n\left|(\mathbf{m}_{k'}^l\circ\h(0,0))_{ij} - \h^\infty_{ij}\right|^2\\
    & \leq n^2t^2
\end{align*}
Let $t = \frac{\lambda_0}{4n}$ gives
\begin{align*}
    \left\|\mathbf{m}_{k'}^l\circ\h(0,0) - \h^\infty\right\|_2 \leq \frac{\lambda_0}{4}
\end{align*}
holds with probability at least $1 - 2n^2e^{-\frac{m\lambda_0^2}{8n^2}}$. Next we show that for all $k\in[k]$ and $t\in [\tau]$, as long as $\|\mathbf{w}_{k,t,r} - \mathbf{w}_{0,r}\|_2\leq R$ for all $r\in[m]$, then it holds that
\begin{align*}
    \left\|\mathbf{m}_{k'}^l\circ\h(k,t) - \mathbf{m}_{k'}^l\circ\h(0,0)\right\|_2\leq 2n\kappa^{-1}R
\end{align*}
Following the argument of \textcolor{myblue2}{\textit{\citep{song2020quadratic}}}, lemma 3.2, we have
\begin{align*}
    \left\|\mathbf{m}_{k'}^l\circ\h(k,t) - \mathbf{m}_{k'}^l\circ\h(0,0)\right\|_F^2 & \leq \frac{1}{m^2}\sum_{i,j=1}^n\left(\sum_{r=1}^ms_{r,i,j}\right)^2\\
\end{align*}with
\begin{align*}
    s_{r,i,j} = m_{k',r}^l\left(\I\{\inner{\mathbf{w}_{0,r}}{\mathbf{x}_i}\geq 0;\inner{\mathbf{w}_{0,r}}{\mathbf{x}_i}\geq 0\} - \I\{\inner{\mathbf{w}_{r,k,t}}{\mathbf{x}_j}\geq 0;\inner{\mathbf{w}_{r,k,t}}{\mathbf{x}_j}\geq 0\}\right)
\end{align*}
Then $s_{r,i,j} = 0$ if $\neg A_{ir}$ and $\neg A_{jr}$ happend. In other cases we have $|s_{r,i,j}| \leq 1$.
Thus we have that for all $i,j\in[n]$
\begin{align*}
    \E_{\mathbf{M}_k,\mathbf{W}_0}[s_{r,i,j}] &= \xi P\left(A_{ir}\cup A_{jr}\right)\leq \frac{4\xi R}{\kappa\sqrt{2\pi}}\leq2\xi\kappa^{-1}R
\end{align*}
and
\begin{align*}
    \E_{\mathbf{M}_k,\mathbf{W}_0}\left[\left(s_{r,i,j} - \E_{\mathbf{M}_k,\mathbf{W}_0}[s_{r,i,j}]\right)^2\right]\leq\E_{\mathbf{M}_k,\mathbf{w}_{0,r}}[s_{r,i,j}^2]\leq\frac{4\xi R}{\kappa\sqrt{2\pi}}\leq2\xi\kappa^{-1}R
\end{align*}
Thus applying Bernstein inequality with $t = \xi\kappa^{-1}R$ gives
\begin{align*}
    P\left(\frac{1}{m^2}\sum_{r=1}^ms_{r,i,j}\geq 3\xi \kappa^{-1}R\right)\leq\exp\left(-\frac{m\xi R}{10\kappa}\right)
\end{align*}
Therefore, taking a union bound gives that, with probability at least $1 - n^2e^{-\frac{m\xi R}{10\kappa}}$ we have that 
\begin{align*}
    \left\|\mathbf{m}_{k'}^l\circ\h(k,t) - \mathbf{m}_{k'}^l\circ\h(0,0)\right\|_2\leq\left\|\mathbf{m}_{k'}^l\circ\h(k,t) - \mathbf{m}_{k'}^l\circ\h(0,0)\right\|_F \leq 3\xi n\kappa^{-1}R
\end{align*}
Using $R \leq \frac{\kappa\lambda_0}{12n}$ gives $\|\mathbf{m}_{k'}^l\circ\h(k,t) - \mathbf{m}_{k'}^l\circ\h(0,0)\|_2 \leq \frac{\xi\lambda_0}{4} \leq \frac{\lambda_0}{4}$ with probability at least $1 - n^2e^{-\frac{m\xi\lambda_0}{12n}}$. Therefore, we have
\begin{align*}
    \left\|\mathbf{m}_{k'}^l\circ\h(k,t) - \h^\infty\right\|_2\leq \frac{\lambda_0}{2}
\end{align*}which implies that $\lambda_{\min}\left(\mathbf{m}_{k'}^l\circ\h(k,t)\right)\geq \frac{\lambda_0}{2}$ holds with probability at least $1 - n^2\left(e^{-\frac{m\xi\lambda_0}{12n}} - 2e^{-\frac{m\lambda_0^2}{8n^2}}\right)$ for a fixed $k'\in[K]$ and $l\in[p]$. Taking a union bound over all $k'$ and $l$ and plugging in the requirement $m = \Omega\left(\frac{n^2\log\sfrac{Kpn}{\delta}}{\xi\lambda_0}\right)$ gives the desired result.
\clearpage
\section{Proof of Hypothesis \ref{subnetwork_convergence}} 
\label{subnetwork_convergence_proof}
In this proof, we follow the idea of \textcolor{myblue2}{\textit{\citep{du2018gradient}}}. However, the difference is that $i)$ we use our masked-NTK during the analysis, and $ii)$ we use a different technique for bounding the weight perturbation. We repeat the key requirement stated in the theorem here: for all $r\in [m]$
\begin{align}
    \|\mathbf{w}_{k,r} - \mathbf{w}_{0,r}\|_2 +2\eta\tau\sqrt{\frac{nK}{m\delta}}\E_{[\mathbf{M}_{k-1}], \mathbf{W}_0, \mathbf{a}}\left[\|\y - \U_k\|_2\right] + (K - k)\kappa\sqrt{\xi(1-\xi)pn} \leq R \label{eq:01_repeat}
\end{align}

To start, we notice that, using the required over-parameterization, lemma \ref{init_inner_prod_bound} holds with probability at least $1 - O(\delta)$. We use induction on the following two conditions to prove the theorem:
\begin{align}
    \left\|\y - \U_{k,t+1}^l\right\|_2^2 & \leq \left(1 - \frac{\eta\lambda_0}{2}\right)
    \left\|\y - \U_{k,t}^l\right\|_2^2\label{local_lin_conv}
\end{align}
\begin{align}
    \left\|\mathbf{w}_{k,t,r}^l - \mathbf{w}_{k,r}\right\|& \leq \frac{\eta\tau\sqrt{2nK}}{\sqrt{m\delta}}\E_{[\mathbf{M}_{k-1}], \mathbf{W}_0, \mathbf{a}}\left[\|\y - \U_k\|_2\right] + 2\eta\kappa n\sqrt{\frac{2\xi(1-\xi)pK}{m\delta}}\label{ktr_kr_perturb}
\end{align}
\begin{align}
    \|\mathbf{w}_{k,t,r}^l - \mathbf{w}_{0,r}\|& \leq R
    \label{ktr_0_perturb}
\end{align}
\textbf{Base Case}: For the case of $t = 0$, we notice that equation (\ref{local_lin_conv}) and (\ref{ktr_kr_perturb}) naturally holds. Moreover, equation (\ref{eq:01_repeat}) implies equation (\ref{ktr_kr_perturb}).

\textbf{Inductive Case}: the inductive case is divided into three parts.

\textbf{(\ref{ktr_kr_perturb})$\rightarrow$(\ref{ktr_0_perturb})}: assume that equation (\ref{ktr_kr_perturb}) holds in local iteration $t$. Combine the result with equation (\ref{eq:01_repeat}) gives that equation (\ref{ktr_0_perturb}) holds in iteration $t$.

\textbf{(\ref{ktr_0_perturb}$\rightarrow$(\ref{local_lin_conv})}: assume that equation (\ref{ktr_0_perturb}) holds in local iteration $t$. We are going to prove that equation (\ref{local_lin_conv}) holds. In particular, we are interested in
\begin{align*}
    \|\y - \hat{\U}_{k,t+1}^l\|_2^2 = \|\y - \hat{\U}_{k,t}^l\|_2^2 - 2\inner{\y - \U_{k,t}^l}{\hat{\U}_{k,t+1}^l - \hat{\U}_{k,t}^l} + \|\hat{\U}_{k,t+1}^l - \hat{\U}_{k,t}^l\|_2^2
\end{align*}
We define $\hat{u}_{k,t+1}^{l(i)} - \hat{u}_{k,t}^{l(i)} = I_{1,k,t}^{l(i)} + I_{2,k,t}^{l(i)}$ with
\begin{align*}
    I_{1,k,t}^{l(i)} & = \frac{1}{\sqrt{m}}\sum_{r\in S_i}a_rm_{k,r}^l\left(\sigma\left(\inner{\mathbf{w}_{k,t+1,r}^l}{\mathbf{x}_i}\right) - \sigma\left(\inner{\mathbf{w}_{k,t,r}^l}{\mathbf{x}_i}\right)\right)\\
    I_{2,k,t}^{l(i)} & = \frac{1}{\sqrt{m}}\sum_{r\in S_i^\perp}a_rm_{k,r}^l\left(\sigma\left(\inner{\mathbf{w}_{k,t+1,r}^l}{\mathbf{x}_i}\right) - \sigma\left(\inner{\mathbf{w}_{k,t,r}^l}{\mathbf{x}_i}\right)\right)
\end{align*}
and notice that, with the $1$-Lipchitzness of ReLU,
\begin{align*}
    \left|I_{2,k,t}^{l(i)}\right| & \leq \frac{1}{\sqrt{m}}\sum_{r\in S_i^\perp}\left|\sigma\left(\inner{\mathbf{w}_{k,t+1,r}^l}{\mathbf{x}_i}\right) - \sigma\left(\inner{\mathbf{w}_{k,t,r}^l}{\mathbf{x}_i}\right)\right|\\
    & \leq \frac{1}{\sqrt{m}}\sum_{r\in S_i^\perp}\left\|\mathbf{w}_{k,t+1,r}^l - \mathbf{w}_{k,t,r}\right\|_2\\
    & \leq \frac{\eta}{\sqrt{m}}\sum_{r\in S_i^\perp}\left\|\lgrad{\mathbf{m}_k^l}{\mathbf{W}_{k,t}^l}\right\|_2\\
    & \leq \frac{\eta\sqrt{n}}{m}\sum_{r\in S_i^\perp}\left\|\y - \hat{\U}_{k,t}^l\right\|_2\\
    & \leq 4\eta\kappa^{-1}\sqrt{n}R\|\y - \hat{\U}_{k,t}^l\|_2
\end{align*}
where the last inequality uses $|S_i^\perp| \leq 4m\kappa^{-1}R$ from Lemma \ref{sign_change_bound}.
Therefore,
\begin{align*}
    \left|\inner{\y - \hat{\U}_{k,t}^l}{\mathbf{I}_{2,k,t}^l}\right|\leq\sqrt{n}\max_{i\in[n]}\left|I_{2,k,t}^{l(i)}\right|\cdot\|\y - \hat{\U}_{k,t}^l\|_2\leq 4\eta\kappa^{-1}nR\|\y - \hat{\U}_{k,t}^l\|_2^2
\end{align*}
Similarly, we have
\begin{align*}
    \left(\hat{u}_{k,t+1}^{l(i)} - \hat{u}_{k,t}^{l(i)}\right)^2 & \leq \frac{1}{m}\left(\sum_{r=1}^m\left\|\mathbf{w}_{k,t+1,r}^l - \mathbf{w}_{k,t,r}\right\|_2\right)^2\\
    & \leq \sum_{r=1}^m\left\|\mathbf{w}_{k,t+1,r}^l - \mathbf{w}_{k,t,r}\right\|_2^2\\
    & \leq \eta^2\sum_{r=1}^m\left\|\lgrad{\mathbf{m}_k^l}{\mathbf{W}_{k,t}^l}\right\|_2^2\\
    & \leq \eta^2n^2\|\y - \hat{\U}_{k,t}^l\|_2^2
\end{align*}
Lastly, we define $\mathbf{m}_k^l\circ\h(k,t)^\perp$ with
\begin{align*}
    \left(\mathbf{m}_k^l\circ\h(k,t)^\perp\right)_{ij} = \frac{1}{m}\inner{\mathbf{x}_i}{\mathbf{x}_j}\sum_{r\in S_i^\perp}m_{k,r}^l\I\{\inner{\mathbf{w}_{k,r}}{\mathbf{x}_i}\geq 0, \inner{\mathbf{w}_{k,r}}{\mathbf{x}_j}\geq 0\}
\end{align*}
and we have
\begin{align*}
    I_{1,k,t}^{l(i)} & = \frac{1}{\sqrt{m}}\sum_{r\in S_i}a_rm_{k,r}^l\inner{\mathbf{w}_{k,t+1,r}^l -\mathbf{w}_{k,t,r}^l}{\mathbf{x}_i}\I\{\inner{\mathbf{w}_{k,r}}{\mathbf{x}_i}\geq 0\}\\
    & = -\frac{\eta}{\sqrt{m}}\sum_{r\in S_i}a_rm_{k,r}^l\inner{\lgrad{\mathbf{m}_k^l}{\mathbf{W}_{k,t}^l}}{\mathbf{x}_i}\I\{\inner{\mathbf{w}_{k,r}}{\mathbf{x}_i}\geq 0\}\\
    & = \frac{\eta}{m}\sum_{r\in S_i}\sum_{j=1}^nm_{k,r}^l\left(y_j - \hat{u}_{k,t}^{l(j)}\right)\inner{\mathbf{x}_i}{\mathbf{x}_j}\I\{\inner{\mathbf{w}_{k,r}}{\mathbf{x}_i}\geq 0, \inner{\mathbf{w}_{k,r}}{\mathbf{x}_j}\geq 0\}\\
    & = \eta\sum_{j=1}^n\left(\mathbf{m}_k^l\circ\h(k,t) - \mathbf{m}_k^l\circ\h(k,t)^\perp\right)_{ij}\left(y_j - \hat{u}_{k,t}^{l(j)}\right)
\end{align*}
Therefore,
\begin{align*}
    \inner{\y - \hat{\U}_{k,t}^l}{\mathbf{I}_{1,k,t}^l} & = \eta\sum_{i,j=1}^n\left(y_i - \hat{u}_{k,t}^{l(i)}\right)\left(\mathbf{m}_k^l\circ\h(k,t) - \mathbf{m}_k^l\circ\h(k,t)^\perp\right)_{ij}\left(y_j - \hat{u}_{k,t}^{l(j)}\right)\\
    & = \eta\inner{\y - \hat{\U}_{k,t}}{\left(\mathbf{m}_k^l\circ\h(k,t) - \mathbf{m}_k^l\circ\h(k,t)^\perp\right)\left(\y - \hat{\U}_{k,t}\right)}\\
    & \geq \frac{\eta\lambda_0}{2}\|\y - \hat{\U}_{k,t}^l\|_2^2 - \eta\|\mathbf{m}_k^l\circ\h(k,t)^\perp\|_2\|\y - \hat{\U}_{k,t}^l\|_2^2\\
    & \geq \left(\frac{\eta\lambda_0}{2} - 4\eta\kappa^{-1}nR\right)\|\y - \hat{\U}_{k,t}^l\|_2^2
\end{align*}where the last inequality follows from the fact that
\begin{align*}
    \|\mathbf{m}_k^l\circ\h(k,t)^\perp\|_2^2 & \leq \|\mathbf{m}_k^l\circ\h(k,t)^\perp\|_F^2\\
    & = \frac{1}{m^2}\sum_{i,j=1}^n\left(\inner{\mathbf{x}_i}{\mathbf{x}_j}\sum_{r\in S_i^\perp}\I\{\inner{\mathbf{w}_{k,r}}{\mathbf{x}_i}\geq 0, \inner{\mathbf{w}_{k,r}}{\mathbf{x}_j}\geq 0\}\right)^2\\
    & \leq \frac{n^2}{m^2}|S_i^\perp|^2\\
    & = 16n^2\kappa^{-2}R^2
\end{align*}
Putting things together gives
\begin{align*}
    \|\y - \hat{\U}_{k,t+1}^l\|_2^2 \leq \left(1 - \eta\lambda_0 +  16\eta\kappa^{-1}nR + \eta^2n^2\right)\|\y - \hat{\U}_{k,t}^l\|_2^2
\end{align*}
Choose $R \leq \frac{\kappa\lambda_0}{64n}$ and $\eta \leq \frac{\lambda_0}{4n^2}$ gives
\begin{align*}
    \|\y - \hat{\U}_{k,t+1}^l\|_2^2 \leq \left(1 - \frac{\eta\lambda_0}{2}\right)\|\y - \hat{\U}_{k,t}^l\|_2^2
\end{align*}

\textbf{(\ref{local_lin_conv})$\rightarrow$(\ref{ktr_kr_perturb})}: Assume that equation (\ref{local_lin_conv}) holds for local iteration $0,\dots, t$. We are going to prove equation (\ref{ktr_kr_perturb}) for local iteration $t+1$. We start by noticing that equation (\ref{local_lin_conv}) implies that for all local iteration $t'\in [t]$, we have that $\|\y - \hat{\U}_{k,t'}^l\|_2 \leq \|\y - \hat{\U}_{k}^l\|_2$.  Moreover, we notice that
\begin{align*}
    \left\|\lgrad{\mathbf{m}_k}{\mathbf{W}_{k,t}^l}\right\|_2 & \leq \frac{1}{\sqrt{m}}\sum_{i=1}^n\left\|\left(\hat{u}^{l,(i)}_{k,t} - y_i\right)a_rm_{k,r}^l\mathbf{x}_i\mathbb{I}\{\inner{\mathbf{w}_{k,t,r}^l}{\mathbf{x}_i}\}\right\|_2\\
    & \leq \frac{1}{\sqrt{m}}\sum_{i=1}^n\left|\hat{u}^{l,(i)}_{k,t} - y_i\right|\\
    & \leq \frac{\sqrt{n}}{\sqrt{m}}\left\|\y - \hat{\U}_{k,t}^l\right\|_2
\end{align*}
Therefore,
\begin{align*}
    \|\mathbf{w}_{k,t,r} - \mathbf{w}_{k,r}\|_2 & \leq \eta\sum_{t'=0}^{t-1}\left\|\lgrad{\mathbf{m}_k}{\mathbf{W}_{k,t}^l}\right\|_2\\
    & \leq \eta\frac{\sqrt{n}}{\sqrt{m}}\sum_{t'=0}^{t-1}\|\y - \hat{\U}_{k,t'}^l\|_2\\
    & \leq \eta\tau\frac{\sqrt{n}}{\sqrt{m}}\|\y - \hat{\U}_{k}^l\|_2\\
    & \leq \eta\tau\frac{\sqrt{n}}{\sqrt{m}}\left(\|\y - \U_k\|_2 + \|\U_k - \hat{\U}_k^l\|_2\right)
\end{align*}
Applying Markov's inequality to the global convergence, with probabiltiy at least $1 - \frac{\delta}{2K}$, it holds that
\begin{align*}
    \|\y - \U_{k}\|_2 \leq \sqrt{2K/\delta}\E_{[\mathbf{M}_{k-1}], \mathbf{W}_0, \mathbf{a}}\left[\|\y - \U_k\|_2\right]
\end{align*}
By Lemma \ref{surrogate_error_bound}, we have
\begin{align*}
    \E_{\mathbf{M}_k}\left[\|\U_k - \hat{\U}_k^l\|_2^2\right] \leq 4\xi(1-\xi)n\kappa^2
\end{align*}
Thus with probability at least $1 - \frac{\delta}{2pK}$ it holds that
\begin{align*}
    \|\U_k - \hat{\U}_k^l\|_2 \leq 2\kappa\sqrt{2\xi(1-\xi)npK/\delta}
\end{align*}
Plugging in gives
\begin{align*}
    \|\mathbf{w}_{k,t,r} - \mathbf{w}_{k,r}\|\leq \frac{\eta\tau\sqrt{2nK}}{\sqrt{m\delta}}\E_{[\mathbf{M}_{k-1}], \mathbf{W}_0, \mathbf{a}}\left[\|\y - \U_k\|_2\right] + 2\eta\tau\kappa n\sqrt{\frac{2\xi(1-\xi)pK}{m\delta}}
\end{align*}
which completes the proof.
\clearpage

\section{Proof of Theorem \ref{bern_general_convergence}}
\label{bern_general_convergence_proof}
Before we start the proof, we introduce several notations. Define
\begin{align*}
    I_{1,k}^{(i)} = \frac{\xi}{\sqrt{m}}\sum_{r\in S_i}a_r\left(\sigma(\inner{\mathbf{w}_{k+1,r}}{\mathbf{x}_i}) - \sigma(\inner{\mathbf{w}_{k,r}}{\mathbf{x}_i})\right)\\
    I_{2,k}^{(i)} = \frac{\xi}{\sqrt{m}}\sum_{r\in S_i^\perp}a_r\left(\sigma(\inner{\mathbf{w}_{k+1,r}}{\mathbf{x}_i}) - \sigma(\inner{\mathbf{w}_{k,r}}{\mathbf{x}_i})\right)
\end{align*}Let 
$$\mathbf{I}_{1,k} = \begin{bmatrix}I_{1,k}^{(1)},\dots,I_{1,k}^{(n)}\end{bmatrix}$$
and similarly,
$$\mathbf{I}_{2,k} = \begin{bmatrix}I_{2,k}^{(1)},\dots,I_{2,k}^{(n)}\end{bmatrix}$$
Then we have $u_{k+1}^{(i)}  - u_k^{(i)} = I_1^{(i)} + I_2^{(i)}$ and $\U_{k+1} - \U_k = \mathbf{I}_{1,k} + \mathbf{I}_{2,k}$. Also, we define $\h(k)^\perp$ to be
\begin{align*}
    \h(k)^\perp_{ij} = \frac{\xi}{m}\sum_{r\in S_i^\perp}\inner{\mathbf{x}_i}{\mathbf{x}_j}\I\{\inner{\mathbf{w}_{k,r}}{\mathbf{x}_i}\geq 0, \inner{\mathbf{w}_{k,r}}{\mathbf{x}_j}\geq 0\}
\end{align*}

For $k$-th global iteration, first local iteration, we define the mixing gradient as
\begin{align*}
    \mathbf{g}_{k,r} & = \eta_{k,r}\sum_{l=1}^p\lgrad{\mathbf{m}_k^l}{\mathbf{W}_{k,0}^l}\\
    & = \eta_{k,r}\sum_{l=1}^p\lgrad{\mathbf{m}_k^l}{\mathbf{W}_{k}}\\
    & = \frac{\eta_{k,r}}{\sqrt{m}}\sum_{l=1}^p\sum_{i=1}^nm_{k,r}^l(\hat{u}_k^{l(i)} - y_i)a_r\mathbf{x}_i\I\{\inner{\mathbf{w}_{k,r}}{\mathbf{x}_i}\geq 0\}\\
    & = \frac{1}{\sqrt{m}}\sum_{i=1}^m(f_{k,r}^{(i)} - N_{k,r}^\perp y_i)a_r\mathbf{x}_i\I\{\inner{\mathbf{w}_{k,r}}{\mathbf{x}_i}\geq 0\}
\end{align*}
where we define the mixing function as
\begin{align*}
    f_{k,r}^{(i)} = \eta_{k,r}\sum_{l=1}^pm_{k,r}^l\hat{u}_k^{l(i)} 
\end{align*}
As usual, we let $\mathbf{f}_{k,r} = \begin{bmatrix}f_{k,r}^{(1)},\dots,f_{k,r}^{(n)}\end{bmatrix}$. We note that $f_{k,r}^{(i)}$ has the form
\begin{align*}
    f_{k,r}^{(i)} & = \eta_{k,r}\sum_{l=1}^pm_{k,r}^l\hat{u}_k^{(l(i)}\\
    & = \frac{1}{\sqrt{m}}\sum_{r'=1}^ma_r\left(\eta_{k,r}\sum_{l=1}^pm_{k,r}^lm_{k,r'}^l\right)\sigma(\inner{\mathbf{w}_{k,r}}{\mathbf{x}_i})
\end{align*}Let $\nu_{k,r,r'} = \eta_{k,r}\sum_{l=1}^pm_{k,r}^lm_{k,r'}^l$. The mixing function reduce to the form
\begin{align*}
    f_{k,r}^{(i)} = \frac{1}{\sqrt{m}}\sum_{r=1}^ma_r\nu_{k,r,r'}\sigma(\inner{\mathbf{w}_{k,r}}{\mathbf{x}_i})
\end{align*}Also, note that if $N_{k,r}^\perp = 0$, we have $\nu_{k,r,r'} = 0$. We prove Theorem \ref{bern_general_convergence} by a fashion of induction, with the two conditions we consider stated below:
\begin{align}
    \E_{[\mathbf{M}_{k-1}]}\left[\|\y - \U_{k}\|_2^2\right] \leq \left(1 - \frac{1}{4}\eta\theta\tau\lambda_0\right)^{k}\|\y - \U_0\|_2^2 + B_1 \label{bern_conv_repeat}
\end{align}
\begin{align}
    \|\mathbf{w}_{k,r} - \mathbf{w}_{0,r}\|_2 +2\eta\tau\sqrt{\frac{2nK}{m\delta}}\left(\frac{4}{\eta\theta\tau\lambda_0}\E_{[\mathbf{M}_{k-1}]}\left[\|\y - \U_k\|_2\right] + (K - k)B\right) \leq R \label{bern_conv_detailed_perturb}
\end{align}
\begin{align}
    \left\|\mathbf{w}_{k,t,r}^l - \mathbf{w}_{0,r}\right\|_2\leq R\label{bern_conv_ktr_perturb}
\end{align}
with
\begin{align*}
    B = \sqrt{\frac{4B_1}{\eta\theta\tau\lambda_0}}+ \kappa\sqrt{\xi(1-\xi)pn}
\end{align*}
\textbf{Base Case:} Note that equation (\ref{bern_conv_repeat}) and equation (\ref{bern_conv_ktr_perturb}) holds naturally for $t = 0$. To show that equation (\ref{bern_conv_detailed_perturb}) holds, we need to use the over-parameterization property. In particular, we want to show that
\begin{align*}
    2\eta\tau\sqrt{\frac{2nK}{m\delta}}\left(\frac{4}{\eta\theta\tau\lambda_0}\E_{\mathbf{W}_0, \mathbf{a}}\left[\|\y - \U_0\|_2\right] + KB\right) \leq R \leq \frac{\kappa\lambda_0}{144n}
\end{align*}
Apply lemma \ref{initial_scale} and move the factor on the left hand side of the equation to the right. Then we equivalently want
\begin{align*}
    \frac{\kappa\lambda_0}{\eta\tau}\sqrt{\frac{m\delta}{n^3K}} = \Omega\left(\max\left\{\frac{4C^2n}{\eta\theta\tau\lambda_0}, KB\right\}\right)
\end{align*}
Plugging in the value of $B$ and the requirement of $\eta$, and solve for $m$ to get that
\begin{align*}
    m = \Omega\left(\frac{K}{\delta}\max\left\{\frac{n^4}{\kappa^2\xi\theta\lambda_0^4}, \frac{nK^2B_1}{\kappa^2\theta\lambda_0^2}, K^2p\right\}\right)
\end{align*}

\textbf{Inductive Case:} again the inductive case is divided into three parts.

\textbf{(\ref{bern_conv_detailed_perturb})$\rightarrow$(\ref{bern_conv_ktr_perturb})} Observe that $B \geq \kappa\sqrt{\xi(1-\xi)pn}$. Thus, if equation (\ref{bern_conv_detailed_perturb}) is satisfied and the over-parameterization requirement holds, then Hypothesis (\ref{subnetwork_convergence}) holds. Thus equation (\ref{bern_conv_ktr_perturb}) holds naturally.

\textbf{(\ref{bern_conv_repeat})$\rightarrow$(\ref{bern_conv_detailed_perturb})}. Assume that equation (\ref{bern_conv_repeat}) holds, and $(\ref{bern_conv_detailed_perturb})$ holds for global iteration $k$, we want to show that equation (\ref{bern_conv_detailed_perturb}) holds for global iteration $t+1$. In particular, we would like to show that
\begin{align*}
    \|\mathbf{w}_{k+1,r} - \mathbf{w}_{0,r}\|_2 +2\eta\tau\sqrt{\frac{2nK}{m\delta}}\left(\frac{4}{\eta\theta\tau\lambda_0}\E_{[\mathbf{M}_{k}], \mathbf{W}_0, \mathbf{a}}\left[\|\y - \U_{k+1}\|_2\right] + (K - k - 1)B\right) \leq R
\end{align*}
it suffice to show that
\begin{align*}
    \|\mathbf{w}_{k+1,r} - \mathbf{w}_{k,r}\|_2 \leq \eta\tau\sqrt{\frac{2nK}{m\delta}}\E_{[\mathbf{M}_{k-1}], \mathbf{W}_0, \mathbf{a}}\left[\|\y - \U_k\|_2\right] +  2\eta\tau\sqrt{\frac{2nK}{m\delta}}B - 2\eta\tau\sqrt{\frac{8nKB_1}{m\delta\eta\theta\tau\lambda_0}}
\end{align*}
Recall the definition of $B$ as
\begin{align*}
    B = \sqrt{\frac{B_1}{\alpha}}+ \kappa\sqrt{\xi(1-\xi)pn}
\end{align*}
It then suffice to show that
\begin{align*}
    \|\mathbf{w}_{k+1,r} - \mathbf{w}_{0,r}\|_2 \leq \eta\tau\sqrt{\frac{2nK}{m\delta}}\E_{[\mathbf{M}_{k-1}], \mathbf{W}_0, \mathbf{a}}\left[\|\y - \U_k\|_2\right] + 2\eta\tau\kappa n\sqrt{\frac{2\xi(1-\xi)pK}{m\delta}}
\end{align*}
Note that under these two conditions and the over-parameterization requirement, Hypothesis (\ref{subnetwork_convergence}) holds. Thus, we have 
\begin{align*}
    \|\mathbf{w}_{k,t,r}^l - \mathbf{w}_{k,r}\|_2\leq \eta\tau\sqrt{\frac{2nK}{m\delta}}\E_{[\mathbf{M}_{k-1}], \mathbf{W}_0, \mathbf{a}}\left[\|\y - \U_k\|_2\right] + 2\eta\tau\kappa n\sqrt{\frac{2\xi(1-\xi)pK}{m\delta}}
\end{align*}
for all $l\in[p]$ and $t\in[\tau]$. Using the definition that $\eta_{k,r} = \frac{N_{k,r}^\perp}{N_{k,r}}$, we have
\begin{align*}
    \|\mathbf{w}_{k+1, r} - \mathbf{w}_{k,r}\|_2 & \leq \eta_{k,r}\sum_{l=1}^pm_{k,r}^l\|\mathbf{w}_{k,\tau,r}^l - \mathbf{w}_{k,r}\|_2\\
    & \leq \eta\tau\sqrt{\frac{2nK}{m\delta}}\E_{[\mathbf{M}_{k-1}], \mathbf{W}_0, \mathbf{a}}\left[\|\y - \U_k\|_2\right] + 2\eta\tau\kappa n\sqrt{\frac{2\xi(1-\xi)pK}{m\delta}}
\end{align*}
\textbf{(\ref{bern_conv_ktr_perturb}), (\ref{bern_conv_detailed_perturb})$\rightarrow$(\ref{bern_conv_repeat})}
Assume that equation (\ref{bern_conv_ktr_perturb}) and (\ref{bern_conv_detailed_perturb}) holds for iteration $k$. We want to show (\ref{bern_conv_repeat}) for up to iteration $k+1$. Under these conditions, we have that with probability at least $1- 4\delta$, Hypothesis \ref{subnetwork_convergence} holds. Throughout this proof, we assume that
\begin{align*}
    \sum_{i=1}^n\sum_{r'=1}^m\inner{\mathbf{w}_{0,r}}{\mathbf{x}_i}^2\leq 2mn\kappa^2 - mnR^2
\end{align*}and that
\begin{align*}
    \|W_0\|_F \leq \sqrt{2md} - \sqrt{m}R\\
\end{align*}
Note that Lemma \ref{init_mat_bound} and Lemma \ref{init_inner_prod_bound} shows that, as long as $m = \Omega\left(\log\frac{n}{\delta}\right)$, the above assumption holds with probability at least $1-\delta$ over initialization. Moreover, Lemma \ref{sign_change_bound} shows that as long as $m = \left(\frac{n\log \frac{n}{\delta}}{\xi\lambda_0}\right)$, with probability at least $1-\delta$ over initialization we have
\begin{align*}
    |S_i^\perp|\leq 4m\kappa^{-1}R
\end{align*}
To start, expanding the loss at iteration $k+1$ gives
\begin{align*}
    \E_{\mathbf{M}_k}\left[\|\y - \U_{k+1}\|_2^2\right] & = \|\y - \U_k\|_2^2 - 2\inner{\y - \U_k}{\E_{\mathbf{M}_k}\left[\U_{k+1} - \U_k\right]} + \E_{\mathbf{M}_k}\left[\|\U_{k+1} - \U_k\|_2^2\right]\\
    & = \|\y - \U_k\|_2^2 - 2\inner{\y - \U_k}{\E_{\mathbf{M}_k}\left[\mathbf{I}_{1,k}\right]} - 2\inner{\y - \U_k}{\E_{\mathbf{M}_k}\left[\mathbf{I}_{2,k}\right]} +\\
    &\quad\quad\quad\quad\E_{\mathbf{M}_k}\left[\|\U_{k+1} - \U_k\|_2^2\right]
\end{align*}
Following previous work, we bound the second, third, and fourth term separately. However, the second term requires a more detailed analysis. In particular, we let
\begin{align*}
    \mathbf{I}'_{1,k} = \mathbf{I}_{1,k} - \eta\theta\tau\h(k)(\y - \U_k)
\end{align*}
Then the loss at iteration $k+1$ has the form
\begin{align*}
    \E_{\mathbf{M}_k}\left[\|\y - \U_{k+1}\|_2^2\right] & = \|\y - \U_k\|_2^2 - 2\eta\theta\tau\inner{\y - \U_k}{\h(k)(\y - \U_k)} + \E_{\mathbf{M}_k}\left[\|\U_{k+1} -\U_k\|_2^2\right] - \\
    & \quad\quad\quad\quad2\inner{\y - \U_k}{\E_{\mathbf{M}_k}\left[\mathbf{I}_{1,k}'\right]} - 2\inner{\y - \U_k}{\E_{\mathbf{M}_k}\left[\mathbf{I}_{2,k}\right]}\\
    & \leq (1 - \eta\theta\tau\lambda_0)\|\y - \U_k\|_2^2 + 2\left|\inner{\y - \U_k}{\E_{\mathbf{M}_k}\left[\mathbf{I}_{1,k}'\right]}\right| + \\
    &\quad\quad\quad\quad2\left|\inner{\y - \U_k}{\E_{\mathbf{M}_k}\left[\mathbf{I}_{2,k}\right]}\right| + \E_{\mathbf{M}_k}\left[\|\U_{k+1} -\U_k\|_2^2\right]
\end{align*}where in the last inequality we use $\lambda_{\min}(\h(k))\geq \frac{\lambda_0}{2}$ from \textcolor{myblue2}{\textit{\citep{du2018gradient}}}, Assumption 3.1. Moreover, Lemma \ref{I1_bound}, Lemma \ref{I2_bound}, and Lemma \ref{quadratic_bound} shows that under the given assumption, with $\eta = O\left(\frac{\lambda_0}{n\tau\max\{n, p\}}\right)$ we have
\begin{align*}
    \left|\inner{\y - \U_k}{\E_{\mathbf{M}_k}\left[\mathbf{I}_{1,k}'\right]}\right| \leq \frac{1}{8}\eta\theta\tau\lambda_0\|\y - \U_k\|_2^2 + \frac{16\eta\theta\tau\xi^2(1-\xi)^2\kappa^2n^3d}{m\lambda_0} + \frac{2\eta^3\xi^2\tau(\tau-1)^2n^4pC_1}{\theta\lambda_0}
\end{align*}
\begin{align*}
    \left|\inner{\y - \U_k}{\E_{\mathbf{M}_k}\left[\mathbf{I}_{2,k}\right]}\right| \leq \frac{1}{8}\eta\theta\tau\lambda_0\|\y - \U_k\|_2^2 +\frac{\eta\lambda_0\xi^2(\theta -\xi^2)n\kappa^2}{24p\tau} + \frac{\eta\lambda_0\xi^2(\tau-1)^2pC_1}{96\tau\theta}
\end{align*}
\begin{align*}
    \E_{\mathbf{M}_k}\left[\|\U_{k+1} -\U_k\|_2^2\right] \leq \frac{1}{4}\eta\theta\tau\lambda_0\|\y - \U_k\|_2^2 + \frac{17\eta^2\xi^2\tau^2\theta(\theta-\xi^2)n^3\kappa^2}{p} + \eta^2\xi^2\lambda_0(\tau-1)^2pnC_1
\end{align*}
Putting things together gives
\begin{align*}
    \E_{\mathbf{M}_k}\left[\|\y - \U_{k+1}\|_2^2\right] & \leq \left(1 - \frac{1}{4}\eta\theta\tau\lambda_0\right)\|\y - \U_k\|_2^2 +  \frac{16\eta\theta\tau\xi^2(1-\xi)^2n^3d}{m\lambda_0} + \frac{2\eta^3\xi^2\tau(\tau-1)^2n^4pC_1}{\theta\lambda_0}\\
    &\quad\quad\quad\quad \frac{\eta\lambda_0\xi^2(\theta -\xi^2)n\kappa^2}{24\tau p} + \frac{\eta\lambda_0\xi^2(\tau-1)^2pC_1}{96\tau\theta} + \frac{17\eta^2\xi^2\tau^2\theta(\theta-\xi^2)n^3\kappa^2}{p} + \\
    &\quad\quad\quad\quad \eta^2\xi^2\lambda_0\tau(\tau-1)pnC_1\\
    & \leq \left(1 - \frac{1}{4}\eta\theta\tau\lambda_0\right)\|\y - \U_k\|_2^2 + \frac{1}{4}\eta\theta\tau\lambda_0B_1
\end{align*}
Therefore, we have
\begin{align*}
    \E_{[\mathbf{M}_{k-1}]}\left[\|\y - \U_k\|_2^2\right]& \leq \left(1 - \frac{1}{4}\eta\theta\tau\lambda_0\right)^k\|\y - \U_0\|_2^2 + B_1
\end{align*}
This completes the proof.
\clearpage
\section{Proof of Theorem \ref{cat_mask_convergence}}
\label{cat_mask_convergence_proof}
Again we use induction to prove the theorem. Consider the following conditions
\begin{align}
    \E_{\mathbf{M}_{k}}\left[\|\y - \U_k\|_2^2\right] \leq \left(1 - \alpha\right)\|\y - \U_k\|_2^2 + B_1
    \label{cat_conv_simple}
\end{align}
\begin{align}
    \|\mathbf{w}_{k,r} - \mathbf{w}_{0,r}\|_2 +2\eta\tau\sqrt{\frac{2nK}{m\delta}}\left(\frac{1}{\alpha}\E_{[\mathbf{M}_{k-1}], \mathbf{W}_0,\mathbf{a}}\left[\|\y - \U_k\|_2\right] + (K - k)B\right) \leq R \label{cat_perturb}
\end{align}
with $\alpha$ and $B$ defined as below
\begin{align*}
    \alpha = \paren{1 - \paren{1 - p^{-1}}^{\frac{1}{3}}}\left(1 - \left(1 - \frac{\eta\lambda_0}{2}\right)^\tau\right);\quad B = \frac{B_1}{\alpha} + \kappa\sqrt{\xi(1-\xi)pn};\quad B_1 = O\left(\paren{1-p^{-1}}^{\frac{2}{3}}n\kappa^2\eta\tau\frac{\lambda_{\max}}{\lambda_0}\right)
\end{align*}
with $\xi = p^{-1}$. For notation clarity we will not plug in the value of $\xi$ for now. Notice that equation (\ref{cat_conv_simple}) implies the convergence result as in the theorem statement as long as $\gamma = \frac{1}{2}$ and that $\max\{K, d, p\}\geq n$. Thus, as long as we establish the inductive relations as above we are done.

\textbf{Base Case:} Notice by lemma \ref{initial_scale}, equation (\ref{cat_conv_simple}) holds. We use the over-parameterization requirement to show equation (\ref{cat_perturb}). In particular, we show
\begin{align*}
    2\eta\tau\sqrt{\frac{2nK}{m\delta}}\left(\frac{1}{\alpha}\E_{\mathbf{W}_0,\mathbf{a}}\left[\|\y - \U_0\|_2\right] + KB\right) \leq R = O\left(\frac{\kappa\lambda_0}{n}\right)
\end{align*}
As before, using Lemma \ref{initial_scale}, we have
\begin{align*}
    \E_{\mathbf{W}_0,\mathbf{a}}\left[\|\y - \U_0\|_2\right] \leq \left(\E_{\mathbf{W}_0,\mathbf{a}}\left[\|\y - \U_0\|_2^2\right]\right)^{\frac{1}{2}} = C\sqrt{n}
\end{align*}
Plugging in this value and the value of $\kappa, B$ and $\alpha$, we arrive at the over-parameterization requirement
\begin{align*}
    m = \Omega\left(\frac{n^5\tau^2K\lambda_{\max}}{\lambda_0^6\delta}\right)
\end{align*}

\textbf{Inductive Case:} the proof is again divided into two parts.

\textbf{(\ref{cat_conv_simple})$\rightarrow$(\ref{cat_perturb})} Assume that equation (\ref{cat_conv_simple}) holds, and equation (\ref{cat_perturb}) holds for global iteration $k$. Similar to the proof of Theorem \ref{bern_general_convergence}, it suffice to show that
\begin{align*}
    \|\mathbf{w}_{k+1,r} - \mathbf{w}_{0,r}\|_2 \leq \eta\tau\sqrt{\frac{2nK}{m\delta}}\E_{[\mathbf{M}_{k-1}], \mathbf{W}_0,\mathbf{a}}\left[\|\y - \U_k\|_2\right] +  2\eta\tau\sqrt{\frac{2nK}{m\delta}}B - 2\eta\tau\sqrt{\frac{2nKB_1}{m\delta\alpha}}
\end{align*}
Plugging in the value of $B$ and $\alpha$, it then suffice to show that
\begin{align*}
    \|\mathbf{w}_{k+1,r} - \mathbf{w}_{0,r}\|_2 \leq \eta\tau\sqrt{\frac{2nK}{m\delta}}\E_{[\mathbf{M}_{k-1}],\mathbf{W}_0, \mathbf{a}}\left[\|\y - \U_k\|_2\right] + 2\eta\tau\kappa n\sqrt{\frac{2\xi(1-\xi)pK}{m\delta}}
\end{align*}
By Hypothesis \ref{subnetwork_convergence} we have
\begin{align*}
    \|\mathbf{w}_{k,t,r}^l - \mathbf{w}_{k,r}\|_2\leq \eta\tau\sqrt{\frac{2nK}{m\delta}}\E_{[\mathbf{M}_{k-1}],\mathbf{W}_0, \mathbf{a}}\left[\|\y - \U_k\|_2\right] + 2\eta\tau\kappa n\sqrt{\frac{2\xi(1-\xi)pK}{m\delta}}
\end{align*}
for all $l\in[p]$ and $t\in[\tau]$. Then we have
\begin{align*}
    \|\mathbf{w}_{k+1, r} - \mathbf{w}_{k,r}\|_2 & \leq \eta_{k,r}\sum_{l=1}^pm_{k,r}^l\|\mathbf{w}_{k,\tau,r}^l - \mathbf{w}_{k,r}\|_2\\
    & \leq \eta\tau\sqrt{\frac{2nK}{m\delta}}\E_{[\mathbf{M}_{k-1}],\mathbf{W}_0, \mathbf{a} }\left[\|\y - \U_k\|_2\right] + 2\eta\tau\kappa n\sqrt{\frac{\xi(1-\xi)pK}{m\delta}}
\end{align*}
\textbf{(\ref{cat_perturb})$\rightarrow$(\ref{cat_conv_simple})} Now, assume that equation (\ref{cat_perturb}) holds. Then the result of Hypothesis (\ref{subnetwork_convergence_proof}) holds. Our target is to show equation (\ref{cat_conv_simple}). As in previous theorem, we start by studying $\|\y - \U_{k+1}\|_2^2$. In the case of a categorical mask, we have the nice property that the average of the sub-networks equals to the full network $\U_{k+1} = \frac{1}{p}\sum_{l=1}^p\hat{\U}_{k,\tau}^l$. Using this property, Lemma \ref{global_error_decompose} characterize $\|\y - \U_{k+1}\|_2^2$ as
\begin{align*}
    \|\y - \U_{k+1}\|_2^2 = \frac{1}{p}\sum_{l=1}^p\|\y - \hat\U_{k,\tau}^l\|_2^2 - \frac{1}{p^2}\sum_{l=1}^p\sum_{l'=1}^{l-1}\|\hat\U_{k,\tau}^l - \hat\U_{k,\tau}^{l'}\|_2^2
\end{align*}
We start by assuming the condition of Hypothesis \ref{subnetwork_convergence} holds. We proceed by proving the convergence, then we prove the weight perturbation bound with a fashion of induction.
Hypothesis \ref{subnetwork_convergence} implies that
\begin{align*}
    \|\y - \hat\U_{k,\tau}^l\|_2^2 & \leq \left(1 - \frac{\eta\lambda_0}{2}\right)^\tau\|\y - \hat\U_{k}^l\|_2^2\\
    & = \|\y - \hat\U_k^l\|_2^2 - \frac{\eta\lambda_0}{2}\sum_{t=0}^{\tau-1}\left(1 - \frac{\eta\lambda_0}{2}\right)^t\|\y - \hat{\U}_k^l\|_2^2
\end{align*}
Using the fact that $\E_{\mathbf{M}_k}\left[\hat\U_{k}^l\right] = \U_k$, we have that
\begin{align*}
    \E_{\mathbf{M}_k}\left[\|\y - \hat{\U}_k\|_2^2\right] = \|\y - \U_k\|_2^2 + \E_{\mathbf{M}_k}\left[\|\U_k - \hat\U_k^l\|_2^2\right]
\end{align*}
Therefore, we have 
\begin{align*}
    \E_{\mathbf{M}_k}\left[\|\y - \U_{k+1}\|_2^2\right] & = \frac{1}{p}\sum_{l=1}^p\E_{\mathbf{M}_k}\left[\|\y - \hat\U_k^l\|_2^2\right] - \frac{\eta\lambda_0}{2p}\sum_{t=0}^{\tau-1}\sum_{l=1}^p\left(1 - \frac{\eta\lambda_0}{2}\right)^t\E_{\mathbf{M}_k}\left[\|\y - \hat{\U}_k^l\|_2^2\right] -\\
    &\quad\quad\quad\quad\frac{1}{p^2}\sum_{l=1}^p\sum_{l'=1}^l\E_{\mathbf{M}_k}\left[\|\hat\U_{k,\tau}^l - \hat\U_{k,\tau}^{l'}\|_2^2\right]\\
    & = \|\y - \U_k\|_2^2 - \frac{\eta\lambda_0}{2p}\sum_{t=0}^{\tau-1}\sum_{l=1}^p\left(1 - \frac{\eta\lambda_0}{2}\right)^t\E_{\mathbf{M}_k}\left[\|\y - \hat{\U}_k^l\|_2^2\right] +\\
    &\quad\quad\quad\quad \frac{1}{p}\sum_{l=1}^p\E_{\mathbf{M}_k}\left[\|\U_k - \hat\U_k^l\|_2^2\right] - \frac{1}{p^2}\sum_{l=1}^p\sum_{l'=1}^l\E_{\mathbf{M}_k}\left[\|\hat\U_{k,\tau}^l - \hat\U_{k,\tau}^{l'}\|_2^2\right]\\
\end{align*}
Lemma \ref{surrogate_func_error_decompose} studies the error term $\|\U_k -\hat\U_k^l\|_2^2$ and gives
\begin{align*}
    \sum_{l=1}^p\|\U_k - \hat\U_k^l\|_2^2 = \frac{1}{p}\sum_{l=1}^p\sum_{l'=1}^{l-1}\|\hat\U_k^{l} - \hat\U_k^{l'}\|_2^2\\
\end{align*}
Plugging in we have
\begin{align*}
    \E_{\mathbf{M}_k}\left[\|\y - \U_{k+1}\|_2^2\right] & \leq \|\y - \U_k\|_2^2 - - \frac{\eta\lambda_0}{2p}\sum_{t=0}^{\tau-1}\left(1 - \frac{\eta\lambda_0}{2}\right)^t\sum_{l=1}^p\E_{\mathbf{M}_k}\left[\|\y - \hat{\U}_k^l\|_2^2\right] +\\
    &\quad\quad\quad\quad\frac{1}{p^2}\sum_{l=1}^p\sum_{l'=1}^l\E_{\mathbf{M}_k}\left[\|\hat\U_k^{l} - \hat\U_k^{l'}\|_2^2 - \|\hat\U_{k,\tau}^l - \hat\U_{k,\tau}^{l'}\|_2^2\right]\\
\end{align*}
Denote the last term in the right hand side as $\iota_k$. Lemma \ref{iotak_bound} shows bound of the expectation of $\iota_k$ with respect to the initialization. In particular, if lemma \ref{init_inner_prod_bound} holds for some $R \geq 0$, and the weight perturbation is bounded by $\|\mathbf{w}_{k,r}^l - \mathbf{w}_{0,r}\|_2\leq R$ for all $r\in[m]$, then we have
\begin{align*}
    \iota_k \leq\frac{\paren{1-p^{-1}}^{\frac{1}{2}}\eta\lambda_0}{2p}\sum_{l=1}^p\sum_{t=0}^{\tau-1}\E_{\mathbf{M}_k}\left[\norm{\y - \hat{\U}_{k,t}^l}_2^2\right] + 24\paren{1-p^{-1}}^{\frac{1}{2}}n\kappa^2\eta\tau\frac{\lambda_{\max}}{\lambda_0}
\end{align*}
 Using this result, we have that
\begin{align*}
    \E_{\mathbf{M}_k}\left[\|\y - \U_{k+1}\|_2^2\right] & = \|\y - \U_k\|_2^2 - - \frac{\eta\lambda_0}{2p}\sum_{t=0}^{\tau-1}\left(1 - \frac{\eta\lambda_0}{2}\right)^t\sum_{l=1}^p\E_{\mathbf{M}_k}\left[\|\y - \hat{\U}_k^l\|_2^2\right] +\iota_k\\
    & = \|\y - \U_k\|_2^2 - \frac{\eta\lambda_0}{2p}\sum_{l=1}^p\sum_{t=0}^{\tau-1}\left(1 - \frac{\eta\lambda_0}{2}\right)^t\E_{\mathbf{M}_k}\left[\|\y - \hat\U_k^l\|_2^2\right] + \\
    &\quad\quad\quad\quad\frac{\paren{1-p^{-1}}^{\frac{1}{3}}\eta\lambda_0}{4p}\sum_{l=1}^p\sum_{t=0}^{\tau-1}\left(1-\frac{\eta\lambda_0}{2}\right)^t\E_{\mathbf{M}_k}\left[\|\y - \hat\U_k^l\|_2^2\right] + 48\paren{1-p^{-1}}^{\frac{2}{3}}n\kappa^2\eta\tau\frac{\lambda_{\max}}{\lambda_0}\\
    & = \|\y - \U_k\|_2^2 - \frac{\paren{1-p^{-1}}^{\frac{1}{3}}\eta\lambda_0}{4p}\sum_{l=1}^p\sum_{t=0}^{\tau-1}\left(1 - \frac{\eta\lambda_0}{2}\right)^t\E_{\mathbf{M}_k}\left[\|\y - \hat\U_k^l\|_2^2\right] + \\
    &\quad\quad\quad\quad48\paren{1-p^{-1}}^{\frac{2}{3}}n\kappa^2\eta\tau\frac{\lambda_{\max}}{\lambda_0}\\
    & \leq \|\y - \U_k\|_2^2 - \frac{\paren{1-p^{-1}}^{\frac{1}{3}}\eta\lambda_0}{4}\sum_{t=0}^{\tau-1}\left(1 - \frac{\eta\lambda_0}{2}\right)^t\|\y - \U_k\|_2^2+ \\
    &\quad\quad\quad\quad48\paren{1-p^{-1}}^{\frac{2}{3}}n\kappa^2\eta\tau\frac{\lambda_{\max}}{\lambda_0}\\
    & = \left(\paren{1-p^{-1}}^{\frac{1}{3}} + \paren{1 - \paren{1-p^{-1}}^{\frac{1}{3}}}\left(1-\frac{\eta\lambda_0}{2}\right)^\tau\right)\|\y - \U_k\|_2^2 + 48\paren{1-p^{-1}}^{\frac{2}{3}}n\kappa^2\eta\tau\frac{\lambda_{\max}}{\lambda_0}
\end{align*}
Therefore,
\begin{align*}
    \E_{\mathbf{M}_k}\left[\|\y - \U_{k+1}\|_2^2\right] \leq \left(1 - \alpha\right)\|\y - \U_k\|_2^2 + B_1
\end{align*}
This is the same as the form in equation (\ref{cat_conv_simple}). Thus we arrive at the convergence
\begin{align*}
    \E_{[\mathbf{M}_k]}\left[\|\y - \U_{k}\|_2^2\right] & \leq \left(1 - \alpha\right)^k\|\y - \U_0\|_2^2 + \frac{B_1}{\alpha}
\end{align*}
This shows the convergence result in the theorem with $\alpha$ and $B_1$ plugged in.
\clearpage
\section{Lemmas for Theorem \ref{bern_general_convergence}}

\begin{lemma}
\label{mix_func_expectation}
The expectation of the mixing function satisfies
\begin{align*}
\E_{\mathbf{M}_k}\left[f_{k,r}^{(i)}\right] = \theta u_k^{(i)} + \frac{\theta(1-\xi)}{\sqrt{m}}a_r\sigma(\inner{\mathbf{w}_{k,r'}}{\mathbf{x}_i})
\end{align*}
\end{lemma}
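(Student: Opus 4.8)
The plan is to reduce the expectation to computing $\E_{\mathbf{M}_k}[\nu_{k,r,r'}]$ for each index $r'$, since the weights $\mathbf{w}_{k,r'}$ are held fixed while taking the expectation over the mask $\mathbf{M}_k$. Recall that $f_{k,r}^{(i)} = \frac{1}{\sqrt{m}}\sum_{r'=1}^m a_{r'}\nu_{k,r,r'}\sigma(\inner{\mathbf{w}_{k,r'}}{\mathbf{x}_i})$ with $\nu_{k,r,r'} = \eta_{k,r}\sum_{l=1}^p m_{k,r}^l m_{k,r'}^l$. By linearity I would first push the expectation inside the sum over $r'$, giving $\E_{\mathbf{M}_k}[f_{k,r}^{(i)}] = \frac{1}{\sqrt{m}}\sum_{r'=1}^m a_{r'}\E_{\mathbf{M}_k}[\nu_{k,r,r'}]\sigma(\inner{\mathbf{w}_{k,r'}}{\mathbf{x}_i})$, and then split the computation of $\E[\nu_{k,r,r'}]$ into the diagonal case $r'=r$ and the off-diagonal case $r'\neq r$.

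For the diagonal case, the key observation is that $\eta_{k,r}\sum_{l=1}^p m_{k,r}^l m_{k,r}^l = \eta_{k,r}\sum_{l=1}^p m_{k,r}^l = \eta_{k,r}X_{k,r}$, using that each $m_{k,r}^l$ is $\{0,1\}$-valued. Since $\eta_{k,r} = N_{k,r}^\perp/N_{k,r}$ with $N_{k,r}^\perp = \min\{X_{k,r},1\}$ and $N_{k,r} = \max\{X_{k,r},1\}$, a short case check (if $X_{k,r}=0$ the product is $0$; if $X_{k,r}\geq 1$ it equals $(1/X_{k,r})X_{k,r}=1$) shows $\nu_{k,r,r} = N_{k,r}^\perp = \I\{X_{k,r}\geq 1\}$. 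Hence $\E_{\mathbf{M}_k}[\nu_{k,r,r}] = \mathbb{P}(X_{k,r}\geq 1) = 1-(1-\xi)^p = \theta$, invoking Assumption \ref{bern_mask_assump} and the definition of $\theta$.

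For the off-diagonal case $r'\neq r$, I would condition on the masks of neuron $r$, namely on $\{m_{k,r}^l\}_{l=1}^p$, which fix both $\eta_{k,r}$ and $X_{k,r}$. Because the masks of neuron $r'$ are independent of those of neuron $r$ (independence across neurons in Assumption \ref{bern_mask_assump}) and each $m_{k,r'}^l$ has mean $\xi$, the conditional expectation is $\E[\sum_{l=1}^p m_{k,r}^l m_{k,r'}^l \mid \{m_{k,r}^l\}] = \xi X_{k,r}$. Taking total expectation and reusing the diagonal identity $\eta_{k,r}X_{k,r} = N_{k,r}^\perp$ then gives $\E[\nu_{k,r,r'}] = \xi\,\E[\eta_{k,r}X_{k,r}] = \xi\,\E[N_{k,r}^\perp] = \xi\theta$.

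Finally I would assemble the two cases: the common off-diagonal coefficient $\xi\theta$ is the ``baseline'' that reconstructs $\theta u_k^{(i)}$ once I recall $u_k^{(i)} = \frac{\xi}{\sqrt{m}}\sum_{r'=1}^m a_{r'}\sigma(\inner{\mathbf{w}_{k,r'}}{\mathbf{x}_i})$, while the diagonal index $r'=r$ carries the extra weight $\theta - \xi\theta = \theta(1-\xi)$. Writing the coefficient as $\xi\theta$ on every index plus a correction $\theta(1-\xi)$ on the single index $r'=r$ yields $\E_{\mathbf{M}_k}[f_{k,r}^{(i)}] = \theta u_k^{(i)} + \frac{\theta(1-\xi)}{\sqrt{m}}a_r\sigma(\inner{\mathbf{w}_{k,r}}{\mathbf{x}_i})$. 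None of these steps is a genuine obstacle; the only point demanding care is the bookkeeping of the normalizer $\eta_{k,r}$ in the degenerate $X_{k,r}=0$ case and the clean isolation of the diagonal contribution, which is precisely where the residual bias $\theta(1-\xi)$ originates.
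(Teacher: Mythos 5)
Your proof is correct and follows essentially the same route as the paper: both reduce the claim to the expectation of $\nu_{k,r,r'}$ split into the diagonal case (where $\nu_{k,r,r}=N_{k,r}^\perp$ gives $\theta$) and the off-diagonal case (where independence across neurons gives $\xi\theta$), the only cosmetic difference being that you condition on the full mask row of neuron $r$ and use $\eta_{k,r}X_{k,r}=N_{k,r}^\perp$ directly, whereas the paper conditions on the event $N_{k,r}^\perp=1$ and then multiplies by $\mathbb{P}(N_{k,r}^\perp=1)=\theta$. Note also that your final expression correctly carries the index $r$ in the residual term $\frac{\theta(1-\xi)}{\sqrt{m}}a_r\sigma(\inner{\mathbf{w}_{k,r}}{\mathbf{x}_i})$; the $r'$ appearing there in the lemma statement is a typo.
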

\begin{proof}
Note that if $N_{k,r}^\perp = 0$, then we have $f_{k,r}^{(i)} = 0$ for all $i\in[n]$. Thus 
$$\E_{\mathbf{M}_k}\left[f_{k,r}^{(i)}\mid N_{k,r}^\perp = 0\right] = 0$$ Moreover, if $N_{k,r}^\perp = 1$, the expectation can be computed as
\begin{align*}
    \E_{\mathbf{M}_k}\left[f_{k,r}^{(i)}\mid N_{k,r}^\perp = 1\right] & = \E_{\mathbf{M}_k}\left[\frac{\eta_{k,r}}{\sqrt{m}}\sum_{l=1}^p\sum_{r'=1}^mm_{k,r}^lm_{k,r'}^la_r\sigma(\inner{\mathbf{w}_{k,r}}{\mathbf{x}_i})\mid N_{k,r}^\perp = 1\right]\\
    & = \frac{1}{\sqrt{m}}\sum_{r'=1}^m\E_{\mathbf{M}_k}\left[\eta_{k,r}\sum_{l=1}^pm_{k,r}^lm_{k,r'}^l\mid N_{k,r}^\perp = 1\right]a_{r'}\sigma(\inner{\mathbf{w}_{k,r'}}{\mathbf{x}_i})\\
    & = \frac{\xi}{\sqrt{m}}\sum_{r'=1}^ma_{r'}\sigma(\inner{\mathbf{w}_{k,r'}}{\mathbf{x}_i}) + \frac{1-\xi}{\sqrt{m}}a_r\sigma(\inner{\mathbf{w}_{k,r'}}{\mathbf{x}_i})
\end{align*}by using Lemma \ref{nu_expectation}. Combining the two conditions above gives that
\begin{align*}
    \E_{\mathbf{M}_k}\left[f_{k,r}^{(i)}\right] & = P(N_{k,r}^\perp=1)\E_{\mathbf{M}_k}\left[f_{k,r}^{(i)}\mid N_{k,r}^\perp = 1\right] + P(N_{k,r}^\perp = 0)\E_{\mathbf{M}_k}\left[f_{k,r}^{(i)}\mid N_{k,r}^\perp = 0\right]\\
    & = \frac{\theta\xi}{\sqrt{m}}\sum_{r'=1}^ma_{r'}\sigma(\inner{\mathbf{w}_{k,r'}}{\mathbf{x}_i}) + \frac{\theta(1-\xi)}{\sqrt{m}}a_r\sigma(\inner{\mathbf{w}_{k,r'}}{\mathbf{x}_i})\\
    & = \theta u_k^{(i)} + \frac{\theta(1-\xi)}{\sqrt{m}}a_r\sigma(\inner{\mathbf{w}_{k,r'}}{\mathbf{x}_i})
\end{align*}
\end{proof}

\begin{lemma}
\label{min_grad_expectation}
The expectation of the mixing gradient satisfies
$$\E_{\mathbf{M}_k}\left[\mathbf{g}_{k,r}\right] = \frac{\theta}{\xi}\grad{\mathbf{W}_k} + \frac{\theta(1-\xi)}{ m}\sum_{i=1}^n\mathbf{x}_i\sigma(\inner{\mathbf{w}_{k,r}}{\mathbf{x}_i})$$
\begin{proof}
With the result from Lemma \ref{mix_func_expectation}, we have
\begin{align*}
    \E_{\mathbf{M}_k}[\mathbf{g}_{k,r}] & = \frac{1}{\sqrt{m}}\sum_{i=1}^n\left(\E_{\mathbf{M}_k}\left[f_{k,r}^{(i)}\right] - y_i\E_{\mathbf{M}_k}\left[N_{k,r}^\perp\right]\right)a_r\mathbf{x}_i\I\{\inner{\mathbf{w}_{k,r}}{\mathbf{x}_i}\geq 0\}\\
    & = \frac{\theta}{\sqrt{m}}\sum_{i=1}^n\left(u_k^{(i)} - y_i + \frac{1-\xi}{\sqrt{m}}a_r\sigma(\inner{\mathbf{w}_{k,r}}{\mathbf{x}_i})\right)a_r\mathbf{x}_i\I\{\inner{\mathbf{w}_{k,r}}{\mathbf{x}_i}\geq 0\}\\
    & = \frac{\theta}{\xi}\grad{\mathbf{W}_k} + \frac{\theta(1-\xi)}{m}\sum_{i=1}^n\mathbf{x}_i\sigma(\inner{\mathbf{w}_{k,r}}{\mathbf{x}_i})
\end{align*}
\end{proof}
\end{lemma}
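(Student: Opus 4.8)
The plan is to push the mask-expectation through the definition of $\mathbf{g}_{k,r}$ and then invoke the already-computed expectation of the mixing function from Lemma \ref{mix_func_expectation}. First I would write the mixing gradient in the form
\begin{align*}
    \mathbf{g}_{k,r} = \frac{1}{\sqrt{m}}\sum_{i=1}^n\left(f_{k,r}^{(i)} - N_{k,r}^\perp y_i\right)a_r\mathbf{x}_i\I\{\inner{\mathbf{w}_{k,r}}{\mathbf{x}_i}\geq 0\},
\end{align*}
and observe that, once we condition on $\mathbf{W}_k$ and $\mathbf{a}$, the only factors carrying randomness in the mask $\mathbf{M}_k$ are $f_{k,r}^{(i)}$ and $N_{k,r}^\perp$; the sign patterns $\I\{\inner{\mathbf{w}_{k,r}}{\mathbf{x}_i}\geq 0\}$, the data $\mathbf{x}_i$, the labels $y_i$, and the second-layer weights $a_r$ are all deterministic. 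By linearity the expectation acts only on these two quantities.

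Next I would evaluate the two expectations. Since $N_{k,r}^\perp$ is the indicator that neuron $r$ is picked by at least one of the $p$ subnetworks, the independence of the Bernoulli entries across $l$ gives $\E_{\mathbf{M}_k}[N_{k,r}^\perp] = 1 - (1-\xi)^p = \theta$. For the mixing function I would quote Lemma \ref{mix_func_expectation}, namely $\E_{\mathbf{M}_k}[f_{k,r}^{(i)}] = \theta u_k^{(i)} + \frac{\theta(1-\xi)}{\sqrt{m}}a_r\sigma(\inner{\mathbf{w}_{k,r}}{\mathbf{x}_i})$. Substituting both into the linearized expression yields
\begin{align*}
    \E_{\mathbf{M}_k}[\mathbf{g}_{k,r}] = \frac{\theta}{\sqrt{m}}\sum_{i=1}^n\left(u_k^{(i)} - y_i + \frac{1-\xi}{\sqrt{m}}a_r\sigma(\inner{\mathbf{w}_{k,r}}{\mathbf{x}_i})\right)a_r\mathbf{x}_i\I\{\inner{\mathbf{w}_{k,r}}{\mathbf{x}_i}\geq 0\}.
\end{align*}

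Finally I would split this into two pieces. The $(u_k^{(i)} - y_i)$ part matches the full-network gradient $\grad{\mathbf{W}_k} = \frac{\xi}{\sqrt{m}}\sum_{i=1}^n a_r\mathbf{x}_i(u_k^{(i)} - y_i)\I\{\inner{\mathbf{w}_{k,r}}{\mathbf{x}_i}\geq 0\}$ up to the scalar $\frac{\theta}{\xi}$, so it reconstructs $\frac{\theta}{\xi}\grad{\mathbf{W}_k}$. For the residual part I would use $a_r^2 = 1$ (since $a_r\in\{-1,1\}$) together with the ReLU identity $\sigma(z)\I\{z\geq 0\} = \sigma(z)$ to collapse it to $\frac{\theta(1-\xi)}{m}\sum_{i=1}^n\mathbf{x}_i\sigma(\inner{\mathbf{w}_{k,r}}{\mathbf{x}_i})$, which gives the claimed identity. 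The argument is essentially bookkeeping once Lemma \ref{mix_func_expectation} is in hand, so I do not anticipate a substantial obstacle; the only point that needs care is keeping track of the $\xi$ factor baked into the definition of the full gradient, so that the leading term emerges as $\frac{\theta}{\xi}\grad{\mathbf{W}_k}$ rather than $\theta\grad{\mathbf{W}_k}$, and the simultaneous use of $a_r^2=1$ to tidy up the residual.
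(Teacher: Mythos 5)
Your proposal is correct and follows essentially the same route as the paper: push the mask-expectation through the linear form of $\mathbf{g}_{k,r}$, substitute $\E_{\mathbf{M}_k}[N_{k,r}^\perp]=\theta$ and the expression for $\E_{\mathbf{M}_k}[f_{k,r}^{(i)}]$ from Lemma \ref{mix_func_expectation}, and regroup into the $\frac{\theta}{\xi}\grad{\mathbf{W}_k}$ term plus the residual using $a_r^2=1$. The paper's proof is just a more terse version of the same bookkeeping.
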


\begin{lemma}
\label{mix_func_error}
Suppose $m \geq p$. If for some $R > 0$ and all $r\in[m]$ the initialization satisfies
\begin{align*}
    \sum_{r=1}^m\inner{\mathbf{w}_{0,r}}{\mathbf{x}_i}^2 \leq 2mn\kappa^2 - mnR^2
\end{align*}
and for all $r\in[m]$, it holds that $\|\mathbf{w}_{k,r} - \mathbf{w}_{0,r}\|_2\leq R$, the expected norm of the difference between the mixing function and $u_k^{(i)}$ satisfies
\begin{align*}
    \E_{\mathbf{M}_k}\left[\|\mathbf{f}_{k,r} - \U_k\|_2^2\mid N_{k,r}^\perp= 1\right]\leq \frac{8(\theta - \xi^2)n\kappa^2}{p}
\end{align*}
\begin{proof}
Since $\E_{\mathbf{M}_k}\left[\nu_{k,r,r'}\mid N_{k,r}^\perp = 1\right]=\xi$ for $r'\neq r$, we have for $r_1\neq r_2$, there is at least one of $r_1, r_2$ that is not $r$. Thus
\begin{align*}
    \E_{\mathbf{M}_k}\left[(\nu_{k,r,r_1} - \xi)(\nu_{k,r,r_2} - \xi)\mid N_{k,r}^\perp = 1\right] & = 0
\end{align*}and for $r\neq r'$
\begin{align*}
    \text{Var}_{\mathbf{M}_k}\left(\nu_{k,r,r'}\mid N_{k,r}^\perp = 1\right) = \E_{\mathbf{M}_k}\left[(\nu_{k,r,r'} - \xi)^2\mid N_{k,r}^\perp = 1\right]
\end{align*}Moreover, for $r = r'$, Lemma \ref{nu_var_same_r}
\begin{align*}
    \E_{\mathbf{M}_k}\left[(\nu_{k,r,r,} - \xi)^2\right] \leq \theta - \xi^2
\end{align*}
Therefore, using Lemma \ref{nu_var_diff_r} we have
\begin{align*}
    \E_{\mathbf{M}_k}\left[\left(f_{k,r}^{(i)} - u_k^{(i)}\right)^2\mid N_{k,r}^\perp = 1\right] & = \frac{1}{m}\E_{\mathbf{M}_k}\left[\left(\sum_{r'\neq r}^ma_r(\nu_{k,r,r'} - \xi)\sigma(\inner{\mathbf{w}_{k,r'}}{\mathbf{x}_i})\right)^2\mid N_{k,r}^\perp = 1\right] \\
    & = \frac{1}{m}\sum_{r'=1}^m\text{Var}_{\mathbf{M}_k}\left(\nu_{k,r,r'}\mid N_{k,r}^\perp = 1\right)\sigma(\inner{\mathbf{w}_{k,r'}}{\mathbf{x}_i})^2 + \\
    &\quad\quad\quad\quad\frac{1}{m}\E_{\mathbf{M}_k}\left[(\nu_{k,r,r,} - \xi)^2\right]\sigma(\inner{\mathbf{w}_{k,r}}{\mathbf{x}_i})^2\\
    & \leq \frac{\theta-\xi^2}{pm}\sum_{r'\neq r}^m\inner{\mathbf{w}_{k,r'}}{\mathbf{x}_i}^2 + \frac{\theta-\xi^2}{m}\sigma(\inner{\mathbf{w}_{k,r}}{\mathbf{x}_i})^2\\
    & \leq \frac{2(\theta - \xi^2)}{pm}\left(\sum_{r'=1}^m\inner{\mathbf{w}_{0,r}}{\mathbf{x}_i}^2 + mR^2\right) + \frac{2(\theta - \xi^2)}{p}\left(\inner{\mathbf{w}_{k,r}}{\mathbf{x}_i} + R^2\right)\\
    & \leq \frac{8(\theta - \xi^2)\kappa^2}{p}
\end{align*}
Plugging this in gives
\begin{align*}
    \E_{\mathbf{M}_k}\left[\|\mathbf{f}_{k,r} - \U_k\|_2^2\mid N_{k,r}^\perp=1\right] \leq \frac{8(\theta - \xi^2)n\kappa^2}{p}
\end{align*}
\end{proof}
\end{lemma}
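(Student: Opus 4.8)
The quantity to bound is a conditional second moment, so the plan is to reduce it to the second moments of the random aggregation weights $\nu_{k,r,r'}=\eta_{k,r}\sum_{l=1}^p m_{k,r}^l m_{k,r'}^l$ and then handle the geometry separately. Starting from the explicit forms $f_{k,r}^{(i)}=\frac{1}{\sqrt{m}}\sum_{r'=1}^m a_{r'}\nu_{k,r,r'}\sigma(\inner{\mathbf{w}_{k,r'}}{\mathbf{x}_i})$ and $u_k^{(i)}=\frac{\xi}{\sqrt{m}}\sum_{r'=1}^m a_{r'}\sigma(\inner{\mathbf{w}_{k,r'}}{\mathbf{x}_i})$, the per-sample discrepancy is $f_{k,r}^{(i)}-u_k^{(i)}=\frac{1}{\sqrt{m}}\sum_{r'=1}^m a_{r'}(\nu_{k,r,r'}-\xi)\sigma(\inner{\mathbf{w}_{k,r'}}{\mathbf{x}_i})$. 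Squaring and using $a_{r'}^2=1$, the problem reduces to controlling $\E_{\mathbf{M}_k}[(\nu_{k,r,r'}-\xi)(\nu_{k,r,r''}-\xi)\mid N_{k,r}^\perp=1]$ weighted by the activation products; I would only sum over the samples $i$ at the very end.

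The core step is a moment computation for $\nu_{k,r,r'}$ conditioned on $N_{k,r}^\perp=1$, where $\eta_{k,r}=1/X_{k,r}$ with $X_{k,r}=\sum_{l=1}^p m_{k,r}^l\geq 1$. I would condition further on the value of $X_{k,r}$, i.e.\ on which subnetworks contain neuron $r$. For $r'\neq r$, the masks of neuron $r'$ are independent of those of neuron $r$, so given $X_{k,r}$ the weight $\nu_{k,r,r'}$ is an average of $X_{k,r}$ independent $\texttt{Bern}(\xi)$ variables, with conditional mean $\xi$ and conditional variance $\xi(1-\xi)/X_{k,r}$; taking the expectation over $X_{k,r}\mid X_{k,r}\geq 1$ yields an off-diagonal variance bound of order $(\theta-\xi^2)/p$ (the extra subnetworks sharpen the concentration of $\nu$ about $\xi$, which is the source of the $1/p$ decay). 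The same cross-neuron independence makes all cross terms vanish: for $r'\neq r''$ at least one index differs from $r$, so $\E_{\mathbf{M}_k}[(\nu_{k,r,r'}-\xi)(\nu_{k,r,r''}-\xi)\mid N_{k,r}^\perp=1]=0$. The diagonal contribution $r'=r$ is handled separately, as $\nu_{k,r,r}$ collapses to a deterministic value on the event $N_{k,r}^\perp=1$, and I would bound its second moment by a $\theta-\xi^2$ type quantity.

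With the cross terms gone, the conditional second moment of each coordinate collapses to $\frac{1}{m}\sum_{r'=1}^m \E_{\mathbf{M}_k}[(\nu_{k,r,r'}-\xi)^2\mid N_{k,r}^\perp=1]\,\sigma(\inner{\mathbf{w}_{k,r'}}{\mathbf{x}_i})^2$. I would insert the off-diagonal bound $(\theta-\xi^2)/p$ and the diagonal bound $\theta-\xi^2$, and pass from activations to weights via $\sigma(\inner{\mathbf{w}_{k,r'}}{\mathbf{x}_i})^2\leq\inner{\mathbf{w}_{k,r'}}{\mathbf{x}_i}^2\leq 2\inner{\mathbf{w}_{0,r'}}{\mathbf{x}_i}^2+2R^2$, using $\norm{\mathbf{w}_{k,r'}-\mathbf{w}_{0,r'}}\leq R$ and $\norm{\mathbf{x}_i}=1$. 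The initialization hypothesis then controls $\sum_{r'}\inner{\mathbf{w}_{0,r'}}{\mathbf{x}_i}^2$, making the off-diagonal sum contribute at order $(\theta-\xi^2)\kappa^2/p$; the diagonal term carries a $1/m$ factor and, via the hypothesis $m\geq p$, also reaches the $1/p$ order, so each coordinate is bounded by $O((\theta-\xi^2)\kappa^2/p)$. Summing over the $n$ samples gives the claimed $\frac{8(\theta-\xi^2)n\kappa^2}{p}$.

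The main obstacle I anticipate is the conditional variance estimate for $\nu_{k,r,r'}$: the normalizer $\eta_{k,r}=1/X_{k,r}$ couples all $p$ masks on neuron $r$, and conditioning on $N_{k,r}^\perp=1$ distorts the law of $X_{k,r}$, so the decisive $1/p$ factor has to be extracted from $\E[1/X_{k,r}\mid X_{k,r}\geq 1]$ with $X_{k,r}\sim\mathrm{Binomial}(p,\xi)$ rather than from a naive independence heuristic; verifying that this stays below $(\theta-\xi^2)/p$ is the delicate inequality. A secondary point is bookkeeping: one must keep the activations and the initialization bound from eroding the $1/p$ gain, which is exactly what forces the assumption $m\geq p$ and the slack built into the initialization hypothesis.
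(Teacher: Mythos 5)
Your proposal follows essentially the same route as the paper's proof: write $f_{k,r}^{(i)}-u_k^{(i)}=\frac{1}{\sqrt{m}}\sum_{r'}a_{r'}(\nu_{k,r,r'}-\xi)\sigma(\inner{\mathbf{w}_{k,r'}}{\mathbf{x}_i})$, kill the cross terms by independence across neurons, bound the off-diagonal conditional variance by $O((\theta-\xi^2)/p)$ via the reciprocal moment of the (conditioned) binomial $X_{k,r}$ (the paper's Lemmas \ref{nu_expectation} and \ref{nu_var_diff_r} do exactly this, just by expanding $\E[\nu^2]$ directly rather than via the law of total variance), treat the diagonal term separately with the $1/m\leq 1/p$ step, and finish with the activation/initialization bounds summed over $i$. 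You also correctly identify the one genuinely delicate point — extracting the $1/p$ from $\E[1/X_{k,r}\mid X_{k,r}\geq 1]$ — so the plan matches the paper's argument in both structure and substance.
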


\begin{lemma}
\label{mix_grad_norm}
Under the condition of Lemma \ref{mix_func_error}, the expected norm and squared-norm of the mixing gradient is bounded by
\begin{align*}
    \E_{\mathbf{M}_k}\left[\|\mathbf{g}_{k,r}\|_2^2\right] & \leq \frac{2n\theta}{m}\|\y - \U_k\|_2^2 + \frac{16\theta(\theta - \xi^2)n^2\kappa^2}{pm}\\
    \E_{\mathbf{M}_k}\left[\|\mathbf{g}_{k,r}\|_2\right]& \leq \frac{\sqrt{n}\theta}{\sqrt{m}}\|\y - \U_k\|_2 + 4 n\kappa\sqrt{\frac{\theta(\theta-\xi^2)}{pm}}
\end{align*}
\begin{proof}
Using Lemma \ref{mix_func_error}, we have
\begin{align*}
    \E_{\mathbf{M}_k}\left[N_{k,r}^\perp\|\mathbf{f}_{k,r} -\U_k\|_2^2\right] = P(N_{k,r}^\perp=1)\E_{\mathbf{M}_k}\left[\|\mathbf{f}_{k,r} - \U_k\|_2^2\right]\leq \frac{8\theta(\theta - \xi^2)n\kappa^2}{p}
\end{align*}
According to Jensen's inequality, we also have
\begin{align*}
    \E_{\mathbf{M}_k}\left[N_{k,r}^\perp\|\mathbf{f}_{k,r} - \U_k\|_2\right] \leq 2\kappa\cdot\sqrt{\frac{2\theta(\theta-\xi^2)n}{p}}
\end{align*}
Moreover, we have
\begin{align*}
    \mathbf{g}_{k,r} & = \frac{1}{\sqrt{m}}\sum_{i=1}^n\left(f_{k,r}^{(i)} - y_i\right)a_rN_{k,r}^\perp\mathbf{x}_i\I\{\inner{\mathbf{w}_{k,r}}{\mathbf{x}_i}\geq 0\}\\
    & = \frac{1}{\sqrt{m}}\sum_{i=1}^n\left(f_{k,r}^{(i)} - u_k^{(i)}\right)a_rN_{k,r}^\perp\mathbf{x}_i\I\{\inner{\mathbf{w}_{k,r}}{\mathbf{x}_i}\geq 0\} + \frac{N_{k,r}^\perp}{\xi}\cdot\grad{\mathbf{W}_k}
\end{align*}
Therefore,
\begin{align*}
    \E_{\mathbf{M}_k}\left[\|\mathbf{g}_{k,r}\|_2^2\right] & \leq \frac{2\E_{\mathbf{M}_k}\left[N_{k,r}^\perp\right]}{\xi^2}\left\|\grad{\mathbf{W}_k}\right\|_2^2 + \frac{2}{m}\E_{\mathbf{M}_k}\left[\left\|\sum_{i=1}^n\left(f_{k,r}^{(i)} - u_k^{(i)}\right)a_rN_{k,r}^\perp\mathbf{x}_i\I\{\inner{\mathbf{w}_{k,r}}{\mathbf{x}_i}\geq 0\}\right\|_2^2\right]\\
    & \leq \frac{2n}{m}\E_{\mathbf{M}_k}\left[N_{k,r}^\perp\sum_{i=1}^n\left(f_{k,r}^{(i)} - u_k^{(i)}\right)^2\right] + \frac{2n\theta}{m}\|\y - \U_k\|_2^2\\
    & \leq \frac{2n}{m}\left(\E_{\mathbf{M}_k}\left[N_{k,r}^\perp\|\mathbf{f}_{k,r} - \U_k\|_2^2\right] + \theta\|\y - \U_k\|_2^2\right)\\
    & \leq \frac{2n\theta}{m}\|\y - \U_k\|_2^2 + \frac{16\theta(\theta - \xi^2)n^2\kappa^2}{pm}
\end{align*}
This shows the first inequality. To show the second, similarly we have
\begin{align*}
    \E_{\mathbf{M}_k}\left[\|\mathbf{g}_{k,r}\|_2\right] & \leq \frac{\E_{\mathbf{M}_k}\left[N_{k,r}^\perp\right]}{\xi}\left\|\grad{\mathbf{W}_k}\right\|_2 + \frac{1}{\sqrt{m}}\E_{\mathbf{M}_k}\left[\left\|\sum_{i=1}^n\left(f_{k,r}^{(i)} - u_k^{(i)}\right)a_rN_{k,r}^\perp\mathbf{x}_i\I\{\inner{\mathbf{w}_{k,r}}{\mathbf{x}_i}\geq 0\}\right\|_2\right]\\
    & \leq \frac{1}{\sqrt{m}}\E_{\mathbf{M}_k}\left[N_{k,r}^\perp\sum_{i=1}^n\left|f_{k,r}^{(i)} - u_k^{(i)}\right|\right] + \frac{\sqrt{n}\theta}{\sqrt{m}}\|\y - \U_k\|_2\\
    & \leq \sqrt{\frac{n}{m}}\E_{\mathbf{M}_k}\left[N_{k,r}^\perp\|\mathbf{f}_{k,r} - \U_k\|_2\right] + \frac{\sqrt{n}\theta}{\sqrt{m}}\|\y - \U_k\|_2\\
    & \leq\frac{\sqrt{n}\theta}{\sqrt{m}}\|\y - \U_k\|_2 + 4n\kappa\sqrt{\frac{\theta(\theta-\xi^2)}{pm}}
\end{align*}
\end{proof}
\end{lemma}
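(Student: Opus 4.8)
The plan is to decompose the mixing gradient $\mathbf{g}_{k,r}$ into an ``ideal-gradient'' component proportional to the full-network gradient $\grad{\mathbf{W}_k}$ and a ``mask-deviation'' component governed by $\mathbf{f}_{k,r} - \U_k$, then bound each piece separately and combine. Writing $f_{k,r}^{(i)} - N_{k,r}^\perp y_i = \left(f_{k,r}^{(i)} - N_{k,r}^\perp u_k^{(i)}\right) + N_{k,r}^\perp\left(u_k^{(i)} - y_i\right)$ and exploiting that $f_{k,r}^{(i)}$ vanishes on the event $N_{k,r}^\perp = 0$, I would first establish the identity
\begin{align*}
    \mathbf{g}_{k,r} = \frac{1}{\sqrt{m}}\sum_{i=1}^n\left(f_{k,r}^{(i)} - u_k^{(i)}\right)a_r N_{k,r}^\perp\mathbf{x}_i\I\{\inner{\mathbf{w}_{k,r}}{\mathbf{x}_i}\geq 0\} + \frac{N_{k,r}^\perp}{\xi}\cdot\grad{\mathbf{W}_k},
\end{align*}
which recovers a scaled copy of the full gradient precisely on the event that neuron $r$ is active in at least one subnetwork.

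For the squared-norm bound I would apply $\|\va + \vb\|_2^2 \leq 2\|\va\|_2^2 + 2\|\vb\|_2^2$ to this decomposition. The gradient term contributes $\tfrac{2N_{k,r}^\perp}{\xi^2}\|\grad{\mathbf{W}_k}\|_2^2$; using the standard estimate $\|\grad{\mathbf{W}_k}\|_2 \leq \tfrac{\xi\sqrt n}{\sqrt m}\|\y - \U_k\|_2$ together with $\E_{\mathbf{M}_k}[N_{k,r}^\perp] = \theta$ yields the leading term $\tfrac{2n\theta}{m}\|\y - \U_k\|_2^2$. For the deviation term, Cauchy--Schwarz over the $n$ samples (and $\|\mathbf{x}_i\|_2 = 1$, $|a_r| = 1$) bounds its squared norm by $\tfrac{n}{m}N_{k,r}^\perp\|\mathbf{f}_{k,r} - \U_k\|_2^2$ after using $(N_{k,r}^\perp)^2 = N_{k,r}^\perp$; taking expectation and invoking Lemma \ref{mix_func_error} through $\E_{\mathbf{M}_k}[N_{k,r}^\perp\|\mathbf{f}_{k,r} - \U_k\|_2^2] = \theta\,\E_{\mathbf{M}_k}[\|\mathbf{f}_{k,r} - \U_k\|_2^2 \mid N_{k,r}^\perp = 1] \leq \tfrac{8\theta(\theta-\xi^2)n\kappa^2}{p}$ produces the error term $\tfrac{16\theta(\theta-\xi^2)n^2\kappa^2}{pm}$.

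The norm bound follows the same split but uses the triangle inequality. The deviation term is controlled via $\sum_i |f_{k,r}^{(i)} - u_k^{(i)}| \leq \sqrt n\,\|\mathbf{f}_{k,r} - \U_k\|_2$ followed by Jensen's inequality applied to the product $N_{k,r}^\perp\|\mathbf{f}_{k,r} - \U_k\|_2$ (again using $(N_{k,r}^\perp)^2 = N_{k,r}^\perp$), which gives $\E_{\mathbf{M}_k}[N_{k,r}^\perp\|\mathbf{f}_{k,r} - \U_k\|_2] \leq \sqrt{8\theta(\theta-\xi^2)n\kappa^2/p}$ and hence a contribution of order $n\kappa\sqrt{\theta(\theta-\xi^2)/(pm)}$; the gradient term contributes $\tfrac{\theta}{\xi}\|\grad{\mathbf{W}_k}\|_2 \leq \tfrac{\theta\sqrt n}{\sqrt m}\|\y - \U_k\|_2$, after which a loose constant absorbs $2\sqrt2$ into the stated $4$.

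The step requiring genuine care — and the main obstacle — is the deviation term, since this is where the mask randomness truly enters and where the $1/p$ gain from using many subnetworks is realized. Everything rests on Lemma \ref{mix_func_error}, whose hypotheses (the initialization bound $\sum_{r}\inner{\mathbf{w}_{0,r}}{\mathbf{x}_i}^2 \leq 2mn\kappa^2 - mnR^2$ and the perturbation bound $\|\mathbf{w}_{k,r} - \mathbf{w}_{0,r}\|_2 \leq R$) I must carry through unchanged; in addition the conditioning on $\{N_{k,r}^\perp = 1\}$ has to be tracked consistently so that the factor $P(N_{k,r}^\perp = 1) = \theta$ appears exactly once wherever Lemma \ref{mix_func_error} is invoked, which is what turns the conditional bound $\tfrac{8(\theta-\xi^2)n\kappa^2}{p}$ into the unconditional $\tfrac{8\theta(\theta-\xi^2)n\kappa^2}{p}$ used above.
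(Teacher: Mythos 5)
Your proposal is correct and follows essentially the same route as the paper's proof: the identical decomposition of $\mathbf{g}_{k,r}$ into the $\frac{N_{k,r}^\perp}{\xi}\grad{\mathbf{W}_k}$ term plus the mask-deviation term, the same $\|\va+\vb\|_2^2\leq 2\|\va\|_2^2+2\|\vb\|_2^2$ and triangle-inequality splits, and the same invocation of Lemma \ref{mix_func_error} (with the factor $\theta = P(N_{k,r}^\perp=1)$ tracked through the conditioning) together with Jensen's inequality for the first-moment bound. The constants you obtain, including absorbing $2\sqrt{2}$ into the stated $4$, match the paper's.
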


\begin{lemma}
\label{surrogate_func_local_change}
Under the condition of Theorem \ref{bern_general_convergence}, we have
\begin{align*}
    \left|\hat{u}_{k,t}^{l(i)} - \hat{u}_{k}^{l(i)}\right| \leq\eta t\sqrt{n}\|\y - \hat{\U}_{k}^l\|_2
\end{align*}
and therefore,
\begin{align*}
    \left|\hat{u}_{k,t}^{l(i)} - \hat{u}_{k}^{l(i)}\right| & \leq \eta t\sqrt{n}\left(\|\y - \U_k\|_2 + \|\U_k - \hat{\U}_k^l\|_2\right)\\
    \left(\hat{u}_{k,t}^{l(i)} - \hat{u}_{k}^{l(i)}\right)^2 & \leq 2\eta^2t^2n\left(\|\y - \U_k\|_2^2 + \|\U_k - \hat{\U}_k^l\|_2^2\right)
\end{align*}
\begin{proof}
We have
\begin{align*}
    \left|\hat{u}_{k,t}^{l(i)} - \hat{u}_{k}^{l(i)}\right| & = \frac{1}{\sqrt{m}}\left|\sum_{r=1}^ma_rm_{k,r}^l\left(\sigma\left(\inner{\mathbf{w}_{k,t,r}^l}{\mathbf{x}_i}\right) - \sigma\left(\inner{\mathbf{w}_{k,r}}{\mathbf{x}_i}\right)\right)\right|\\
    & \leq \frac{1}{\sqrt{m}}\sum_{r=1}^m\left|\sigma\left(\inner{\mathbf{w}_{k,t,r}^l}{\mathbf{x}_i}\right) - \sigma\left(\inner{\mathbf{w}_{k,r}}{\mathbf{x}_i}\right)\right|\\
    & \leq \frac{1}{\sqrt{m}}\sum_{r=1}^m\left\|\mathbf{w}_{k,t,r}^l - \mathbf{w}_{k,r}\right\|_2\\
    & \leq \frac{\eta}{\sqrt{m}}\sum_{r=1}^m\sum_{t'=0}^{t-1}\left\|\lgrad{\mathbf{m}_k^l}{\mathbf{W}_{k,t}^l}\right\|_2\\
    & \leq \eta\sqrt{n}\sum_{t'=0}^{t-1}\|\y -\hat{\U}_{k,t}^l\|_2\\
    & \leq\eta t\sqrt{n}\|\y - \hat{\U}_{k}^l\|_2
\end{align*}
Therefore,
\begin{align*}
    \left|\hat{u}_{k,t}^{l(i)} - \hat{u}_{k}^{l(i)}\right| & \leq \eta t\sqrt{n}\left(\|\y - \U_k\|_2 + \|\U_k - \hat{\U}_k^l\|_2\right)
\end{align*}
Moreover,
\begin{align*}
    \left(\hat{u}_{k,t}^{l(i)} - \hat{u}_{k}^{l(i)}\right)^2 & = \left|\hat{u}_{k,t}^{l(i)} - \hat{u}_{k}^{l(i)}\right|^2 \leq 2\eta^2t^2n\left(\|\y - \U_k\|_2^2 + \|\U_k - \hat{\U}_k^l\|_2^2\right)
\end{align*}
\end{proof}
\end{lemma}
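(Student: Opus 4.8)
The plan is to expand $\hat{u}_{k,t}^{l(i)} - \hat{u}_{k}^{l(i)}$ directly from the definition of the subnetwork output, bound it by the accumulated local weight movement, and then invoke the local linear convergence supplied by Hypothesis \ref{subnetwork_convergence}. Since $\mathbf{w}_{k,0,r}^l = \mathbf{w}_{k,r}$ and $\hat{u}_k^{l(i)} = \hat{u}_{k,0}^{l(i)}$, the difference equals $\frac{1}{\sqrt{m}}\sum_{r=1}^m a_r m_{k,r}^l\left(\sigma(\inner{\mathbf{w}_{k,t,r}^l}{\mathbf{x}_i}) - \sigma(\inner{\mathbf{w}_{k,r}}{\mathbf{x}_i})\right)$. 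First I would apply the triangle inequality together with $|a_r| = 1$, $m_{k,r}^l\in\{0,1\}$, the $1$-Lipschitzness of ReLU, and $\norm{\mathbf{x}_i}_2 = 1$ to reduce this to a sum of weight displacements, namely $\left|\hat{u}_{k,t}^{l(i)} - \hat{u}_{k}^{l(i)}\right|\leq \frac{1}{\sqrt{m}}\sum_{r=1}^m\norm{\mathbf{w}_{k,t,r}^l - \mathbf{w}_{k,r}}_2$.

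Next I would control each displacement by unrolling the local gradient recursion $\mathbf{w}_{k,t,r}^l = \mathbf{w}_{k,r} - \eta\sum_{t'=0}^{t-1}\lgrad{\mathbf{m}_k^l}{\mathbf{W}_{k,t'}^l}$, giving $\norm{\mathbf{w}_{k,t,r}^l - \mathbf{w}_{k,r}}_2 \leq \eta\sum_{t'=0}^{t-1}\norm{\lgrad{\mathbf{m}_k^l}{\mathbf{W}_{k,t'}^l}}_2$. The surrogate-gradient norm is then bounded exactly as in the proof of Hypothesis \ref{subnetwork_convergence}: using $|a_r| = 1$, $m_{k,r}^l\in\{0,1\}$, $\norm{\mathbf{x}_i}_2 = 1$, and Cauchy--Schwarz over the $n$ residuals, one obtains $\norm{\lgrad{\mathbf{m}_k^l}{\mathbf{W}_{k,t'}^l}}_2 \leq \tfrac{\sqrt{n}}{\sqrt{m}}\norm{\y - \hat{\U}_{k,t'}^l}_2$. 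The key bookkeeping point is that summing the per-neuron estimate over all $m$ neurons and carrying the outer $\frac{1}{\sqrt{m}}$ factor makes the two $\sqrt{m}$ factors cancel exactly, leaving $\left|\hat{u}_{k,t}^{l(i)} - \hat{u}_{k}^{l(i)}\right|\leq \eta\sqrt{n}\sum_{t'=0}^{t-1}\norm{\y - \hat{\U}_{k,t'}^l}_2$.

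To close the first inequality I would invoke the local linear convergence $\norm{\y - \hat{\U}_{k,t+1}^l}_2^2\leq \left(1 - \tfrac{\eta\lambda_0}{2}\right)\norm{\y - \hat{\U}_{k,t}^l}_2^2$ from Hypothesis \ref{subnetwork_convergence}, which forces the residual to decrease monotonically, so that $\norm{\y - \hat{\U}_{k,t'}^l}_2\leq \norm{\y - \hat{\U}_{k}^l}_2$ for every $t'\leq t$; summing $t$ copies of this bound yields $\eta t\sqrt{n}\norm{\y - \hat{\U}_{k}^l}_2$. The remaining two displays then follow mechanically: the triangle inequality $\norm{\y - \hat{\U}_{k}^l}_2\leq \norm{\y - \U_k}_2 + \norm{\U_k - \hat{\U}_k^l}_2$ gives the second bound, and squaring combined with $(a+b)^2\leq 2a^2 + 2b^2$ gives the third.

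I do not anticipate a real obstacle, since this is essentially the same displacement estimate already carried out inside the proof of Hypothesis \ref{subnetwork_convergence}; the only subtleties are ensuring the $\sqrt{m}$ factors cancel cleanly and that the monotonicity step is legitimate. The latter is exactly why the lemma is stated \emph{under the conditions of Theorem \ref{bern_general_convergence}}: these conditions are what guarantee that Hypothesis \ref{subnetwork_convergence}, and in particular its per-step local linear convergence, holds throughout the local iterations $t'\in[t]$.
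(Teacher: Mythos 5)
Your proposal is correct and follows essentially the same route as the paper's own proof: expand the output difference, apply the triangle inequality and $1$-Lipschitzness of ReLU, unroll the local gradient recursion, bound the surrogate-gradient norm by $\tfrac{\sqrt{n}}{\sqrt{m}}\|\y - \hat{\U}_{k,t'}^l\|_2$ so the $\sqrt{m}$ factors cancel, and then use the monotone decay of the local residual from Hypothesis~\ref{subnetwork_convergence} before finishing with the triangle inequality and $(a+b)^2\leq 2a^2+2b^2$. The only difference is cosmetic: you are careful to index the summed residuals by $t'$, which the paper's displayed chain writes somewhat loosely as $t$.
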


\begin{lemma}
\label{I1_bound}
Under the condition of Lemma \ref{bern_general_convergence}, with $\eta \leq \frac{\lambda_0}{16(\tau-1)n^2}$, we have
\begin{align*}
    \left|\inner{\y - \U_k}{\E_{\mathbf{M}_k}\left[\mathbf{I}_{1,k}'\right]}\right| \leq \frac{1}{8}\eta\theta\tau\lambda_0\|\y - \U_k\|_2^2 + \frac{16\eta\theta\tau\xi^2(1-\xi)^2\kappa^2n^3d}{m\lambda_0} + \frac{2\eta^3\xi^2\tau(\tau-1)^2n^4pC_1}{\theta\lambda_0}
\end{align*}
\begin{proof}
We start by analyzing $\mathbf{w}_{k+1,r} - \mathbf{w}_{k,r}$. Taking expectation, we have
\begin{align*}
    \E_{\mathbf{M}_k}\left[\mathbf{w}_{k+1,r} - \mathbf{w}_{k,r}\right] & = -\eta\E_{\mathbf{M}_k}\left[\eta_{k,r}\sum_{l=1}^p\sum_{t=0}^{\tau-1}\lgrad{\mathbf{m}_k^l}{\mathbf{W}_{k,t}^l}\right]\\
    & = -\eta\E_{\mathbf{M}_k}\left[\eta_{k,r}\tau\sum_{l=1}^p\lgrad{\mathbf{m}_k^l}{\mathbf{W}_k} + \eta_{k,r}\sum_{l=1}^p\sum_{t=1}^{\tau-1}\left(\lgrad{\mathbf{m}_k^l}{\mathbf{W}_{k,t}^l} - \lgrad{\mathbf{m}_k^l}{\mathbf{W}_k}\right)\right]\\
    & = -\eta\tau\E_{\mathbf{M}_k}\left[\mathbf{g}_{k,r}\right] -\eta\E_{\mathbf{M}_k}\left[\eta_{k,r}\sum_{l=1}^p\sum_{t=1}^{\tau-1}\left(\lgrad{\mathbf{m}_k^l}{\mathbf{W}_{k,t}^l} - \lgrad{\mathbf{m}_k^l}{\mathbf{W}_k}\right)\right]\\
    & = -\frac{\eta\theta\tau}{\xi}\grad{\mathbf{W}_k} - \frac{\eta\theta(1-\xi)\tau}{m}\sum_{i=1}^n\mathbf{x}_i\sigma(\inner{\mathbf{w}_{k,r}}{\mathbf{x}_i}) -\\
    & \quad\quad\quad\quad\eta\E_{\mathbf{M}_k}\left[\eta_{k,r}\sum_{l=1}^p\sum_{t=1}^{\tau-1}\left(\lgrad{\mathbf{m}_k^l}{\mathbf{W}_{k,t}^l} - \lgrad{\mathbf{m}_k^l}{\mathbf{W}_k}\right)\right]
\end{align*}
Therefore
\begin{align*}
    \E_{\mathbf{M}_k}\left[I_{1,k}^{(i)}\right] & = \frac{\xi}{\sqrt{m}}\sum_{r\in S_i}a_r\E_{\mathbf{M}_k}\left[\sigma(\inner{\mathbf{w}_{k+1,r}}{\mathbf{x}_i}) - \sigma(\inner{\mathbf{w}_{k,r}}{\mathbf{x}_i})\right]\\
    & =\frac{\xi}{\sqrt{m}}\sum_{r\in S_i}a_r\inner{\E_{\mathbf{M}_k}\left[\mathbf{w}_{k+1,r}-\mathbf{w}_{k,r}\right]}{\mathbf{x}_i}\I\{\inner{\mathbf{w}_{k,r}}{\mathbf{x}_i}\geq 0\}\\
    & = -\frac{\eta\theta\tau}{\sqrt{m}}\sum_{r\in S_i}a_r\inner{\grad{\mathbf{W}_k}}{\mathbf{x}_i}\I\{\inner{\mathbf{w}_{k,r}}{\mathbf{x}_i}\geq 0\} - \eta\left(\mathcal{E}_{1,k}^{(i)} + \mathcal{E}_{2,k}^{(i)}\right)\\
    & = \frac{\eta\xi\theta\tau}{m}\sum_{r\in S_i}\sum_{j=1}^n(y_i - u_k^{(i)})\inner{\mathbf{x}_i}{\mathbf{x}_j}\I\{\inner{\mathbf{w}_{k,r}}{\mathbf{x}_i}\geq 0, \inner{\mathbf{w}_{k,r}}{\mathbf{x}_j}\geq 0\} - \eta\left(\mathcal{E}_{1,k}^{(i)} + \mathcal{E}_{2,k}^{(i)}\right)\\
    & = \eta\theta\tau\sum_{j=1}^n\left(\h(k)_{ij} - \h(k)^\perp_{ij}\right)(y_j - u_k^{(j)}) - \eta\left(\mathcal{E}_{1,k}^{(i)} + \mathcal{E}_{2,k}^{(i)}\right)
\end{align*}
where
\begin{align*}
    \mathcal{E}_{1,k}^{(i)} & = \frac{\theta\xi(1-\xi)\tau}{m^{\frac{3}{2}}}\sum_{r\in S_i}\sum_{j=1}^na_r\inner{\mathbf{x}_i}{\mathbf{x}_j}\sigma(\inner{\mathbf{w}_{k,r}}{\mathbf{x}_j})\\
    \mathcal{E}_{2,k}^{(i)} & = \frac{\xi}{\sqrt{m}}\sum_{r\in S_i}a_r\inner{\E_{\mathbf{M}_k}\left[\eta_{k,r}\sum_{l=1}^p\sum_{t=1}^{\tau-1}\left(\lgrad{\mathbf{m}_k^l}{\mathbf{W}_{k,t}^l} - \lgrad{\mathbf{m}_k^l}{\mathbf{W}_k}\right)\right]}{\mathbf{x}_i}\I\{\inner{\mathbf{w}_{k,r}}{\mathbf{x}_i}\geq 0\}
\end{align*}
Let $\mathcal{E}_{1,k} = \begin{bmatrix}\mathcal{E}_{1,k}^{(1)},\dots,\mathcal{E}_{1,k}^{(n)}\end{bmatrix}$, and $\mathcal{E}_{2,k} = \begin{bmatrix}\mathcal{E}_{2,k}^{(1)},\dots,\mathcal{E}_{2,k}^{(n)}\end{bmatrix}$. Then we have
\begin{align*}
    \E_{\mathbf{M}_k}\left[\mathbf{I}_{1,k}\right] = \eta\theta\tau\left(\h(k) - \h(k)^\perp\right)(\y - \U_k) - \eta\left(\mathbf{\mathcal{E}}_{1,k} + \mathbf{\mathcal{E}}_{2,k}\right)
\end{align*}
Thus,
\begin{align*}
    \E_{\mathbf{M}_k}\left[\mathbf{I}_{1,k}'\right] & = \E_{\mathbf{M}_k}\left[\mathbf{I}_{1,k}\right] - \eta\theta\tau\h(k)(\y - \U_k)\\
    & = \eta\theta\tau\h(k)^\perp(\y - \U_k) + \eta\left(\mathcal{E}_{1,k} + \mathcal{E}_{2,k}\right)
\end{align*}
According to Lemma \ref{epsilon1_bound} and Lemma \ref{epsilon2_bound}, we have the bound of $\mathcal{E}_{1,k}^{(i)}$ and $\mathcal{E}_{2,k}^{(i)}$ as
\begin{align*}
    \left|\mathcal{E}_{1,k}^{(i)} \right|& \leq \theta\xi(1-\xi)\tau n\kappa\sqrt{\frac{2d}{m}}\\
    \left|\mathcal{E}_{2,k}^{(i)} \right|& \leq \frac{\eta\xi\tau(\tau-1)n^{\frac{3}{2}}}{2}\left(\theta\|\y - \U_k\|_2 + \sqrt{pC_1}\right)
\end{align*}
Moreover, according to Lemma \ref{h_perp_bound}, we have
\begin{align*}
    \|\h(k)^\perp\|_2\leq 4\xi n\kappa^{-1}R
\end{align*}
Let $R\leq \frac{\kappa\lambda_0}{128n}$, we have
\begin{align*}
    \|\h(k)^\perp\|_2\leq \frac{\lambda_0}{32}
\end{align*}
Therefore, we have
\begin{align*}
    \left|\inner{\y - \U_k}{\E_{\mathbf{M}_k}\left[\mathbf{I}_{1,k}'\right]}\right| & \leq \eta\theta\tau\left|\inner{\y - \U_k}{\h(k)^\perp(\y - \U_k)}\right| + \eta\sum_{i=1}^n\left(\left|\left(y_i - u_k^{(i)}\right)\mathcal{E}_{1,k}^{(i)}\right| + \left|\left(y_i - u_k^{(i)}\right)\mathcal{E}_{2,k}^{(i)}\right|\right)\\
    & = \eta\theta\tau\|\h(k)^\perp\|_2\|\y - \U_k\|_2^2 + \max_{i\in[n]}\left(\left|\mathcal{E}_{1,k}^{(i)}\right| + \left| \mathcal{E}_{2,k}^{(i)}\right|\right)\eta\sum_{i=1}^n\left|y_i - u_k^{(i)}\right|\\
    & \leq \frac{1}{32}\eta\theta\tau\lambda_0\|\y - \U_k\|_2^2 + \max_{i\in[n]}\left(\left|\mathcal{E}_{1,k}^{(i)}\right| + \left| \mathcal{E}_{2,k}^{(i)}\right|\right)\eta\sqrt{n}\|\y - \U_k\|_2\\
    & \leq \frac{1}{32}\eta\theta\tau\lambda_0\|\y - \U_k\|_2^2 + \frac{\eta^2\theta\xi\tau(\tau-1)n^2}{2}\|\y - \U_k\|_2^2 +\\
    & \quad\quad\quad\quad\left(\eta\theta\xi(1-\xi)\tau\kappa\sqrt{\frac{2n^3d}{m}} +
    \frac{\eta^2\xi\tau(\tau-1)n^2}{2}\sqrt{pC_1}\right)\|\y - \U_k\|_2\\
\end{align*}
Using the general inequality that $ab \leq \frac{1}{2}(a^2 + b^2)$, and $\eta \leq \frac{\lambda_0}{16(\tau-1)n^2}$, we get
\begin{align*}
    \left|\inner{\y - \U_k}{\E_{\mathbf{M}_k}\left[\mathbf{I}_{1,k}'\right]}\right|\leq \frac{1}{8}\eta\theta\tau\lambda_0\|\y - \U_k\|_2^2 + \frac{16\eta\theta\tau\xi^2(1-\xi)^2\kappa^2n^3d}{m\lambda_0} + \frac{2\eta^3\xi^2\tau(\tau-1)^2n^4pC_1}{\theta\lambda_0}
\end{align*}
\end{proof}
\end{lemma}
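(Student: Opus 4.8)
The plan is to bound the NTK-approximation error $\mathbf{I}_{1,k}'$ by first computing the expected single-iteration weight displacement $\E_{\mathbf{M}_k}\left[\mathbf{w}_{k+1,r}-\mathbf{w}_{k,r}\right]$ and then pushing it through the (frozen) activation pattern of the neurons in $S_i$. Recall that $\mathbf{I}_{1,k}$ collects only the contributions of neurons $r\in S_i$, whose sign $\I\{\inner{\mathbf{w}_{k,r}}{\mathbf{x}_i}\geq 0\}$ cannot flip throughout iteration $k$ (this is exactly what $\|\mathbf{w}_{k,t,r}^l-\mathbf{w}_{0,r}\|_2\leq R$ buys us). Because the sign is frozen, I can linearize $\sigma$ on $S_i$ and write $\E_{\mathbf{M}_k}[I_{1,k}^{(i)}]=\frac{\xi}{\sqrt m}\sum_{r\in S_i}a_r\inner{\E_{\mathbf{M}_k}[\mathbf{w}_{k+1,r}-\mathbf{w}_{k,r}]}{\mathbf{x}_i}\I\{\inner{\mathbf{w}_{k,r}}{\mathbf{x}_i}\geq 0\}$, so the whole lemma reduces to evaluating and controlling that expected displacement.

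For the displacement I would split the aggregated $\tau$-step update into its first-local-step piece and the drift accumulated over the remaining $\tau-1$ steps, namely $\E_{\mathbf{M}_k}[\mathbf{w}_{k+1,r}-\mathbf{w}_{k,r}]=-\eta\tau\E_{\mathbf{M}_k}[\mathbf{g}_{k,r}]-\eta\E_{\mathbf{M}_k}\big[\eta_{k,r}\sum_{l}\sum_{t=1}^{\tau-1}(\lgrad{\mathbf{m}_k^l}{\mathbf{W}_{k,t}^l}-\lgrad{\mathbf{m}_k^l}{\mathbf{W}_k})\big]$, where $\mathbf{g}_{k,r}$ is the mixing gradient of the first step. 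Lemma \ref{min_grad_expectation} evaluates $\E_{\mathbf{M}_k}[\mathbf{g}_{k,r}]$ as the scaled ideal gradient $\frac{\theta}{\xi}\grad{\mathbf{W}_k}$ plus a Bernoulli bias term $\frac{\theta(1-\xi)}{m}\sum_{i}\mathbf{x}_i\sigma(\inner{\mathbf{w}_{k,r}}{\mathbf{x}_i})$. Substituting and projecting onto $S_i$, the ideal-gradient piece reassembles into $\eta\theta\tau(\h(k)-\h(k)^\perp)(\y-\U_k)$ — the finite-width NTK term minus its $S_i^\perp$ correction — while the bias term and the local-drift term become two residual vectors $\mathcal{E}_{1,k}$ and $\mathcal{E}_{2,k}$. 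Subtracting the idealized $\eta\theta\tau\h(k)(\y-\U_k)$ then leaves $\E_{\mathbf{M}_k}[\mathbf{I}_{1,k}']=\eta\theta\tau\h(k)^\perp(\y-\U_k)+\eta(\mathcal{E}_{1,k}+\mathcal{E}_{2,k})$, reducing the problem to bounding three residuals after pairing with $\y-\U_k$.

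For the $\h(k)^\perp$ residual I would use the sparsity of sign-flips, $|S_i^\perp|\leq 4m\kappa^{-1}R$, which yields $\|\h(k)^\perp\|_2\leq 4\xi n\kappa^{-1}R$; choosing $R\leq\frac{\kappa\lambda_0}{128n}$ makes this contribute at most $\tfrac{1}{32}\eta\theta\tau\lambda_0\|\y-\U_k\|_2^2$. The bias residual $\mathcal{E}_{1,k}$ is a pure initialization-scale effect, of per-coordinate size $O(\theta\xi(1-\xi)\tau n\kappa\sqrt{d/m})$, which after pairing via $|\inner{\y-\U_k}{\cdot}|\leq\sqrt n\max_i|\cdot|\,\|\y-\U_k\|_2$ and splitting with $ab\leq\tfrac12(a^2+b^2)$ becomes the additive $\frac{16\eta\theta\tau\xi^2(1-\xi)^2\kappa^2 n^3 d}{m\lambda_0}$ term. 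The drift residual $\mathcal{E}_{2,k}$ separates into a loss-proportional part $\propto\theta\|\y-\U_k\|_2$ and a constant part $\propto\sqrt{pC_1}$; the former is absorbed into the quadratic budget, which is exactly where the step-size restriction $\eta\leq\frac{\lambda_0}{16(\tau-1)n^2}$ is invoked so that the total coefficient of $\|\y-\U_k\|_2^2$ stays at $\tfrac18\eta\theta\tau\lambda_0$, and the latter yields the $\frac{2\eta^3\xi^2\tau(\tau-1)^2 n^4 p C_1}{\theta\lambda_0}$ term.

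The hard part will be controlling $\mathcal{E}_{2,k}$, the accumulated drift of the surrogate gradients across the $\tau-1$ later local steps. This requires first showing that each local gradient stays close to the first-step gradient, which leans on the local linear convergence $\|\y-\hat{\U}_{k,t}^l\|_2\leq\|\y-\hat{\U}_k^l\|_2$ from Hypothesis \ref{subnetwork_convergence} together with the bounded local weight movement; the per-step $O(\eta)$ drift compounds over $\tau-1$ steps, which is precisely what produces the $(\tau-1)^2$ and $\eta^3$ scaling in the final additive term and forces the step-size constraint. I expect the other delicate point to be the bookkeeping of the aggregation normalizer $\eta_{k,r}=N_{k,r}^\perp/N_{k,r}$ inside the expectation, since the mask randomness couples the normalizer to the selected gradients; this is handled cleanly through the conditional-expectation computations of Lemmas \ref{min_grad_expectation} and \ref{mix_func_expectation}.
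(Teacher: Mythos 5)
Your proposal is correct and follows essentially the same route as the paper's proof: the same split of the expected displacement into $-\eta\tau\E_{\mathbf{M}_k}[\mathbf{g}_{k,r}]$ plus the accumulated local drift, the same identification of the residuals $\mathcal{E}_{1,k}$ (Bernoulli bias) and $\mathcal{E}_{2,k}$ (drift), the same $\|\h(k)^\perp\|_2\leq 4\xi n\kappa^{-1}R$ control via $|S_i^\perp|$, and the same final combination via $ab\leq\tfrac12(a^2+b^2)$ with the step-size restriction. Your remarks on where the $(\tau-1)^2$, $\eta^3$ scaling comes from and on the normalizer bookkeeping match the roles of Lemmas \ref{epsilon2_bound}, \ref{surrogate_func_local_change}, and \ref{min_grad_expectation} in the paper.
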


\begin{lemma}
\label{epsilon1_bound}
Under the assumption of Theorem \ref{bern_general_convergence} we have that for all $k\in[K], i\in[n]$, it holds that
\begin{align*}
    \left|\mathcal{E}_{1,k}^{(i)} \right|& \leq \theta\xi(1-\xi)\tau n\kappa\sqrt{\frac{2d}{m}}
\end{align*}
\begin{proof}
We have
\begin{align*}
    \left|\mathcal{E}_{1,k}^{(i)} \right|& \leq \frac{\theta\xi(1-\xi)\tau}{m^{\frac{3}{2}}}\left|\sum_{r\in S_i}\sum_{j=1}^na_r\inner{\mathbf{x}_i}{\mathbf{x}_j}\sigma(\inner{\mathbf{w}_{k,r}}{\mathbf{x}_j})\right|\\
    & \leq \frac{\theta\xi(1-\xi)\tau}{m^{\frac{3}{2}}}\sum_{r\in S_i}\sum_{j=1}^n\left|\inner{\mathbf{w}_{k,r}}{\mathbf{x}_i}\right|\\
    & \leq \frac{\theta\xi(1-\xi)\tau n}{m^{\frac{3}{2}}}\sum_{r\in S_i}\|\mathbf{w}_{k,r}\|_2\\
    & \leq \frac{\theta\xi(1-\xi)\tau n}{m^{\frac{3}{2}}}\sum_{r\in S_i}\left(\|\mathbf{w}_{0,r}\|_2 + R\right)\\
    & \leq \frac{\theta\xi(1-\xi)\tau n}{m}\|\mathbf{W}_{0}\|_F + \frac{\theta\xi(1-\xi)\tau nR}{\sqrt{m}}\\
    & \leq \theta\xi(1-\xi)\tau n\kappa\sqrt{\frac{2d}{m}}
\end{align*}
where for the bound of $\|\mathbf{W}_0\|_F$ we use Lemma \ref{init_mat_bound}.
\end{proof}
\end{lemma}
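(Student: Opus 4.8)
The plan is to bound $\mathcal{E}_{1,k}^{(i)}$ by a short chain of elementary norm inequalities, exploiting the unit-norm data of Assumption \ref{data_assump}, the boundedness $|a_r|=1$ of the second-layer weights, the fact that $\sigma(z)\le|z|$ for the ReLU, and the guarantee that the hidden weights stay inside an $R$-ball around initialization. First I would recall the definition
\begin{align*}
    \mathcal{E}_{1,k}^{(i)} = \frac{\theta\xi(1-\xi)\tau}{m^{3/2}}\sum_{r\in S_i}\sum_{j=1}^n a_r\langle\mathbf{x}_i,\mathbf{x}_j\rangle\sigma(\langle\mathbf{w}_{k,r},\mathbf{x}_j\rangle),
\end{align*}
pull the scalar prefactor out front, and move the absolute value inside the double sum by the triangle inequality. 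Then I would bound each factor termwise: $|a_r|=1$; Cauchy--Schwarz with $\|\mathbf{x}_i\|_2=\|\mathbf{x}_j\|_2=1$ gives $|\langle\mathbf{x}_i,\mathbf{x}_j\rangle|\le 1$; and $\sigma(\langle\mathbf{w}_{k,r},\mathbf{x}_j\rangle)\le|\langle\mathbf{w}_{k,r},\mathbf{x}_j\rangle|\le\|\mathbf{w}_{k,r}\|_2$ removes the $j$-dependence of the summand. The inner sum over the $n$ data points then contributes a bare factor $n$, leaving the estimate $\tfrac{\theta\xi(1-\xi)\tau n}{m^{3/2}}\sum_{r\in S_i}\|\mathbf{w}_{k,r}\|_2$.

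The second step is to trade the running weight for its initialization. The weight-perturbation hypothesis propagated through the induction in Theorem \ref{bern_general_convergence} supplies $\|\mathbf{w}_{k,r}-\mathbf{w}_{0,r}\|_2\le R$ for every $r$, hence $\|\mathbf{w}_{k,r}\|_2\le\|\mathbf{w}_{0,r}\|_2+R$. Splitting the sum I would bound $\sum_{r\in S_i}\|\mathbf{w}_{0,r}\|_2\le\sum_{r=1}^m\|\mathbf{w}_{0,r}\|_2\le\sqrt{m}\,\|\mathbf{W}_0\|_F$ by Cauchy--Schwarz (using $|S_i|\le m$), and $\sum_{r\in S_i}R\le mR$. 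Dividing by $m^{3/2}$ converts these into $\tfrac{\theta\xi(1-\xi)\tau n}{m}\|\mathbf{W}_0\|_F$ and $\tfrac{\theta\xi(1-\xi)\tau nR}{\sqrt{m}}$ respectively.

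Finally I would invoke the initialization bound on the Frobenius norm (Lemma \ref{init_mat_bound}), which under the stated over-parameterization gives $\|\mathbf{W}_0\|_F\le\kappa\sqrt{2md}-\sqrt{m}R$ with high probability. Substituting this, the $-\sqrt{m}R$ correction exactly cancels the leftover $\tfrac{\theta\xi(1-\xi)\tau nR}{\sqrt{m}}$ term, collapsing the estimate to the claimed $\theta\xi(1-\xi)\tau n\kappa\sqrt{2d/m}$.

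This lemma is essentially mechanical, so I do not expect a conceptual obstacle; the only point requiring care is bookkeeping. Specifically, one must tune the $R$-correction in the Frobenius-norm bound so that it \emph{cancels} the stray $R$-term rather than merely being absorbed into a loose order estimate, and one must confirm that the perturbation bound $\|\mathbf{w}_{k,r}-\mathbf{w}_{0,r}\|_2\le R$ is genuinely available at this point of the induction (it is, since $\mathcal{E}_{1,k}$ arises only in the analysis of a global step for which that bound has already been assumed via \Eqref{bern_conv_ktr_perturb}). Everything else is the routine $|a_r|=1$, unit-norm-data, $\sigma(z)\le|z|$ sequence.
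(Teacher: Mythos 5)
Your proposal is correct and follows essentially the same route as the paper's proof: triangle inequality with $|a_r|=1$, $|\langle\mathbf{x}_i,\mathbf{x}_j\rangle|\le 1$, and $\sigma(z)\le|z|$ to reduce to $\tfrac{\theta\xi(1-\xi)\tau n}{m^{3/2}}\sum_{r\in S_i}\|\mathbf{w}_{k,r}\|_2$, then $\|\mathbf{w}_{k,r}\|_2\le\|\mathbf{w}_{0,r}\|_2+R$, Cauchy--Schwarz to get $\sqrt{m}\|\mathbf{W}_0\|_F$, and Lemma~\ref{init_mat_bound} with its built-in $-\sqrt{m}R$ slack cancelling the leftover $R$-term. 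Your added remarks on where the perturbation bound comes from in the induction and on the exact cancellation are consistent with what the paper does implicitly.
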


\begin{lemma}
\label{epsilon2_bound}
Suppose $\|\mathbf{w}_{k,t,r} - \mathbf{w}_{0,r}\|_2\leq R$ for all $r\in[m]$. Then we have
\begin{align*}
    \left|\mathcal{E}_{2,t}^{(i)}\right|\leq \frac{\eta\xi\tau(\tau-1)n^{\frac{3}{2}}}{2}\left(\theta\|\y - \U_k\|_2 + \sqrt{pC_1}\right)
\end{align*}
\begin{proof}
Since $r\in S_i$, the difference between the surrogate gradients of a sub-network has the form
\begin{align*}
    \left\|\lgrad{\mathbf{m}_k^l}{\mathbf{W}_{k,t}^l} - \lgrad{\mathbf{m}_k^l}{\mathbf{W}_{k}}\right\|_2 & = \frac{1}{\sqrt{m}}\left\|\sum_{j=1}^na_rm_{k,r}^l\mathbf{x}_{j}\left(\hat{u}_{k,t}^{l(j)} - \hat{u}_k^{l(j)}\right)\I\{\inner{\mathbf{w}_{k,r}}{\mathbf{x}_{j}}\geq 0\}\right\|_2\\
    & \leq \frac{m_{k,r}^l}{\sqrt{m}}\sum_{j=1}^n\left|\hat{u}_{k,t}^{l(j)} - \hat{u}_{k}^{l(j)}\right|
\end{align*}
Therefore, using the convexity of $\ell_2$-norm,
\begin{align*}
    \left\|\E_{\mathbf{M}_k}\left[\eta_{k,r}\sum_{l=1}^p\left(\lgrad{\mathbf{m}_k^l}{\mathbf{W}_{k,t}^l} - \lgrad{\mathbf{m}_k^l}{\mathbf{W}_{k}}\right)\right]\right\|_2 & \leq \E_{\mathbf{M}_k}\left[\eta_{k,r}\left\|\sum_{l=1}^p\left(\lgrad{\mathbf{m}_k^l}{\mathbf{W}_{k,t}^l} - \lgrad{\mathbf{m}_k^l}{\mathbf{W}_{k}}\right)\right\|_2\right]\\
    & \leq \E_{\mathbf{M}_k}\left[\eta_{k,r}\sum_{l=1}^p\left\|\lgrad{\mathbf{m}_k^l}{\mathbf{W}_{k,t}^l} - \lgrad{\mathbf{m}_k^l}{\mathbf{W}_{k}}\right\|_2\right]\\
    & \leq \E_{\mathbf{M}_k}\left[\frac{\eta_{k,r}}{\sqrt{m}}\sum_{l=1}^pm_{k,r}\sum_{j=1}^n\left|\hat{u}_{k,t}^{l(j)} - \hat{u}_{k}^{l(j)}\right|\right]\\
\end{align*}
By Lemma \ref{surrogate_func_local_change}, we have
\begin{align*}
    \left|\hat{u}_{k,t}^{l(i)} - \hat{u}_{k}^{l(i)}\right| \leq \eta t\sqrt{n}\left(\|\y - \U_k\|_2 + \|\U_k - \hat{\U}_k^l\|_2\right)
\end{align*}
Therefore,
\begin{align*}
    \left|\mathcal{E}_{2,t}^{(i)}\right| & \leq \frac{\xi}{\sqrt{m}}\sum_{r\in S_i}\left\|\E_{\mathbf{M}_k}\left[\eta_{k,r}\sum_{l=1}^p\sum_{t=1}^{\tau-1}\left(\lgrad{\mathbf{m}_k^l}{\mathbf{W}_{k,t}^l} - \lgrad{\mathbf{m}_k^l}{\mathbf{W}_k}\right)\right]\right\|_2\\
    & \leq \frac{\xi}{\sqrt{m}}\sum_{t=1}^{\tau-1}\sum_{r\in S_i}\left\|\E_{\mathbf{M}_k}\left[\eta_{k,r}\sum_{l=1}^p\left(\lgrad{\mathbf{m}_k^l}{\mathbf{W}_{k,t}^l} - \lgrad{\mathbf{m}_k^l}{\mathbf{W}_k}\right)\right]\right\|_2\\
    & \leq \frac{\xi}{\sqrt{m}}\sum_{t=1}^{\tau-1}\sum_{r\in S_i}\E_{\mathbf{M}_k}\left[\frac{\eta_{k,r}}{\sqrt{m}}\sum_{l=1}^pm_{k,r}\sum_{j=1}^n\left|\hat{u}_{k,t}^{l(j)} - \hat{u}_{k}^{l(j)}\right|\right]\\
    & \leq \frac{\eta\xi n^{\frac{3}{2}}}{m}\sum_{t=1}^{\tau-1}t\sum_{r\in S_i}\E_{\mathbf{M}_k}\left[\eta_{k,r}\sum_{l=1}^pm_{k,r}\left(\|\y - \U_k\|_2 + \|\y - \hat{\U}_k^l\|_2\right)\right]\\
    & \leq \frac{\eta\xi\tau(\tau-1)n^{\frac{3}{2}}}{2}\left(]\theta\|\y - \U_k\|_2 +\E_{\mathbf{M}_k}\left[\eta_{k,r}\sum_{l=1}^pm_{k,r}^l\|\U_k - \hat{\U}_k^l\|_2\right]\right)\\
    & \leq \frac{\eta\xi\tau(\tau-1)n^{\frac{3}{2}}}{2}\left(\theta\|\y - \U_k\|_2 + \sqrt{pC_1}\right)
\end{align*}where the last inequality follows from Lemma \ref{mix_surrogate_error_bound}.
\end{proof}
\end{lemma}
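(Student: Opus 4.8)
The plan is to bound $\mathcal{E}_{2,k}^{(i)}$ by reducing it to the drift of each subnetwork's local gradient away from its value at the first local step, and then to control that drift through the already-established local dynamics. The key structural observation is that $\mathcal{E}_{2,k}^{(i)}$ only involves neurons $r\in S_i$, on which the assumption $\|\mathbf{w}_{k,t,r}-\mathbf{w}_{0,r}\|_2\le R$ guarantees the activation pattern is frozen, i.e. $\I\{\inner{\mathbf{w}_{k,t,r}^l}{\mathbf{x}_j}\ge 0\}=\I\{\inner{\mathbf{w}_{k,r}}{\mathbf{x}_j}\ge 0\}$ throughout local training. Consequently the surrogate gradients at step $t$ and step $0$ differ only through the scalar residuals, and since each summand is $\mathbf{x}_j$ (a unit vector) scaled by a residual gap, the $\ell_2$ norm collapses to an $\ell_1$ sum over samples:
\begin{align*}
    \left\|\lgrad{\mathbf{m}_k^l}{\mathbf{W}_{k,t}^l} - \lgrad{\mathbf{m}_k^l}{\mathbf{W}_k}\right\|_2 \le \frac{m_{k,r}^l}{\sqrt{m}}\sum_{j=1}^n\left|\hat{u}_{k,t}^{l(j)} - \hat{u}_k^{l(j)}\right|.
\end{align*}

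Next I would feed in Lemma \ref{surrogate_func_local_change}, which gives $|\hat{u}_{k,t}^{l(i)} - \hat{u}_k^{l(i)}|\le \eta t\sqrt{n}\,\|\y-\hat{\U}_k^l\|_2$, turning the per-step gradient gap into a quantity proportional to $\eta t\sqrt{n}\,m_{k,r}^l\|\y-\hat{\U}_k^l\|_2$. Substituting this into the definition of $\mathcal{E}_{2,k}^{(i)}$, pulling the norm and the expectation inside by the triangle inequality and convexity of $\|\cdot\|_2$, and summing the local index $t$ over $1,\dots,\tau-1$ produces the arithmetic series $\sum_{t=1}^{\tau-1} t = \tfrac{\tau(\tau-1)}{2}$, which is the source of the $\tau(\tau-1)$ factor; collecting the remaining $1/\sqrt{m}$ factors against the cardinality bound $|S_i|\le m$ leaves the claimed $n^{3/2}$ scaling.

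The last step is to dispatch the surrogate residual via the split $\|\y-\hat{\U}_k^l\|_2\le \|\y-\U_k\|_2+\|\U_k-\hat{\U}_k^l\|_2$. Multiplying by the aggregation weight and using $\eta_{k,r}\sum_{l=1}^p m_{k,r}^l = N_{k,r}^\perp$ together with $\E_{\mathbf{M}_k}[N_{k,r}^\perp]=\theta$, the first piece contributes the $\theta\|\y-\U_k\|_2$ term. The second piece is precisely the weighted subnetwork-to-whole-network deviation, which I would bound using the lemma on the expected mixed surrogate error (Lemma \ref{mix_surrogate_error_bound}) to obtain the $\sqrt{pC_1}$ term; combining the two yields the stated inequality.

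The hard part will be the probabilistic bookkeeping in the final aggregation, not the frozen-activation reduction. One must keep the mask entries $m_{k,r}^l$ and the normalizer $\eta_{k,r}$ coupled correctly so that $\eta_{k,r}\sum_l m_{k,r}^l(\cdot)$ is genuinely a convex combination, letting the norm pass inside the expectation, and one must ensure that the conditioning under which the surrogate-deviation lemma is invoked matches the event that neuron $r$ is selected by at least one subnetwork. The deterministic steps---exploiting $S_i$ to freeze the indicators, collapsing the gradient norm to an $\ell_1$ sum, and summing the arithmetic series in $t$---are routine once this coupling is handled cleanly.
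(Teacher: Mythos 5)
Your proposal is correct and follows essentially the same route as the paper's proof: freeze the activations on $S_i$ to collapse the gradient gap into an $\ell_1$ sum of residual differences, invoke Lemma \ref{surrogate_func_local_change}, sum the arithmetic series in $t$, cancel the $1/m$ against $|S_i|\le m$, and split the surrogate residual so that $\E[\eta_{k,r}\sum_l m_{k,r}^l]=\theta$ handles the first piece and Lemma \ref{mix_surrogate_error_bound} supplies the $\sqrt{pC_1}$ term. No substantive differences.
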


\begin{lemma}
\label{I2_bound}
Under the condition of Theorem \ref{bern_general_convergence}, we have
\begin{align*}
    \left|\inner{\y - \U_k}{\E_{\mathbf{M}_k}\left[\mathbf{I}_2\right]}\right|\leq \frac{1}{8}\eta\theta\tau\lambda_0\|\y - \U_k\|_2^2 + \frac{\eta\lambda_0\xi^2(\theta -\xi^2)n\kappa^2}{24p\tau} + \frac{\eta\lambda_0\xi^2(\tau-1)^2pC_1}{96\tau\theta}
\end{align*}
\begin{proof}
To start, we notice that
Using the $1$-Lipschitzness of ReLU, we have
\begin{align*}
    \E_{\mathbf{M}_k}\left[\left|I_{2,k}^{(i)}\right|\right] & = \frac{\xi}{\sqrt{m}}\E_{\mathbf{M}_k}\left[\left|\sum_{r\in S_i^\perp}a_r\left(\sigma(\inner{\mathbf{w}_{k+1, r}}{\mathbf{x}_i} - \sigma(\inner{\mathbf{w}_{k+1, r}}{\mathbf{x}_i}\right)\right|\right]\\
    & \leq \frac{\xi}{\sqrt{m}}\sum_{r\in S_i^\perp}\E_{\mathbf{M}_k}\left[\left|\sigma(\inner{\mathbf{w}_{k+1, r}}{\mathbf{x}_i} - \sigma(\inner{\mathbf{w}_{k+1, r}}{\mathbf{x}_i}\right|\right]\\
    & \leq \frac{\xi}{\sqrt{m}}\sum_{r\in S_i^\perp}\E_{\mathbf{M}_k}\left[\left|\inner{\mathbf{w}_{k+1,r}-\mathbf{w}_{k,r}}{\mathbf{x}_i}\right|\right]\\
    & \leq \frac{\xi}{\sqrt{m}}\sum_{r\in S_i^\perp}\E_{\mathbf{M}_k}\left[\|\mathbf{w}_{k+1, r} - \mathbf{w}_{k,r}\|_2\right]\\
    & \leq \frac{\eta\xi}{\sqrt{m}}\sum_{r\in S_i^\perp}\E_{\mathbf{M}_k}\left[\left\|\eta_{k,r}\sum_{t=0}^{\tau-1}\sum_{l=1}^p\lgrad{\mathbf{m}_k^l}{\mathbf{W}_{k,t}^l}\right\|_2\right]\\
    & \leq \frac{\eta\xi}{\sqrt{m}}\sum_{r\in S_i^\perp}\left(\E_{\mathbf{M}_k}\left[\|\mathbf{g}_{k,r}\|_2\right] + \E_{\mathbf{M}_k}\left[\eta_{k,r}\sum_{t=1}^{\tau-1}\sum_{l=1}^p\left\|\lgrad{\mathbf{m}_k}{\mathbf{W}_{k,t}^l}\right\|_2\right]\right)\\
    & \leq \frac{\eta\theta\xi\sqrt{n}}{m}|S_i^\perp|\|\y - \U_k\|_2 + \frac{4\eta\xi\kappa n|S_i^\perp|}{m}\sqrt{\frac{\theta(\theta - \xi^2)}{p}} + \\
    & \quad\quad\quad\quad\frac{\eta\xi\sqrt{n}}{m}\sum_{r\in S_i^\perp}\E_{\mathbf{M}_k}\left[\eta_{k,r}\sum_{t=1}^{\tau-1}\sum_{l=1}^pm_{k,r}^l\|\y - \hat{\U}_{k,t}^l\|_2\right]\\
    & \leq \frac{\eta\theta\xi\sqrt{n}}{m}|S_i^\perp|\|\y - \U_k\|_2 + \frac{4\eta\xi\kappa n|S_i^\perp|}{m}\sqrt{\frac{\theta(\theta - \xi^2)}{p}} + \\
    & \quad\quad\quad\quad\frac{\eta\xi\sqrt{n}}{m}\sum_{r\in S_i^\perp}\E_{\mathbf{M}_k}\left[\eta_{k,r}\sum_{t=1}^{\tau-1}\sum_{l=1}^pm_{k,r}^l\|\y - \hat{\U}_{k}^l\|_2\right]\\
    & \leq \frac{\eta\theta\xi\tau\sqrt{n}}{m}|S_i^\perp|\|\y - \U_k\|_2 + \frac{4\eta\xi\kappa  n|S_i^\perp|}{m}\sqrt{\frac{\theta(\theta - \xi^2)}{p}} +\\ 
    & \quad\quad\quad\quad\frac{\eta\xi\sqrt{n}}{m}\sum_{r\in S_i^\perp}\E_{\mathbf{M}_k}\left[\eta_{k,r}\sum_{t=1}^{\tau-1}\sum_{l=1}^pm_{k,r}^l\|\U_{k} - \hat{\U}_{k}^l\|_2\right]\\
\end{align*}
where in the seventh inequality we use the bound on $\E_{\mathbf{M}_k}\left[\|\mathbf{g}_{k,r}\|_2\right]$ from Lemma \ref{mix_grad_norm}. Moreover, using Lemma \ref{mix_surrogate_error_bound} we have
\begin{align*}
    \E_{\mathbf{M}_k}\left[\eta_{k,r}\sum_{l=1}^pm_{k,r}^l\|\U_k - \hat{\U}_k^l\|_2\right] \leq \sqrt{pC_1}
\end{align*}
Then we have
\begin{align*}
    \E_{\mathbf{M}_k}\left[\left|I_{2,k}^{(i)}\right|\right] & \leq \frac{\eta\theta\xi\tau\sqrt{n}}{m}|S_i^\perp|\|\y - \U_k\|_2 + \frac{4\eta\xi\kappa n|S_i^\perp|}{m}\sqrt{\frac{\theta(\theta - \xi^2)}{p}} + \frac{\eta\xi(\tau - 1)}{m}\sqrt{npC_1}|S_i^\perp|\\
    & \leq 8\eta\theta\xi\tau\sqrt{n}\kappa^{-1}R\|\y - \U_k\|_2 + 16\eta\xi nR\sqrt{\frac{\theta(\theta - \xi^2)}{p}} + 4\eta\xi(\tau-1)\kappa^{-1}R\sqrt{npC_1}
\end{align*}
where in the last inequality we use $|S_i^\perp|\leq 4m\kappa^{-1}R$.
Therefore,
\begin{align*}
    \left|\inner{\y - \U_k}{\E_{\mathbf{M}_k}\left[\mathbf{I}_{2,k}\right]}\right|& = \left|\sum_{i=1}^n(y_i - u_k^{(i)})\E_{\mathbf{M}_k}\left[I_{2,k}^{(i)}\right]\right|\\
    & \leq \sum_{i=1}^n\left|y_i - u_k^{(i)}\right|\cdot\left|\E_{\mathbf{M}_k}\left[I_{2,k}^{(i)}\right]\right|\\
    & \leq \max_{i\in[n]}\left|\E_{\mathbf{M}_k}\left[I_{2,k}^{(i)}\right]\right|\sum_{i=1}^n\left|y_i - u_k^{(i)}\right|\\
    & \leq \sqrt{n}\max_{i\in[n]}\left|\E_{\mathbf{M}_k}\left[I_{2,k}^{(i)}\right]\right|\|\y - \U_k\|_2\\
    & \leq 8\eta\theta\xi\tau\kappa^{-1}nR\|\y - \U_k\|_2^2  + 16\eta\xi R\sqrt{\frac{\theta(\theta - \xi^2)n^3}{p}}\|\y - \U_k\|_2 +\\ 
    & \quad\quad\quad\quad4\eta\xi(\tau-1)\kappa^{-1}nR\sqrt{pC_1}\|\y - \U_k\|_2\\
    & \leq \frac{1}{8}\eta\theta\tau\lambda_0\|\y - \U_k\|_2^2 + \frac{\eta\lambda_0\xi^2(\theta -\xi^2)n\kappa^2}{24p\tau} + \frac{\eta\lambda_0\xi^2(\tau-1)^2pC_1}{96\tau\theta}
\end{align*}where in the last inequality we use $R\leq \frac{\kappa\lambda_0}{192n}$ and $ab \leq \frac{1}{2}(a^2 + b^2)$.
\end{proof}
\end{lemma}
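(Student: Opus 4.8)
The plan is to control the term $\mathbf{I}_{2,k}$ coordinate-by-coordinate and then pass to the inner product with $\y-\U_k$. Recall that $I_{2,k}^{(i)}$ collects only the contributions to $u_{k+1}^{(i)}-u_k^{(i)}$ coming from the ``bad'' index set $S_i^\perp$, i.e.\ the neurons whose activation pattern for $\mathbf{x}_i$ may flip over the global step. The first move is to discard the indicator structure: since $\sigma$ is $1$-Lipschitz and $\norm{\mathbf{x}_i}_2=1$, we get $\E_{\mathbf{M}_k}[|I_{2,k}^{(i)}|]\leq \tfrac{\xi}{\sqrt m}\sum_{r\in S_i^\perp}\E_{\mathbf{M}_k}[\norm{\mathbf{w}_{k+1,r}-\mathbf{w}_{k,r}}_2]$, so the whole estimate reduces to bounding the expected aggregated weight displacement, restricted to $S_i^\perp$.

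Next I would bound the displacement $\mathbf{w}_{k+1,r}-\mathbf{w}_{k,r}=-\eta\,\eta_{k,r}\sum_{t=0}^{\tau-1}\sum_{l=1}^p\lgrad{\mathbf{m}_k^l}{\mathbf{W}_{k,t}^l}$ by splitting off the $t=0$ piece, which is exactly the mixing gradient $\mathbf{g}_{k,r}$, from the $t\geq1$ deviations. For the first piece I would invoke Lemma~\ref{mix_grad_norm}, which already gives $\E_{\mathbf{M}_k}[\norm{\mathbf{g}_{k,r}}_2]\leq \tfrac{\sqrt n\,\theta}{\sqrt m}\norm{\y-\U_k}_2+4n\kappa\sqrt{\theta(\theta-\xi^2)/(pm)}$. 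For the deviation terms I would insert $\pm\lgrad{\mathbf{m}_k^l}{\mathbf{W}_k}$ and apply the local-trajectory estimate of Lemma~\ref{surrogate_func_local_change} together with the surrogate-versus-mixing error bound of Lemma~\ref{mix_surrogate_error_bound}, thereby replacing $\norm{\y-\hat{\U}_{k,t}^l}_2$ by $\norm{\y-\U_k}_2$ plus a $\sqrt{pC_1}$ error and absorbing the factor $\sum_{t=1}^{\tau-1}t=\tfrac{\tau(\tau-1)}{2}$. Summing the resulting per-neuron bound over $r\in S_i^\perp$ and using the counting estimate $|S_i^\perp|\leq 4m\kappa^{-1}R$ from Lemma~\ref{sign_change_bound} converts the $\tfrac1{\sqrt m}\sum_{r\in S_i^\perp}$ prefactor into a clean $\kappa^{-1}R$ multiplier, producing a three-term per-coordinate bound whose terms are each proportional to $\eta\xi R$ and carry respectively a $\theta\tau\kappa^{-1}\sqrt n\,\norm{\y-\U_k}_2$ factor, an $n\sqrt{\theta(\theta-\xi^2)/p}$ factor, and a $(\tau-1)\kappa^{-1}\sqrt{npC_1}$ factor.

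Finally I would assemble the inner product through $|\inner{\y-\U_k}{\E_{\mathbf{M}_k}[\mathbf{I}_{2,k}]}|\leq \sqrt n\,\max_{i\in[n]}|\E_{\mathbf{M}_k}[I_{2,k}^{(i)}]|\cdot\norm{\y-\U_k}_2$, substitute the displacement bound, and then use the overparameterization-driven smallness of $R$ (taking $R\leq \tfrac{\kappa\lambda_0}{192n}$) to turn the leading $\norm{\y-\U_k}_2^2$ coefficient into $\tfrac18\eta\theta\tau\lambda_0$. The two remaining cross terms, which are linear in $\norm{\y-\U_k}_2$, are then split by $ab\leq\tfrac12(a^2+b^2)$ into a further multiple of $\norm{\y-\U_k}_2^2$ (reabsorbed into the $\tfrac18$ budget, which is why the $\eta=O(\lambda_0/(n\tau\max\{n,p\}))$ step size enters) and into the two $\xi$-dependent constants $\tfrac{\eta\lambda_0\xi^2(\theta-\xi^2)n\kappa^2}{24p\tau}$ and $\tfrac{\eta\lambda_0\xi^2(\tau-1)^2pC_1}{96\tau\theta}$ claimed in the statement.

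I expect the main obstacle to be the middle step: cleanly separating the first-local-step mixing gradient from the $t\geq1$ deviations \emph{in expectation}, while tracking the random normalizer $\eta_{k,r}=N_{k,r}^\perp/N_{k,r}$ and the per-subnetwork masks $m_{k,r}^l$ under the conditional $\E_{\mathbf{M}_k}[\cdot]$. The delicate point is that $\mathbf{g}_{k,r}$ and the local weight changes depend on the same mask realization, so the three inputs (Lemmas~\ref{mix_grad_norm}, \ref{surrogate_func_local_change}, and \ref{mix_surrogate_error_bound}) must be threaded together without double counting, and getting the exact $\tau(\tau-1)$ scaling and the division of constants between the quadratic and the error terms right is where the bookkeeping is heaviest.
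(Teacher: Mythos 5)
Your proposal follows essentially the same route as the paper's proof: the Lipschitz reduction of $\E_{\mathbf{M}_k}[|I_{2,k}^{(i)}|]$ to the expected weight displacement over $S_i^\perp$, the split of the aggregated gradient into the first-step mixing gradient (Lemma~\ref{mix_grad_norm}) plus the $t\geq 1$ local terms controlled via Lemma~\ref{mix_surrogate_error_bound}, the counting bound $|S_i^\perp|\leq 4m\kappa^{-1}R$, and the final assembly with $R\leq\tfrac{\kappa\lambda_0}{192n}$ and $ab\leq\tfrac12(a^2+b^2)$, arriving at the same three-term per-coordinate bound. The only cosmetic deviation is in the middle step, where the paper does not insert $\pm\lgrad{\mathbf{m}_k^l}{\mathbf{W}_k}$ (that decomposition is used for $\mathcal{E}_{2,k}$ in Lemma~\ref{I1_bound}) but instead bounds each later local gradient directly via the monotone decrease of the local loss, which is why the third term carries a factor $(\tau-1)$ rather than the $\tfrac{\tau(\tau-1)}{2}$ you mention; your final quoted terms are nonetheless the correct ones.
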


\begin{lemma}
\label{quadratic_bound}
Under the condition of Theorem \ref{bern_general_convergence}, with $\eta\leq \frac{\lambda_0}{48n\tau\max\{n, p\}}$, we have
\begin{align*}
    \E_{\mathbf{M}_k}\left[\|\U_{k+1} - \U_{k}\|_2^2\right]\leq  \frac{1}{4}\eta\theta\tau\lambda_0\|\y - \U_k\|_2^2 + \frac{17\eta^2\xi^2\tau^2\theta(\theta-\xi^2)n^3\kappa^2}{p} + \eta^2\xi^2\lambda_0(\tau-1)^2pnC_1
\end{align*}
\begin{proof}
As in previous lemma, we use the Lipschitzness of ReLU to get
\begin{align*}
    \E_{\mathbf{M}_k}\left[\left(u_{k+1}^{(i)} - u_k^{(i)}\right)^2\right] & \leq \frac{\xi^2}{m}\E_{\mathbf{M}_k}\left[\left(\sum_{r=1}^ma_r\left(\sigma(\inner{\mathbf{w}_{k+1,r}}{\mathbf{x}_i}) - \sigma(\inner{\mathbf{w}_{k,r}}{\mathbf{x}_i})\right)\right)^2\right]\\
    & \leq \xi^2\sum_{r=1}^m\E_{\mathbf{M}_k}\left[\left(\sigma(\inner{\mathbf{w}_{k+1,r}}{\mathbf{x}_i}) - \sigma(\inner{\mathbf{w}_{k,r}}{\mathbf{x}_i})\right)^2\right]\\
    & \leq \xi^2\sum_{r=1}^m\E_{\mathbf{M}_k}\left[\inner{\mathbf{w}_{k+1,r}-\mathbf{w}_{k,r}}{\mathbf{x}_i}^2\right]\\
    & \leq \xi^2\sum_{r=1}^m\E_{\mathbf{M}_k}\left[\left\|\mathbf{w}_{k+1,r} - \mathbf{w}_{k,r}\right\|_2^2\right]\\
    & = \xi^2\left(D_{1,k} + D_{2,k}\right)
\end{align*}where
\begin{align*}
    D_{1,k} & = \sum_{r\in S_i}\E_{\mathbf{M}_k}\left[\left\|\mathbf{w}_{k+1,r} - \mathbf{w}_{k,r}\right\|_2^2\right]\\
    D_{2,k} & = \sum_{r\in S_i^\perp}\E_{\mathbf{M}_k}\left[\left\|\mathbf{w}_{k+1,r} - \mathbf{w}_{k,r}\right\|_2^2\right]
\end{align*}
Using Lemma \ref{D1_bound} and Lemma \ref{D2_bound} we have
\begin{align*}
    D_{1,k} & \leq \left(4\eta^2\tau^2n\theta + 4\eta^4\theta n^3\tau^3(\tau-1)p\right)\|\y - \U_k\|_2^2 + \frac{16\eta^2\tau^2\theta(\theta-\xi^2)n^2\kappa^2}{p} + 4\eta^4n^3\tau^2(\tau-1)^2pC_1\\
    D_{2,k} & \leq \frac{\eta^2\theta\tau\lambda_0}{18}\left(1 + (\tau-1)p\right)\|\y - \U_k\|_2^2 + \frac{4\eta^2\lambda_0\theta(\theta- \xi^2)\tau n\kappa^2}{9p} + \frac{\eta^2\tau(\tau - 1)\lambda_0pC_1}{18}
\end{align*}
Therefore we have
\begin{align*}
    \E_{\mathbf{M}_k}\left[\|\U_{k+1} - \U_{k}\|_2^2\right] & \leq \xi^2 n\left(D_{1,k} + D_{2,k}\right)\\
    & \leq \left(4\eta^2\xi^2\tau^2n^2\theta + 4\eta^4\theta\xi^2 n^4\tau^3(\tau-1)p + \frac{\eta^2\theta\tau n\lambda_0}{18}\left(1 + (\tau-1)p\right)\right)\|\y - \U_k\|_2^2 + \\
    & \quad\quad\quad\quad\frac{16\eta^2\xi^2\tau^2\theta(\theta-\xi^2)n^3\kappa^2}{p} + 4\eta^4\xi^2 n^4\tau^2(\tau-1)^2pC_1 + \frac{4\eta^2\lambda_0\xi^2\theta(\theta- \xi^2)\tau n^2\kappa^2}{9p} +\\
    & \quad\quad\quad\quad\frac{\eta^2\xi^2\tau(\tau - 1)n\lambda_0pC_1}{18}
\end{align*}
With $\eta\leq \frac{\lambda_0}{48n\tau\max\{n, p\}}$, we have
\begin{align*}
    \E_{\mathbf{M}_k}\left[\|\U_{k+1} - \U_{k}\|_2^2\right] & \leq \frac{1}{4}\eta\theta\tau\lambda_0\|\y - \U_k\|_2^2 + \frac{17\eta^2\xi^2\tau^2\theta(\theta-\xi^2)n^3\kappa^2}{p} + \eta^2\xi^2\lambda_0(\tau-1)^2pnC_1
\end{align*}
\end{proof}
\end{lemma}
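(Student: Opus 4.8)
The plan is to bound the second moment $\E_{\mathbf{M}_k}[\|\U_{k+1}-\U_k\|_2^2]$ by reducing it to the expected squared per-neuron weight displacement over a single global iteration. First I would expand $u_{k+1}^{(i)}-u_k^{(i)}=\tfrac{\xi}{\sqrt m}\sum_{r=1}^m a_r(\sigma(\inner{\mathbf{w}_{k+1,r}}{\mathbf{x}_i})-\sigma(\inner{\mathbf{w}_{k,r}}{\mathbf{x}_i}))$; using $a_r^2=1$, Cauchy--Schwarz, and the $1$-Lipschitzness of the ReLU, this gives $\E_{\mathbf{M}_k}[(u_{k+1}^{(i)}-u_k^{(i)})^2]\leq \xi^2\sum_{r=1}^m\E_{\mathbf{M}_k}[\|\mathbf{w}_{k+1,r}-\mathbf{w}_{k,r}\|_2^2]$. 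Splitting the neuron index set into the activation-stable set $S_i$ and its complement $S_i^\perp$ yields two pieces $D_{1,k}$ and $D_{2,k}$, and summing over the $n$ samples contributes a factor $n$, so the target reduces to $\E_{\mathbf{M}_k}[\|\U_{k+1}-\U_k\|_2^2]\leq\xi^2 n(D_{1,k}+D_{2,k})$.

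For the displacement itself I would use the aggregated update $\mathbf{w}_{k+1,r}-\mathbf{w}_{k,r}=-\eta\,\eta_{k,r}\sum_{t=0}^{\tau-1}\sum_{l=1}^p \lgrad{\mathbf{m}_k^l}{\mathbf{W}_{k,t}^l}$ and decompose each local gradient as the first-step gradient plus a deviation, rewriting the sum over $t$ as $\tau$ copies of the mixing gradient $\mathbf{g}_{k,r}$ plus the accumulated deviations $\sum_{t=1}^{\tau-1}(\lgrad{\mathbf{m}_k^l}{\mathbf{W}_{k,t}^l}-\lgrad{\mathbf{m}_k^l}{\mathbf{W}_k})$. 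An elementary $\|a+b\|^2\le 2\|a\|^2+2\|b\|^2$ split then separates a term $2\eta^2\tau^2\|\mathbf{g}_{k,r}\|_2^2$, controlled in expectation by Lemma \ref{mix_grad_norm}, from a deviation term. The deviation I would control through Lemma \ref{surrogate_func_local_change}, which bounds $|\hat{u}_{k,t}^{l(i)}-\hat{u}_k^{l(i)}|$ by $\eta t\sqrt n\,\|\y-\hat{\U}_k^l\|_2$; summing over $t$ produces the $(\tau-1)$ and $(\tau-1)^2$ factors, while Lemma \ref{mix_surrogate_error_bound} converts the surrogate errors $\|\U_k-\hat{\U}_k^l\|_2$ into the $pC_1$ terms. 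This is exactly the content I would isolate as two auxiliary bounds on $D_{1,k}$ (Lemma \ref{D1_bound}) and $D_{2,k}$ (Lemma \ref{D2_bound}), the only difference being that for $D_{2,k}$ I additionally invoke the activation-stability bound $|S_i^\perp|\le 4m\kappa^{-1}R$ (Lemma \ref{sign_change_bound}) to keep the complement sum genuinely small.

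Having both pieces, I would assemble $\E_{\mathbf{M}_k}[\|\U_{k+1}-\U_k\|_2^2]\le \xi^2 n(D_{1,k}+D_{2,k})$ and sort the resulting terms into those proportional to $\|\y-\U_k\|_2^2$ and constant error terms. The coefficient of $\|\y-\U_k\|_2^2$ carries factors like $\eta^2\tau^2 n^2\theta$, $\eta^4 n^4\tau^3(\tau-1)p$, and $\eta^2\theta\tau n\lambda_0(1+(\tau-1)p)$; choosing $\eta\le\frac{\lambda_0}{48 n\tau\max\{n,p\}}$ forces each of these below $\tfrac14\eta\theta\tau\lambda_0$ up to constants, collapsing the whole quadratic coefficient to $\tfrac14\eta\theta\tau\lambda_0$. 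The leftover constant terms then collect into the two error pieces $\frac{17\eta^2\xi^2\tau^2\theta(\theta-\xi^2)n^3\kappa^2}{p}$ and $\eta^2\xi^2\lambda_0(\tau-1)^2pnC_1$.

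The main obstacle I anticipate is the deviation term: controlling, in expectation over the mask, how far the later-local-step gradients drift from the first-step aggregated gradient $\mathbf{g}_{k,r}$. This couples the randomness of the aggregation weight $\eta_{k,r}$ with the per-subnetwork masks $m_{k,r}^l$ and with the local trajectory, so the bound must simultaneously use the local weight-perturbation guarantee (making gradient differences scale linearly in the step index $t$) and convert per-subnetwork surrogate errors into the aggregated $C_1$ quantity. The secondary delicate point is ensuring the $S_i^\perp$ contribution is truly lower order via $|S_i^\perp|\le 4m\kappa^{-1}R$, so that after summing over samples the entire second-moment term can be absorbed, together with the linear term coming from the NTK, into the $\tfrac14\eta\theta\tau\lambda_0$ budget that drives the linear convergence.
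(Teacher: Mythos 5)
Your proposal follows essentially the same route as the paper's proof: the same Lipschitz/Cauchy--Schwarz reduction to $\xi^2\sum_r\E_{\mathbf{M}_k}[\|\mathbf{w}_{k+1,r}-\mathbf{w}_{k,r}\|_2^2]$, the same split into $D_{1,k}$ and $D_{2,k}$ over $S_i$ and $S_i^\perp$, and the same ingredients for those two bounds (the mixing gradient $\mathbf{g}_{k,r}$ plus local-step deviation, Lemmas \ref{mix_grad_norm}, \ref{surrogate_func_local_change}, \ref{mix_surrogate_error_bound}, and $|S_i^\perp|\le 4m\kappa^{-1}R$), assembled with the stated step-size condition. No gaps; this matches the paper's argument.
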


\begin{lemma}
\label{D1_bound}
\begin{align*}
    D_{1,k}\leq \left(4\eta^2\tau^2n\theta + 4\eta^4\theta n^3\tau^3(\tau-1)p\right)\|\y - \U_k\|_2^2 + \frac{16\eta^2\tau^2\theta(\theta-\xi^2)n^2\kappa^2}{p} + 4\eta^4n^3\tau^2(\tau-1)^2pC_1
\end{align*}
\begin{proof}
We have
\begin{align*}
    D_{1,k} & = \eta^2\sum_{r\in S_i}\E_{\mathbf{M}_k}\left[\left\|\eta_{k,r}\sum_{t=0}^{\tau-1}\sum_{l=1}^p\lgrad{\mathbf{m}_k^l}{\mathbf{W}_{k,t}^l}\right\|_2^2\right]\\
    & \leq \eta^2\sum_{r\in S_i}\E_{\mathbf{M}_k}\left[\left\|\tau\mathbf{g}_{k,r} + \eta_{k,r}\sum_{t=1}^{\tau-1}\sum_{l=1}^p\left(\lgrad{\mathbf{m}_k^l}{\mathbf{W}_{k,t}^l} - \lgrad{\mathbf{m}_k^l}{\mathbf{W}_k}\right)\right\|_2^2\right]\\
    & \leq 2\eta^2\tau^2\sum_{r\in S_i}\E_{\mathbf{M}_k}\left[\|\mathbf{g}_{k,r}\|_2^2\right] + 2\eta^2(\tau-1)p\sum_{r\in S_i}\sum_{t=1}^{\tau-1}\E_{\mathbf{M}_k}\left[\eta_{k,r}^2\sum_{l=1}^p\left\|\lgrad{\mathbf{m}_k^l}{\mathbf{W}_{k,t}^l} - \lgrad{\mathbf{m}_k^l}{\mathbf{W}_k}\right\|_2^2\right]
\end{align*}
Note that for $r\in S_i$, we have
\begin{align*}
    \left\|\lgrad{\mathbf{m}_k^l}{\mathbf{W}_{k,t}^l} - \lgrad{\mathbf{m}_k^l}{\mathbf{W}_k}\right\|_2^2 & = \frac{m_{k,r}}{m}\left\|\sum_{i=1}^na_r\mathbf{x}_i\left(\hat{u}_{k,t}^{l(i)} - \hat{u}_k^{l(i)}\right)\I\{\inner{\mathbf{w}_{k,r}}{\mathbf{x}_i}\}\geq 0\right\|_2^2\\
    & \leq \frac{nm_{k,r}}{m}\sum_{i=1}^n\left(\hat{u}_{k,t}^{l(i)} - \hat{u}_k^{l(i)}\right)^2\\
    & \leq \frac{n^2m_{k,r}^l}{m}\left(2\eta^2t^2n\|\y - \U_k\|_2^2 + 2\eta^2t^2n\|\U_k - \hat{\U}_k^l\|_2^2\right)
\end{align*}where in the last inequality we use Lemma \ref{surrogate_func_local_change}. Plugging in the bound above and the bound on $\E_{\mathbf{M}_k}\left[\|\mathbf{g}_k\|_2^2\right]$ from Lemma \ref{mix_grad_norm} gives
\begin{align*}
    D_{1,k} & \leq 4\eta^2\tau^2n\theta\|\y - \U_k\|_2^2 + \frac{16\eta^2\tau^2\theta(\theta-\xi^2)n^2\kappa^2}{p} + 4\eta^4\theta n^3\tau^3(\tau-1)p\|\y - \U_k\|_2^2 +\\
    & \quad\quad\quad\quad4\eta^4n^3\tau^3(\tau-1)p\E_{\mathbf{M}_k}\left[\eta_{k,r}^2\sum_{l=1}^pm_{k,r}^l\|\U_k - \hat{\U}_k^l\|_2^2\right]\\
    & \leq \left(4\eta^2\tau^2n\theta + 4\eta^4\theta n^3\tau^3(\tau-1)p\right)\|\y - \U_k\|_2^2 + \frac{16\eta^2\tau^2\theta(\theta-\xi^2)n^2\kappa^2}{p} + 4\eta^4n^3\tau^2(\tau-1)^2pC_1
\end{align*}
\end{proof}
\end{lemma}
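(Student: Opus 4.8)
The plan is to control $D_{1,k}=\sum_{r\in S_i}\E_{\mathbf{M}_k}\big[\|\mathbf{w}_{k+1,r}-\mathbf{w}_{k,r}\|_2^2\big]$ by first rewriting the aggregated weight increment over a global step and then bounding its two natural pieces separately. Recalling that $\mathbf{w}_{k+1,r}-\mathbf{w}_{k,r}=-\eta\,\eta_{k,r}\sum_{t=0}^{\tau-1}\sum_{l=1}^p\lgrad{\mathbf{m}_k^l}{\mathbf{W}_{k,t}^l}$ and that the $t=0$ local iterate coincides with the global iterate $\mathbf{W}_k$, I would add and subtract the first-step gradient to obtain the identity
\begin{align*}
    \eta_{k,r}\sum_{t=0}^{\tau-1}\sum_{l=1}^p\lgrad{\mathbf{m}_k^l}{\mathbf{W}_{k,t}^l} = \tau\,\mathbf{g}_{k,r} + \eta_{k,r}\sum_{t=1}^{\tau-1}\sum_{l=1}^p\left(\lgrad{\mathbf{m}_k^l}{\mathbf{W}_{k,t}^l}-\lgrad{\mathbf{m}_k^l}{\mathbf{W}_{k}}\right),
\end{align*}
where $\mathbf{g}_{k,r}=\eta_{k,r}\sum_{l=1}^p\lgrad{\mathbf{m}_k^l}{\mathbf{W}_{k}}$ is the mixing gradient analyzed earlier. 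Applying $\|\mathbf{a}+\mathbf{b}\|_2^2\le 2\|\mathbf{a}\|_2^2+2\|\mathbf{b}\|_2^2$ and then Cauchy--Schwarz over the $(\tau-1)p$ drift summands splits $D_{1,k}$ into a leading term $2\eta^2\tau^2\sum_{r\in S_i}\E_{\mathbf{M}_k}[\|\mathbf{g}_{k,r}\|_2^2]$ and a correction term governed by the local gradient drift $\lgrad{\mathbf{m}_k^l}{\mathbf{W}_{k,t}^l}-\lgrad{\mathbf{m}_k^l}{\mathbf{W}_{k}}$.

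For the leading term I would simply invoke the second-moment bound on the mixing gradient (Lemma~\ref{mix_grad_norm}), $\E_{\mathbf{M}_k}[\|\mathbf{g}_{k,r}\|_2^2]\le \tfrac{2n\theta}{m}\|\y-\U_k\|_2^2+\tfrac{16\theta(\theta-\xi^2)n^2\kappa^2}{pm}$, and sum over the at most $m$ indices $r\in S_i$; summing $1/m$ against $|S_i|\le m$ reproduces, up to universal constants, the $4\eta^2\tau^2 n\theta\|\y-\U_k\|_2^2$ and $\tfrac{16\eta^2\tau^2\theta(\theta-\xi^2)n^2\kappa^2}{p}$ contributions. The correction term is where the work lies: because $r\in S_i$ means the ReLU indicator for $\mathbf{x}_i$ is frozen throughout the $R$-ball, the only source of change in the surrogate gradient is the change in the surrogate outputs, so $\|\lgrad{\mathbf{m}_k^l}{\mathbf{W}_{k,t}^l}-\lgrad{\mathbf{m}_k^l}{\mathbf{W}_{k}}\|_2^2\le \tfrac{n\,m_{k,r}^l}{m}\sum_{i}(\hat{u}_{k,t}^{l(i)}-\hat{u}_{k}^{l(i)})^2$ after a Cauchy--Schwarz step and $\|\mathbf{x}_i\|_2=1$. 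Feeding in the local output-drift estimate $(\hat{u}_{k,t}^{l(i)}-\hat{u}_{k}^{l(i)})^2\le 2\eta^2t^2n(\|\y-\U_k\|_2^2+\|\U_k-\hat{\U}_k^l\|_2^2)$ from Lemma~\ref{surrogate_func_local_change} turns this into a quantity linear in $\|\y-\U_k\|_2^2$ and in the subnetwork discrepancy $\|\U_k-\hat{\U}_k^l\|_2^2$.

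To assemble the correction term I would push the expectation through the aggregation weights using the clean identity $\eta_{k,r}\sum_{l=1}^p m_{k,r}^l=N_{k,r}^\perp$ together with $\E_{\mathbf{M}_k}[N_{k,r}^\perp]=\theta$ and the crude but sufficient $\eta_{k,r}^2\le\eta_{k,r}$; the $\|\y-\U_k\|_2^2$ part then accrues the factor $\theta$ per neuron and, after carrying out the $\sum_{t=1}^{\tau-1}t^2$ sum and the $\sum_{r\in S_i}$ sum, yields the $4\eta^4\theta n^3\tau^3(\tau-1)p\|\y-\U_k\|_2^2$ term. For the residual $\E_{\mathbf{M}_k}[\eta_{k,r}^2\sum_l m_{k,r}^l\|\U_k-\hat{\U}_k^l\|_2^2]$ I would invoke the moment bound controlling the expected aggregated subnetwork discrepancy (the one introducing the constant $C_1$), converting it into the $4\eta^4 n^3\tau^2(\tau-1)^2 p\,C_1$ term. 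I expect the main obstacle to be precisely this last step: the drift factor and the aggregation normalizer $\eta_{k,r}$ are statistically coupled through the shared mask, so one cannot naively factor the expectation, and handling $\E_{\mathbf{M}_k}[\eta_{k,r}^2\sum_l m_{k,r}^l(\cdots)]$ sharply — rather than with a loose worst-case bound — is what keeps the discrepancy contribution at the stated order in $p$ and lets the final inequality close.
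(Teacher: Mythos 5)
Your proposal is correct and follows essentially the same route as the paper's proof: the same split into $\tau\mathbf{g}_{k,r}$ plus the local gradient drift, the same Cauchy--Schwarz over the $(\tau-1)p$ drift summands, the frozen-indicator bound on the drift for $r\in S_i$ fed through Lemma~\ref{surrogate_func_local_change}, and the same invocations of Lemma~\ref{mix_grad_norm} for the leading term and of the $C_1$ moment bound (Lemma~\ref{mix_surrogate_error_bound}) for the coupled expectation $\E_{\mathbf{M}_k}[\eta_{k,r}^2\sum_l m_{k,r}^l\|\U_k-\hat{\U}_k^l\|_2^2]$. The "main obstacle" you flag at the end is precisely what that last lemma resolves, so nothing is missing.
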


\begin{lemma}
\label{D2_bound}
\begin{align*}
    D_{2,k} \leq \frac{\eta^2\theta\tau\lambda_0}{18}\left(1 + (\tau-1)p\right)\|\y - \U_k\|_2^2 + \frac{4\eta^2\lambda_0\theta(\theta- \xi^2)\tau n\kappa^2}{9p} + \frac{\eta^2\tau(\tau - 1)\lambda_0pC_1}{18}
\end{align*}
\begin{proof}
\begin{align*}
    D_{2,k} & = \eta^2\sum_{r\in S_i^\perp}\E_{\mathbf{M}_k}\left[\left\|\eta_{k,r}\sum_{t=0}^{\tau-1}\sum_{l=1}^p\lgrad{\mathbf{m}_k^l}{\mathbf{W}_{k,t}^l}\right\|_2^2\right]\\
    & \leq \eta^2\tau\sum_{r\in S_i^\perp}\left(\E_{\mathbf{M}_k}\left[\|\mathbf{g}_{k,r}\|_2^2\right] + \sum_{t=1}^{\tau-1}\E_{\mathbf{M}_k}\left[\left\|\eta_{k,r}\sum_{l=1}^p\lgrad{\mathbf{m}_k^l}{\mathbf{W}_{k,t}^l}\right\|_2^2\right]\right)\\
    & \leq \eta^2\tau\sum_{r\in S_i^\perp}\left(\E_{\mathbf{M}_k}\left[\|\mathbf{g}_{k,r}\|_2^2\right] + p\sum_{t=1}^{\tau-1}\E_{\mathbf{M}_k}\left[\eta_{k,r}^2\sum_{l=1}^p\left\|\lgrad{\mathbf{m}_k}{\mathbf{W}_{k,t}^l}\right\|_2^2\right]\right)\\
    & \leq \frac{2\eta^2\tau n\theta}{m}|S_i^\perp|\|\y - \U_k\|_2^2 + \frac{8\eta^2\theta(\theta - \xi^2)\tau n^2\kappa^2}{pm}|S_i^\perp| + \frac{\eta^2\tau np}{m}|S_i^\perp|\sum_{t=1}^{\tau-1}\E_{\mathbf{M}_k}\left[\eta_{k,r}^2\sum_{l=1}^pm_{k,r}^l\|\y - \hat{\U}_{k,t}^l\|_2^2\right]\\
    & \leq \frac{2\eta^2\tau n\theta}{m}|S_i^\perp|\|\y - \U_k\|_2^2 + \frac{8\eta^2\theta(\theta - \xi^2)\tau n^2\kappa^2}{pm}|S_i^\perp| + \frac{2\eta^2\theta\tau(\tau-1)np}{m}|S_i^\perp|\|\y - \U_k\|_2^2 + \\
    &\quad\quad\quad\quad \frac{2\eta^2\tau(\tau-1)np}{m}|S_i^\perp|\E_{\mathbf{M}_k}\left[\eta_{k,r}^2\sum_{l=1}^pm_{k,r}^l\|\U_k - \hat{\U}_{k}^l\|_2^2\right]
\end{align*}
Using $|S_i^\perp|\leq 4m\kappa^{-1}R$ with $R\leq \frac{\xi\kappa\lambda_0}{144n}$ gives
\begin{align*}
    D_{2,k} &\leq \frac{\eta^2\theta\tau\lambda_0}{18}\left(1 + (\tau-1)p\right)\|\y - \U_k\|_2^2 + \frac{4\eta^2\lambda_0\theta(\theta- \xi^2)\tau n\kappa^2}{9p} + \\
    &\quad\quad\quad\quad\frac{\eta^2\tau(\tau - 1)\lambda_0p}{18}\E_{\mathbf{M}_k}\left[\eta_{k,r}^2\sum_{l=1}^pm_{k,r}^l\|\U_k - \hat{\U}_{k}^l\|_2^2\right]\\
    & \leq \frac{\eta^2\theta\tau\lambda_0}{18}\left(1 + (\tau-1)p\right)\|\y - \U_k\|_2^2 + \frac{4\eta^2\lambda_0\theta(\theta- \xi^2)\tau n\kappa^2}{9p} + \frac{\eta^2\tau(\tau - 1)\lambda_0pC_1}{18}
\end{align*}
\end{proof}
\end{lemma}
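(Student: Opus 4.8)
The plan is to control $D_{2,k} = \sum_{r\in S_i^\perp}\E_{\mathbf{M}_k}\left[\norm{\mathbf{w}_{k+1,r}-\mathbf{w}_{k,r}}_2^2\right]$ by unrolling the aggregated update $\mathbf{w}_{k+1,r}-\mathbf{w}_{k,r} = -\eta\,\eta_{k,r}\sum_{t=0}^{\tau-1}\sum_{l=1}^p\lgrad{\mathbf{m}_k^l}{\mathbf{W}_{k,t}^l}$ and applying Cauchy--Schwarz across the $\tau$ local steps. This produces a leading factor $\tau$ and separates the $t=0$ term, which, because $\mathbf{W}_{k,0}^l = \mathbf{W}_k$, is precisely the mixing gradient $\mathbf{g}_{k,r}$. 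Thus I would first reduce the estimate to $D_{2,k}\leq \eta^2\tau\sum_{r\in S_i^\perp}\left(\E_{\mathbf{M}_k}\left[\norm{\mathbf{g}_{k,r}}_2^2\right] + \sum_{t=1}^{\tau-1}\E_{\mathbf{M}_k}\left[\norm{\eta_{k,r}\sum_{l=1}^p\lgrad{\mathbf{m}_k^l}{\mathbf{W}_{k,t}^l}}_2^2\right]\right)$, splitting the bound into a first-step part and a later-step part.

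For the first-step part I would invoke the second-moment bound on the mixing gradient established in Lemma~\ref{mix_grad_norm}, namely $\E_{\mathbf{M}_k}\left[\norm{\mathbf{g}_{k,r}}_2^2\right]\leq \tfrac{2n\theta}{m}\norm{\y-\U_k}_2^2 + \tfrac{16\theta(\theta-\xi^2)n^2\kappa^2}{pm}$, and sum over $r\in S_i^\perp$ to pick up a factor $|S_i^\perp|$. For the later-step part I would bound each masked surrogate gradient by $\norm{\lgrad{\mathbf{m}_k^l}{\mathbf{W}_{k,t}^l}}_2^2\leq \tfrac{m_{k,r}^l n}{m}\norm{\y-\hat{\U}_{k,t}^l}_2^2$, use $\norm{\sum_{l=1}^p(\cdot)}_2^2\leq p\sum_{l=1}^p\norm{\cdot}_2^2$ to absorb the subnetwork sum, and then apply the local linear convergence from Hypothesis~\ref{subnetwork_convergence}, so that $\norm{\y-\hat{\U}_{k,t}^l}_2\leq\norm{\y-\hat{\U}_{k}^l}_2$, followed by the split $\norm{\y-\hat{\U}_{k}^l}_2^2\leq 2\norm{\y-\U_k}_2^2 + 2\norm{\U_k-\hat{\U}_{k}^l}_2^2$.

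The cleanup rests on two identities for the aggregation normalizer: $\eta_{k,r}\sum_{l=1}^p m_{k,r}^l = N_{k,r}^\perp$ with $\E_{\mathbf{M}_k}[N_{k,r}^\perp]=\theta$, and $\eta_{k,r}^2\sum_{l=1}^p m_{k,r}^l\leq N_{k,r}^\perp$; these let every per-subnetwork expectation collapse to a factor $\theta$. The residual cross term $\E_{\mathbf{M}_k}\left[\eta_{k,r}^2\sum_{l=1}^p m_{k,r}^l\norm{\U_k-\hat{\U}_{k}^l}_2^2\right]$ I would bound by $C_1$ using Lemma~\ref{mix_surrogate_error_bound}. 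Finally I would substitute the sign-change count $|S_i^\perp|\leq 4m\kappa^{-1}R$ from Lemma~\ref{sign_change_bound} together with the radius bound $R\leq\tfrac{\xi\kappa\lambda_0}{144n}$, so that each surviving $|S_i^\perp|/m$ factor becomes an $O(\lambda_0/n)$ quantity, and then collect terms to recover the three claimed contributions.

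The main obstacle is the bookkeeping of the later-step term rather than any probabilistic subtlety: since the step-$t$ surrogate gradient is estimated directly through its locally-decreasing loss instead of through its deviation from the step-$0$ gradient, one must track how the extra factor $\tau$ from the initial Cauchy--Schwarz combines with the factor $p$ from the subnetwork sum and the factor $(\tau-1)$ from summing over later steps. The delicate point is to pair these growth factors against the smallness furnished by $R$ so that they land exactly as $\tfrac{\eta^2\theta\tau\lambda_0}{18}\paren{1+(\tau-1)p}$ on the loss term and as $\tfrac{\eta^2\tau(\tau-1)\lambda_0 pC_1}{18}$ on the residual term, with the $(\theta-\xi^2)$ contribution carrying the $\tfrac{n\kappa^2}{p}$ scaling.
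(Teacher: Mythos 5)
Your proposal matches the paper's proof essentially step for step: the same Cauchy--Schwarz split into the $t=0$ mixing-gradient term and the later-step terms, the same invocation of Lemma~\ref{mix_grad_norm} for the former, the same per-subnetwork gradient bound plus local linear convergence and the $2a^2+2b^2$ split for the latter, the same collapse of the normalizer sums to $\theta$ and of the residual term to $C_1$ via Lemma~\ref{mix_surrogate_error_bound}, and the same final substitution of $|S_i^\perp|\leq 4m\kappa^{-1}R$ with $R\leq\tfrac{\xi\kappa\lambda_0}{144n}$. The approach and the bookkeeping are correct and identical to the paper's.
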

\clearpage
\section{Lemmas for Theorem \ref{cat_mask_convergence}}
\begin{lemma}
\label{global_error_decompose}
The $k$th global step produce the squared error satisfying
\begin{align*}
    \|\y - \U_{k+1}\|_2^2 = \frac{1}{p}\sum_{l=1}^p\|\y - \hat\U_{k,\tau}^l\|_2^2 - \frac{1}{p^2}\sum_{l=1}^p\sum_{l'=1}^{l-1}\|\hat\U_{k,\tau}^l - \hat\U_{k,\tau}^{l'}\|_2^2
\end{align*}
\begin{proof}
We have
\begin{align*}
    \|\y - \U_{k+1}\|_2^2 & = \left\|\y - \frac{1}{p}\sum_{l=1}^p\hat\U_{k,\tau}^l\right\|_2^2\\
    & = \frac{1}{p^2}\sum_{l=1}^p\sum_{l'=1}^p\inner{\y - \hat\U_{k,\tau}^l}{\y - \hat\U_{k,\tau}^{l'}}\\
    & = \frac{1}{p}\sum_{l=1}^p\|\y -\hat\U_{k,\tau}^l\|_2^2 -\frac{1}{p}\sum_{l=1}^p\|\y - \hat\U_{k,\tau}^l\|_2^2 + \frac{1}{p^2}\sum_{l=1}^p\sum_{l'=1}^p\inner{\y - \hat\U_{k,\tau}^l}{\y - \hat\U_{k,\tau}^{l'}}\\
    & = \frac{1}{p}\sum_{l=1}^p\|\y - \hat\U_{k,\tau}^l\|_2^2 -\\
    &\quad\quad\quad\quad\frac{1}{2p^2}\left(\sum_{l=1}^p\sum_{l'=1}^p\left(\|\y - \hat\U_{k,\tau}^l\|_2^2 + \|\y - \hat\U_{k,\tau}^{l'}\|_2^2\right) - \sum_{l=1}^p\sum_{l'=1}^p\inner{\y - \hat\U_{k,\tau}^l}{\y - \hat\U_{k,\tau}^{l'}}\right)\\
    & = \frac{1}{p}\sum_{l=1}^p\|\y - \hat\U_{k,\tau}^l\|_2^2 - \frac{1}{p^2}\sum_{l=1}^p\sum_{l'=1}^{l-1}\|\hat\U_{k,\tau}^l - \hat\U_{k,\tau}^{l'}\|_2^2
\end{align*}
\end{proof}
\end{lemma}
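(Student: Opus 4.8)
The plan is to prove this as a deterministic algebraic identity, whose only nontrivial input is that under the categorical mask the reassembled whole-network output is exactly the uniform average of the $p$ locally trained subnetwork outputs. So first I would establish the relation $\U_{k+1} = \frac{1}{p}\sum_{l=1}^p \hat\U_{k,\tau}^l$. This follows from the categorical structure (Assumption \ref{categorical_assump}): each hidden neuron $r$ is assigned to exactly one subnetwork, so $X_{k,r}=1$ and the aggregation weight $\eta_{k,r}=N_{k,r}^\perp/N_{k,r}=1$, meaning $\mathbf{w}_{k+1,r}$ equals the locally trained weight of the unique subnetwork that owns neuron $r$. Combined with the scaling factor $\xi=p^{-1}$ in the definition of $u_{k+1}^{(i)}$, the sum $\sum_l m_{k,r}^l(\cdot)$ collapses to the single owning subnetwork, which yields the averaging identity term by term.

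Next, with that relation in hand, I would set $\va_l := \y - \hat\U_{k,\tau}^l$ and observe $\y - \U_{k+1} = \frac1p\sum_l \va_l =: \bar\va$, so the left-hand side is the squared norm of the average. Expanding gives $\norm{\bar\va}_2^2 = \frac{1}{p^2}\sum_{l=1}^p\sum_{l'=1}^p \inner{\va_l}{\va_{l'}}$. I would then apply the polarization identity $\inner{\va_l}{\va_{l'}} = \tfrac12\left(\norm{\va_l}_2^2 + \norm{\va_{l'}}_2^2 - \norm{\va_l-\va_{l'}}_2^2\right)$ to each term. Summing the first two contributions over the double index produces $\frac1p\sum_l\norm{\va_l}_2^2$, while the difference terms assemble into $-\frac{1}{2p^2}\sum_{l,l'}\norm{\va_l-\va_{l'}}_2^2$; using symmetry and the vanishing diagonal this equals $-\frac{1}{p^2}\sum_{l}\sum_{l'<l}\norm{\va_l-\va_{l'}}_2^2$, which is exactly the claimed expression after substituting back $\va_l = \y-\hat\U_{k,\tau}^l$.

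There is no genuine analytic obstacle here — the statement is an instance of the finite-sample bias--variance (or parallelogram) decomposition for the mean of $p$ vectors, so the entire computation is elementary and exact. The one step deserving care is the averaging relation $\U_{k+1}=\frac1p\sum_l\hat\U_{k,\tau}^l$: it is specific to the categorical (non-overlapping, covering) mask and would fail for Bernoulli masks, where a neuron may be selected by several subnetworks or none and the aggregation weight is no longer uniformly one. I would therefore state this relation explicitly as the place where Assumption \ref{categorical_assump} enters, and keep the remainder purely algebraic. This identity is precisely what later lets me split the analysis into a term enjoying per-subnetwork linear convergence, namely $\frac1p\sum_l\norm{\y-\hat\U_{k,\tau}^l}_2^2$, and a nonnegative disagreement correction that I will subsequently control via the small weight perturbation guaranteed by over-parameterization.
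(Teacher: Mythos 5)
Your proof is correct and follows essentially the same route as the paper's: expand $\|\y-\tfrac1p\sum_l\hat\U_{k,\tau}^l\|_2^2$ into the double sum of inner products and apply the polarization identity to recover the pairwise-difference term. The only (welcome) addition is that you explicitly justify the averaging relation $\U_{k+1}=\tfrac1p\sum_l\hat\U_{k,\tau}^l$ from the categorical mask structure, which the paper asserts outside the lemma and uses without comment in its first line.
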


\begin{lemma}
\label{surrogate_func_error_decompose}
We have
\begin{align*}
     \frac{1}{p}\sum_{l=1}^p\sum_{l'=1}^{l-1}\|\hat\U_{k,\tau}^{l} - \hat\U_{k,\tau}^{l'}\|_2^2 = \sum_{l=1}^p\|\U_{k,\tau} - \hat\U_{k,\tau}^l\|_2^2
\end{align*}
\begin{proof}
Using $\U_k = \frac{1}{p}\sum_{l=1}^p\hat\U_k^l$ we have
\begin{align*}
    \sum_{l=1}^p\|\U_{k,\tau} - \hat\U_{k,\tau}^l\|_2^2 & = \sum_{l=1}^p\left\|\frac{1}{p}\sum_{l'=1}^p\hat\U_{k,\tau}^l - \hat\U_{k,\tau}^l\right\|_2^2\\
    & = \frac{1}{p^2}\sum_{l=1}^p\left\|\sum_{l'=1}^p\left(\hat\U_{k,\tau}^{l'} - \hat\U_{k,\tau}^l\right)\right\|_2^2\\
    & = \frac{1}{p^2}\sum_{l=1}^p\sum_{l_1=1}^p\sum_{l_2=1}^p\inner{\hat\U_{k,\tau}^{l_1}-\hat\U_{k,\tau}^{l}}{\hat\U_{k,\tau}^{l_2} - \hat\U_{k,\tau}^l}\\
    & = \sum_{l=1}^p\|\hat\U_{k,\tau}^l\|_2^2 - \frac{1}{p}\sum_{l=1}^p\sum_{l'=1}^p\inner{\hat\U_{k,\tau}k^l}{\hat\U_k^{l'}}\\
    & = \frac{1}{2p}\left(\sum_{l=1}^p\sum_{l'=1}^p\left(\|\hat\U_{k,\tau}^{l}\|_2^2 + \|\hat\U_{k,\tau}^{l'}\|_2^2\right) - \sum_{l=1}^p\sum_{l'=1}^p2\inner{\hat\U_{k,\tau}^l}{\hat\U_{k,\tau}^{l'}}\right)\\
    & = \frac{1}{p}\sum_{l=1}^p\sum_{l'=1}^{l-1}\|\hat\U_{k,\tau}^{l} - \hat\U_{k,\tau}^{l'}\|_2^2\\
\end{align*}
\end{proof}
\end{lemma}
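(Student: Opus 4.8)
The plan is to recognize this identity as the standard decomposition relating the total squared deviation of a collection of vectors from their mean to the sum of their pairwise squared distances, applied with the vectors $\hat\U_{k,\tau}^l$, $l\in[p]$. The structural fact that makes a mean appear is that, under the categorical mask assumption (\ref{categorical_assump}), $\U_{k,\tau}$ is exactly the arithmetic average $\tfrac{1}{p}\sum_{l'=1}^p\hat\U_{k,\tau}^{l'}$ of the $p$ trained surrogate outputs: each neuron is assigned to one and only one subnetwork, so $\sum_{l=1}^p m_{k,r}^l = 1$ for every $r$, and with $\xi = p^{-1}$ summing the surrogate functions over $l$ and dividing by $p$ reproduces the scaled full-network output. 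I would first record this relation $\U_{k,\tau} = \tfrac{1}{p}\sum_{l'=1}^p\hat\U_{k,\tau}^{l'}$ (the same identity invoked as $\U_{k+1}=\tfrac1p\sum_l\hat\U_{k,\tau}^l$ in the proof sketch of Theorem \ref{cat_mask_convergence}), since everything else is purely algebraic rearrangement.

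First I would substitute $\U_{k,\tau} = \tfrac{1}{p}\sum_{l'=1}^p\hat\U_{k,\tau}^{l'}$ into the left-hand side, writing $\U_{k,\tau} - \hat\U_{k,\tau}^l = \tfrac{1}{p}\sum_{l'=1}^p\paren{\hat\U_{k,\tau}^{l'} - \hat\U_{k,\tau}^l}$. Expanding the squared norm and summing over $l$ produces a triple sum $\tfrac{1}{p^2}\sum_{l=1}^p\sum_{l_1=1}^p\sum_{l_2=1}^p\inner{\hat\U_{k,\tau}^{l_1} - \hat\U_{k,\tau}^{l}}{\hat\U_{k,\tau}^{l_2} - \hat\U_{k,\tau}^{l}}$. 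Collecting terms and cancelling the cross contributions that involve $\hat\U_{k,\tau}^l$ after summation over $l$, this collapses to $\sum_{l=1}^p\norm{\hat\U_{k,\tau}^l}_2^2 - \tfrac{1}{p}\sum_{l=1}^p\sum_{l'=1}^p\inner{\hat\U_{k,\tau}^l}{\hat\U_{k,\tau}^{l'}}$.

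The second step is to symmetrize the right-hand side of the claim into the same form. Using $\norm{a-b}_2^2 = \norm{a}_2^2 + \norm{b}_2^2 - 2\inner{a}{b}$ on each pairwise term and the symmetry of the double sum, I would expand $\tfrac{1}{p}\sum_{l=1}^p\sum_{l'=1}^{l-1}\norm{\hat\U_{k,\tau}^l - \hat\U_{k,\tau}^{l'}}_2^2$ and convert the strictly-lower-triangular sum into the full symmetric double sum via $\sum_{l'<l} = \tfrac12\sum_{l,l'}$ (the diagonal contributes only zero distances). This likewise yields $\sum_{l=1}^p\norm{\hat\U_{k,\tau}^l}_2^2 - \tfrac{1}{p}\sum_{l=1}^p\sum_{l'=1}^p\inner{\hat\U_{k,\tau}^l}{\hat\U_{k,\tau}^{l'}}$, so the two sides agree.

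Since the whole argument is an algebraic rearrangement, there is no genuine analytic obstacle; the only point requiring care is the bookkeeping of the factors of $p$ and the correct passage between the full double sum $\sum_{l,l'}$ and the triangular sum $\sum_{l'<l}$. As an independent sanity check on the constant, I would verify the compact form $\sum_{l=1}^p\norm{v_l - \bar v}_2^2 = \sum_{l=1}^p\norm{v_l}_2^2 - p\norm{\bar v}_2^2$ for $\bar v = \tfrac1p\sum_l v_l$, and confirm that $\tfrac{1}{p}\sum_{l'<l}\norm{v_l - v_{l'}}_2^2$ expands to the identical expression, which pins down the factor $\tfrac{1}{p}$ appearing in the statement.
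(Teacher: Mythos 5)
Your proposal is correct and follows essentially the same route as the paper's proof: both substitute the categorical-mask identity $\U_{k,\tau}=\tfrac{1}{p}\sum_{l'=1}^p\hat\U_{k,\tau}^{l'}$, expand the resulting triple sum of inner products into $\sum_{l=1}^p\|\hat\U_{k,\tau}^l\|_2^2-\tfrac{1}{p}\sum_{l,l'}\inner{\hat\U_{k,\tau}^l}{\hat\U_{k,\tau}^{l'}}$, and then symmetrize the triangular pairwise sum to match. Your added sanity check via $\sum_{l}\|v_l-\bar v\|_2^2=\sum_l\|v_l\|_2^2-p\|\bar v\|_2^2$ is a harmless (and welcome) extra confirmation of the factor $\tfrac{1}{p}$.
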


\begin{lemma}
\label{iotak_bound}
Suppose the condition of lemma \ref{surrogate_error_bound} holds and the step size satisfies $\eta = O\paren{\frac{\lambda_0p}{n^2\paren{1 - p^{-1}}^\frac{2}{3}\tau}}$, then with probability at least $1-\delta$ it holds for all $k\in[K]$ that
\begin{align*}
    \iota_k \leq \frac{\paren{1-p^{-1}}^{\frac{1}{3}}\eta\lambda_0}{2p}\sum_{l=1}^p\sum_{t=0}^{\tau-1}\E_{\mathbf{M}_k}\left[\norm{\y - \hat{\U}_{k,t}^l}_2^2\right] + 24\paren{1-p^{-1}}^{\frac{2}{3}}n\kappa^2\eta\tau\frac{\lambda_{\max}}{\lambda_0}
\end{align*}
\begin{proof}
Recall the definition of $\iota_k$, expanding the quadratic form gives 
\begin{align*}
    \iota_k & = \frac{1}{p}\sum_{l=1}^p\E_{\mathbf{M}_k}\left[\norm{\U_{k} - \hat{\U}_{k}^l}_2^2 - \norm{\U_{k,\tau} - \hat{\U}_{k,\tau}^l}_2^2\right]\\
    & = \frac{1}{p}\sum_{l=1}^p\E_{\mathbf{M}_k}\left[\norm{\U_{k} - \hat{\U}_{k}^l}_2^2 - \norm{\paren{\U_{k} - \hat{\U}_{k}^l} + \paren{\U_{k} - \U_{k+1} - \hat{\U}_{k}^l + \hat{\U}_{k,\tau}^l}}_2^2\right]\\
    & = \frac{1}{p}\sum_{l=1}^p\E_{\mathbf{M}_k}\left[2\inner{\U_{k} - \hat{\U}_{k}^l}{\U_{k} - \U_{k+1} - \hat{\U}_{k}^l + \hat{\U}_{k,\tau}^l} + \norm{\U_{k} - \U_{k+1} - \hat{\U}_{k}^l + \hat{\U}_{k,\tau}^l}_2^2\right]\\
    & = \frac{1}{p}\sum_{l=1}^p\E_{\mathbf{M}_k}\left[2\inner{\U_{k} - \hat{\U}_{k}^l}{\hat{\U}_{k,\tau}^l- \hat{\U}_{k}^l}\right] + \frac{1}{p}\sum_{l=1}^p\E_{\mathbf{M}_k}\left[ \norm{\U_{k} - \U_{k+1} - \hat{\U}_{k}^l + \hat{\U}_{k,\tau}^l}_2^2\right]
\end{align*}where in the third inequality, we use the fact that
\begin{align*}
    \sum_{l=1}^p\inner{\U_{k} - \hat{\U}_{k}^l}{\U_{k} - \U_{k+1}} = \inner{p\U_{k} - \sum_{l=1}^p\hat{\U}_{k}^l}{\U_{k} - \U_{k+1}} =0
\end{align*}
For convenience, we denote
\begin{align*}
    \sigma_{k,r}^{(i)} = \sigma\paren{\inner{\mathbf{W}_{k,r}}{\mathbf{x}_i}};\quad\sigma_{k,\tau,r}^{l(i)} = \sigma\paren{\inner{\mathbf{w}_{k,\tau,r}^l}{\mathbf{x}_i}}
\end{align*}
Noticing that $\frac{1}{p}\sum_{l=1}^p\left(\hat{\U}_{k}^l - \hat{\U}_{k,\tau}^l\right) = \U_{k} - \U_{k+1}$, we apply a trick similar to lemma \ref{surrogate_func_error_decompose} to get that
\begin{align*}
    \frac{1}{p}\sum_{l=1}^p\E_{\mathbf{M}_k}\left[ \norm{\U_{k} - \U_{k+1} - \hat{\U}_{k}^l + \hat{\U}_{k,\tau}^l}_2^2\right] & = \frac{1}{p^2}\sum_{l=1}^p\sum_{l'=1}^{l-1}\E_{\mathbf{M}_k}\left[\norm{\hat{\U}_{k}^{l} - \hat{\U}_{k,\tau}^{l} - \hat{\U}_{k}^{l} - \hat{\U}_{k,\tau}^{l}}_2^2\right]\\
    & = \frac{1}{mp^2}\sum_{l=1}^p\sum_{l'=1}^{l-1}\sum_{i=1}^n\paren{\sum_{r=1}^ma_rm_{k,r}^l\paren{\sigma_{k,r}^{(i)} - \sigma_{k,\tau,r}^{l(i)}}- m_{k,r}^{l'}\paren{\sigma_{k,r}^{(i)} - \sigma_{k,\tau,r}{l'(i)}}}_2^2\\
    & \leq \frac{1}{mp^2}\sum_{l=1}^p\sum_{l'=1}^{l-1}\sum_{i=1}^n\sum_{r,r'=1}^mm_{k,r}^l\paren{\sigma_{k,r}^{(i)} - \sigma_{k,\tau,r}^{l(i)}}^2 + m_{k,r}^{l'}\paren{\sigma_{k,r}^{(i)} - \sigma_{k,\tau,r}{l'(i)}}^2\\
    & \leq \frac{\eta^2\tau n^2}{mp^2}\sum_{l=1}^p\sum_{l'=1}^{l-1}\sum_{r=1}^m\sum_{t=0}^{\tau-1}\paren{m_{k,r}^l\norm{\y - \hat{\U}_{k,t}^l}_2^2 + m_{k,r}^{l'}\norm{\y - \hat{\U}_{k,t}^{l'}}_2^2}\\
    & \leq \frac{\eta^2\tau n^2(p-1)}{2mp}\sum_{l=1}^p\sum_{r=1}^m\sum_{t=0}^{\tau-1}m_{k,r}^l\norm{\y - \hat{\U}_{k,t}^l}_2^2
\end{align*}
Notice that fixing an $l\in[p]$, we have that $m_{k,r}^l$'s are independent for all $r\in[m]$. Apply Hoeffding's inequality to get that
\begin{align*}
    \mathbb{P}\paren{\sum_{r=1}^mm_{k,r}^l\geq 2mp^{-1}}\leq \exp\paren{-2mp^{-2}}
\end{align*}
Apply the union bound over all $k\in[K]$ and $l\in[p]$, and apply the over-parameterization requirement to get that, with probability at least $1-\delta$, it holds that
\begin{align*}
    \frac{1}{p}\sum_{l=1}^p\E_{\mathbf{M}_k}\left[ \norm{\U_{k} - \U_{k+1} - \hat{\U}_{k}^l + \hat{\U}_{k,\tau}^l}_2^2\right] & \leq \frac{\eta^2\tau n^2(p-1)}{p^2}\sum_{l=1}^p\sum_{t=0}^{\tau-1}\norm{\y - \hat{\U}_{k,t}^l}_2^2\\
    & \leq \frac{\paren{1-p^{-1}}^{\frac{1}{3}}\eta\lambda_0}{4p}\sum_{l=1}^p\sum_{t=0}^{\tau-1}\norm{\y - \hat{\U}_{k,t}^l}_2^2
\end{align*}by choosing $\eta = O\paren{\frac{\lambda_0 p}{n^2\paren{1 - p^{-1}}^\frac{2}{3}\tau}}$
The first term can be bounded by
\begin{align*}
    \Delta_1 & = \inner{\U_{k} - \hat{\U}_{k}^l}{\hat{\U}_{k,\tau}^l- \hat{\U}_{k}^l} \leq \norm{\U_{k} - \hat{\U}_{k}^{l}}\norm{\hat{\U}_{k,\tau}^{l} - \hat{\U}_{k}^{l}}
\end{align*}
We study the term $\norm{\hat{\U}_{k,t+1}^{l} - \hat{\U}_{k,t}^{l}}_2$. Following analysis in the proof of hypothesis 1, we write $\hat{\U}_{k,t+1}^{l} - \hat{\U}_{k,t}^{l} = \mathbf{I}_{1,k,t}^l + \mathbf{I}_{2,k,t}^l$. We first study the magnitude of $\mathbf{I}_{1,k,t}^l$. Its $i$th entry has
\begin{align*}
    I^{l(i)}_{1,k,t} = \eta\sum_{j=1}^n\left(\mathbf{m}_{k}^l\circ\mathbf{H}(k,t) - \mathbf{m}_{k}^l\circ\mathbf{H}(k,t)^\perp\right)_{ij}\paren{y_i - \hat{u}_{k,t}^{l(i)}}
\end{align*}
As we have shown in the proof of theorem 2
\begin{align*}
    \norm{\mathbf{m}_{k}^l\circ\mathbf{H}(k,t) - \mathbf{H}^\infty}\leq \frac{\lambda_0}{2}
\end{align*}
Therefore, $\lambda_{\max}\paren{\mathbf{m}_{k}^l\circ\mathbf{H}(k,t)} \leq \lambda_{\max} + \frac{\lambda_0}{2} \leq 2\lambda_{\max}$. Moreover, in the proof of hypothesis 1 we have shown that
\begin{align*}
    \norm{\mathbf{m}_{k}^l\circ\mathbf{H}(k,t)^\perp}_2\leq 4n\kappa^{-1}R
\end{align*}
Therefore, we have
\begin{align*}
    \norm{\mathbf{I}_{1,k,t}^l}_2 & = \eta\left(\mathbf{m}_{k}^l\circ\mathbf{H}(k,t) - \mathbf{m}_{k}^l\circ\mathbf{H}(k,t)^\perp\right)\paren{\y - \hat{\U}_{k,t}^l}\\
    & \leq \left(2\eta\lambda_{\max} + 4\eta \kappa^{-1}nR\right)\norm{\y - \hat{\U}_{k,t}^l}_2
\end{align*}
Using the bound of $\left|I^{l(i)}_{2,k,t}\right|$ in the proof of hypothesis 1, we have that
\begin{align*}
    \norm{\mathbf{I}_{2,k,t}^l}_2\leq \left(\sum_{i=1}^n\left|I^{l(i)}_{2,k,t}\right|^2\right)^{\frac{1}{2}} 4\eta\kappa^{-1}nR\norm{\y - \hat{\U}_{k,t}^l}_2
\end{align*}
Therefore,
\begin{align*}
    \norm{\hat{\U}_{k,t+1}^{l} - \hat{\U}_{k,t}^{l}}_2 \leq \norm{\mathbf{I}_{1,k,t}^l}_2 + \norm{\mathbf{I}_{2,k,t}^l}_2 \leq \left(2\eta\lambda_{\max} + 8\eta \kappa^{-1}nR\right)\norm{\y - \hat{\U}_{k,t}^l}_2 \leq 3\eta\lambda_{\max}\norm{\y - \hat{\U}_{k,t}^l}_2
\end{align*}by our choice of $R$.
Thus, we have that
\begin{align*}
    \Delta_1 \leq 3\eta\lambda_{\max}\sum_{t=0}^{\tau-1} \norm{\y - \hat{\U}_{k,t}^l}_2\norm{\U_{k} - \hat{\U}_{k}^l}_2 \leq \frac{\paren{1 - p^{-1}}^{\frac{1}{3}}\eta\lambda_0}{8}\sum_{t=0}^{\tau-1}\norm{\y - \hat{\U}_{k,t}^l}_2^2 + 12\frac{\eta\tau\lambda_{\max}}{\paren{1 - p^{-1}}^{\frac{1}{3}}\lambda_0}\norm{\U_{k} - \hat{\U}_{k}^l}_2^2
\end{align*}
Therefore, by applying lemma \ref{surrogate_error_bound} we have that
\begin{align*}
    \iota_k \leq \frac{\paren{1-p^{-1}}^{\frac{1}{3}}\eta\lambda_0}{2p}\sum_{l=1}^p\sum_{t=0}^{\tau-1}\E_{\mathbf{M}_k}\left[\norm{\y - \hat{\U}_{k,t}^l}_2^2\right] + 24\paren{1-p^{-1}}^{\frac{2}{3}}n\kappa^2\eta\tau\frac{\lambda_{\max}}{\lambda_0}
\end{align*}
\end{proof}
\end{lemma}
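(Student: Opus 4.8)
The plan is to reduce $\iota_k$ --- which is the gap between the pairwise spread of the subnetwork outputs \emph{before} local training and the spread \emph{after} $\tau$ local steps --- to a sum of squared deviations from the coordinate-wise mean, and then to bound the resulting cross and quadratic pieces using the local-step machinery already built for Hypothesis \ref{subnetwork_convergence}. First I would apply the mean--deviation identity of Lemma \ref{surrogate_func_error_decompose} twice, once to $\{\hat{\U}_k^l\}$ and once to $\{\hat{\U}_{k,\tau}^l\}$, together with the categorical averaging property $\U_{k+1}=\frac1p\sum_l\hat{\U}_{k,\tau}^l$, to rewrite
\begin{align*}
\iota_k=\frac1p\sum_{l=1}^p\E_{\mathbf{M}_k}\left[\norm{\U_k-\hat{\U}_k^l}_2^2-\norm{\U_{k+1}-\hat{\U}_{k,\tau}^l}_2^2\right].
\end{align*}
Writing $\U_{k+1}-\hat{\U}_{k,\tau}^l=(\U_k-\hat{\U}_k^l)-(\U_k-\U_{k+1}-\hat{\U}_k^l+\hat{\U}_{k,\tau}^l)$ and expanding the square, the aggregate-drift cancellation $\sum_l\inner{\U_k-\hat{\U}_k^l}{\U_k-\U_{k+1}}=\inner{p\U_k-\sum_l\hat{\U}_k^l}{\U_k-\U_{k+1}}=0$ collapses the cross term to $\Delta_1=\frac2p\sum_l\E_{\mathbf{M}_k}[\inner{\U_k-\hat{\U}_k^l}{\hat{\U}_{k,\tau}^l-\hat{\U}_k^l}]$, leaving a quadratic term in $\U_k-\U_{k+1}-\hat{\U}_k^l+\hat{\U}_{k,\tau}^l$ to be bounded separately.

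For the quadratic term I would apply the mean--deviation trick a third time, now to the vectors $\hat{\U}_k^l-\hat{\U}_{k,\tau}^l$ (whose average over $l$ is exactly $\U_k-\U_{k+1}$), turning $\frac1p\sum_l\norm{\U_k-\U_{k+1}-\hat{\U}_k^l+\hat{\U}_{k,\tau}^l}_2^2$ into a pairwise sum that carries the combinatorial factor $(p-1)/p=1-p^{-1}$. Each local increment is controlled as in Hypothesis \ref{subnetwork_convergence}: decomposing $\hat{\U}_{k,t+1}^l-\hat{\U}_{k,t}^l$ into its $I_1$ part $\eta(\mathbf{m}_k^l\circ\h(k,t)-\mathbf{m}_k^l\circ\h(k,t)^\perp)(\y-\hat{\U}_{k,t}^l)$ and its $I_2$ part, and using $\lambda_{\max}(\mathbf{m}_k^l\circ\h(k,t))\le2\lambda_{\max}$ together with $\norm{\mathbf{m}_k^l\circ\h(k,t)^\perp}_2\le4n\kappa^{-1}R$ from Theorem \ref{min_eig_masked_ntk}, yields $\norm{\hat{\U}_{k,t+1}^l-\hat{\U}_{k,t}^l}_2\le3\eta\lambda_{\max}\norm{\y-\hat{\U}_{k,t}^l}_2$. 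A Hoeffding bound on $\sum_r m_{k,r}^l$ (each neuron enters subnetwork $l$ with probability $p^{-1}$) gives $\sum_r m_{k,r}^l\le2mp^{-1}$ with high probability after a union bound over $k,l$ and the stated overparameterization, which cancels the $1/m$ normalization; choosing $\eta=O(\lambda_0 p/(n^2(1-p^{-1})^{2/3}\tau))$ then makes the quadratic term at most $\frac{(1-p^{-1})^{1/3}\eta\lambda_0}{4p}\sum_l\sum_t\norm{\y-\hat{\U}_{k,t}^l}_2^2$.

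For the cross term I would use $\Delta_1\le\frac2p\sum_l\norm{\U_k-\hat{\U}_k^l}_2\sum_t\norm{\hat{\U}_{k,t+1}^l-\hat{\U}_{k,t}^l}_2\le\frac{6\eta\lambda_{\max}}{p}\sum_l\norm{\U_k-\hat{\U}_k^l}_2\sum_t\norm{\y-\hat{\U}_{k,t}^l}_2$ and then apply a weighted AM--GM split, so that the $\sum_t\norm{\y-\hat{\U}_{k,t}^l}_2^2$ part again feeds the first output term while the remainder is proportional to $\frac{\eta\tau\lambda_{\max}}{(1-p^{-1})^{1/3}\lambda_0}\norm{\U_k-\hat{\U}_k^l}_2^2$. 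Finally, Lemma \ref{surrogate_error_bound} gives $\E_{\mathbf{M}_k}[\norm{\U_k-\hat{\U}_k^l}_2^2]\le4\xi(1-\xi)n\kappa^2$ with $\xi=p^{-1}$, hence $\xi(1-\xi)=p^{-1}(1-p^{-1})$; combining the two contributions and collecting the powers of $1-p^{-1}$ produces the residual $24(1-p^{-1})^{2/3}n\kappa^2\eta\tau\lambda_{\max}/\lambda_0$ while all the remaining terms assemble into $\frac{(1-p^{-1})^{1/3}\eta\lambda_0}{2p}\sum_l\sum_t\E_{\mathbf{M}_k}[\norm{\y-\hat{\U}_{k,t}^l}_2^2]$.

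The main obstacle I anticipate is the fractional-power bookkeeping. The quadratic term inherits one factor of $1-p^{-1}$ from the pair count, the initial subnetwork deviation $\norm{\U_k-\hat{\U}_k^l}_2^2$ inherits another through $\xi(1-\xi)$, and the cross-term AM--GM introduces $(1-p^{-1})^{\pm1/3}$; the step size $\eta$ must be tuned so precisely that every ``variance'' contribution collapses into the single $\frac{(1-p^{-1})^{1/3}\eta\lambda_0}{2p}\sum_l\sum_t\E_{\mathbf{M}_k}[\norm{\y-\hat{\U}_{k,t}^l}_2^2]$ quantity --- which the outer Theorem \ref{cat_mask_convergence} later absorbs into the per-step contraction via the telescoping $(1-\eta\lambda_0/2)^t$ weights --- while the leftover constant emerges at the clean exponent $2/3$. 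Matching these exponents (rather than merely bounding crudely) is what forces the particular $(1-p^{-1})^{2/3}$ scaling in the final error neighborhood.
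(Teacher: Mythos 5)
Your proposal matches the paper's proof essentially step for step: the same expansion of $\iota_k$ with the cancellation $\sum_l\langle\U_k-\hat{\U}_k^l,\U_k-\U_{k+1}\rangle=0$, the same mean--deviation rewriting of the quadratic term with the Hoeffding bound on $\sum_r m_{k,r}^l$, the same $\mathbf{I}_1/\mathbf{I}_2$ decomposition with the masked-NTK spectral bounds to get $\norm{\hat{\U}_{k,t+1}^l-\hat{\U}_{k,t}^l}_2\le 3\eta\lambda_{\max}\norm{\y-\hat{\U}_{k,t}^l}_2$, and the same AM--GM split plus Lemma \ref{surrogate_error_bound} to produce the residual term. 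The argument is correct and takes essentially the same route as the paper.
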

\clearpage

\section{Auxiliary Results}
\begin{lemma}
\label{sign_change_bound}
With probability at least $1-ne^{-m\kappa^{-1}R}$ we have $|S_i|\leq4m\kappa^{-1}R$ for all $i\in[n]$.
\begin{proof}
Note that $\I\{r\in S_i^\perp\} = \I\{\I\{A_{ir}\}\neq 0\} = \I\{A_{ir}\}$. Therefore, we have
\begin{align*}
    |S_i^\perp| = \sum_{r=1}^m\I\{r\in S_i^\perp\} = \I\{A_{ir}\}.
\end{align*}
Since $\E_{\mathbf{w}_{0,r}}\left[\I\{A_{ir}\}\right] = P\left(A_{ir}\right)\leq\frac{2R}{\kappa\sqrt{2\pi}}\leq \kappa^{-1}R$, we also have
\begin{align*}
    \E_{\mathbf{w}_{0,r}}\left[\left(\I\{A_{ir}\}-\E_{\mathbf{w}_{0,r}}\left[\I\{A_{ir}\}\right]\right)^2\right]\leq \E_{\mathbf{w}_{0,r}}\left[\I\{A_{ir}\}^2\right] = \frac{2R}{\kappa\sqrt{2\pi}} \leq \kappa^{-1}R
\end{align*}Again apply Bernstein inequality over the random variable $\I\{A_{ir}\} - \E_{\mathbf{w}_{0,r}}\left[\I\{A_{ir}\}\right]$ with $t=3m\kappa^{-1}R$ gives
\begin{align*}
    P\left(|S_i^\perp|\leq 4m\kappa^{-1}R\right) = P\left(\sum_{r=1}^m\I\{A_{ir}\}\geq 4m\kappa^{-1}R\right)\leq \exp\left(-m\kappa^{-1}R\right)
\end{align*}
\end{proof}
\end{lemma}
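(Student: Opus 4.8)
The plan is to bound the cardinality of $S_i^\perp$ (the set of neurons whose activation pattern for $\mathbf{x}_i$ can flip within the $R$-ball around initialization; this is the quantity of interest, the statement's $|S_i|$ being understood as $|S_i^\perp|$). First I would rewrite it as a sum of independent indicators: since $r\in S_i^\perp$ is by definition the event $A_{ir}$, we have $|S_i^\perp| = \sum_{r=1}^m\I\{A_{ir}\}$, and because the $\mathbf{w}_{0,r}$ are drawn independently from $\mathcal{N}(0,\kappa^2\mathbf{I})$, these summands are i.i.d.\ across $r$. The whole argument then reduces to controlling a sum of bounded independent Bernoulli-type variables.

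The next step is to characterize $A_{ir}$ as a small-margin event. Since $\norm{\mathbf{x}_i}_2 = 1$, any $\mathbf{w}\in\mathcal{B}(\mathbf{w}_{0,r},R)$ perturbs the pre-activation by at most $|\inner{\mathbf{w}-\mathbf{w}_{0,r}}{\mathbf{x}_i}|\leq R$, so the sign of $\inner{\mathbf{w}}{\mathbf{x}_i}$ can differ from that of $\inner{\mathbf{w}_{0,r}}{\mathbf{x}_i}$ only when $|\inner{\mathbf{w}_{0,r}}{\mathbf{x}_i}|\leq R$; hence $A_{ir}\subseteq\{|\inner{\mathbf{w}_{0,r}}{\mathbf{x}_i}|\leq R\}$. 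Now $\inner{\mathbf{w}_{0,r}}{\mathbf{x}_i}$ is a centered Gaussian with variance $\kappa^2\norm{\mathbf{x}_i}_2^2 = \kappa^2$, whose density is bounded by $(\kappa\sqrt{2\pi})^{-1}$, so integrating over an interval of length $2R$ gives the anti-concentration bound $\mathbb{P}(A_{ir})\leq \tfrac{2R}{\kappa\sqrt{2\pi}}\leq \kappa^{-1}R$. The same estimate controls the variance, since $\I\{A_{ir}\}^2 = \I\{A_{ir}\}$ implies $\E[(\I\{A_{ir}\}-\E\I\{A_{ir}\})^2]\leq \E[\I\{A_{ir}\}]\leq \kappa^{-1}R$.

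Finally I would apply Bernstein's inequality to the centered sum $\sum_{r=1}^m(\I\{A_{ir}\}-\E\I\{A_{ir}\})$: each summand is bounded by $1$ and the total variance is at most $m\kappa^{-1}R$, so taking deviation $t = 3m\kappa^{-1}R$ yields a tail of order $\exp(-m\kappa^{-1}R)$ once the constants in the Bernstein exponent are absorbed. Combined with the mean bound $\E|S_i^\perp|\leq m\kappa^{-1}R$, this gives $|S_i^\perp|\leq 4m\kappa^{-1}R$ with probability at least $1 - e^{-m\kappa^{-1}R}$ for a fixed $i$, and a union bound over $i\in[n]$ produces the stated failure probability $n e^{-m\kappa^{-1}R}$.

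The Gaussian anti-concentration and the Bernstein bookkeeping are routine; the only point requiring genuine care is the set-level reduction of $A_{ir}$ to the margin event $\{|\inner{\mathbf{w}_{0,r}}{\mathbf{x}_i}|\leq R\}$, where one must argue cleanly that a small margin permits a sign flip (the inclusion actually used) without over-claiming the reverse direction. I expect the main subtlety to be arithmetic rather than conceptual, namely choosing the deviation $t$ so that the Bernstein exponent collapses exactly to the clean form $m\kappa^{-1}R$ and the $4m\kappa^{-1}R$ threshold absorbs both the mean and the deviation.
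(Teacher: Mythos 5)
Your proposal is correct and follows essentially the same route as the paper's proof: rewrite $|S_i^\perp|$ as a sum of i.i.d.\ indicators $\I\{A_{ir}\}$, bound $\mathbb{P}(A_{ir})\leq \tfrac{2R}{\kappa\sqrt{2\pi}}\leq\kappa^{-1}R$ via Gaussian anti-concentration, apply Bernstein with deviation $t=3m\kappa^{-1}R$, and union bound over $i$. The only difference is that you spell out the reduction of $A_{ir}$ to the margin event $\{|\langle\mathbf{w}_{0,r},\mathbf{x}_i\rangle|\leq R\}$, which the paper asserts without detail.
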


\begin{lemma}
\label{h_perp_bound}
Define $\h^\perp\in\R^{n\times n}$ such that
\begin{align*}
    \h^\perp_{ij} = \frac{\xi}{m}\inner{\mathbf{x}_i}{\mathbf{x}_j}\sum_{r\in S_i^\perp}\I\{\inner{\mathbf{w}_r}{\mathbf{x}_i}\geq 0; \inner{\mathbf{w}_r}{\mathbf{x}_i}\geq 0\}
\end{align*}
If $|S_i^\perp|\leq 4m\kappa^{-1}R$, then we have
\begin{align*}
    \|\h^\perp\|_2\leq 4n\xi\kappa^{-1}R
\end{align*}
\begin{proof}
We note that
\begin{align*}
    \|\h^\perp\|_2^2\leq\|\h^\perp\|_F^2 = \sum_{i,j=1}^n|\h_{ij}^\perp|^2
\end{align*}
For each $i,j$ pair we have
\begin{align*}
    |\h_{ij}^\perp|\leq\frac{\xi}{m}|S_i^\perp| = 4\xi\kappa^{-1}R
\end{align*}
Thus
\begin{align*}
    \|\h^\perp\|_2\leq\left(\|\h^\perp\|_F^2\right)^{-\frac{1}{2}} \leq \left(16n^2\kappa^{-2}\Delta^2\right)^{-\frac{1}{2}} = 4n\xi\kappa^{-1}R
\end{align*}
\end{proof}
\end{lemma}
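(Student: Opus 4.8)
The plan is to bound the spectral norm of $\h^\perp$ by its Frobenius norm and then control $\h^\perp$ entrywise using the data normalization. First I would recall from Assumption \ref{data_assump} that $\norm{\mathbf{x}_i}_2 = 1$ for every $i$, so Cauchy--Schwarz gives $\left|\inner{\mathbf{x}_i}{\mathbf{x}_j}\right| \leq 1$ for all pairs $i,j$. Since each indicator is at most one, the inner sum defining $\h^\perp_{ij}$ is bounded by its number of terms, namely $|S_i^\perp|$. Combining these observations with the hypothesis $|S_i^\perp| \leq 4m\kappa^{-1}R$ yields the uniform entrywise bound
\begin{align*}
    \left|\h^\perp_{ij}\right| \leq \frac{\xi}{m}\left|\inner{\mathbf{x}_i}{\mathbf{x}_j}\right|\sum_{r\in S_i^\perp}\I\{\inner{\mathbf{w}_r}{\mathbf{x}_i}\geq 0\} \leq \frac{\xi}{m}\cdot|S_i^\perp| \leq 4\xi\kappa^{-1}R.
\end{align*}

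Next I would pass from the entrywise bound to the operator norm. Using the standard comparison $\norm{\h^\perp}_2 \leq \norm{\h^\perp}_F$ together with the fact that $\h^\perp$ has $n^2$ entries, each of magnitude at most $4\xi\kappa^{-1}R$, I obtain
\begin{align*}
    \norm{\h^\perp}_2^2 \leq \norm{\h^\perp}_F^2 = \sum_{i,j=1}^n\left|\h^\perp_{ij}\right|^2 \leq n^2\paren{4\xi\kappa^{-1}R}^2.
\end{align*}
Taking square roots gives $\norm{\h^\perp}_2 \leq 4n\xi\kappa^{-1}R$, which is exactly the claimed bound.

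There is no substantial obstacle here: the argument is a routine combination of Cauchy--Schwarz, the trivial bound on a sum of indicators, and the Frobenius--spectral norm comparison. The only point requiring a little care is invoking the data normalization $\norm{\mathbf{x}_i}_2 = 1$ so that $\left|\inner{\mathbf{x}_i}{\mathbf{x}_j}\right|\leq 1$; without it one would instead pick up a factor $\max_{i,j}\left|\inner{\mathbf{x}_i}{\mathbf{x}_j}\right|$. The Frobenius bound is crude but entirely adequate for the downstream use: since $R$ is eventually chosen proportional to $\kappa\lambda_0/n$, the resulting estimate $4n\xi\kappa^{-1}R$ scales like $\xi\lambda_0$ and can be driven below any prescribed constant multiple of $\lambda_0$, which is all that Lemma \ref{I1_bound} requires.
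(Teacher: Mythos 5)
Your proof is correct and follows essentially the same route as the paper's: an entrywise bound of $4\xi\kappa^{-1}R$ on $|\h^\perp_{ij}|$ (via $|\inner{\mathbf{x}_i}{\mathbf{x}_j}|\leq 1$ and the trivial indicator count $|S_i^\perp|$), followed by the Frobenius--spectral comparison to pick up the factor $n$. If anything, your write-up is cleaner than the paper's, whose final display contains typographical slips (spurious negative exponents and an undefined $\Delta$) that your version avoids.
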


\begin{lemma}
\label{theta_prob}
For i.i.d Bernoulli masks with parameter $\xi$, $N_{k,r}^\perp\sim\texttt{Bern}(\theta)$ with
\begin{align*}
    \theta = P(N_{k,r}^\perp = 1) = 1 - (1 - \xi)^p
\end{align*}
\begin{proof}
We have
\begin{align*}
    P(N_{k,r}^\perp = 1) & = 1 - P(N_{k,r}^\perp = 0) = 1 - \prod_{l=1}^pP(m_{k,r}^l = 0) = 1 - (1-\xi)^p
\end{align*}
\end{proof}
\end{lemma}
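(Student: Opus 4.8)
The plan is to observe that $N_{k,r}^\perp$ is, by its very definition $N_{k,r}^\perp = \min\{X_{k,r}, 1\}$, a random variable supported on $\{0,1\}$, and is therefore automatically Bernoulli-distributed. Consequently the entire content of the lemma collapses to the single task of identifying its success parameter $\theta = P(N_{k,r}^\perp = 1)$. I would make this reduction explicit first, so that the remaining work is purely the evaluation of one probability.

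The key step is to pass to the complementary event. Since $X_{k,r} = \sum_{l=1}^p m_{k,r}^l$ is a sum of nonnegative (indeed $\{0,1\}$-valued) terms, the event $\{N_{k,r}^\perp = 0\}$ is identical to $\{X_{k,r} = 0\}$, which in turn holds if and only if every one of the $p$ subnetwork masks for neuron $r$ vanishes, i.e. $m_{k,r}^1 = \dots = m_{k,r}^p = 0$. I would then invoke Assumption \ref{bern_mask_assump}, under which each $m_{k,r}^l \sim \texttt{Bern}(\xi)$ and the entries are mutually independent. Independence across the subnetwork index $l \in [p]$ gives the product factorization $P(X_{k,r}=0) = \prod_{l=1}^p P(m_{k,r}^l = 0) = (1-\xi)^p$, and taking complements yields $\theta = P(N_{k,r}^\perp = 1) = 1 - (1-\xi)^p$, as claimed.

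There is no genuine technical obstacle here, as the result is a one-line consequence of independence; the only point meriting care is ensuring that the independence in Assumption \ref{bern_mask_assump} is applied \emph{across subnetwork indices} $l$ rather than merely across neuron indices $r$, since it is precisely that factorization over $l$ that produces the exponent $p$ in $(1-\xi)^p$. I would therefore state the product step with the index $l$ displayed prominently to make the source of the exponent transparent.
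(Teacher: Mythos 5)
Your proposal is correct and follows exactly the same route as the paper's proof: reduce to the complementary event $\{X_{k,r}=0\}$, factor over the subnetwork index $l$ by independence to get $(1-\xi)^p$, and take the complement. The extra remarks about $N_{k,r}^\perp$ being $\{0,1\}$-valued and about which independence is being used are sound but do not change the argument.
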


\begin{lemma}
\label{nu_var_same_r}
We have
\begin{align*}
    \E_{\mathbf{M}_k}\left[(\nu_{k,r,r} - \xi)^2\right] \leq \theta - \xi^2
\end{align*}
\begin{proof}
To start, we notice that $\nu_{k,r,r} = \eta_{k,r}\sum_{l=1}^pm_{k,r}^{l 2} = \eta_{k,r}\sum_{l=1}^pm_{k,r}^l = N_{k,r}^\perp$. Therefore $\E_{\mathbf{M}_k}\left[\nu_{k,r,r}\right] = \E_{\mathbf{M}_k}\left[N_{k,r}^\perp\right] = \theta$. Moreover, since $N_{k,r}^{\perp 2} = N_{k,r}^\perp$, we have $\E_{\mathbf{M}_k}\left[\nu_{k,r,r}^2\right] = \theta$. Thus, using $\theta \geq \xi$, we have
\begin{align*}
    \E_{\mathbf{M}_k}\left[(\nu_{k,r,r} - \xi)^2\right] & = \E_{\mathbf{M}_k}\left[\nu_{k,r,r}^2\right] - 2\xi\E_{\mathbf{M}_k}\left[\nu_{k,r,r}\right] + \xi^2\\
    & = \theta - 2\xi\theta +\xi^2 \leq \theta - \xi^2
\end{align*}
\end{proof}
\end{lemma}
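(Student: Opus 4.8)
The plan is to show that the random variable $\nu_{k,r,r}$ collapses to the selection indicator $N_{k,r}^\perp$, after which the bound is an elementary second-moment computation. First I would exploit that each mask entry is $\{0,1\}$-valued, so $(m_{k,r}^l)^2 = m_{k,r}^l$, giving $\nu_{k,r,r} = \eta_{k,r}\sum_{l=1}^p (m_{k,r}^l)^2 = \eta_{k,r}\sum_{l=1}^p m_{k,r}^l = \eta_{k,r} X_{k,r}$, where $X_{k,r} = \sum_{l=1}^p m_{k,r}^l$ is the number of subnetworks selecting neuron $r$.

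Next I would verify the identity $\eta_{k,r} X_{k,r} = N_{k,r}^\perp$ by splitting on whether neuron $r$ is selected, using $\eta_{k,r} = N_{k,r}^\perp / N_{k,r}$ with $N_{k,r} = \max\{X_{k,r}, 1\}$ and $N_{k,r}^\perp = \min\{X_{k,r}, 1\}$. If $X_{k,r} = 0$, then $N_{k,r}^\perp = 0$ and the product vanishes, matching $N_{k,r}^\perp$; if $X_{k,r}\geq 1$, then $N_{k,r} = X_{k,r}$ and $N_{k,r}^\perp = 1$, so $\eta_{k,r} X_{k,r} = X_{k,r}/X_{k,r} = 1 = N_{k,r}^\perp$. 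Hence $\nu_{k,r,r} = N_{k,r}^\perp$, which by Lemma \ref{theta_prob} is $\texttt{Bern}(\theta)$-distributed with $\theta = 1 - (1-\xi)^p$.

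With $\nu_{k,r,r}$ a $\{0,1\}$-valued indicator it is idempotent, $\nu_{k,r,r}^2 = \nu_{k,r,r}$, so its first and second moments coincide: $\E_{\mathbf{M}_k}[\nu_{k,r,r}] = \E_{\mathbf{M}_k}[\nu_{k,r,r}^2] = \theta$. Expanding the square then yields $\E_{\mathbf{M}_k}[(\nu_{k,r,r} - \xi)^2] = \theta - 2\xi\theta + \xi^2$. Finally I would invoke $\theta \geq \xi$ --- which holds since $(1-\xi)^p \leq 1-\xi$ for $0\leq 1-\xi\leq 1$ and integer $p\geq 1$, so $1-(1-\xi)^p \geq \xi$ --- to conclude $\theta - 2\xi\theta + \xi^2 \leq \theta - 2\xi^2 + \xi^2 = \theta - \xi^2$, which is the claim.

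The only genuinely non-routine step is recognizing the collapse $\nu_{k,r,r} = N_{k,r}^\perp$; once that algebraic identity is established, everything reduces to the observation that an indicator variable has equal first and second moments, combined with the elementary inequality $\theta \geq \xi$. I expect no probabilistic subtleties, since the law of $N_{k,r}^\perp$ is already pinned down by Lemma \ref{theta_prob}.
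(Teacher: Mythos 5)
Your proof is correct and follows essentially the same route as the paper's: both reduce $\nu_{k,r,r}$ to the indicator $N_{k,r}^\perp$, use idempotence to equate the first and second moments with $\theta$, and close with the elementary bound $\theta \geq \xi$. Your version simply spells out the case analysis for $\eta_{k,r}X_{k,r}=N_{k,r}^\perp$ and the justification of $\theta\geq\xi$, which the paper leaves implicit.
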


\begin{lemma}
\label{nu_expectation}
For i.i.d Bernoulli masks with parameter $\xi$, we have
\begin{align*}
    \E_{\mathbf{M}_k}\left[\nu_{k,r,r'} \mid N_{k,r}^\perp = 1\right] = \begin{cases}
    \xi & \text{if } r\neq r'\\
    1 & \text{if } r = r'
    \end{cases}
\end{align*}
\begin{proof}
If $r = r'$, we have
\begin{align*}
    \E_{\mathbf{M}_k}\left[\nu_{k,r,r'} \mid N_{k,r}^\perp = 1\right] & = \E_{\mathbf{M}_k}\left[\eta_{k,r}\sum_{l=1}^pm_{k,r}^l\mid N_{k,r}^\perp = 1\right]  = \E_{\mathbf{M}_k}\left[\frac{X_{k,r}}{N_{k,r}}\mid N_{k,r}^\perp = 1\right]\\
    & = \E_{\mathbf{M}_k}\left[N_{k,r}^\perp\mid N_{k,r}^\perp=1\right]  = 1
\end{align*}
If $r'\neq r$, then we have that $m_{k,r'}^l$ is independent from $m_{k,r}^l$ and $N_{k,r}$. Therefore,
\begin{align*}
    \E_{\mathbf{M}_k}\left[\nu_{k,r,r'} \mid N_{k,r}^\perp = 1\right] & = \E_{\mathbf{M}_k}\left[\eta_{k, r}\sum_{l=1}^pm_{k,r}^l\mid N_{k,r}^\perp = 1\right]\E_{\mathbf{M}_k}\left[m_{k,r'}^l\right]\\
    & = \xi\E_{\mathbf{M}_k}\left[\frac{X_{k,r}}{N_{k,r}}\mid N_{k,r}^\perp = 1\right]  = \xi
\end{align*}
\end{proof}
\end{lemma}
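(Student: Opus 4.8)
The plan is to compute the conditional expectation by splitting into the two cases and exploiting two structural facts: that each mask entry is a $\{0,1\}$-valued Bernoulli variable (so $m_{k,r}^l = (m_{k,r}^l)^2$), and that the families $\{m_{k,r}^l\}_{l}$ and $\{m_{k,r'}^l\}_{l}$ are independent whenever $r \neq r'$. The central observation I would isolate first is that the normalizing pair collapses nicely: since $\eta_{k,r} = N_{k,r}^\perp / N_{k,r}$ and $N_{k,r} = \max\{X_{k,r},1\}$, one has the identity $\eta_{k,r} X_{k,r} = N_{k,r}^\perp$ (both sides equal $1$ when $X_{k,r}\geq 1$ and $0$ when $X_{k,r}=0$). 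This identity is what drives both cases.

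For the diagonal case $r = r'$, I would use $m_{k,r}^l = (m_{k,r}^l)^2$ to rewrite $\nu_{k,r,r} = \eta_{k,r}\sum_{l=1}^p (m_{k,r}^l)^2 = \eta_{k,r}\sum_{l=1}^p m_{k,r}^l = \eta_{k,r}X_{k,r} = N_{k,r}^\perp$. Hence on the event $\{N_{k,r}^\perp = 1\}$ the quantity $\nu_{k,r,r}$ equals $1$ deterministically, so its conditional expectation is $1$.

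For the off-diagonal case $r \neq r'$, I would condition on the information generated by the masks of neuron $r$, namely $\{m_{k,r}^l\}_{l\in[p]}$, which determines $\eta_{k,r}$, $X_{k,r}$, and the conditioning event. Independence then gives $\E[\sum_{l=1}^p m_{k,r}^l m_{k,r'}^l \mid \{m_{k,r}^l\}] = \sum_{l=1}^p m_{k,r}^l\, \E[m_{k,r'}^l] = \xi X_{k,r}$, so that $\E[\nu_{k,r,r'} \mid \{m_{k,r}^l\}] = \xi\,\eta_{k,r} X_{k,r} = \xi N_{k,r}^\perp$. Taking the further conditional expectation on $\{N_{k,r}^\perp = 1\}$ yields $\xi$, as claimed.

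There is no genuine obstacle here; the only point requiring care is the bookkeeping of the conditioning. One must be explicit that $\eta_{k,r}$ and $N_{k,r}$ are measurable with respect to neuron $r$'s own masks, so that the factor $\xi = \E[m_{k,r'}^l]$ can be pulled out cleanly using independence, and that the identity $\eta_{k,r}X_{k,r} = N_{k,r}^\perp$ makes the surviving random factor degenerate once we condition on $N_{k,r}^\perp = 1$. A minor subtlety worth flagging is that $\eta_{k,r}$ is a nonlinear function of $X_{k,r}$, so one should resist pulling $\eta_{k,r}$ directly through the expectation without first conditioning on neuron $r$'s masks.
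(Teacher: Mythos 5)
Your proposal is correct and follows essentially the same route as the paper: the diagonal case via idempotence of the Bernoulli entries and the identity $\eta_{k,r}X_{k,r}=N_{k,r}^\perp$, and the off-diagonal case by factoring out $\E[m_{k,r'}^l]=\xi$ through independence. Your explicit conditioning on neuron $r$'s masks before pulling out $\xi$ is a slightly more careful bookkeeping of the same step the paper performs.
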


\begin{lemma}
\label{nu_var_diff_r}
The variance follows
\begin{align*}
    \text{Var}_{\mathbf{M}_k}\left(\nu_{k, r, r'}\mid N_{k,r}^\perp=1\right) =\begin{cases}
    \frac{\theta - \xi^2}{p} &\text{if }r \neq r'\\
    0 & \text{if }r = r'
    \end{cases}
\end{align*}
\begin{proof}
For $r\neq r'$, the expectation of $\nu_{k, r, r'}^2$ given $N_{k,r}^\perp = 1$ is
\begin{align*}
    \E_{\mathbf{M}_k}\left[\nu_{k, r, r'}^2\mid N_{k,r}^\perp=1\right] & = \E_{\mathbf{M}_k}\left[\frac{\sum_{l=1}^p\sum_{l'=1}^pm_{k,r}^lm_{k,r}^{l'}m_{k,r'}^lm_{k,r'}^{l'}}{X_{k,r}}\mid N_{k,r}^\perp=1\right]\\
    & = \sum_{l=1}^p\sum_{l'\neq l}\E_{\mathbf{M}_k}[m_{k,r'}^l]\E_{\mathbf{M}_k}[m_{k,r'}^{l'}]\E_{\mathbf{M}_k}\left[\frac{m_{k,r}^l}{X_{k,r}}\mid N_{k,r}^\perp=1\right]\E_{\mathbf{M}_k}\left[\frac{m_{k,r}^{l'}}{X_{k,r}}\mid N_{k,r}^\perp=1\right] +\\
    & \quad\quad\quad\quad \sum_{l=1}^p\E_{\mathbf{M}_k}[m_{k,r'}^l]\E_{\mathbf{M}_k}\left[\frac{m_{k,r}^l}{X_{k,r}^2}\mid N_{k,r}^\perp=1\right]\\
    & = \xi^2\sum_{l=1}^p\sum_{l'\neq l}\E_{\mathbf{M}_k}\left[\frac{m_{k,r}^l}{X_{k,r}}\mid N_{k,r}^\perp=1\right]\E_{\mathbf{M}_k}\left[\frac{m_{k,r}^{l'}}{X_{k,r}}\mid N_{k,r}^\perp=1\right] +\\
    & \quad\quad\quad\quad \xi\sum_{l=1}^p\E_{\mathbf{M}_k}\left[\frac{m_{k,r}^l}{X_{k,r}^2}\mid N_{k,r}^\perp=1\right]\\
    & = \xi^2 + \xi\E_{\mathbf{M}_k}\left[\frac{1}{X_{k,r}}\mid N_{k,r}^\perp=1\right] - \xi^2\sum_{l=1}^p\E_{\mathbf{M}_k}\left[\frac{m_{k,r}^l}{X_{k,r}}\mid N_{k,r}^\perp=1\right]^2
\end{align*}Therefore, the variance of $\nu_{k,r, r'}$ given $N_{k,r}^\perp=1$ has the form
\begin{align*}
    \text{Var}\left(\nu_{k,r, r'}\mid gN_{k,r}^\perp=1\right) & = \E_{\mathbf{M}_k}\left[\nu_{k,r, r'}^2\mid N_{k,r}^\perp=1\right] - \E_{\mathbf{M}_k}\left[\nu_{k,r, r'}\mid N_{k,r}^\perp=1\right]^2\\
    & = \xi\E_{\mathbf{M}_k}\left[\frac{1}{X_{k,r}}\mid N_{k,r}^\perp=1\right] - \xi^2\sum_{l=1}^p\E_{\mathbf{M}_k}\left[\frac{m_{k,r}^l}{X_{k,r}}\mid N_{k,r}^\perp=1\right]^2\\
\end{align*}Let $X(p) = \sum_{l=1}^pm_{\cdot}^l\sim\mathcal{B}(p, \xi)$, then we have
\begin{align*}
    \E_{\mathbf{M}_k}\left[\frac{1}{X_{k,r}}\mid N_{k,r}^\perp=1\right] & = \E_{\mathbf{M}_k}\left[\frac{1}{1 + X(p-1)}\right]\\
    \E_{\mathbf{M}_k}\left[\frac{m_{k,r}^l}{X_r}\mid N_{k,r}^\perp=1\right] & = P(m_{k,r}^l=1\mid N_{k,r}^\perp = 1)\E_{\mathbf{M}_k}\left[\frac{1}{1 + X(p-1)}\right] =  \frac{\xi}{\theta}\E_{\mathbf{M}_k}\left[\frac{1}{1 + X(p-1)}\right]
\end{align*}Moreover, using reciprocal moments we have
\begin{align*}
    \E_{\mathbf{M}_k}\left[\frac{1}{1 + X(p-1)}\right] = \frac{\theta}{p\xi}
\end{align*}
Therefore
\begin{align*}
    \text{Var}_{\mathbf{M}_k}\left(\nu_{k,r, r'}\mid N_{k,r}^\perp = 1\right) & = \xi\E_{\mathbf{M}_k}\left[\frac{1}{1 + X(p-1)}\right] -\frac{\xi^4}{\theta^2}p\E_{\mathbf{M}_k}\left[\frac{1}{1 + X(p-1)}\right]^2 \\
    & = \frac{\theta - \xi^2}{p}
\end{align*}If $r = r'$, the variance is
\begin{align*}
    \text{Var}_{\mathbf{M}_k}\left(\nu_{r, r}\mid g_r=1\right) = \text{Var}_{\mathbf{M}_k}\left(g_r\mid g_r=1\right) = 0
\end{align*}
\end{proof}
\end{lemma}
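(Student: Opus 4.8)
The plan is to evaluate the conditional variance as $\text{Var}_{\mathbf{M}_k}(\nu_{k,r,r'}\mid N_{k,r}^\perp=1) = \E_{\mathbf{M}_k}[\nu_{k,r,r'}^2\mid N_{k,r}^\perp=1] - (\E_{\mathbf{M}_k}[\nu_{k,r,r'}\mid N_{k,r}^\perp=1])^2$, using the conditional first moment already furnished by Lemma~\ref{nu_expectation} (namely $\xi$ for $r\neq r'$ and $1$ for $r=r'$). The case $r=r'$ is immediate: because $m_{k,r}^l\in\{0,1\}$ we have $\nu_{k,r,r} = \eta_{k,r}\sum_{l=1}^p (m_{k,r}^l)^2 = \eta_{k,r}X_{k,r}$, which on $\{N_{k,r}^\perp=1\}$ (where $\eta_{k,r}=X_{k,r}^{-1}$) is identically $1$; a constant has zero variance, giving the second branch. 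Hence all the work lies in $r\neq r'$.

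For $r\neq r'$ I would expand $\nu_{k,r,r'}^2 = X_{k,r}^{-2}\left(\sum_{l}m_{k,r}^l m_{k,r'}^l\right)^2$ on $\{N_{k,r}^\perp=1\}$ into a double sum over workers $(l,l')$. The decisive structural fact, valid exactly because $r\neq r'$, is that the coordinate-$r'$ masks $\{m_{k,r'}^l\}_l$ are mutually independent and independent of everything attached to coordinate $r$ (in particular of $X_{k,r}$ and the conditioning event). Integrating them out first replaces $m_{k,r'}^l m_{k,r'}^{l'}$ by $\xi^2$ on the off-diagonal ($l\neq l'$) and $m_{k,r'}^l$ by $\xi$ on the diagonal (using $(m_{k,r'}^l)^2=m_{k,r'}^l$). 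The conditional second moment then collapses, through $\sum_l m_{k,r}^l = X_{k,r}$, into an expression assembled from the per-worker reciprocal moments $\E_{\mathbf{M}_k}[m_{k,r}^l X_{k,r}^{-1}\mid N_{k,r}^\perp=1]$ and $\E_{\mathbf{M}_k}[X_{k,r}^{-1}\mid N_{k,r}^\perp=1]$, together with $\xi$ and $\xi^2$.

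The heart of the argument, and the step I expect to be the main obstacle, is evaluating these reciprocal moments of a binomial conditioned on positivity and combining them cleanly. The symmetric one is settled by exchangeability of the $p$ workers: $\sum_{l}\E[m_{k,r}^l X_{k,r}^{-1}\mid N_{k,r}^\perp=1] = \E[X_{k,r}X_{k,r}^{-1}\mid\cdot]=1$, so each summand equals $1/p$. For the plain reciprocal moment I would single out one selected worker, so that $X_{k,r}$ given selection is distributed as $1+X(p-1)$ with $X(p-1)$ a sum of $p-1$ independent $\texttt{Bern}(\xi)$ variables, and invoke the binomial reciprocal-moment identity $\E[(1+X(p-1))^{-1}] = (1-(1-\xi)^p)/(p\xi) = \theta/(p\xi)$, proved by the index shift $\tfrac{1}{k+1}\binom{p-1}{k} = \tfrac{1}{p}\binom{p}{k+1}$ and resumming, with $\theta$ read off from Lemma~\ref{theta_prob}. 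Substituting these values and subtracting the squared mean $\xi^2$ yields the claimed $(\theta-\xi^2)/p$. The delicacy is maintaining the conditioning on $\{N_{k,r}^\perp=1\}$ consistently across the diagonal and off-diagonal contributions, so that the mixed and reciprocal moments appear in precisely the form the binomial identity resolves.
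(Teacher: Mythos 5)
Your strategy is the same as the paper's: split the variance as second moment minus squared first moment, dispose of $r=r'$ by noting $\nu_{k,r,r}=\eta_{k,r}X_{k,r}\equiv 1$ on $\{N_{k,r}^\perp=1\}$, and for $r\neq r'$ integrate out the coordinate-$r'$ masks first (replacing $m_{k,r'}^lm_{k,r'}^{l'}$ by $\xi^2$ off the diagonal and by $\xi$ on it) to reduce everything to reciprocal moments of $X_{k,r}$ conditioned on positivity. Carried out correctly, your collapse via $\sum_l m_{k,r}^l=X_{k,r}$ gives $\E[\nu_{k,r,r'}^2\mid N_{k,r}^\perp=1]=\xi^2+\xi(1-\xi)\,\E[X_{k,r}^{-1}\mid X_{k,r}\geq 1]$, hence $\mathrm{Var}=\xi(1-\xi)\,\E[X_{k,r}^{-1}\mid X_{k,r}\geq 1]$ (your exchangeability evaluation of $\E[m_{k,r}^lX_{k,r}^{-1}\mid\cdot]=1/p$ is correct but is not actually needed once the collapse is done this way). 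The step you flag as the heart of the argument is precisely where it fails: conditioning on ``at least one worker selects $r$'' is not the same as conditioning on a designated worker selecting $r$, so the law of $X_{k,r}$ given $X_{k,r}\geq 1$ is the zero-truncated binomial, \emph{not} $1+X(p-1)$. The identity $\E[(1+X(p-1))^{-1}]=\theta/(p\xi)$ is true but computes the wrong expectation. Concretely, for $p=2$, $\xi=\tfrac12$ one has $\E[X_{k,r}^{-1}\mid X_{k,r}\geq 1]=\tfrac23\cdot 1+\tfrac13\cdot\tfrac12=\tfrac56$, whereas $\E[(1+X(1))^{-1}]=\tfrac34$; the true conditional variance is $\xi(1-\xi)\cdot\tfrac56=\tfrac5{24}$, while the lemma claims $(\theta-\xi^2)/p=\tfrac14$. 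The stated equality is therefore not provable by any route, because it is false.

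In fairness, the paper's own proof commits the same conflation (it sets $\E[X_{k,r}^{-1}\mid N_{k,r}^\perp=1]=\E[(1+X(p-1))^{-1}]$) and additionally factors $\E[m_{k,r}^lm_{k,r}^{l'}X_{k,r}^{-2}\mid\cdot]$ into a product of two conditional expectations even though both factors involve the same $X_{k,r}$; it is the combination of these two invalid steps that lands on the closed form $(\theta-\xi^2)/p$. Your cleaner route, even with the erroneous substitution, would instead produce $(1-\xi)\theta/p$, so you cannot recover the stated constant either way. Note also that the lemma is consumed downstream (Lemma~\ref{mix_func_error}) only as an upper bound, but $(\theta-\xi^2)/p$ is not even a valid upper bound in all regimes: for $p=10$, $\xi=0.01$ the true variance is $\approx 9.68\times 10^{-3}$ while $(\theta-\xi^2)/p\approx 9.55\times 10^{-3}$. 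A correct repair requires an honest bound on the truncated-binomial reciprocal moment $\E[X_{k,r}^{-1}\mid X_{k,r}\geq 1]$ and an adjusted constant, not the $1+X(p-1)$ substitution.
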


\begin{lemma}
\label{init_mat_bound}
Suppose $\kappa \leq 1, R\leq \kappa\sqrt{\frac{d}{32}}$. With probability at least $1 - e^{md/32}$ we have that
\begin{align*}
    \|W_0\|_F \leq  \kappa\sqrt{2md} - \sqrt{m}R\\
\end{align*}
\begin{proof}
For all $r\in[m], d_1\in[d]$, we have $\E_{\mathbf{M}_k}\left[w_{rd_1}^2\right] = \kappa^2$. Moreover, each $w_{rd_1}^2$ is a $(2\kappa^2, 2\kappa^2)$-sub-exponential random variable
\begin{align*}
    \E\left[e^{t(w_{rd_1}^2 - \kappa^2)}\right] &= \frac{1}{\kappa\sqrt{2\pi}}\int_{-\infty}^\infty e^{t(w_{rd_1}^2 - \kappa^2}e^{-\frac{w_{rd_1}^2}{2\kappa^2}}dw_{rd_1}\\
    & = \frac{1}{\kappa\sqrt{2\pi}}\int_{-\infty}^\infty e^{-(\frac{1}{2\kappa^2} - t)w_{rd_1}^2 - t\kappa^2}dw_{rd_1}\\
    & = \frac{1}{\kappa\sqrt{2\pi}}\cdot\sqrt{\frac{\pi}{(2\kappa)^{-1} - t}}\cdot e^{-t\kappa^2}\\
    & = \frac{e^{-t\kappa^2}}{\sqrt{1 - 2t\kappa^2}} \leq e^{2t^2\kappa^4}
\end{align*}with $t\leq \frac{1}{2\kappa^2}$. Thus, using independence between entries of $\mathbf{W}_0$ gives
\begin{align*}
    \E\left[e^{t(\|\mathbf{W}_{0}\|_F^2 - md\kappa^2)}\right] \leq \prod_{r=1}^m\prod_{d_1=1}^d\E\left[e^{t(w_{rd_1}^2 - \kappa^2)}\right] \leq e^{2mdt^2\kappa^4}
\end{align*}Invoking the tail bound of sub-exponential random variable gives
\begin{align*}
    P\left(\|\mathbf{W}_0\|_F^2 \geq md\kappa^2 + t\right)\leq\begin{cases}
    e^{-\frac{t^2}{8md\kappa^4}} & \text{ if } 0\leq t\leq 2md\kappa^2\\
    e^{-\frac{t^2}{4\kappa^2}} & \text{ if } t  > 2md\kappa^2
    \end{cases}
\end{align*}Let $t = md\kappa^2 - 2m\kappa R\sqrt{2d} + mR^2$. Then
\begin{align*}
    \|\mathbf{W}_0\|_F^2 \leq 2md\kappa^2 + mR^2 - 2m\kappa R\sqrt{2d} = (\kappa\sqrt{2md} - \sqrt{m}R)^2
\end{align*}with probability at least $1 - e^{-\frac{t^2}{8md\kappa^4}}$. Using $R\leq \kappa\sqrt{\frac{d}{32}}$ we have $t \geq \frac{1}{2}md\kappa^2$. Thus with probability at least $1 - e^{-\frac{md}{32}}$ we have
\begin{align*}
    \|\mathbf{W}_0\|_F\leq \kappa\sqrt{2md} - \sqrt{m}R
\end{align*}
\end{proof}
\end{lemma}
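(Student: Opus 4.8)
The plan is to view $\norm{W_0}_F^2 = \sum_{r=1}^m\sum_{d_1=1}^d w_{rd_1}^2$ as a sum of $md$ independent squared Gaussians and apply a sub-exponential concentration bound. Since each $w_{rd_1}\sim\mathcal{N}(0,\kappa^2)$ we have $\mathbb{E}[w_{rd_1}^2]=\kappa^2$, so $\norm{W_0}_F^2$ concentrates around its mean $md\kappa^2$. The target $\norm{W_0}_F\le \kappa\sqrt{2md}-\sqrt{m}R$ is an upper-tail statement, which after squaring reads $\norm{W_0}_F^2\le (\kappa\sqrt{2md}-\sqrt{m}R)^2 = 2md\kappa^2 - 2m\kappa R\sqrt{2d}+mR^2$; this threshold sits roughly $md\kappa^2$ above the mean, so a high-probability bound is exactly what we expect.

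First I would certify that each centered term $w_{rd_1}^2-\kappa^2$ is sub-exponential by computing its moment generating function directly. For $t\le \tfrac{1}{2\kappa^2}$ the one-dimensional Gaussian integral gives $\mathbb{E}\bigl[e^{t(w_{rd_1}^2-\kappa^2)}\bigr] = e^{-t\kappa^2}(1-2t\kappa^2)^{-1/2}$, which is bounded by $e^{2t^2\kappa^4}$ on a neighborhood of $0$, certifying $(2\kappa^2,2\kappa^2)$-sub-exponentiality. By independence of the entries of $W_0$ the centered-sum MGF factorizes, yielding $\mathbb{E}\bigl[e^{t(\norm{W_0}_F^2-md\kappa^2)}\bigr]\le e^{2mdt^2\kappa^4}$, so $\norm{W_0}_F^2-md\kappa^2$ is itself $(2\sqrt{md}\,\kappa^2,\,2\kappa^2)$-sub-exponential.

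Next I would invoke the sub-exponential tail bound from the preliminaries to obtain, for $0\le s\le 2md\kappa^2$, the inequality $P\bigl(\norm{W_0}_F^2\ge md\kappa^2+s\bigr)\le e^{-s^2/(8md\kappa^4)}$. The key step is then the choice $s = md\kappa^2 - 2m\kappa R\sqrt{2d}+mR^2$, engineered precisely so that $md\kappa^2+s = (\kappa\sqrt{2md}-\sqrt{m}R)^2$ matches the squared target; taking square roots delivers the claimed bound on $\norm{W_0}_F$.

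The main obstacle, and the only place demanding care, is checking that this $s$ lands in the sub-Gaussian regime of the tail bound, namely $0\le s\le 2md\kappa^2$, and in particular that $s\ge \tfrac12 md\kappa^2$ so the failure probability is at most $e^{-md/32}$. This is exactly where the hypothesis $R\le \kappa\sqrt{d/32}$ enters: since $\sqrt{d/32}\cdot\sqrt{2d}=d/4$, it gives $2m\kappa R\sqrt{2d}\le \tfrac12 md\kappa^2$, so discarding the nonnegative term $mR^2$ yields $s\ge md\kappa^2-\tfrac12 md\kappa^2=\tfrac12 md\kappa^2$ (and the upper bound $s\le md\kappa^2\le 2md\kappa^2$ holds since the remaining $R$-dependent terms are negative for small $R$). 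Plugging this into the tail bound gives $s^2/(8md\kappa^4)\ge (md\kappa^2/2)^2/(8md\kappa^4)=md/32$, which closes the argument.
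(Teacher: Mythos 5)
Your proposal follows the paper's proof essentially step for step: the same MGF computation certifying each $w_{rd_1}^2-\kappa^2$ as $(2\kappa^2,2\kappa^2)$-sub-exponential, the same factorization over independent entries, the same sub-exponential tail bound, and the same choice $t=md\kappa^2-2m\kappa R\sqrt{2d}+mR^2$ matched to the squared target, with the hypothesis $R\le\kappa\sqrt{d/32}$ used identically to guarantee $t\ge\tfrac12 md\kappa^2$. Your verification that $t$ lies in the sub-Gaussian regime of the tail bound is in fact spelled out slightly more carefully than in the paper, but the argument is the same.
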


\begin{lemma}
\label{init_inner_prod_bound}
Assume $\kappa \leq 1$ and $R \leq \frac{\kappa}{\sqrt{2}}$. With probability at least $1 - ne^{-\frac{m}{32}}$ over initialization, it holds for all $i\in[n]$ that
\begin{align*}
    \sum_{r=1}^m\inner{\mathbf{w}_{0,r}}{\mathbf{x}_i}^2 \leq 2m\kappa^2 - mR^2\\
    \sum_{i=1}^n\sum_{r=1}^m\inner{\mathbf{w}_{0,r}}{\mathbf{x}_i}^2 \leq 2mn\kappa^2 - mnR^2
\end{align*}
\begin{proof}
It suffice to prove the first inequality, and the second follows by summing over $n$. To begin, we show that each $\inner{\mathbf{w}_0}{\mathbf{x}_i}$ are Gaussian with zero mean and variance $\kappa^2$. Using independence between entries of $\mathbf{w}_{0,r}$, we have
\begin{align*}
    \E\left[e^{-t\inner{\mathbf{w}_{0,r}}{\mathbf{x}_i}}\right] & = \E\left[\prod_{j=1}^de^{-tw_{0,r,j}x_{i,j}}\right] = \prod_{j=1}^d\E\left[e^{-tw_{0,r,j}x_{i,j}}\right]\\
    &  = \prod_{j=1}^de^{-t^2x_{i,j}^2\kappa^2}  = e^{-t^2\kappa^2\sum_{j=1}^dx_{i,j}^2} = e^{-t^2\kappa^2}
\end{align*}
where the last equality follows from our assumption that $\|\mathbf{x}_i\|_2 = 1$.
Next, we treat each $\omega_{r, i} = \inner{\mathbf{w}_{0,r}}{\mathbf{x}_i}^2$ as a random variable. First, we compute the mean of $\omega_{r, i}$
\begin{align*}
    \E[\omega_{r, i}] & = \E_{\mathbf{W}_0}\left[\inner{\mathbf{w}_{0,r}}{\mathbf{x}_i}^2\right]  = \E_{\mathbf{W}_0}\left[\left(\sum_{d_1=1}^dw_{0,r,d}x_{i,d}\right)^2\right]\\
    & = \sum_{d_1=1}^d\E_{\mathbf{W}_0}\left[w_{0,r,d}^2\right]x_{i,d}^2  = \kappa^2\sum_{d_1=1}^dx_{i,d}^2 = \kappa^2
\end{align*}
Then, we show that each $\omega_{r, i}$ is sub-exponential with parameter $(2\kappa^2, 2\kappa^2)$.
\begin{align*}
    \E\left[e^{t(\omega_{r,i} - \kappa^2)}\right] & = \frac{1}{\kappa\sqrt{2\pi}}\int_{-\infty}^\infty e^{t(\omega_{r,i} - \kappa^2)}e^{-\frac{\omega_{r,i}}{2\kappa^2}}d\sqrt{\omega_{r,i}}\\
    & = \frac{1}{\kappa\sqrt{2\pi}}\int_{-\infty}^\infty e^{-(\frac{1}{2\kappa^2} - t)(\sqrt{\omega_{r,i}})^2 - t\kappa^2}d\sqrt{\omega_{r,i}}\\
    & = \frac{1}{\kappa\sqrt{2\pi}}\cdot\sqrt{\frac{\pi}{(2\kappa^2)^{-1} - t}}\cdot e^{-t\kappa^2}\\
    & = \frac{e^{-t\kappa^2}}{1 - 2t\kappa^2} \leq e^{2t\kappa^4}
\end{align*}for $t\leq \frac{1}{2\kappa^2}$. Since each $\mathbf{w}_{0,r}$ is independent, we have that each $\omega_{r,i}$ is independent for a fixed $i$. Thus
\begin{align*}
    \E\left[e^{t\sum_{r=1}^m(\omega_{r,i} - \kappa^2)}\right] & = \prod_{r=1}^m\E\left[e^{t(\omega_{r,i} - \kappa^2)}\right] \leq e^{2mt\kappa^4}
\end{align*}
Thus we have
\begin{align*}
    P\left(\sum_{r=1}^m\omega_{r,i} \geq m\kappa^2 + t\right)\leq \begin{cases}
    e^{-\frac{t^2}{8m\kappa^4}} & \text{ if } 0 \leq t \leq 2m\kappa^2\\
    e^{-\frac{t^2}{2\kappa^2}} & \text{ if } t \geq 2m\kappa^2
    \end{cases}
\end{align*}We choose $t = m\kappa^2 - mR^2$. Since $R\leq \frac{\kappa}{\sqrt{2}}$, we have that $\frac{m\kappa^2}{2}\leq t \leq m\kappa^2$. Thus
\begin{align*}
    P\left(\sum_{r=1}^m\omega_{r,i} \geq 2m\kappa^2 - mR^2\right) \leq e^{-\frac{m}{8}}
\end{align*}
Apply a union bound over all $i\in[n]$ gives that with probability at least $1 - ne^{-\frac{m}{32}}$, it holds for all $i\in[n]$ that
\begin{align*}
    \sum_{r=1}^m\omega_{r,i} \leq 2m\kappa^2 - mR^2
\end{align*}
\end{proof}
\end{lemma}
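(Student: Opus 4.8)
The plan is to reduce the statement to a one-dimensional sub-exponential upper-tail estimate, concentrate across the $m$ neurons, and then union-bound over the $n$ data points. First I would observe that, since each $\mathbf{w}_{0,r}\sim\mathcal{N}(0,\kappa^2\mathbf{I})$ and $\|\mathbf{x}_i\|_2=1$ by Assumption~\ref{data_assump}, the scalar $\inner{\mathbf{w}_{0,r}}{\mathbf{x}_i}$ is a centered Gaussian with variance $\kappa^2$: its moment generating function factorizes over coordinates and collapses via $\sum_{j=1}^d x_{i,j}^2=\|\mathbf{x}_i\|_2^2=1$. Consequently the squared projection $\omega_{r,i}:=\inner{\mathbf{w}_{0,r}}{\mathbf{x}_i}^2$ has mean $\mathbb{E}[\omega_{r,i}]=\kappa^2$, so the sum $\sum_{r=1}^m\omega_{r,i}$ has mean $m\kappa^2$, strictly below the target threshold $2m\kappa^2-mR^2$. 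The whole task therefore becomes a one-sided control of the upper deviation of this sum above its mean.

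Next I would show that $\omega_{r,i}$ is $(2\kappa^2,2\kappa^2)$-sub-exponential. Evaluating the centered Gaussian integral gives, for all $t\leq (2\kappa^2)^{-1}$,
\begin{align*}
    \mathbb{E}\left[e^{t(\omega_{r,i}-\kappa^2)}\right]=\frac{e^{-t\kappa^2}}{\sqrt{1-2t\kappa^2}}\leq e^{2t^2\kappa^4}.
\end{align*}
Because the weight vectors $\{\mathbf{w}_{0,r}\}_{r=1}^m$ are mutually independent, the $\omega_{r,i}$ are independent for fixed $i$, so the moment generating function of the centered sum factorizes, yielding $\mathbb{E}[e^{t\sum_{r=1}^m(\omega_{r,i}-\kappa^2)}]\leq e^{2mt^2\kappa^4}$ on the same range of $t$. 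This identifies $\sum_{r=1}^m(\omega_{r,i}-\kappa^2)$ as $(2\sqrt{m}\kappa^2,2\kappa^2)$-sub-exponential, so the sub-exponential tail bound from the Preliminaries applies and splits into its two regimes according to the size of the deviation, whose crossover point is $\kappa'^2/\alpha = 2m\kappa^2$.

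The decisive step is the choice of deviation $t=m\kappa^2-mR^2$, and this is exactly where the hypothesis $R\leq\kappa/\sqrt{2}$ enters. That bound forces $\tfrac12 m\kappa^2\leq t\leq m\kappa^2\leq 2m\kappa^2$, placing $t$ in the sub-Gaussian branch $\exp(-t^2/(8m\kappa^4))$ of the tail bound; substituting $t\geq\tfrac12 m\kappa^2$ then yields the clean estimate
\begin{align*}
    \mathbb{P}\left(\sum_{r=1}^m\omega_{r,i}\geq 2m\kappa^2-mR^2\right)\leq e^{-m/32}
\end{align*}
for each fixed $i$. A union bound over $i\in[n]$ delivers the first displayed inequality with probability at least $1-ne^{-m/32}$, and the second inequality follows immediately by summing the first over the $n$ data points. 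I do not anticipate a genuine obstacle: the argument is a standard Bernstein/sub-exponential concentration, and the only points requiring care are verifying the sub-exponential parameters together with the validity range $t\leq(2\kappa^2)^{-1}$ of the moment generating function, and confirming that the prescribed threshold lands in the correct branch of the two-regime tail bound---precisely the role played by the assumption $R\leq\kappa/\sqrt{2}$.
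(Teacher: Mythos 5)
Your proposal is correct and follows essentially the same route as the paper's own proof: Gaussianity of $\inner{\mathbf{w}_{0,r}}{\mathbf{x}_i}$ with variance $\kappa^2$, mean $\kappa^2$ for $\omega_{r,i}$, a sub-exponential MGF bound tensorized over the $m$ independent neurons, the choice $t = m\kappa^2 - mR^2$ with $R\leq \kappa/\sqrt{2}$ placing the deviation in the sub-Gaussian branch, and a union bound over $i\in[n]$. In fact your write-up is slightly cleaner than the paper's: you keep the correct $\sqrt{1-2t\kappa^2}$ in the chi-squared MGF and the quadratic $e^{2t^2\kappa^4}$ bound, and your substitution $t\geq\tfrac{1}{2}m\kappa^2$ directly yields the $e^{-m/32}$ that the lemma statement requires, whereas the paper's intermediate display states $e^{-m/8}$.
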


\begin{lemma}
\label{mix_surrogate_error_bound}
If for some $R > 0$ and all $r\in[m]$ the initialization satisfies
\begin{align*}
    \sum_{r=1}^m\inner{\mathbf{w}_{0,r}}{\mathbf{x}_i}^2 \leq 2mn\kappa^2 - mnR^2
\end{align*}
and for all $r\in[m]$, it holds that $\|\mathbf{w}_{k,r} - \mathbf{w}_{0,r}\|_2\leq R$. Then with $C_1 = \frac{4\theta^2(1-\xi)n\kappa^2}{p}$, we have
\begin{align*}
    \E_{\mathbf{M}_k}\left[\eta_{k,r}^2\sum_{l=1}^pm_{k,r}^l\|\U_k - \hat{\U}_k^l\|_2^2\right]\leq C_1;\quad \E_{\mathbf{M}_k}\left[\eta_{k,r}\sum_{l=1}^pm_{k,r}^l\|\U_k - \hat{\U}_k^l\|_2\right] \leq \sqrt{pC_1}
\end{align*}
\begin{proof}
Using reciprocal moments, we have
\begin{align*}
    \E_{\mathbf{M}_k}\left[\eta_{k,r}\right] & = P\left(N_{k,r}^\perp = 1\right)\E_{\mathbf{M}_k}\left[\eta_{k,r}\mid N_{k,r}^\perp=1\right]  = \frac{\theta^2}{p\xi}
\end{align*}
To start, we compute that for $r\neq r'$. Using the independence of $m_{k,r}^l$ and $m_{k,r'}^l$, we have
\begin{align*}
    \E_{\mathbf{M}_k}\left[\eta_{k,r}^2\sum_{l=1}^pm_{k,r}^l(\xi - m_{k,r'}^l)^2\right] & = \sum_{l=1}^p\E_{\mathbf{M}_k}\left[\eta_{k,r}^2m_{k,r}(\xi - m_{k,r'})^2\right]\\
    & = \sum_{l=1}^p\E_{\mathbf{M}_k}\left[\eta_{k,r}^2m_{k,r}^l\right]\E_{\mathbf{M}_k}\left[(\xi - m_{k,r'}^l)^2\right]\\
    & = \xi(1-\xi)\E_{\mathbf{M}_k}\left[\eta_{k,r}^2\sum_{l=1}^pm_{k,r}^l\right]\\
    & = \xi(1-\xi)\E_{\mathbf{M}_k}\left[\eta_{k,r}\right]
\end{align*}
For $r = r'$, we use the idempotent
\begin{align*}
    \E_{\mathbf{M}_k}\left[\eta_{k,r}^2\sum_{l=1}^pm_{k,r}^l(\xi - m_{k,r}^l)^2\right] & = (1-\xi)^2\E_{\mathbf{M}_k}\left[\eta_{k,r}^2\sum_{l=1}^pm_{k,r}^l\right]\\
    & = (1-\xi)^2\E_{\mathbf{M}_k}\left[\eta_{k,r}\right]
\end{align*}
Therefore
\begin{align*}
    \E_{\mathbf{M}_k}\left[\eta_{k,r}^2\sum_{l=1}^pm_{k,r}^l\left(u_k^{(i)} - \hat{u}_k^{l(i)}\right)^2\right]& \leq \frac{1}{m}\E_{\mathbf{M}_k}\left[\eta_{k,r}^2\sum_{l=1}^pm_{k,r}^l\left(\sum_{r'=1}^ma_r(\xi - m_{k,r'}^l)\sigma(\inner{\mathbf{w}_{k,r'}}{\mathbf{x}_i})\right)^2\right]\\
    & \leq \frac{1}{m}\E_{\mathbf{M}_k}\left[\eta_{k,r}^2 \sum_{l=1}^pm_{k,r}^l\sum_{r'=1}^m(\xi - m_{k,r'}^l)^2\sigma(\inner{\mathbf{w}_{k,r'}}{\mathbf{x}_i})^2\right]\\
    & \leq \frac{1}{m}\sum_{r'=1}^m\E_{\mathbf{M}_k}\left[\eta_{k,r}^2\sum_{l=1}^pm_{k,r}^l(\xi - m_{k,r'}^l)^2\right]\sigma(\inner{\mathbf{w}_{k,r'}}{\mathbf{x}_i})^2\\
    & \leq \frac{\xi(1-\xi)}{m}\E_{\mathbf{M}_k}\left[\eta_{k,r}\right]\sum_{r'=1}^m\sigma(\inner{\mathbf{w}_{k,r'}}{\mathbf{x}_i})^2 +\\
    & \quad\quad\quad\quad\frac{(1-\xi)(1-2\xi)}{m}\E_{\mathbf{M}_k}\left[\eta_{k,r}\right]\sigma(\inner{\mathbf{w}_{k,r}}{\mathbf{x}_i})^2\\
    & \leq \frac{\theta^2(1-\xi)}{mp}\sum_{r'=1}^m\inner{\mathbf{w}_{k,r'}}{\mathbf{x}_i}^2 + \frac{(1-\xi)^2\theta^2}{mp\xi}\inner{\mathbf{w}_{k,r}}{\mathbf{x}_i}^2\\
    & \leq \frac{2\theta^2(1-\xi)\kappa^2}{p} + \frac{2\theta^2(1-\xi)^2\kappa^2}{mp\xi}\\
    & \leq \frac{4\theta^2(1-\xi)\kappa^2}{p}
\end{align*}where in the last inequality we use $m \geq \xi^{-1}$.
Thus, we have
\begin{align*}
    \E_{\mathbf{M}_k}\left[\eta_{k,r}^2\sum_{l=1}^pm_{k,r}^l\|\U_k - \hat{\U}_k^l\|_2^2\right] & = \sum_{i=1}^n\E_{\mathbf{M}_k}\left[\eta_{k,r}^2\sum_{l=1}^pm_{k,r}^l\left(u_k^{(i)} - \hat{u}_k^{l(i)}\right)^2\right] \leq C_1
\end{align*}

Also, we have
\begin{align*}
    \E_{\mathbf{M}_k}\left[\eta_{k,r}\sum_{l=1}^pm_{k,r}^l\|\U_k - \hat{\U}_k^l\|_2\right] & \leq \E_{\mathbf{M}_k}\left[\left(\eta_{k,r}\sum_{l=1}^pm_{k,r}^l\|\U_k - \hat{\U}_k^l\|_2\right)^2\right]^{\frac{1}{2}}\\
    & \leq \sqrt{p}\E_{\mathbf{M}_k}\left[\eta_{k,r}^2\sum_{l=1}^pm_{k,r}^l\|\U_k - \hat{\U}_k^l\|_2^2\right]^{\frac{1}{2}}
\end{align*}
Plugging in the previous bound gives the desired result.
\end{proof}
\end{lemma}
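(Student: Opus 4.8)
The plan is to reduce the whole quantity to moments of the binomial random variable $X_{k,r}=\sum_{l=1}^p m_{k,r}^l$, working one coordinate $i$ at a time. The starting identity is that, since $u_k^{(i)}=\tfrac{\xi}{\sqrt m}\sum_{r'}a_{r'}\sigma(\inner{\mathbf{w}_{k,r'}}{\mathbf{x}_i})$ and $\hat u_k^{l(i)}=\tfrac{1}{\sqrt m}\sum_{r'}a_{r'}m_{k,r'}^l\sigma(\inner{\mathbf{w}_{k,r'}}{\mathbf{x}_i})$,
\[
u_k^{(i)}-\hat u_k^{l(i)}=\frac{1}{\sqrt m}\sum_{r'=1}^m a_{r'}\paren{\xi-m_{k,r'}^l}\sigma(\inner{\mathbf{w}_{k,r'}}{\mathbf{x}_i}).
\]
Expanding the square and using that, under Assumption \ref{bern_mask_assump}, the entries $m_{k,r'}^l$ are independent across $r'$ with mean $\xi$, every cross term $r_1'\neq r_2'$ has vanishing expectation, so only the diagonal $\tfrac1m\sum_{r'}(\xi-m_{k,r'}^l)^2\sigma(\inner{\mathbf{w}_{k,r'}}{\mathbf{x}_i})^2$ survives.

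The crux is that the aggregation weight $\eta_{k,r}=N_{k,r}^\perp/N_{k,r}$ and the outer factor $m_{k,r}^l$ both depend on the masks of neuron $r$, which are correlated with the $r'=r$ term of that diagonal. I would therefore split the neuron sum into $r'\neq r$ and $r'=r$. For $r'\neq r$ the factor $(\xi-m_{k,r'}^l)^2$ is independent of $(\eta_{k,r},m_{k,r}^l)$ and factorizes out as $\xi(1-\xi)$; for $r'=r$ the Boolean idempotence $m_{k,r}^l(\xi-m_{k,r}^l)^2=(1-\xi)^2 m_{k,r}^l$ yields a $(1-\xi)^2$ factor. In both cases the surviving mask expectation is $\E_{\mathbf{M}_k}[\eta_{k,r}^2\sum_l m_{k,r}^l]$, and here I would use the pointwise identity $\eta_{k,r}^2\sum_l m_{k,r}^l=\eta_{k,r}$ (valid because $(N_{k,r}^\perp)^2 N_{k,r}^{-2}X_{k,r}=N_{k,r}^\perp N_{k,r}^{-1}$), reducing everything to the single scalar mean $\E_{\mathbf{M}_k}[\eta_{k,r}]$. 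This mean I would evaluate through the reciprocal moment of a binomial, $\E[(1+\mathrm{Bin}(p-1,\xi))^{-1}]=\theta/(p\xi)$, which is exactly the step that produces the crucial factor $1/p$ and makes $\E_{\mathbf{M}_k}[\eta_{k,r}]$ of order $\theta^2/(p\xi)$.

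It then remains to control the deterministic activation sum. I would bound $\sigma(z)^2\le z^2$, then $\inner{\mathbf{w}_{k,r'}}{\mathbf{x}_i}^2\le 2\inner{\mathbf{w}_{0,r'}}{\mathbf{x}_i}^2+2R^2$ using $\norm{\mathbf{w}_{k,r}-\mathbf{w}_{0,r}}_2\le R$ and $\norm{\mathbf{x}_i}_2=1$, and finally invoke the initialization estimate $\sum_{r'}\inner{\mathbf{w}_{0,r'}}{\mathbf{x}_i}^2\le 2m\kappa^2$ of Lemma \ref{init_inner_prod_bound}, so that $\tfrac1m\sum_{r'}\sigma(\inner{\mathbf{w}_{k,r'}}{\mathbf{x}_i})^2\le 2\kappa^2$ while the isolated $r'=r$ term is negligible. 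Combining the factors $\xi(1-\xi)\cdot\E_{\mathbf{M}_k}[\eta_{k,r}]\cdot 2\kappa^2$ gives the per-coordinate bound $4\theta^2(1-\xi)\kappa^2/p$, and summing over $i\in[n]$ yields the first inequality with $C_1=4\theta^2(1-\xi)n\kappa^2/p$. For the second inequality I would pass to the square by Jensen and then apply Cauchy--Schwarz over $l$ with the weights $m_{k,r}^l$, together with $X_{k,r}\le p$, to get $(\eta_{k,r}\sum_l m_{k,r}^l\norm{\U_k-\hat{\U}_k^l}_2)^2\le p\,\eta_{k,r}^2\sum_l m_{k,r}^l\norm{\U_k-\hat{\U}_k^l}_2^2$, which reduces it to the first bound and gives $\sqrt{pC_1}$. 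The main obstacle is bookkeeping the joint law of $(\eta_{k,r},m_{k,r}^l,\{m_{k,r'}^l\}_{r'})$: isolating the one correlated term $r'=r$ and correctly evaluating the binomial reciprocal moment are where the real care is needed.
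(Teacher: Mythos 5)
Your proposal is correct and follows essentially the same route as the paper's proof: expand $u_k^{(i)}-\hat u_k^{l(i)}$ over neurons, kill the cross terms by independence, split off the correlated $r'=r$ term, collapse $\eta_{k,r}^2\sum_l m_{k,r}^l$ to $\eta_{k,r}$ and evaluate it via the binomial reciprocal moment $\theta^2/(p\xi)$, bound the activation sum by the initialization estimate plus the $R$-ball perturbation, and finish the second inequality by Jensen and Cauchy--Schwarz. The only cosmetic difference is that you isolate the $r'=r$ term directly and call it negligible, whereas the paper carries it explicitly as a $(1-\xi)(1-2\xi)$ correction and absorbs it using $m\geq \xi^{-1}$; both lead to the same constant $C_1$.
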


\begin{lemma}
\label{surrogate_error_bound}
If for some $R > 0$ and all $r\in[m]$ the initialization satisfies
\begin{align*}
    \sum_{r=1}^m\inner{\mathbf{w}_{0,r}}{\mathbf{x}_i}^2 \leq 2mn\kappa^2 - mnR^2
\end{align*}
and for all $r\in[m]$, it holds that $\|\mathbf{w}_{k,r} - \mathbf{w}_{0,r}\|_2\leq R$. Then we have
\begin{align*}
    \E_{\mathbf{M}_k}\left[\|\U_k - \hat{\U}_k^l\|_2^2\right] \leq 4\xi(1-\xi)n\kappa^2
\end{align*}
\begin{proof}
To start, we have
\begin{align*}
    \E_{\mathbf{M}_k}\left[\left(u_k^{(i)} - \hat{u}_k^{l(i)}\right)^2\right]  & = \frac{1}{m}\E_{\mathbf{M}_k}\left[\left(\sum_{r=1}^ma_r(\xi - m_{k,r}^l)\sigma(\inner{\mathbf{w}_{k,r}}{\mathbf{x}_i})\right)^2\right]\\
    & \leq \frac{1}{m}\sum_{r=1}^m\E_{\mathbf{M}_k}\left[(\xi - m_{k,r}^l)^2\right]\sigma(\inner{\mathbf{w}_{k,r}}{\mathbf{x}_i})^2\\
    & \leq \frac{\xi(1-\xi)}{m}\sum_{r=1}^m\inner{\mathbf{w}_{k,r}}{\mathbf{x}_i}^2\\
    & \leq \frac{2\xi(1-\xi)}{m}\sum_{r=1}^m\inner{\mathbf{w}_{0,r}}{\mathbf{x}_i}^2 + 2\xi(1-\xi)R^2\\
    & \leq 4\xi(1-\xi)\kappa^2
\end{align*}
Therefore,
\begin{align*}
    \E_{\mathbf{M}_k}\left[\|\U_k - \hat{\U}_k^l\|_2^2\right] & = \sum_{i=1}^n\E_{\mathbf{M}_k}\left[\left(u_k^{(i)} - \hat{u}_k^{l(i)}\right)^2\right] \leq 4\xi(1-\xi)n\kappa^2
\end{align*}
\end{proof}
\end{lemma}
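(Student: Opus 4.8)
The plan is to reduce the statement to a per-sample variance computation over the current mask $\mathbf{M}_k$ and then sum over the $n$ data points. First I would use the $\xi$-rescaling of the whole-network output, which makes $\hat{u}_k^{l(i)}$ an unbiased estimator of $u_k^{(i)}$, so that their difference is a \emph{centered} sum over neurons:
\[
u_k^{(i)} - \hat{u}_k^{l(i)} = \frac{1}{\sqrt{m}}\sum_{r=1}^m a_r(\xi - m_{k,r}^l)\sigma(\inner{\mathbf{w}_{k,r}}{\mathbf{x}_i}).
\]
The key structural point is that, although $\mathbf{w}_{k,r}$ is itself random through the earlier masks $\mathbf{M}_0,\dots,\mathbf{M}_{k-1}$, it is independent of the current mask $\mathbf{M}_k$; hence under $\E_{\mathbf{M}_k}[\cdot]$ the weights are treated as fixed and the only randomness is carried by the mask entries $m_{k,r}^l$.

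Next I would square and take the mask expectation. Under the Bernoulli assumption (Assumption \ref{bern_mask_assump}) the entries $m_{k,r}^l$ are independent across $r$ for fixed $l$, so every off-diagonal term $\E[(\xi - m_{k,r}^l)(\xi - m_{k,r'}^l)]$ with $r\neq r'$ vanishes. Only the diagonal survives, with $\E[(\xi - m_{k,r}^l)^2] = \xi(1-\xi)$ and $a_r^2 = 1$, giving
\[
\E_{\mathbf{M}_k}\left[\left(u_k^{(i)} - \hat{u}_k^{l(i)}\right)^2\right] = \frac{\xi(1-\xi)}{m}\sum_{r=1}^m \sigma(\inner{\mathbf{w}_{k,r}}{\mathbf{x}_i})^2 \leq \frac{\xi(1-\xi)}{m}\sum_{r=1}^m \inner{\mathbf{w}_{k,r}}{\mathbf{x}_i}^2,
\]
where the inequality uses $\sigma(z)^2 \le z^2$.

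Then I would transfer the inner products from iteration $k$ back to initialization. Writing $\inner{\mathbf{w}_{k,r}}{\mathbf{x}_i} = \inner{\mathbf{w}_{0,r}}{\mathbf{x}_i} + \inner{\mathbf{w}_{k,r} - \mathbf{w}_{0,r}}{\mathbf{x}_i}$, Cauchy--Schwarz together with $\|\mathbf{x}_i\|_2 = 1$ and the hypothesis $\|\mathbf{w}_{k,r}-\mathbf{w}_{0,r}\|_2 \le R$ gives $|\inner{\mathbf{w}_{k,r}-\mathbf{w}_{0,r}}{\mathbf{x}_i}| \le R$, so by $(a+b)^2 \le 2a^2 + 2b^2$ we obtain $\inner{\mathbf{w}_{k,r}}{\mathbf{x}_i}^2 \le 2\inner{\mathbf{w}_{0,r}}{\mathbf{x}_i}^2 + 2R^2$. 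Feeding in the per-sample initialization bound from Lemma \ref{init_inner_prod_bound}, namely $\frac{1}{m}\sum_r \inner{\mathbf{w}_{0,r}}{\mathbf{x}_i}^2 \le 2\kappa^2 - R^2$, collapses the per-sample estimate to $4\xi(1-\xi)\kappa^2$. Summing over $i\in[n]$ via $\|\U_k - \hat{\U}_k^l\|_2^2 = \sum_{i=1}^n (u_k^{(i)} - \hat{u}_k^{l(i)})^2$ yields the claimed $4\xi(1-\xi)n\kappa^2$.

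This is at heart a second-moment computation, so I do not anticipate a genuine obstacle. The only step requiring care is the vanishing of the cross terms, which is exactly where independence of the Bernoulli mask across neurons is used; under a non-product mask law (such as the categorical case) that identity fails, and the analogous bound would have to be established by a different argument.
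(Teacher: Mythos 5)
Your proposal is correct and follows the paper's proof essentially step for step: the same centered decomposition $u_k^{(i)} - \hat{u}_k^{l(i)} = \tfrac{1}{\sqrt{m}}\sum_r a_r(\xi - m_{k,r}^l)\sigma(\inner{\mathbf{w}_{k,r}}{\mathbf{x}_i})$, the same vanishing of cross terms via independence across neurons with diagonal variance $\xi(1-\xi)$, the same bounds $\sigma(z)^2 \le z^2$ and $\inner{\mathbf{w}_{k,r}}{\mathbf{x}_i}^2 \le 2\inner{\mathbf{w}_{0,r}}{\mathbf{x}_i}^2 + 2R^2$, and the same summation over $i$. One small correction to your closing aside: under the categorical masks of Assumption (\ref{categorical_assump}) the entries $m_{k,r}^l$ for a \emph{fixed} $l$ are still independent across $r$ (the dependence is only across workers $l$ for a fixed neuron $r$), so the cross terms still vanish and the lemma applies verbatim with $\xi = p^{-1}$ --- which is precisely how the paper invokes it in the proof of Lemma \ref{iotak_bound} for Theorem \ref{cat_mask_convergence}.
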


\begin{lemma}
\label{initial_scale}
Assume that for all $i\in[n]$, $y_i$ satisfies $|y_i|\leq C-1$ for some $C\geq 1$. Then, we have
\begin{align*}
    \E_{\mathbf{W}_0,\mathbf{a}}\left[\|\y - \U_0\|_2^2\right] \leq C^2n
\end{align*}
\begin{proof}
It is easy to see that $\E_{\mathbf{W}_0,\mathbf{a}}\left[u_0^{(i)}\right] = 0$. Now, note that
\begin{align*}
    \E_{\mathbf{W}_0,\mathbf{a}}\left[\left(u_0^{(i)}\right)^2\right] & = \frac{\xi^2}{m}\E_{\mathbf{W}_0,\mathbf{a}}\left[\left(\sum_{r=1}^ma_r\sigma(\inner{\mathbf{w}_{0,r}}{\mathbf{x}_i}\right)^2\right]\\
    & = \frac{\xi^2}{m}\sum_{r=1}^m\E_{\mathbf{W}_0}\left[\inner{\mathbf{w}_{0,r}}{\mathbf{x}_i}^2\right]\\
    & = \frac{\xi^2}{m}\sum_{r=1}^m\E_{\mathbf{W}_0}\left[\left(\sum_{d'=1}^dw_{0,r,d'}x_{i,d'}\right)^2\right]\\
    & = \frac{\xi^2}{m}\sum_{r=1}^m\sum_{d'=1}^d\E_{\mathbf{W}_0}\left[w_{0,r,d'}^2x_{i,d'}^2\right]\\
    & = \frac{\xi^2}{m}\sum_{r=1}^m\sum_{d'=1}^dx_{i,d'}^2\\
    & = \xi^2
\end{align*}
Therefore,
\begin{align*}
    \E_{\mathbf{W}_0,\mathbf{a}}\left[\left(y_i - u_0^{(i)}\right)^2\right] = y_i^2 - 2y_i\E_{\mathbf{W}_0,\mathbf{a}}\left[u_0^{(i)}\right] + \E_{\mathbf{W}_0,\mathbf{a}}\left[\left(u_0^{(i)}\right)^2\right] = y_i^2 + \xi^2
\end{align*}
Thus,
\begin{align*}
    \E_{\mathbf{W}_0,\mathbf{a}}\left[\|\y - \U_0\|_2^2\right] = \sum_{i=1}^n\E_{\mathbf{W}_0,\mathbf{a}}\left[\left(y_i - u_0^{(i)}\right)^2\right] = \sum_{i=1}^ny_i^2 + \xi^2n\leq C^2n
\end{align*}
\end{proof}
\end{lemma}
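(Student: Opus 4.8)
The plan is to reduce the claim to a per-sample first- and second-moment computation and then bound each sample's contribution by $C^2$. First I would expand $\|\y - \U_0\|_2^2 = \sum_{i=1}^n \left(y_i - u_0^{(i)}\right)^2$ and apply linearity of expectation to write, for each $i$,
\begin{align*}
\E_{\mathbf{W}_0,\mathbf{a}}\left[\left(y_i - u_0^{(i)}\right)^2\right] = y_i^2 - 2y_i\,\E_{\mathbf{W}_0,\mathbf{a}}\left[u_0^{(i)}\right] + \E_{\mathbf{W}_0,\mathbf{a}}\left[\left(u_0^{(i)}\right)^2\right].
\end{align*}
Everything then comes down to the first two moments of the scalar $u_0^{(i)} = \tfrac{\xi}{\sqrt{m}}\sum_{r=1}^m a_r\sigma(\inner{\mathbf{w}_{0,r}}{\mathbf{x}_i})$, where the $a_r$ are i.i.d.\ uniform on $\{-1,1\}$ and independent of the Gaussian rows $\mathbf{w}_{0,r}\sim\mathcal{N}(0,\kappa^2\mathbf{I})$.

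The mean vanishes immediately: since $\E[a_r]=0$ and $a_r$ is independent of $\mathbf{w}_{0,r}$, each summand of $\E_{\mathbf{W}_0,\mathbf{a}}[u_0^{(i)}]$ factors as $\E[a_r]\,\E[\sigma(\inner{\mathbf{w}_{0,r}}{\mathbf{x}_i})]=0$. For the second moment the cleanest route is to integrate out the sign variables first, before the Gaussian weights. Conditioning on $\mathbf{W}_0$ and using $\E[a_r a_{r'}]=\I\{r=r'\}$ (independence together with $a_r^2=1$), all off-diagonal cross terms cancel and
\begin{align*}
\E_{\mathbf{a}}\left[\left(u_0^{(i)}\right)^2 \,\middle|\, \mathbf{W}_0\right] = \frac{\xi^2}{m}\sum_{r=1}^m \sigma(\inner{\mathbf{w}_{0,r}}{\mathbf{x}_i})^2 \leq \frac{\xi^2}{m}\sum_{r=1}^m \inner{\mathbf{w}_{0,r}}{\mathbf{x}_i}^2,
\end{align*}
where the last inequality uses $\sigma(z)^2\leq z^2$ (keeping the exact Gaussian sign-symmetry factor $\tfrac12$ would only sharpen the constant). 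Taking expectation over $\mathbf{W}_0$, the coordinate independence of $\mathbf{w}_{0,r}$ and $\|\mathbf{x}_i\|_2=1$ from Assumption~\ref{data_assump} give $\E[\inner{\mathbf{w}_{0,r}}{\mathbf{x}_i}^2]=\kappa^2\sum_{d'}x_{i,d'}^2=\kappa^2$, hence $\E_{\mathbf{W}_0,\mathbf{a}}[(u_0^{(i)})^2]\leq \xi^2\kappa^2$.

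Combining the three contributions yields $\E_{\mathbf{W}_0,\mathbf{a}}[(y_i - u_0^{(i)})^2]\leq y_i^2 + \xi^2\kappa^2$. Invoking $|y_i|\leq C-1$ together with the standing ranges $\xi\in(0,1]$ and $\kappa\leq 1$ bounds this by $(C-1)^2 + 1 = C^2 - 2C + 2 \leq C^2$ for every $C\geq 1$, and summing over $i\in[n]$ delivers $\E_{\mathbf{W}_0,\mathbf{a}}[\|\y-\U_0\|_2^2]\leq C^2 n$. No step here is a genuine obstacle; the only point demanding care is the second-moment computation, where one must integrate over the $\pm1$ signs \emph{before} the Gaussian weights so that the $a_r a_{r'}$ cross terms disappear, leaving a clean sum of squared activations to which the normalization $\|\mathbf{x}_i\|_2=1$ can be applied.
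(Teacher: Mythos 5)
Your proposal is correct and follows essentially the same route as the paper's proof: per-sample expansion, zero mean from $\E[a_r]=0$, vanishing cross terms from $\E[a_ra_{r'}]=\I\{r=r'\}$, and the normalization $\|\mathbf{x}_i\|_2=1$. In fact your version is slightly more careful than the paper's, which writes $\E[\sigma(\inner{\mathbf{w}_{0,r}}{\mathbf{x}_i})^2]$ and the variance $\kappa^2$ away via chained \emph{equalities} (both are really inequalities, valid since $\sigma(z)^2\leq z^2$ and $\kappa\leq 1$), whereas you track the factor $\xi^2\kappa^2$ explicitly and invoke $\xi\leq 1$, $\kappa\leq 1$ only at the end.
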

\end{document}